\begin{document}

\maketitle

\begin{abstract}

As reinforcement learning (RL) has achieved near human-level performance in a variety of tasks, its robustness has raised great attention.
While a vast body of research has explored test-time (evasion) attacks in RL and corresponding defenses, its robustness against training-time (poisoning) attacks remains largely unanswered.
In this work, we focus on \textit{certifying} the robustness of offline RL in the presence of poisoning attacks, where a subset of training trajectories could be \textit{arbitrarily} manipulated. We propose the \textit{first} certification framework, \sysname, to certify the number of poisoning trajectories that can be tolerated regarding different certification criteria. Given the complex structure of RL, we propose two  certification criteria: \emph{per-state action stability} and \emph{cumulative reward bound}.
To further improve the certification, we propose new partition and  aggregation protocols to train robust policies. 
We further prove that some of the proposed certification methods are  theoretically tight and some  are NP-Complete problems.
We leverage \sysname to certify three RL environments trained with different algorithms and conclude:
(1)~The proposed robust aggregation protocols such as \textit{temporal aggregation} can significantly improve the certifications;
(2)~Our certifications for both per-state action stability and  cumulative reward bound are efficient and tight;
(3)~The certification for different training algorithms and environments are different, implying their intrinsic robustness properties.
All experimental results are available at 
{\small{\url{https://copa-leaderboard.github.io}}}.

\end{abstract}

\vspace{-2mm}
\section{Introduction}
\label{sec:intro}
\vspace{-2mm}

Reinforcement learning~(RL) has been widely applied to a range of applications, including robotics~\citep{kober2013reinforcement,deisenroth2013survey,polydoros2017survey} and  autonomous  vehicles~\citep{shalev2016safe,sallab2017deep}.
In particular, offline RL~\citep{levine2020offline} is proposed to leverage previously observed data to train the policy without requiring expensive interaction with environments or online data collection, and enables the reuse of training data~\citep{agarwal2020optimistic}.
However, offline RL also raises a great safety concern on  poisoning attacks~\citep{kiourti2020trojdrl,wang2020stop,wang2021backdoorl}.
A recent survey of industry  reports that data poisoning is significantly more concerning than other threats~\citep{kumar2020adversarial}.
For offline RL, the situation is even worse, and a recent theoretical study shows that the robust offline RL against poisoning attacks is a strictly harder problem than online RL~\citep{zhang2021corruption}.

\looseness=-1
Although there are several empirical and certified defenses for classification tasks against poisoning attacks~\citep{peri2020deep,levine2021deep,weber2020rab}, it is challenging and shown ineffective to directly apply them to RL given its complex structure~\citep{kiourti2020trojdrl}. Thus, robust offline RL against poisoning attacks remains largely unexplored with no mention of robustness certification. 
In addition, though some theoretical analyses provide general robustness bounds for RL, they either assume a bounded distance between the learned and Bellman optimal policies or are limited to linear MDPs~\citep{zhang2021corruption}. To the best of our knowledge, there is no robust RL method that is able to provide \emph{practically computable} \emph{certified robustness} against  poisoning attacks.
In this paper, we tackle this problem by proposing the first framework of \underline{C}ertifying r\underline{o}bust policies for general offline RL against  \underline{p}oisoning \underline{a}ttacks (\textbf{\sysname}).




\textbf{Certification Criteria.}
One critical challenge in certifying robustness for offline RL is the certification criteria, since the prediction consistency is no longer the only goal as in classification.
We propose two criteria based on the properties of RL: \underline{per-state action stability} and \underline{cumulative} \underline{reward} \underline{bound}.
The former guarantees that at a specific time, the policy learned with \sysname will predict the same action before and after attacks under certain conditions.
This is important for guaranteeing the safety of the policy at critical states, \eg, braking when seeing pedestrians.
For  cumulative reward bound,  a lower bound of the cumulative reward for the policy learned with \sysname is guaranteed under certain poisoning conditions.
This directly guarantees the worst-case overall performance.

\textbf{\sysname Framework.}
\sysname is composed of two components:  policy partition and aggregation protocol and  robustness certification method.
We propose three policy partition aggregation protocols: \underline{\rlalgo}~(\rlalgofullname), \underline{\rlalgotmp}~(\rlalgotmpfullname), and \underline{\rlalgodyntmp}~(\rlalgodyntmpfullname), and propose certification methods for each of them corresponding to both proposed certification criteria.
In addition, for \textit{per-state action stability}, we prove that our certifications for \rlalgo and \rlalgotmp are theoretically tight.
For \textit{cumulative reward bound}, we propose an adaptive search algorithm, where we compute the possible action set for each state under certain poisoning conditions.
Concretely, we propose a novel method to compute the precise action set for \rlalgo and efficient algorithms to compute a superset of the possible action set which leads to sound certification for \rlalgotmp and \rlalgodyntmp.
We further prove that
for \rlalgo our certification is theoretically tight, 
for \rlalgotmp the theoretically tight certification is $\NP$-complete, 
and for \rlalgodyntmp it is open whether theoretically tight certification exists.


\underline{\textbf{Technical Contributions.}} We take the first step towards certifying the robustness of offline RL against poisoning attacks, and we make contributions on both theoretical  and practical fronts.
\begin{itemize}[leftmargin=*,noitemsep]
    \item We abstract and formulate the robustness certification for offline RL against poisoning attacks, and we propose two certification criteria: per-state action stability and cumulative reward bound.
    
    \looseness=-1
    \item We propose the first framework \sysname for certifying robustness of offline RL against  poisoning attacks.
    \sysname includes novel policy aggregation protocols and  certification methods.
   
   \item We prove the tightness of the proposed certification methods for the aggregation protocol \rlalgo.
    We also prove the computational hardness of the certification for \rlalgotmp.
    
    \item We conduct thorough experimental evaluation for \sysname on different RL environments with three offline RL algorithms, demonstrating the effectiveness of \sysname,
    together with several interesting findings.
\end{itemize}


\vspace{-2mm}
\section{Related Work}
\vspace{-2mm}

\looseness=-1
Poisoning attacks~\citep{nelson2008exploiting,diakonikolas2016robust} are critical threats in machine learning, which are claimed to be more concerning than other threats~\citep{kumar2020adversarial}.
Poisoning attacks
widely exist in classification~\citep{schwarzschild2021just}, and 
both empirical defenses~\citep{liu2018fine,chacon2019deep,peri2020deep,steinhardt2017certified} and certified defenses~\citep{weber2020rab,jia2020certified,levine2021deep} have been proposed.
{\color{blue}

}

\looseness=-1
After \citet{kiourti2020trojdrl} show the existence of effective backdoor poisoning attacks in RL,
a recent work theoretically and empirically validates the existence of reward poisoning in online RL~\citep{zhang2020adaptive}.
Furthermore, \citet{zhang2021corruption} theoretically prove that the offline RL is more difficult to be robustified against poisoning than online RL considering linear MDP.
From the defense side, 
\citet{zhang2021corruption} propose robust variants of the Least-Square Value Iteration algorithm that provides probabilistic robustness guarantees under linear MDP assumption.
In addition, Robust RL against reward poisoning is studied in~\citet{banihashem2021defense}, but robust RL against general poisoning is less explored.
In this background, we aim to provide the certified robustness for general offline RL algorithms against  poisoning attacks, which is the first work that achieves the goal.
{\color{black} We discuss broader related work in \Cref{adxsec:related-works}.}

\vspace{-2mm}
\section{Certification Criteria of \sysname}
\label{sec:criteria}
\vspace{-2mm}

    In this section, we propose two robustness certification criteria for offline RL against general poisoning attacks: per-state action stability and cumulative reward bound.

        \label{subsec:prelim}
        \label{subsec:notation}
        
        \paragraph{Offline RL.}
        We model the RL environment by an episodic finite-horizon Markov decision process~(MDP) $\gE = (\gS, \gA, R, P, H, d_0)$, where $\gS$ is the set of states, $\gA$ is the set of discrete actions, $R: \gS \times \gA \to \sR$ is the reward function,  $P: \gS \times \gA \to \gP(\gS)$ is the stochastic transition function with $\gP(\cdot)$ defining the set of probability measures, $H$ is the time horizon, and $d_0 \in \gP(\gS)$ is the distribution of the initial state.
        At time step $t$, the RL agent is at state $s_t \in \gS$. 
        After choosing action $a_t \in \gA$, the agent transitions to the next state $s_{t+1} \sim P(s_t,a_t)$ and receives reward $r_t = R(s_t, a_t)$.
        After $H$ time steps, the cumulative reward $J = \sum_{t=0}^{H-1} r_t$.
        We denote a consecutive sequence of all states between time step $l$ and $r$ as $s_{l:r} := [s_l, s_{l+1}, \dots, s_r]$.
        
        Here we focus on offline RL, for which the threat of poisoning attacks is practical and more challenging to deal with~\citep{zhang2021corruption}.
        Concretely, in offline RL, a training dataset $D = \{\tau_i\}_{i=1}^{N}$ consists of logged trajectories, where each trajectory $\tau=\{(s_{j},r_{j},a_j, s'_j)\}_{j=1}^{l} \in (\gS \times \gA \times \sR \times \gS)^l$ consists of multiple tuples denoting the transitions (\ie, starting from state $s_j$, taking the action $a_j$, receiving reward $r_j$, and transitioning to the next state $s_j'$).

  \textbf{Poisoning Attacks.}\quad 
        Training dataset $D$ can be poisoned in the following manner. 
        For each trajectory $\tau\in D$, the adversary is allowed to replace it with an arbitrary trajectory $\wtilde\tau$, generating a manipulated  dataset $\wtilde D$. 
        We denote $D \ominus \wtilde D = (D \backslash \wtilde D) \bigcup (\wtilde D \backslash D)$ as the \textit{symmetric difference} between  two datasets $D$ and $\wtilde D$.
        For instance, adding or removing one trajectory causes a symmetric difference of magnitude $1$, while replacing one trajectory with a new one leads to a symmetric difference of magnitude $2$.
        We refer to the size of the symmetric difference as the \textit{poisoning size}.
        
   \textbf{Certification Goal.}\quad
   To provide the robustness certification  against poisoning attacks introduced above, 
    we aim to certify the \textit{test-time} performance of the trained policy in a clean environment.
    Specifically,
    in the \textit{training phase},  the RL training algorithm and our aggregation protocol can be jointly modeled by $\cal M:\cal D\rightarrow (\gS^\star \to \gA)$ which provides an aggregated policy,
    where $\gS^\star$ denotes the set of all consecutive state sequences.
    Our \textbf{goal} is to provide robustness certification for the poisoned aggregated policy $\tpi = \gM(\wtilde D)$, given bounded poisoning size (\ie, $|D \ominus \wtilde D| \leq \wbar{K}$).

    
    \paragraph{Robustness Certification Criteria: Per-State Action Stability.}
    We first aim to certify the robustness of the poisoned policy in terms of the stability of per-state action during test time.
    
    \begin{definition}[Robustness Certification for Per-State Action Stability]
    	\label{def:per-state}
    	Given a clean dataset $D$, 
    	we define the robustness certification for \textit{per-state action stability} as that for \textit{any} $\wtilde D$ satisfying $| D\ominus \wtilde D | \leq \wbar K$, 
    	the {action} predictions of the poisoned  and  clean policies for the state~(or state sequence) $s$ are the same, \ie, $\tpi = \gM(\wtilde D), \pi = \gM(D), \tpi(s) = \pi(s)$, under the
    	the tolerable {poisoning threshold}  $\wbar K$.
    	In an episode, we denote the tolerable poisoning threshold for the state at step $t$ by $\wbar K_t$.
    %
    	
    \end{definition}
    
    \looseness=-1
    The definition encodes the requirement that, for a particular state, any poisoned policy will always give the same action prediction as the clean one, as long as the poisoning size is within $\wbar K$~($\wbar K$ computed in \Cref{sec:main}).
    In this definition, $s$ could be either a state or a state sequence, 
    since our aggregated policy~(defined in \Cref{subsec:prelim}) may aggregate multiple recent states to make an action choice.
    
    \paragraph{Robustness Certification Criteria: Lower Bound of Cumulative Reward.}
    We also aim to certify poisoned policy's overall performance in addition to the prediction at a particular state.
    Here we measure the overall performance by the cumulative reward $J(\pi)$~(formally defined in \Cref{adxsec:protocol}).
    

    Now we are ready to define the robustness certification for cumulative reward bound.
    \begin{definition}[Robustness Certification for Cumulative Reward Bound]
    	\label{def:cert-cum-reward}
    	\looseness=-1
    	Robustness certification for cumulative reward bound 
    	is the \emph{lower bound} of \textit{\namecr} $\uj_K$ such that $\uj_K \leq J(\tpi)$ for any $\tpi = \cal M (\wtilde D)$ where $|D \ominus \wtilde D| \le K$, \ie, $\tpi$ is trained on poisoned dataset $\wtilde D$ within poisoning size $K$.
    \end{definition}

\vspace{-2mm}
\section{Certification Process of \sysname}
    \label{sec:main}
\vspace{-1mm}
    In this section, we introduce our framework \sysname, which is composed of \emph{training protocols}, \emph{aggregation protocols}, and \emph{certification methods}.
    The training protocol combined with an offline RL training algorithm provides subpolicies.
    The aggregation protocol aggregates the subpolicies  as an aggregated policy.
   The certification method certifies the robustness of the aggregated policy against poisoning attacks corresponding to different certification criteria provided in Section~\ref{sec:criteria}.
    

\textbf{Overview of Theoretical Results.}\quad
    In \Cref{table:theory-overview}, we present an overview of our theoretical results: For each proposed aggregation protocol and certification criteria, we provide the corresponding certification method and core theorems, and we also provide the tightness analysis for each certification.

    \begin{table}[t]
        \centering 
        \caption{\small Overview of theoretical results in \Cref{sec:main}. ``Certification'' columns entail our certification theorems. ``Analysis'' columns entail the analyses of our certification bounds, where ``tight'' means our certification is theoretically tight, ``$\NP$-complete'' means the tight certification problem is $\NP$-complete, and ``open'' means the tight certification problem is still open.
        \Cref{thm:ppa-radius,prop:ppa-radius-tightness,thm:compare-tight-loose-action-set-ppa} are in \Cref{adxsec:ppa-per-state,adxsubsec:proof-C62}.
        }
        \label{table:theory-overview}
        \vspace{-2mm}
        \resizebox{0.95\textwidth}{!}{
        \begin{tabular}{c|cc|cc|cc}
            \toprule
            \multirow{3}{*}{\shortstack{Certification \\ Criteria}} & \multicolumn{6}{c}{Proposed Aggregation Protocol} \\
            \cline{2-7}\\[-1.5ex]
             & \multicolumn{2}{c|}{\rlalgo~($\rlalgopolicy$, \Cref{def:ppa-agg})} & \multicolumn{2}{c|}{\rlalgotmp~($\rlalgotmppolicy$, \Cref{def:tpa-agg})} & \multicolumn{2}{c}{\rlalgodyntmp~($\rlalgodyntmppolicy$, \Cref{def:dtpa-agg})} \\
             & Certification & Analysis & Certification & Analysis & Certification & Analysis \\
            \midrule
            Per-State Action & \Cref{thm:ppa-radius} & tight~(\Cref{prop:ppa-radius-tightness}) & \Cref{thm:tpa-radius} & tight~(\Cref{prop:tpa-radius-tightness}) & \Cref{thm:dtpa-radius} & open  \\
            Cumulative Reward & \Cref{thm:action-set-ppa} & tight~(\Cref{thm:compare-tight-loose-action-set-ppa}) & \Cref{thm:action-set-tpa} & $\NP$-complete~(\Cref{thm:np-complete-tpa-action-set}) & \Cref{thm:action-set-dtpa} & open \\
            \bottomrule
        \end{tabular}}
        \vspace{-1em}
    \end{table}

    
    %

    \vspace{-2mm}
    \subsection{Partition-Based Training Protocol}
        \label{subsec:training-protocol}
        \vspace{-2mm}    
        \sysname's training protocol contains two stages: partitioning  and training.
        We denote $D$ as the entire offline RL training dataset. 
        We abstract an offline RL training algorithm~(\eg, DQN) by $\gM_0: 2^D \rightarrow \Pi$, where $2^D$ is the power set of $D$, and $\Pi = \{ \pi : \gS \rightarrow \gA \}$ is the set of trained policies.
        Each trained policy in $\Pi$ is a function mapping a given state to the predicted action.
        
        
        \looseness=-1
        \paragraph{Partitioning Stage.}
        In this stage,
        we separate the training dataset $D$ into $u$ partitions $\{D_i\}_{i=0}^{u-1}$ that satisfy $\bigcup_{i=0}^{u-1}D_i=D$ and $\forall i\neq j,D_i\cap D_j=\emptyset$. 
        Concretely, when performing  partitioning, for each trajectory $\tau \in D$, we \textit{deterministically} assign it to one unique partition.
        The assignment is only dependent on the trajectory $\tau$ itself, and not impacted by any modification to other parts of the training set.
        One design choice of such a deterministic assignment is using a deterministic hash function $h$ to compute the assignment, \ie,
        $
            D_i = \{ \tau \in D \mid h(\tau) \equiv i \ ({\rm mod\ } u)\},\forall i \in [u].
        $
        
        \looseness=-1
        \paragraph{Training Stage.}
        In this stage,
        for each training data partition $D_i$, we independently apply an RL  algorithm $\gM_0$ to train a policy $\pi_i = \gM_0(D_i)$.
        Hereinafter, we call these trained polices as \emph{subpolicies} to distinguish from the aggregated policies.
        Concretely, let $[u] := \{0,1,\dots,u-1\}$.
        For these subpolicies, the \emph{policy indicator} $\1_{i,a}: \gS \to \{0,1\}$ is defined by $\1_{i,a}(s) := \1[\pi_i(s) = a]$, indicating whether subpolicy $\pi_i$ chooses action $a$ at state $s$.
        The \emph{aggregated action count} $n_a: \gS \to \sN_{\ge 0}$ is the number of votes across all the subpolicies for action $a$ given state $s$:
        $
            n_a(s) := | \{i | \pi_i(s) = a, i \in [u] \} | = \sum_{i=0}^{u-1} \1_{i,a}(s).
        $
        Specifically, we denote $n_a(s_{l:r})$ for $\sum_{j=l}^r n_a(s_j)$, \ie, the sum of votes for states between time step $l$ and $r$.
         A detailed algorithm of the training protocol is in \Cref{adxsubsec:algo-train}.

        Now we are ready to introduce the proposed  aggregation protocols in \sysname~(\rlalgo, \rlalgotmp, \rlalgodyntmp) that generate \emph{aggregated policies} based on subpolicies, and corresponding  certification.

        
    
    \vspace{-2mm}
    \subsection{Aggregation Protocols: \rlalgo, \rlalgotmp, \rlalgodyntmp}
        \label{subsec:agg-protocol}
        \vspace{-2mm}
        With $u$ learned subpolicies $\{\pi_i\}_{i=0}^{u-1}$, we propose three different aggregation protocols in \sysname to form three types of aggregated policies for each certification criteria:
        \rlalgo, \rlalgotmp, and \rlalgodyntmp.
        
        \textbf{\rlalgofullname~(\rlalgo).}
        Inspired by aggregation in classification~\citep{levine2021deep},  \rlalgo aggregates subpolicies by choosing actions with the highest votes.
        We denote the \rlalgo aggregated policy by $\rlalgopolicy: \gS \to \gA$.
        When there are multiple highest voting actions, we break ties deterministically by returning the ``smaller'' ($<$) action, which can be defined by numerical order, lexicographical order, etc.
        Throughout the paper, we assume $\argmax$ over $\gA$ always uses $<$ operator to break ties.
        The formal definition of the protocol is in \Cref{adxsec:protocol}.

        The intuition behind \rlalgo is that the poisoning attack within size $K$ can change at most $K$ subpolicies.
        Therefore, as long as the margin between the votes for top and runner-up actions is larger than $2K$ for the given state, after poisoning, we can guarantee that the aggregated \rlalgo policy will not change its action choice.
        We will formally state the robustness guarantee 
        in \Cref{subsec:cert-radius}.
        
        \paragraph{\rlalgotmpfullname~(\rlalgotmp).}
        In the sequential decision making process of RL, it is likely that certain important states are much more vulnerable to poisoning attacks, which we refer to as \emph{bottleneck states}.
        Therefore, the attacker may just change the action predictions for these bottleneck states to deteriorate the
        overall performance, say, the cumulative reward.
        For example, in Pong game, we may lose the round when choosing an immediate bad action when the ball is closely approaching the paddle.
        Thus, to improve the overall certified robustness, we need to focus on improving the tolerable poisoning threshold for these bottleneck states.
        Given such intuition and goal, we propose \rlalgotmpfullname~(\rlalgotmp) and the aggregated policy is denoted as $\rlalgotmppolicy$, which is formally defined in \Cref{def:tpa-agg} in \Cref{adxsec:protocol}.
        
        \rlalgotmp is based on two insights:
        (1)~Bottleneck states have lower tolerable poisoning threshold, which is because the vote margin between the top and runner-up actions is smaller at such state;
        (2)~Some RL tasks satisfy temporal continuity~\citep{legenstein2010reinforcement,veerapaneni2020entity}, indicating that  good action choices are usually similar across states of adjacent time steps, \ie, adjacent states.
        Hence, we leverage the subpolicies' votes from adjacent states to enlarge the vote margin, and thus increase the tolerable poisoning threshold.
        To this end, 
        in \rlalgotmp, we predetermine a \emph{window size} $W$, and choose the action with the highest  votes across recent $W$ states.


        \paragraph{\rlalgodyntmpfullname~(\rlalgodyntmp).}
        The \rlalgotmp uses a fixed window size $W$ across all states. Since the specification of the window size $W$ requires certain prior knowledge, plus that the same fixed window size $W$ may not be suitable for all states, it is preferable to perform \emph{dynamic temporal aggregation} by using a flexible window size. 
        Therefore, we propose \rlalgodyntmpfullname~(\rlalgodyntmp), which dynamically selects the window size $W$ towards maximizing the tolerable poisoning threshold \emph{per step}.
        Intuitively, \rlalgodyntmp selects the window size $W$ such that the average vote margin over selected states is maximized.
        To guarantee that only recent states are chosen, we further constrain the maximum window size ($W_{\max}$).
        
        \begin{definition}[\rlalgodyntmpfullname]
            Given subpolicies $\{\pi_i\}_{i=0}^{u-1}$ and maximum window size $W_{\max}$, at time step $t$, the \rlalgodyntmpfullname~(\rlalgodyntmp) defines an aggregated policy $\rlalgodyntmppolicy: \gS^{\min\{t+1, W_{\max}\}} \to \gA$ such that
            \begin{small}
            \begin{equation}
                \rlalgodyntmppolicy(s_{\max\{t - W_{\max} + 1,0\}:t}) := \argmax_{a\in \gA} n_a(s_{t-W'+1:t}),
                \mathrm{\ where\ }
                W' = \argmax_{1\le W \le \min\{W_{\max}, t+1\}} \Delta_t^W.
                \label{eq:dtpa-agg-1}
            \end{equation}
            \end{small}
            In the above equation, $n_a$ is defined in \Cref{subsec:notation} and $\Delta_t^W$ is given by
            \begin{small}
            \begin{equation}
                \begin{aligned}
                    & \Delta_t^W := \dfrac{1}{W} \left(
                        n_{a_1}(s_{t-W+1:t}) - n_{a_2}(s_{t-W+1:t})
                    \right), \\
                    \mathrm{\ where\ } \quad &
                    a_1 = \argmax_{a\in \gA} n_a(s_{t-W+1:t}), 
                    a_2 = \argmax_{a\in \gA, a\neq a_1} n_a(s_{t-W+1:t}).
                \end{aligned}
                \label{eq:dtpa-agg-2}
            \end{equation}
            \end{small}
            \label{def:dtpa-agg}
        \end{definition}
        \vspace{-1em}
        In the above definition, $\Delta_t^W$ encodes the average vote margin between top action $a_1$ and runner-up action $a_2$ if choosing window size $W$.
        Thus, $W'$ locates the window size with maximum average vote margin, and its corresponding action is selected.
        Again, we use the mechanism described in \rlalgo to break ties.
        Robustness certification methods for \rlalgodyntmp are  in \Cref{subsec:cert-radius,subsec:cert-reward}.
        
        In \Cref{adxsec:running-example}, we present a concrete example to demonstrate how different aggregation protocols induce different tolerable poisoning thresholds, and illustrate bottleneck and non-bottleneck states.
        
    
    \vspace{-2mm}
    \subsection{Certification of Per-State Action Stability}
        \label{subsec:cert-radius}
        \vspace{-2mm}
        In this section, we present our robustness certification theorems and methods for \textit{per-state action}.
        For each of the aggregation protocols~(\rlalgo, \rlalgotmp, and \rlalgodyntmp), at each time step $t$, we will compute a valid \emph{tolerable poisoning threshold} $\wbar K$ as defined in \Cref{def:per-state}, such that the chosen action at step $t$ does not change as long as the poisoning size $K \le \wbar K$.
        
        \textbf{Certification for \rlalgo.}
        Due to the space limit, we defer the robustness certification method for \rlalgo to \Cref{adxsec:ppa-per-state}.
        The certification method is based on \Cref{thm:ppa-radius}.
        We further show the theoretical tightness of the certification in \Cref{prop:ppa-radius-tightness}.
        All the theorem statements are in \Cref{adxsec:ppa-per-state}.

        \textbf{Certification for \rlalgotmp.}
        We certify the robustness of \rlalgotmp following \Cref{thm:tpa-radius}.
        
        \begin{theorem}
            \label{thm:tpa-radius}
            Let $D$ be the clean training dataset;
            let $\pi_i = \gM_0(D_i), 0\le i \le u-1$ be the learned subpolicies according to \Cref{subsec:training-protocol} from which we define $n_a$~(\Cref{subsec:notation});
            and let $\rlalgotmppolicy$ be the \rlalgotmpfullname policy: $\rlalgotmppolicy = \gM(D)$ where $\gM$ abstracts the whole training-aggregation process.
            $\wtilde D$ is a poisoned dataset and $\wtilde \rlalgotmppolicy$ is the poisoned policy:
            $\wtilde \rlalgotmppolicy = \gM(\wtilde D)$.
            
            For a given state $s_t$ encountered at time step $t$ during test time, 
            let $a := \rlalgotmppolicy(s_{\max\{t-W+1,0\}:t})$, then
            {\color{black}
            at time step $t$ the tolerable poisoning threshold~(see \Cref{def:per-state})
            }
            \begin{small}
            \begin{equation}
                \wbar K_t = \min_{a' \neq a, a'\in \gA} 
                {
                \color{black}
                \max \left\{p \,\Big\lvert\, \sum_{i=1}^p h_{a,a'}^{(i)} \le \delta_{a,a'} \right\}
                }
                \label{eq:tpa-radius-1}
            \end{equation}
            \end{small}
            where $\{h_{a,a'}^{(i)}\}_{i=1}^u$ is a nonincreasing permutation of 
            \begin{small}
            \begin{equation}
                \small
                \left\{ 
                \sum_{j=0}^{\min\{W-1, t\}} \1_{i,a}(s_{t-j}) + \min\{W, t+1\} - 
                \sum_{j=0}^{\min\{W-1, t\}} \1_{i,a'}(s_{t-j}) \right\}_{i=0}^{u-1} =: \{h_{i,a,a'}\}_{i=0}^{u-1},
                \label{eq:tpa-radius-2}
            \end{equation}
            \end{small}
            and $\delta_{a,a'} := n_a(s_{\max\{t-W+1,0\}:t}) - (n_{a'}(s_{\max\{t-W+1,0\}:t}) + \1[a'<a])$.
            Here, $\1_{i,a}(s) = \1[\pi_i(s)=a]$~(\Cref{subsec:notation}), and $W$ is the window size.
        \end{theorem}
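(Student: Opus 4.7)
The plan is to reduce per-state action stability to a per-partition accounting argument: for each competing action $a' \neq a$, I would upper bound the worst-case shrinkage of the vote margin $n_a - n_{a'}$ across the window achievable by poisoning $K$ partitions, and then invert the bound to read off the largest $K$ that preserves the chosen action $a$.

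First, I would observe that the partitioning rule assigns each trajectory deterministically and independently of the rest of the dataset, so a symmetric difference of size $K$ touches at most $K$ of the $u$ partitions and hence alters at most $K$ of the subpolicies $\pi_i$. Let $I \subseteq [u]$ with $|I| \le K$ index the altered subpolicies; for $i \notin I$, $\wtilde \pi_i \equiv \pi_i$ on the entire window and contributes zero to the margin shrinkage.

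Next, I would quantify the worst-case contribution of a single altered subpolicy $i \in I$. Because $\gM_0$ is treated as a black box and the adversary may place arbitrary trajectories into partition $i$, any $\wtilde \pi_i$ is realizable. A short case split on each window state $s_{t-j}$ (whether $\pi_i(s_{t-j})$ originally equals $a$, $a'$, or a third action) shows that the extremal choice is to have $\wtilde \pi_i$ vote $a'$ at every window state, which removes $\sum_{j} \1_{i,a}(s_{t-j})$ votes from $n_a$ and adds $\min\{W,t+1\} - \sum_{j} \1_{i,a'}(s_{t-j})$ votes to $n_{a'}$, giving exactly the per-partition shrinkage $h_{i,a,a'}$ in \eqref{eq:tpa-radius-2}. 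Because distinct partitions act additively, the adversary's optimal choice of $I$ with $|I| = K$ takes the $K$ largest $h_{i,a,a'}$ values, yielding worst-case shrinkage $\sum_{i=1}^{K} h_{a,a'}^{(i)}$ by a standard rearrangement argument.

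Finally, I would convert this shrinkage bound into a decision-preservation condition. Under the smaller-wins tie-breaking rule, $a$ is still selected over $a'$ after poisoning iff $\wtilde n_a - \wtilde n_{a'} \ge \1[a' < a]$, equivalently iff the shrinkage is at most $\delta_{a,a'} = n_a - n_{a'} - \1[a' < a]$. Imposing this simultaneously for every $a' \neq a$ and maximizing over $K$ gives exactly \eqref{eq:tpa-radius-1}. The main obstacle I anticipate is the careful bookkeeping around the truncated window when $t+1 < W$ together with the asymmetric tie-breaking term: I have to ensure that $h_{i,a,a'}$, $n_a$, and $\delta_{a,a'}$ all reference the same truncated index range, and that the off-by-one from $\1[a' < a]$ is applied on the correct side of the inequality. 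Once those indexing details are pinned down, the per-partition adversarial-optimality argument is straightforward.
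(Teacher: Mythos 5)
Your proposal is correct and follows essentially the same route as the paper's proof: both arguments rest on the observations that a poisoning of size $K$ alters at most $K$ subpolicies, that each altered subpolicy $i$ can shrink the windowed margin $n_a - n_{a'}$ by at most $h_{i,a,a'}$ (realized by voting $a'$ at every window state), and that the adversary optimally selects the $K$ largest such values, with the tie-breaking term $\1[a'<a]$ absorbed into $\delta_{a,a'}$. The paper merely packages this as a proof by contradiction rather than your direct worst-case bound, which is an immaterial difference in presentation.
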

        
        
        \begin{remark}
            We defer the detailed proof to \Cref{adxsubsec:proof-C2}.
            The theorem provides a per-state action certification for \rlalgotmp.
            The detailed algorithm is in \Cref{alg:tpa-radius}~(\Cref{adxsec:alg-perstate}).
            The certification time complexity per state is $O(|\gA| u (W + \log u))$ and can be further optimized to $O(|\gA| u \log u)$ with proper prefix sum caching across time steps.
            We prove the certification for \rlalgotmp is theoretically tight in \Cref{prop:tpa-radius-tightness}~(proof in \Cref{adxsubsec:proof-C3}). 
            We also prove that directly extending \Cref{thm:ppa-radius} for \rlalgotmp~(\Cref{cor:tpa-radius-2}) is loose in \Cref{adxsubsec:proof-C4}.
        \end{remark}

        \begin{proposition}
            \label{prop:tpa-radius-tightness}
            Under the same condition as \Cref{thm:tpa-radius}, for any time step $t$,
            there exists an RL learning algorithm $\gM_0$, and a poisoned dataset $\wtilde D$, such that $| D \ominus \wtilde D | = \wbar K_t + 1$, and $\wtilde \rlalgotmppolicy(s_{\max\{t-W+1,0\}:t}) \neq \rlalgotmppolicy(s_{\max\{t-W+1,0\}:t})$.
        \end{proposition}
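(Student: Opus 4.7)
The plan is to build an explicit worst-case adversary that saturates the bound given by Theorem~\ref{thm:tpa-radius}. Let $a := \rlalgotmppolicy(s_{\max\{t-W+1,0\}:t})$, and let $a^\star$ be the action attaining the outer minimum in \eqref{eq:tpa-radius-1}, so by the definition of $\wbar K_t$ as the largest $p$ with prefix sum $\le \delta_{a,a^\star}$ we have
\[
\sum_{i=1}^{\wbar K_t + 1} h_{a,a^\star}^{(i)} \;\ge\; \delta_{a,a^\star} + 1.
\]
Let $i_1, \dots, i_{\wbar K_t+1}$ be the subpolicy indices corresponding to the top $\wbar K_t+1$ terms in the nonincreasing permutation $\{h_{a,a^\star}^{(i)}\}$.

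The first step is to construct $\wtilde D$ that modifies exactly those partitions with poisoning size $\wbar K_t+1$. I would form $\wtilde D$ by adding $\wbar K_t + 1$ fresh distinct trajectories to $D$, the $k$-th of which is chosen to be deterministically assigned to partition $i_k$ by the hash; this is always realizable (we are free in the construction to take any hash whose range covers all partition indices, e.g.\ a surjective hash). Each addition contributes $1$ to the symmetric difference, so $|D \ominus \wtilde D| = \wbar K_t + 1$ as required.

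The second step is to design $\gM_0$ so that it matches the given clean subpolicies $\pi_i = \gM_0(D_i)$ for all $i$, but on any partition that differs from every clean $D_i$ it outputs an adversarial subpolicy $\wtilde\pi$ with $\wtilde\pi(s_{t-j}) = a^\star$ for all $0 \le j \le \min\{W-1,t\}$ (its behaviour elsewhere is immaterial). Since $\gM_0$ is only required to exist as some function $2^D \to \Pi$, this piecewise definition is legitimate. After poisoning, the $\wbar K_t+1$ affected subpolicies each contribute $0$ votes for $a$ and $\min\{W,t+1\}$ votes for $a^\star$ across the window, while all other subpolicies remain untouched. A direct accounting of the change in window votes shows that $n_a(s_{t-W+1:t}) - n_{a^\star}(s_{t-W+1:t})$ drops by exactly $\sum_{k=1}^{\wbar K_t+1} h_{i_k,a,a^\star} = \sum_{i=1}^{\wbar K_t+1} h_{a,a^\star}^{(i)} \ge \delta_{a,a^\star}+1$, and a one-line case split on whether $a^\star < a$ or $a < a^\star$ against the tie-breaking rule of $\argmax$ confirms that after poisoning $a$ is no longer argmax (note that votes for any $a'' \notin \{a, a^\star\}$ can only weakly decrease, so $a''$ cannot overtake $a^\star$). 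Hence $\wtilde\rlalgotmppolicy(s_{\max\{t-W+1,0\}:t}) \neq a$.

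I expect the routine pieces (the inequality from the definition of $\wbar K_t$ and the tie-breaking case split) to be straightforward. The one subtlety worth careful phrasing is the realizability of the adversarial partition targeting: we must exhibit fresh trajectories hashing into prescribed partition indices. I would handle this by stating the result relative to any hash that admits such trajectories (true for generic partition functions, and in particular for any surjective deterministic hash as used in \Cref{subsec:training-protocol}), which is the natural notion of tightness for a partition-based certificate.
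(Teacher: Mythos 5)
Your construction is correct and matches the paper's own proof essentially step for step: both pick the action $a^T$ attaining the outer minimum in \Cref{eq:tpa-radius-1}, poison the $\wbar K_t+1$ subpolicies with the largest $h_{i,a,a^T}$ by inserting one hash-targeted trajectory each so that the relearned subpolicies vote for $a^T$ throughout the window, and verify that the vote margin drops by $\sum_{i=1}^{\wbar K_t+1} h^{(i)}_{a,a^T} > \delta_{a,a^T}$, after which the tie-breaking case split gives $\wtilde\rlalgotmppolicy \neq a$. The only detail the paper includes that you omit is the short verification that $\wbar K_t < u$, which is needed so that the top $\wbar K_t+1$ indices of the permutation actually exist.
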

        
        
        
        \textbf{Certification for \rlalgodyntmp.}
        \Cref{thm:dtpa-radius} provides certification for \rlalgodyntmp.
        
        \begin{theorem}
            \label{thm:dtpa-radius}
            Let $D$ be the clean training dataset;
            let $\pi_i = \gM_0(D_i), 0\le i \le u-1$ be the learned subpolicies according to \Cref{subsec:training-protocol} from which we define $n_a$~(see \Cref{subsec:notation});
            and let $\rlalgodyntmppolicy$ be the \rlalgodyntmpfullname:
            $\rlalgodyntmppolicy = \gM(D)$ where $\gM$ abstracts the whole training-aggregation process.
            $\wtilde D$ is a poisoned dataset and $\wtilde \rlalgodyntmppolicy$ is the poisoned policy:
            $\wtilde \rlalgodyntmppolicy = \gM(\wtilde D)$.
            
            For a given state $s_t$ encountered at time step $t$ during test time, let $a := \rlalgodyntmppolicy(s_{\max\{t - W_{\max} + 1,0\}:t})$ and $W'$ be the chosen time window~(according to \Cref{eq:dtpa-agg-1}), then tolerable poisoning threshold
            \begin{small}
            \begin{equation}
                \wbar K_t^D = \min\left\{ \wbar K_t,\, \min_{1\le W^* \le \min\{W_{\max}, t+1\}, W^* \neq W', a'\neq a, a''\neq a} L_{a',a''}^{W^*, W'} \right\}
                \label{eq:dtpa-radius}
            \end{equation}
            \end{small}
            where $\wbar K_t$ is defined by \Cref{eq:tpa-radius-1} with  $W$ as $W'$ and $L_{a,a''}^{W^*, W'}$ defined by the below \Cref{def:L}.
        \end{theorem}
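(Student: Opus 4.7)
The plan is to reduce the \rlalgodyntmp certification to the fixed-window \rlalgotmp certification of \Cref{thm:tpa-radius}, combined with a separate analysis of window-switching attacks, matching the structure of \Cref{eq:dtpa-radius} as a minimum over two mechanisms by which the action could flip. I proceed by contradiction: suppose there exists a poisoned dataset $\wtilde D$ with $|D\ominus \wtilde D|\le \wbar K_t^D$ such that $\wtilde \rlalgodyntmppolicy(s_{\max\{t-W_{\max}+1,0\}:t}) = \wtilde a \neq a$, and let $\wtilde W'$ denote the window selected by $\wtilde\rlalgodyntmppolicy$ via \Cref{eq:dtpa-agg-1} on input $\wtilde D$. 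I then split on whether $\wtilde W' = W'$ or not, and aim to derive a contradiction in each case.

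In the first case $\wtilde W' = W'$, the poisoned top action within the fixed window $W'$ changed from $a$ to $\wtilde a$. This is precisely the event whose required poisoning size is certified to exceed $\wbar K_t$ by \Cref{thm:tpa-radius} applied with window $W$ set to $W'$, contradicting $|D\ominus\wtilde D|\le \wbar K_t^D\le \wbar K_t$. In the second case $\wtilde W' \neq W'$, the attack must simultaneously (i) make $\wtilde W'$ defeat $W'$ under the $\argmax$ and tiebreaking rule of \Cref{eq:dtpa-agg-1}, and (ii) produce $\wtilde a\neq a$ as the top action inside the new window. The quantity $L_{a',a''}^{W^*, W'}$ (forthcoming \Cref{def:L}) is defined to be the minimum poisoning size required to realize exactly this joint outcome for window $W^*$, top action $a'$, and a companion action $a''$ encoding the runner-up used in $\Delta_t^{W^*}$. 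Instantiating $(W^*,a',a'')$ with $(\wtilde W',\wtilde a, a'')$ for the matching runner-up then yields $|D\ominus\wtilde D|\ge L_{\wtilde a,a''}^{\wtilde W', W'}\ge \wbar K_t^D$, again a contradiction.

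The main obstacle lies in formalizing $L_{a',a''}^{W^*, W'}$ so that the case-2 reduction is sound and as tight as achievable. Each subpolicy $\pi_i$ that the attacker overwrites contributes to two linear quantities that must move jointly: the signed shift in $\Delta_t^{W^*} - \Delta_t^{W'}$ needed to flip the window choice, and the shift in $n_a(s_{t-W^*+1:t}) - n_{a'}(s_{t-W^*+1:t})$ needed to flip the top action inside $W^*$. Because one rewrite alters both in a subpolicy-dependent, correlated way, the certification becomes a two-dimensional combinatorial selection rather than the one-dimensional sort that sufficed in \Cref{thm:tpa-radius}; moreover, care must be taken to encode the tiebreaking conventions, both smaller-action and window-selection, as strict-versus-nonstrict inequalities, and to handle the exclusion $a''\neq a$ in \Cref{eq:dtpa-radius} so that all relevant margin configurations for the runner-up are covered. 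I expect the cleanest formulation to cast $L$ as the optimum of this coupled program and to prove the bound by exhibiting the required inequalities as direct consequences of the attack's existence; unlike the \rlalgotmp case, however, tightness is not anticipated, consistent with the ``open'' entry in \Cref{table:theory-overview}.
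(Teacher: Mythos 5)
Your overall skeleton matches the paper's proof exactly: argue by contradiction, split on whether the poisoned policy's selected window $\wtilde W'$ equals $W'$, dispatch the unchanged-window case via \Cref{thm:tpa-radius} with $W=W'$, and charge the window-switching case to $L_{a',a''}^{W^*,W'}$. The gap is concentrated in how you treat $L$ in the second case. You describe $L$ as ``the minimum poisoning size required to realize exactly this joint outcome'' and propose to cast it as the optimum of the coupled two-dimensional program, but $L$ is not yours to define: it is fixed by \Cref{def:L}, and that $L$ is emphatically \emph{not} the optimum of the coupled program. What actually has to be proved (the paper's \Cref{adxlem:6-1}) is that the specific greedy formula of \Cref{def:L} is a valid lower bound on the poisoning size of any attack realizing the window switch; your plan never engages with that formula, and proving a bound for a differently-defined ``exact'' $L$ would not establish the theorem as stated.

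Concretely, the paper's route is: from the event that window $W^*$ with top action $a'$ beats window $W'$ (with the $\1[a'>a]$ tie-breaking), derive a single necessary linear inequality in the poisoned counts by upper-bounding the $W^*$-window margin against the \emph{pre-poisoning} runner-up $a^\#$ and lower-bounding the $W'$-window margin against an arbitrary $a''\neq a$ (this is where the existential over $a''$, hence the $\min$ over $a''$ in \Cref{eq:dtpa-radius}, comes from). Each corrupted subpolicy $i$ can shift this one scalar functional by at most $g_i=\sum_{w}\max_{a_0\in\gA}\sigma^w(a_0)-\sigma^w(\pi_i(s_{t-w}))$, so $K$ corruptions shift it by at most $\sum_{i=1}^{K}g^{(i)}$, which yields $K>L_{a',a''}^{W^*,W'}$. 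This deliberately collapses your ``two-dimensional combinatorial selection'' into a one-dimensional sort, sacrificing tightness for computability; attempting to solve the coupled program exactly is a different and likely intractable problem (compare \Cref{thm:np-complete-tpa-action-set}). Note also that the necessary condition used is only that $\wtilde W'$ beats $W'$ in the $\wtilde\Delta$ comparison, not that it beats every other window, and that $a''$ plays the role of a surrogate for the post-poisoning top action in window $W'$, not the runner-up in window $W^*$ as your instantiation suggests.
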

        
        \begin{definition}[$L$ in \Cref{thm:dtpa-radius}]
            \label{def:L}
            Under the same condition as \Cref{thm:dtpa-radius}, for given $W^*, W', a, a', a''$,
            we let $a^\# := \argmax_{a_0\neq a', a_0\in \gA} n_{a_0}(s_{t-W^*+1:t})$, then
            \begin{small}
            \begin{equation}
                L_{a',a''}^{W^*, W'} :=
                {\color{black}
                \max \left\{ p \,\Big\lvert\, 
                \sum_{i=1}^p g^{(i)} + W'(n_{a'}^{W^*} - n_{a^\#}^{W^*}) - W^*(n_{a}^{W'} - n_{a''}^{W'}) - \1[a' > a]
                    < 0
                \right\}
                }
                \label{eq:L-1}
            \end{equation}
            \end{small}
            where $n_a^w$ is a shorthand of $n_a(s_{t-w+1:t})$ and $\{g^{(i)}\}_{i=1}^u$ is a nonincreasing permutation of $\{g_i\}_{i=0}^{u-1}$.
            Each $g_i$ is defined by
            \begin{small}
            \begin{equation}
                g_i:= \sum_{w=0}^{\max\{W^*, W'\}} 
                \max_{a_0\in \gA} \sigma^w(a_0) - \sigma^w(\pi_i(s_{t-w})), \text{ where }  
                \label{eq:L-2} 
            \end{equation}
            \end{small}
            \resizebox{\textwidth}{!}{$
                    \sigma^w(a_0) := W'\1[a_0=a', w\le W^*] - W'\1[a_0=a^\#, w\le W^*] 
                    - W^* \1[a_0=a, w\le W'] + W^* \1[a_0=a'', w \le W'].
            $}
        \end{definition}
        \vspace{-1em}
        
        \begin{proof}[Proof sketch.]
            A successful poisoning attack should change the chosen action from $a$ to an another $a'$.
            If after attack, the chosen window size is still $W'$, the poisoning size should be at least larger than $\wbar K_t$ according to \Cref{thm:tpa-radius}.
            If the chosen window size is not $W'$, we find out that the poisoning size is at least $\min_{a''\neq a} L_{a',a''}^{W^*, W'} + 1$ from a greedy-based analysis.
            Formal proof is in \Cref{adxsubsec:proof-C5}.
        \end{proof}
        
            \begin{remark}
            The theorem provides a valid per-state action certification for \rlalgodyntmp policy.
            The detailed algorithm is in \Cref{alg:dtpa-radius}~(\Cref{adxsec:alg-perstate}).
            The certification time complexity per state is $O(W_{\max}^2 |\gA|^2 u + W_{\max} |\gA|^2 u \log u)$, which in practice adds similar overhead compared with \rlalgotmp certification~(\textcolor{black}{see~\Cref{adxsubsec:stat-dparl}}).
            Unlike certification for \rlalgo and \rlalgotmp, the certification given by \Cref{thm:dtpa-radius} is not theoretically tight.
            An interesting future work would be providing a tighter  per-state action certification for \rlalgodyntmp.
        \end{remark}
        
    \vspace{-2mm}
    \subsection{Certification of Cumulative Reward Bound}
        \label{subsec:cert-reward}
        \vspace{-2mm}

        \vspace{-2mm}
        In this section, we present our robustness certification for \textit{cumulative reward bound}.
        We assume the deterministic RL environment throughout the cumulative reward certification for convenience, \ie, the transition function is $P: \gS \times \gA \to \gS$
        and the initial state is a fixed $s_0 \in \gS$.
        The certification goal, as listed in \Cref{def:cum-reward}, is to obtain a lower bound of cumulative reward under  poisoning attacks, given bounded poisoning size $K$.
        {\color{black}
        The cumulative reward certification is based on a novel adaptive search algorithm \adasearch inspired from \cite{wu2022crop}; we tailor the algorithm to certify against poisoning attacks. We defer detailed discussions and complexity analysis to~\Cref{adxsubsec:alg-reward}.
        }
        
        \paragraph{\adasearch Algorithm Description.}
        The pseudocode is in \Cref{alg:tree-search}~(\Cref{adxsubsec:alg-reward}).
        The method starts from the base case: when the poisoning threshold $K_{\mathrm{cur}} = 0$, the lower bound of cumulative reward $\uj_{K_{\mathrm{cur}}}$ is exactly the reward without poisoning.
        The method then gradually increases the poisoning threshold $K_{\mathrm{cur}}$, by finding the immediate larger $K' > K_{\mathrm{cur}}$ that can expand the possible action set along the trajectory.
        With the increase of $K_{\mathrm{cur}} \gets K'$, the attack may cause the poisoned policy $\tpi$ to take different actions at some states, thus resulting in new trajectories.
        We need to figure out \emph{a set of all possible actions} to exhaustively traverse all possible trajectories.
        We will introduce theorems to compute this set of possible actions.
        With this set, the method effectively explores these new trajectories by formulating them as expanded branches of a trajectory tree.
        Once all new trajectories are explored, the method examines all leaf nodes of the tree and figures out the minimum reward among them, which is the new lower bound of cumulative reward $\uj_{K'}$ under new poisoning size $K'$.
        We then repeat this process of increasing poisoning size from $K'$ and expanding with new trajectories until we reach a predefined threshold for poisoning size $K$.

        \begin{definition}[Possible Action Set]
            \label{def:possible-action-set}
            Given previous states $s_{0:t}$, the subpolicies $\{\pi_i\}_{i=0}^{u-1}$, the aggregation protocol~(\rlalgo, \rlalgotmp, or \rlalgodyntmp), and the poisoning size $K$, the \emph{possible action set} $A$ at step $t$ is a subset of action space: $A \subseteq \gA$, such that for any poisoned policy $\wtilde \pi$, as long as the poisoning size is within $K$, the chosen action at step $t$ will always be in $A$, \ie, $a_t = \wtilde \pi(s_{0:t}) \in A$.
        \end{definition}
        
        \paragraph{Possible Action Set for \rlalgo.}
        The following theorem gives the possible action set for \rlalgo.
        
        \begin{theorem}[Tight \rlalgo Action Set]
            \label{thm:action-set-ppa}
            Under the condition of \Cref{def:possible-action-set}, suppose the aggregation protocol is \rlalgo as defined in \Cref{def:ppa-agg},
            then the \emph{possible action set} at step $t$
            \begin{small}
            \begin{equation}
                A^T(K) = \left\{
                    a \in \gA 
                    \,\Big\lvert\,
                    \sum_{a'\in \gA} \max\{n_{a'}(s_t) - n_a(s_t) - K + \1[a'<a], 0\} \le K
                \right\}.
                \label{eq:action-set-ppa}
            \end{equation}
            \end{small}
        \end{theorem}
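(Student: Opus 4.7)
The plan is to prove both set containments: every action $a$ realizable as $\wtilde\rlalgopolicy(s_t)$ under some attack of size at most $K$ belongs to $A^T(K)$, and conversely every $a \in A^T(K)$ is achievable by some valid attack. Since the partitioning is a deterministic function of each trajectory, a poisoning of size $T \le K$ changes at most $T$ subpolicies; I parameterize any such attack at $s_t$ by nonnegative integers $f_{a'}$ (number of subpolicies originally voting $a'$ at $s_t$ that no longer do) and $g_{a'}$ (number now voting $a'$ but not before), so that $\sum_{a'} f_{a'} = \sum_{a'} g_{a'} = T$, $f_{a'} \le n_{a'}(s_t)$, and $n_{a''}^{\mathrm{new}} = n_{a''}(s_t) + g_{a''} - f_{a''}$. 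Under the \rlalgo smaller-action tie-breaking rule, $\wtilde\rlalgopolicy(s_t)=a$ iff $n_a^{\mathrm{new}} \ge n_{a'}^{\mathrm{new}} + \1[a'<a]$ for every $a'\neq a$.

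For the containment direction, I would substitute the expression for $n_{a''}^{\mathrm{new}}$ into the winning inequality to obtain
\begin{equation*}
n_{a'}(s_t) - n_a(s_t) + \1[a'<a] \;\le\; f_{a'} + g_a - f_a - g_{a'} \;\le\; f_{a'} + g_a \;\le\; f_{a'} + K,
\end{equation*}
using $f_a,g_{a'}\ge 0$ and $g_a\le T\le K$. This yields $\max\{n_{a'}(s_t)-n_a(s_t)-K+\1[a'<a],0\} \le f_{a'}$, and summing over $a'\in\gA$ (the $a'=a$ term vanishes since $K\ge 0$) bounds the LHS of \eqref{eq:action-set-ppa} by $\sum_{a'} f_{a'} = T \le K$, so $a\in A^T(K)$.

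For the realizability direction, given $a\in A^T(K)$, I would construct an explicit attack by setting $f_a = 0$ and $g_{a'} = 0$ for all $a'\neq a$, choosing $T = \min\{K, u-n_a(s_t)\}$, and picking $f_{a'}\ge \ell_{a'} := \max\{n_{a'}(s_t)-n_a(s_t)-K+\1[a'<a],0\}$ with $\sum_{a'\neq a} f_{a'} = T$ and $f_{a'}\le n_{a'}(s_t)$. Feasibility of this allocation rests on three facts: $\sum_{a'\neq a}\ell_{a'}\le K$ is exactly the theorem's hypothesis; the total capacity $\sum_{a'\neq a} n_{a'}(s_t) = u-n_a(s_t) \ge T$; and each slack $n_{a'}(s_t)-\ell_{a'}\ge 0$, where the only apparent degenerate case ($n_a(s_t)=K=0$ and some $a'<a$) would force $u=0$ under the hypothesis and is therefore vacuous. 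Substituting back, a short case analysis on whether $T=K$ or $T=u-n_a(s_t)$ gives $n_a^{\mathrm{new}}-n_{a'}^{\mathrm{new}}-\1[a'<a]\ge 0$, so $a$ is selected. This subpolicy vote pattern is induced by replacing one trajectory in each of $T$ distinct targeted partitions together with a worst-case algorithm $\gM_0$ that outputs the prescribed action at $s_t$, giving $|D\ominus\wtilde D|\le T\le K$.

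The main obstacle I expect lies in the sufficiency direction's bookkeeping: one has to argue jointly that the competitor lower bounds $\{\ell_{a'}\}$, per-competitor capacities $\{n_{a'}(s_t)\}$, and the total budget $T$ are simultaneously compatible, and that the chosen subpolicy vote pattern is actually induced by some real adversarial training instance rather than a fictional modification of the aggregated vote counts. The containment direction is, by contrast, a clean inequality chase once the attack has been correctly parameterized by the $(f_{a'},g_{a'})$ flow variables.
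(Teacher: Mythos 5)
Your proposal is correct and follows essentially the same route as the paper, which splits the argument across two results: the soundness direction (any achievable action lies in $A^T(K)$) is the paper's proof of the theorem itself, argued contrapositively by counting how many votes must be removed from each competitor $a'$, while your realizability direction reproduces the construction in the paper's companion tightness theorem (choosing $f_{a'}\ge \ell_{a'}$ up to a total of $\min\{K,\,u-n_a(s_t)\}$ flipped subpolicies, all redirected to $a$, and handling the degenerate $n_a(s_t)=K=0$ case exactly as the paper's auxiliary fact does). Your flow-variable parameterization $(f_{a'},g_{a'})$ is a slightly cleaner packaging of the same inequality chase. One small correction for the realizability step: the paper's symmetric-difference metric charges $2$ for \emph{replacing} a trajectory but only $1$ for \emph{adding} one, so you should realize the targeted vote pattern by inserting one new trajectory into each of the $T$ targeted partitions (as the paper does), not by replacing one; otherwise your construction costs $2T$ rather than $T$ and only establishes achievability at poisoning size $2K$.
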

        \vspace{-0.5em}
            We defer the proof to \Cref{adxsubsec:proof-C61}.
            Furthermore, in \Cref{adxsubsec:proof-C62} we show that:
            1)~The theorem gives theoretically tight possible action set;
            2)~In contrast, directly extending \rlalgo's per-state certification gives loose certification.

        \paragraph{Possible Action Set for \rlalgotmp.}
        The following theorem gives the possible action set for \rlalgotmp.
        \vspace{-1em}
        \begin{theorem}
            \label{thm:action-set-tpa}
            Under the condition of \Cref{def:possible-action-set}, suppose the aggregation protocol is \rlalgotmp as defined in \Cref{def:tpa-agg},
            then the \emph{possible action set} at step $t$
            \begin{small}
            \begin{equation}
                A(K) = \left\{ a \in \gA 
                    \,\Big\lvert\,
                    \sum_{i=1}^K h_{a',a}^{(i)} > \delta_{a',a}, \forall a' \neq a
                \right\},
                \label{eq:action-set-tpa}
            \end{equation}
            \end{small}
            where $h_{a',a}^{(i)}$ and $\delta_{a',a}$ follow the definition in \Cref{thm:tpa-radius}.
        \end{theorem}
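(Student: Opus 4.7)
The plan is to show that the stated $A(K)$ is a \emph{sound superset} of the actual possible action set, which mirrors---but ``reverses the role of the clean winner''---the necessity portion of the per-state certification in \Cref{thm:tpa-radius}. Concretely, I would prove: if there exists a poisoned $\wtilde D$ with $|D\ominus \wtilde D|\le K$ under which $\wtilde \rlalgotmppolicy$ selects $a$ at time $t$, then for every $a'\neq a$ one must have $\sum_{i=1}^K h_{a',a}^{(i)} > \delta_{a',a}$; contrapositively, if some $a'$ violates this, then $a$ cannot be the winner, so $a\notin A(K)$ is impossible for an attainable outcome.

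First, I would fix such a poisoned dataset and let $S \subseteq [u]$ be the set of subpolicy indices whose subpolicy has been modified; because the partition is determined by a deterministic hash on each trajectory, $|S|\le |D\ominus \wtilde D|\le K$, and each $\pi_i$ with $i\notin S$ is unchanged. By the tie-breaking rule built into the $\argmax$ defining $\rlalgotmppolicy$, ``$a$ is chosen'' is equivalent to $\wtilde n_{a'}(s_{t-W+1:t}) - \wtilde n_a(s_{t-W+1:t}) - \1[a<a'] \le -1$ for every $a'\neq a$, i.e., $\wtilde\delta_{a',a}\le -1$.

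Next, I would upper-bound $\delta_{a',a} - \wtilde\delta_{a',a}$ over all attacks with $|S|\le K$. Expanding the definitions yields
\begin{equation*}
    \delta_{a',a}-\wtilde\delta_{a',a} = \sum_{i\in S} \bigl[(n_{i,a'} - \wtilde n_{i,a'}) - (n_{i,a} - \wtilde n_{i,a})\bigr],
\end{equation*}
where $n_{i,b} := \sum_{j=0}^{\min\{W-1,t\}} \1_{i,b}(s_{t-j})$ counts subpolicy $i$'s $b$-votes on the window and $\wtilde n_{i,b}$ is the corresponding count after replacement. Since the replacement subpolicy is unrestricted (the adversary may place in $D_i$ any trajectories hashing to partition $i$), each per-index contribution is maximized by choosing a replacement subpolicy that returns $a$ on every window state, giving the per-index maximum $n_{i,a'} + \min\{W,t+1\} - n_{i,a} = h_{i,a',a}$. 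Taking the top $K$ indices yields the sorted partial sum $\sum_{i=1}^K h_{a',a}^{(i)}$ as an upper bound on $\delta_{a',a}-\wtilde\delta_{a',a}$. Combining with $\wtilde\delta_{a',a}\le -1$ gives $\sum_{i=1}^K h_{a',a}^{(i)} \ge \delta_{a',a} + 1 > \delta_{a',a}$ for every $a'\neq a$, so $a\in A(K)$.

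The main obstacle I expect is the \emph{bookkeeping around tie-breaking}: carefully propagating the indicator $\1[a<a']$ through the algebra so the strict/nonstrict inequalities line up (so that $\wtilde\delta_{a',a}\le -1$, not merely $<0$, is what feeds into the integer-valued bound), and verifying that the per-index maximum $h_{i,a',a}$ is simultaneously achievable across every $i\in S$ by independently choosing each replacement subpolicy to be constant-$a$ on the window. It is also worth explicitly noting why this argument only yields a superset rather than an exact characterization: the top-$K$ set of indices attaining $\sum_{i=1}^K h_{a',a}^{(i)}$ depends on $a'$, so a single attack cannot in general saturate the bound for every $a'$ at once---precisely the gap later captured by the $\NP$-completeness of the tight version in \Cref{thm:np-complete-tpa-action-set}.
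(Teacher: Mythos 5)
Your proposal is correct and follows essentially the same route as the paper's proof in \Cref{adxsubsec:proof-C7}: both bound the worst-case per-subpolicy shift of the vote margin between $a'$ and $a$ by $h_{i,a',a}$, sum the top $K$ of these, and compare against $\delta_{a',a}$ using the tie-breaking-adjusted winning condition (the paper phrases it as a contradiction from $a\notin A(K)$, you phrase it contrapositively, which is the same argument). Your closing observation about why the bound is only a sound superset also matches the paper's discussion preceding \Cref{thm:np-complete-tpa-action-set}.
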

        
            We defer the proof to \Cref{adxsubsec:proof-C7}.
            The possible action set here is no longer theoretically tight.
            Indeed, the problem of computing a possible action set with minimum cardinality for \rlalgotmp is $\NP$-complete as we shown in the following theorem~(proved in \Cref{adxsubsec:proof-C8}), where we reduce computing theoretically tight possible action set to the set cover problem~\citep{karp1972reducibility}.
            This result can be viewed as the hardness of targeted attack.
            In other words, the optimal \emph{untargeted} attack on \rlalgotmp can be found in polynomial time, while the optimal \emph{targeted} attack on \rlalgotmp is $\NP$-complete, which indicates the robustness property of proposed \rlalgotmp. 
        
        
        
        \begin{theorem}
            \label{thm:np-complete-tpa-action-set}
            Under the condition of \Cref{def:possible-action-set}, suppose we use \rlalgotmp~(\Cref{def:tpa-agg}) as the aggregation protocol, then computing a possible action set $A(K)$ such that any possible action set $S$ satisfies $|A(K)| \le |S|$ is  $\NP$-complete.
            \vspace{-0.5em}
        \end{theorem}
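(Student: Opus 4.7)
The plan is to prove $\NP$-completeness in two stages: (i) establish $\NP$-membership of the natural decision variant, then (ii) reduce Set~Cover to it. For $\NP$-membership, I consider the ``$a$-achievability'' decision problem: given $(D, K, s_{t-W+1:t}, a^*)$, does there exist a poisoned dataset $\wtilde D$ with $|D \ominus \wtilde D| \le K$ such that $\wtilde\rlalgotmppolicy(s_{t-W+1:t}) = a^*$? A succinct witness is the list of at most $K$ subpolicies to replace together with their new action assignments on the $W$ relevant states; verification simply recomputes the vote counts $n_a(\cdot)$ and checks that $a^*$ wins under tie-breaking, which is polynomial in the instance size. Because the minimum-cardinality possible action set is exactly the set of achievable actions, it can be constructed by answering $a$-achievability once for each $a^* \in \gA$, so the full problem also lies in $\NP$.

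For $\NP$-hardness I reduce from Set~Cover. Given $(U=\{e_1,\ldots,e_m\},\,\gF=\{S_1,\ldots,S_u\},\,k)$, I build a \rlalgotmp instance with window size $W=m$, states $s_1,\ldots,s_m$, action set $\gA=\{a_0,a_1,\ldots,a_m,a^\#\}$ ordered $a_0<a_1<\cdots<a_m<a^\#$, and one ``subset subpolicy'' $\pi_i$ per $S_i\in\gF$ defined by $\pi_i(s_j)=a_j$ if $e_j\in S_i$ and $\pi_i(s_j)=a^\#$ otherwise. A direct computation gives $h_{i,a_j,a_0}=W+n_{i,a_j}-n_{i,a_0}=m+\1[e_j\in S_i]$, so for any candidate attack $I$,
\begin{equation*}
    \sum_{i\in I} h_{i,a_j,a_0} \;=\; |I|\,m \;+\; \bigl|\{i\in I : e_j\in S_i\}\bigr|.
\end{equation*}
Setting the budget to $K=k$, the target to $a^*=a_0$, and tuning the clean totals via a controlled amount of $a_0$-padding subpolicies (which vote $a_0$ on every state and therefore contribute $h \equiv 0$, being useless to any attacker) so that $\delta_{a_j,a_0}=km+1$ for every $j$, the aggregation rule forces $a_0$ to win after poisoning $I$ \emph{if and only if} $|I|=k$ and $I$ covers every $e_j$. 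Thus a Set~Cover of size $\le k$ exists iff $a_0$ is achievable within budget $k$, which proves $\NP$-hardness of $a$-achievability and hence of computing the minimum-cardinality $A(K)$.

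The main technical obstacle is calibrating $\delta_{a_j,a_0}$ to the single value $km+1$ \emph{simultaneously} over all $j$, because $f_j:=|\{i:e_j\in S_i\}|$ generally varies with the Set~Cover instance, and the auxiliary requirement $\sum h \ge \delta_{a^\#,a_0}$ must also hold. I would handle this by adding, for each $j$, a suitable number of padding subpolicies voting $a_j$ uniformly on all $W$ states to equalize $n_{a_j}$, together with enough $a_0$-padding subpolicies to pin down $n_{a_0}$ at the value forcing $\delta_{a_j,a_0}=km+1$ and simultaneously making $\delta_{a^\#,a_0}$ trivially satisfiable. A final check then verifies that these padding subpolicies are never preferable inside an optimal attack: $a_0$-paddings carry $h\equiv 0$ and are strictly useless, while $a_j$-paddings give $h_{i,a_j,a_0}=2m$ on $a_j$ but only $h_{i,a_{j'},a_0}=m$ on every other competitor $a_{j'}\neq a_j$, so swapping a real subpolicy for an $a_j$-padding strictly loses coverage against some element-competitor when $m \ge k+1$. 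This preserves the bijection between minimum attacks and minimum set covers and completes the reduction.
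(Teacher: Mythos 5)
Your proposal follows the same overall architecture as the paper's proof: establish $\NP$-membership of the targeted-achievability decision problem via a polynomial-size witness, observe that the minimum-cardinality possible action set is exactly the set of achievable actions (so the two problems are polynomially equivalent), and then reduce Set Cover to targeted achievability. The gadget you build, however, is genuinely different and considerably leaner. The paper inflates the window to $W=nm$, introduces $m^2n$ dummy actions, and adds per-element subpolicies $\pi^b_i,\pi^c_{i,j}$ whose only purpose is to force every element-action's vote count to the common value $K'nm$ by construction; you instead keep $W=m$, use one subpolicy per subset, and exploit the identity $\sum_{i\in I}h_{i,a_j,a_0}=|I|\,m+|\{i\in I: e_j\in S_i\}|$, pushing all the balancing into padding subpolicies. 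Your implicit key observation --- that when the target is the smallest action $a_0$, the optimal attack makes every poisoned subpolicy vote $a_0$ on all $W$ states, so achievability is exactly the simultaneous covering condition $\sum_{i\in I}h_{i,a',a_0}>\delta_{a',a_0}$ for all $a'\neq a_0$ --- is correct and is what makes the set-cover structure visible.

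Two concrete points in the calibration are not right as stated. First, the threshold is off by one: an attack with $|I|=k$ succeeds against $a_j$ iff $km+c_j>\delta_{a_j,a_0}$ where $c_j=|\{i\in I:e_j\in S_i\}|$, so you need $\delta_{a_j,a_0}=km$ (equivalently $n_{a_j}-n_{a_0}=km+1$); with $\delta_{a_j,a_0}=km+1$ your instance encodes double cover rather than cover. Second, and more substantively, uniform $a_j$-paddings that vote $a_j$ on all $W=m$ states shift $n_{a_j}$ only in multiples of $m$, so they cannot in general hit the exact target value when the element frequencies $f_j$ are arbitrary modulo $m$; you need paddings that vote $a_j$ on only some of the states, and you must then re-verify that such a partial padding (contributing $r+m$ to the $a_j$-constraint and $m$ to every other constraint) does not break the reduction. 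It does not --- any padding behaves as a (partial) singleton, and since $\bigcup_i S_i=\gU$ a singleton is always dominated by a real subset containing the same element, so a successful attack that uses paddings converts to a set cover of the same size --- but this replacement argument, rather than your claim that swapping in a padding "strictly loses coverage," is the statement you actually need: the issue is soundness of the reduction (attack implies cover), not optimality of attacks. With these repairs your reduction goes through and would yield a simpler construction than the paper's.
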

        
        
        
        \paragraph{Possible Action Set for \rlalgodyntmp.}
        The following theorem gives the possible action set for \rlalgodyntmp.
        \vspace{-1em}
        \begin{theorem}
            \label{thm:action-set-dtpa}
            Under the condition of \Cref{def:possible-action-set}, suppose the aggregation protocol is \rlalgotmp as defined in \Cref{def:dtpa-agg},
            then the \emph{possible action set} at step $t$
            \begin{small}
            \begin{equation}
                \label{eq:action-set-dtpa}
                \resizebox{0.9\linewidth}{!}{
                $ \displaystyle
                A(K) = \{a_t\} 
                \cup 
                \left\{ 
                a' \in \gA
                \,\Big\lvert\,
                    \min_{\substack{1\le W^*\le \min\{W_{\max}, t+1\},\\ W^* \neq W', a''\neq a_t}} L_{a',a''}^{W^*,W'} \le K
                \right\}
                \cup
                \left\{
                a \in \gA
                \,\Big\lvert\,
                    \sum_{i=1}^K h_{a',a}^{(i)} > \delta_{a',a}, \forall a' \neq a
                \right\}
                $
                }
            \end{equation}
            \end{small}
            where
            $a_t = \rlalgodyntmppolicy(s_{\max\{t - W_{\max} + 1,0\}:t})$ is the clean policy's chosen action,
            $W'$ is defined by \Cref{eq:dtpa-agg-1},
            $L_{a',a''}^{W^*,W'}$ is defined by \Cref{def:L} with $a$ being replaced by $a_t$,
            and $h_{a',a}^{(i)}, \delta_{a',a}$ is defined in \Cref{thm:tpa-radius} with $W$ replaced by $W'$.
        \end{theorem}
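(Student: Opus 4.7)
I would prove soundness (every action $a'$ achievable by some poisoned policy $\wtilde\rlalgodyntmppolicy$ at step $t$ with poisoning size at most $K$ lies in $A(K)$) by case analysis on the window size $\wtilde W'$ chosen by $\wtilde\rlalgodyntmppolicy$. Three cases arise: (i) the poisoned action equals $a_t$, trivially absorbed by the $\{a_t\}$ term; (ii) the poisoned action differs from $a_t$ but $\wtilde W' = W'$; and (iii) $\wtilde W' = W^* \neq W'$. In case (ii), with the window pinned at $W'$, \rlalgodyntmp is indistinguishable from \rlalgotmp with fixed window $W'$, so I would invoke \Cref{thm:action-set-tpa} with $W$ replaced by $W'$, reproducing the third set of \Cref{eq:action-set-dtpa}.

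The substantive step is case (iii), for which I would recycle the greedy argument underlying the proof of \Cref{thm:dtpa-radius} in \Cref{adxsubsec:proof-C5}. To realize $a'$ through window $W^*$, the adversary must simultaneously (a) make $a'$ the top vote inside window $W^*$, whose runner-up is the action $a^\#$ of \Cref{def:L}, and (b) drive window $W'$'s post-attack average margin below that of $W^*$, which is cheapest when some specific $a''\neq a_t$ is promoted to the runner-up in $W'$ (so that $W'$'s margin is minimized with the tiebreaking convention of \Cref{def:dtpa-agg}). The per-subpolicy gain $g_i$ of \Cref{eq:L-2} is precisely the best joint contribution that swapping subpolicy $i$ to its most favorable configuration can provide to both tasks simultaneously; sorting these values in nonincreasing order yields the adversary's optimal greedy plan, and the smallest poisoning budget for which this plan can succeed is exactly controlled by $L_{a',a''}^{W^*,W'}$ in \Cref{eq:L-1}. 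Hence, if $a'$ is achievable by a $K$-poisoning through window change, then $\min_{W^*\neq W',\, a''\neq a_t} L_{a',a''}^{W^*,W'} \leq K$, placing $a'$ in the second set of \Cref{eq:action-set-dtpa}. Unioning the three cases reproduces \Cref{eq:action-set-dtpa}.

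\textbf{Main obstacle.} The delicate step is case (iii): \Cref{thm:dtpa-radius} only bounds the cost of flipping to \emph{some} alternative action, whereas here the target $a'$ is fixed and the adversary's remaining degrees of freedom are only $W^*$ and the runner-up $a''$ in window $W'$. I would need to re-examine the construction of $\sigma^w$ and $g_i$ in \Cref{def:L} to verify that, once $a'$ and $a''$ are frozen, a single subpolicy swap still attains gain $g_i$ and that the greedy accumulation over sorted $g^{(i)}$ is optimal. Additional care is required for the asymmetric tiebreaking term $\1[a'>a_t]$, for the short-history boundary $t+1 < \max\{W^*, W'\}$, and for checking that each greedy swap is implementable by modifying whole trajectories of the partition-based training set~(\Cref{subsec:training-protocol}) rather than by an abstract alteration of subpolicies. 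Once these bookkeeping items are handled, the three-way union produces \Cref{eq:action-set-dtpa}.
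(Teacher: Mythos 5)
Your proposal is correct and follows essentially the same route as the paper's proof in \Cref{adxsubsec:proof-C9}: a case split on whether the poisoned action equals $a_t$, whether the post-attack window equals $W'$ (reducing to \Cref{thm:action-set-tpa}), or whether the window changes (reducing to the $L$-bound of \Cref{def:L} via the argument of \Cref{adxsubsec:proof-C5}, with $a''$ instantiated as the actual post-attack runner-up at window $W'$). One clarification on your stated obstacle: for soundness you only need $g_i$ to \emph{upper-bound} each poisoned subpolicy's joint contribution (which holds by construction, since $\max_{a_0}\sigma^w(a_0) \ge \sigma^w(\wtilde\pi_i(s_{t-w}))$ pointwise), so attainability of $g_i$ by a single swap, optimality of the greedy plan, and realizability via trajectory insertion are irrelevant here — they would matter only for a tightness claim, which the theorem does not make (and indeed $L$ does not ``exactly'' control the minimal budget; the resulting set is a sound over-approximation, consistent with the ``open'' entry in \Cref{table:theory-overview}).
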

        
            We prove the theorem in \Cref{adxsubsec:proof-C8}.
            As summarized in \Cref{table:theory-overview}, we further prove the tightness or hardness of certification for \rlalgo and \rlalgotmp, while for \rlalgodyntmp it is an interesting open problem  on whether theoretical tight certification is possible in polynomial time.

\vspace{-2mm} 
\section{Experiments}
\vspace{-2mm}
\label{sec:exp}
\looseness=-1
In this section, we present the evaluation for our \sysname framework, specifically, 
the aggregation protocols (\Cref{subsec:agg-protocol}) and the certification methods under different certification criteria (\Cref{subsec:cert-radius,subsec:cert-reward}). 
We defer the description of the offline RL algorithms (DQN~\citep{mnih2013playing}, QR-DQN~\citep{dabney2018distributional}, and C51~\citep{bellemare2017distributional}) used for training the subpolicies to~\Cref{adxsubsec:rl-algos}, and the 
concrete experimental procedures  to~\Cref{adxsubsec:exp-proc}.
As a summary, we obtain similar conclusions from per-state action certification and  reward certification:
1) QR-DQN and C51 are oftentimes more certifiably robust than DQN; 
2) temporal aggregation (\rlalgotmp and \rlalgodyntmp) achieves higher certification for environments satisfying  temporal continuity, \eg, Freeway;
3) larger partition number improves the certified robustness;
4) Freeway is the most stable and robust \env among the three.
More interesting discussions are deferred to~\Cref{adxsec:exp-results}.

\vspace{-2mm}
\subsection{Evaluation of Robustness Certification for Per-State Action Stability}
\label{subsec:eval-action}
\vspace{-2mm}

We provide the robustness certification for per-state action stability based on~\Cref{subsec:cert-radius}.

\textbf{Experimental Setup and Metrics.}\quad
We evaluate the aggregated policies $\rlalgopolicy$, $\rlalgotmppolicy$, and $\rlalgodyntmppolicy$ following \Cref{subsec:agg-protocol}.
Basically, in each run, we run one trajectory (of maximum length $H$) using the derived policy, and
compute $\wbar K_t$ at each time step $t$.
Given $\{\wbar K_t\}_{t=0}^{H-1}$, we obtain a cumulative histogram---for each threshold $\wbar K$, we count the time steps that achieve a threshold no smaller than it and then normalize,
\ie, $\nicefrac{\sum_{t=0}^{H-1} \1 [\wbar K_t \geq \wbar K]}{H}$.
We call this quantity \textit{stability ratio} since it reflects the per-state action stability w.r.t. given poisoning thresholds.
We also compute an \textit{average tolerable poisoning thresholds} for a trajectory, defined as $\nicefrac{\sum_{t=0}^{H-1} \wbar K_t}{H}$.
More details are deferred to~\Cref{adxsubsec:exp-proc}.

\renewcommand{\thesubfigure}{\alph{subfigure}}

\begin{figure}

\newlength{\utilheightactionhalfcum}
\settoheight{\utilheightactionhalfcum}{\includegraphics[width=.17\linewidth]{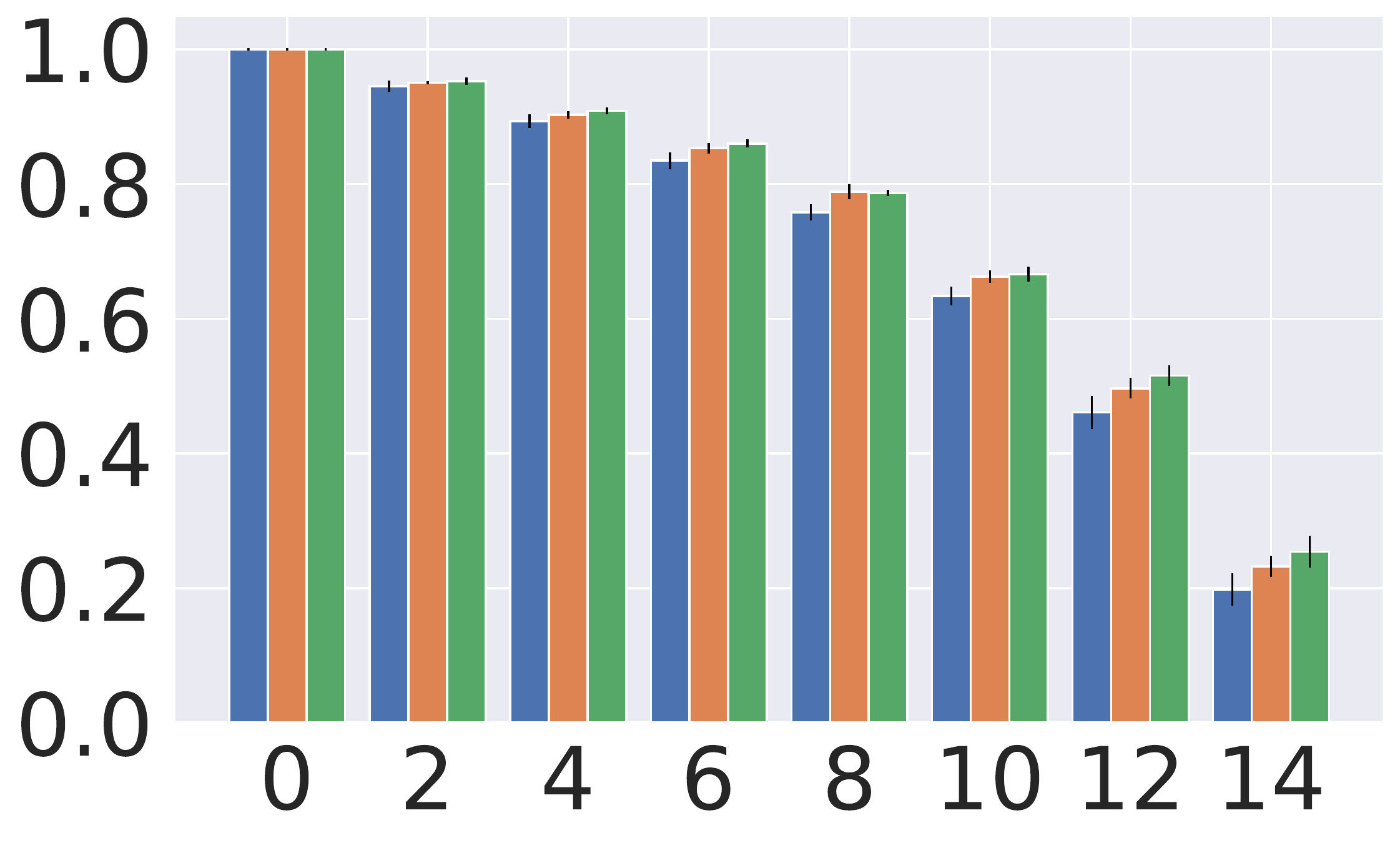}}%

\newlength{\legendheightactionhalfcum}
\setlength{\legendheightactionhalfcum}{0.4\utilheightactionhalfcum}%

\newcommand{\rowname}[1]
{\rotatebox{90}{\makebox[\utilheightactionhalfcum][c]{\tiny #1}}}

\centering

{
\renewcommand{\tabcolsep}{10pt}

\begin{subtable}[]{\linewidth}
\begin{tabular}{l}
\includegraphics[height=\legendheightactionhalfcum]{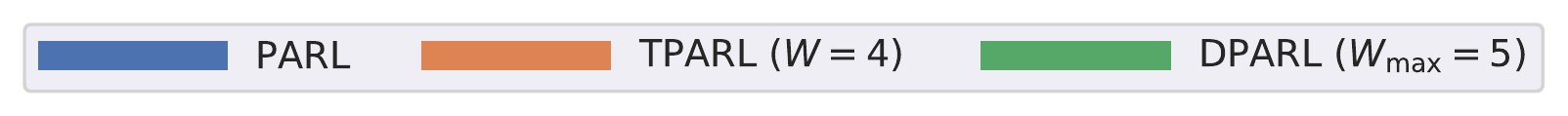}
\end{tabular}
\end{subtable}

\begin{subtable}[]{\linewidth}
\centering
\resizebox{.925\linewidth}{!}{%
\begin{tabular}{@{}p{6mm}@{}c@{}c@{}c@{}c@{}c@{}}
        & \makecell{\small{\textbf{Freeway, $u=50$}}}
        & \makecell{\small{\textbf{Breakout, $u=50$}}}
        & \makecell{\small{\textbf{Highway, $u=50$}}}
        \\
\rowname{\makecell{\small \textbf{DQN} \\\scriptsize stability ratio}}&
\includegraphics[height=\utilheightactionhalfcum]{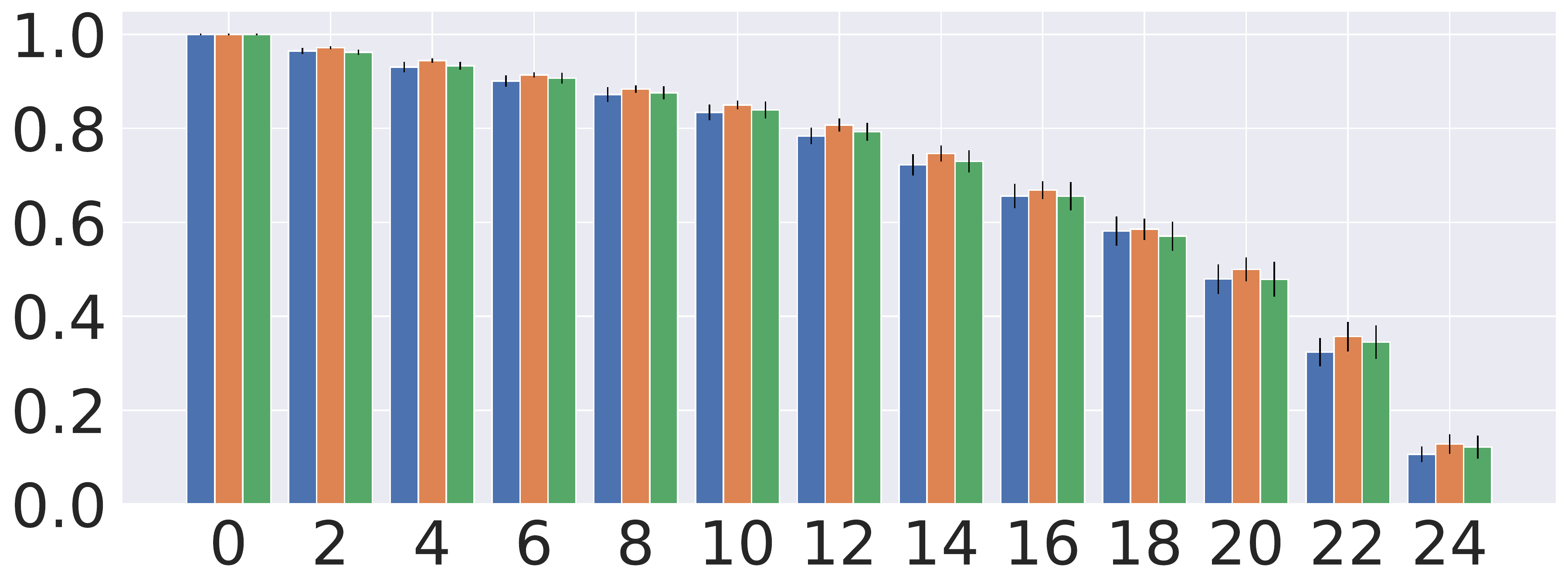}&
\includegraphics[height=\utilheightactionhalfcum]{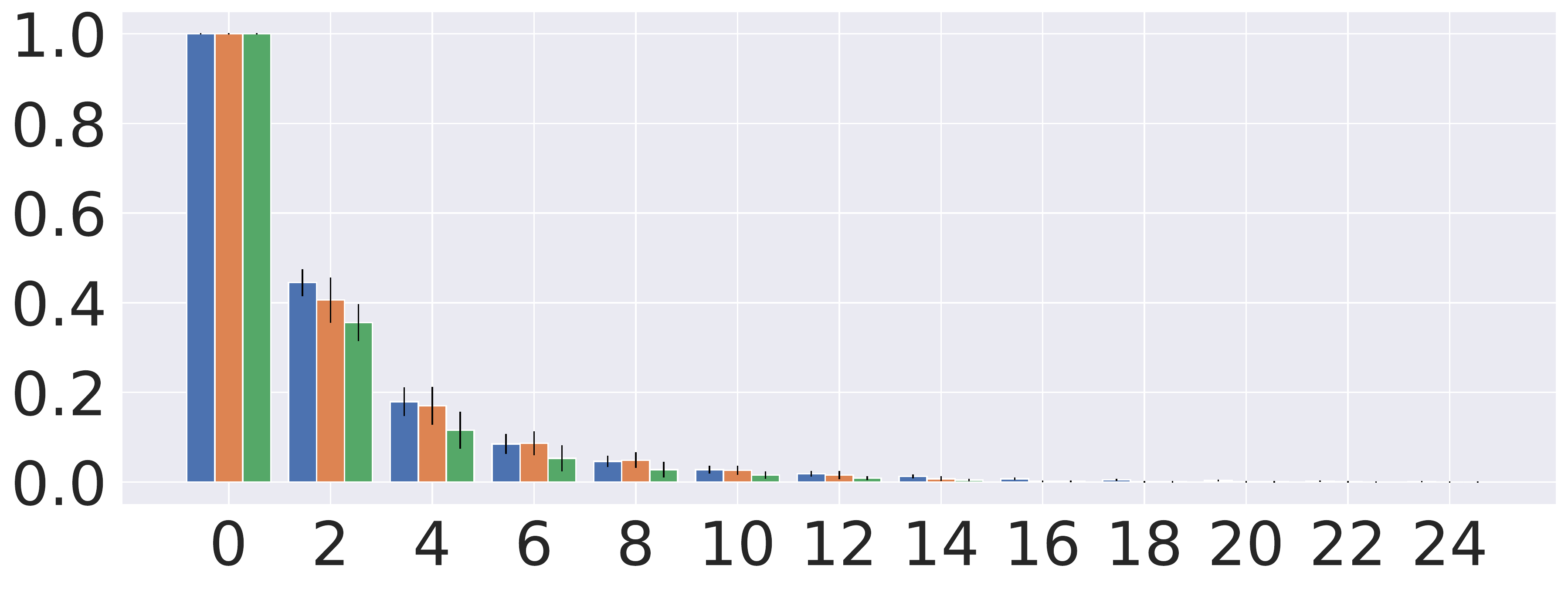}&
\includegraphics[height=\utilheightactionhalfcum]{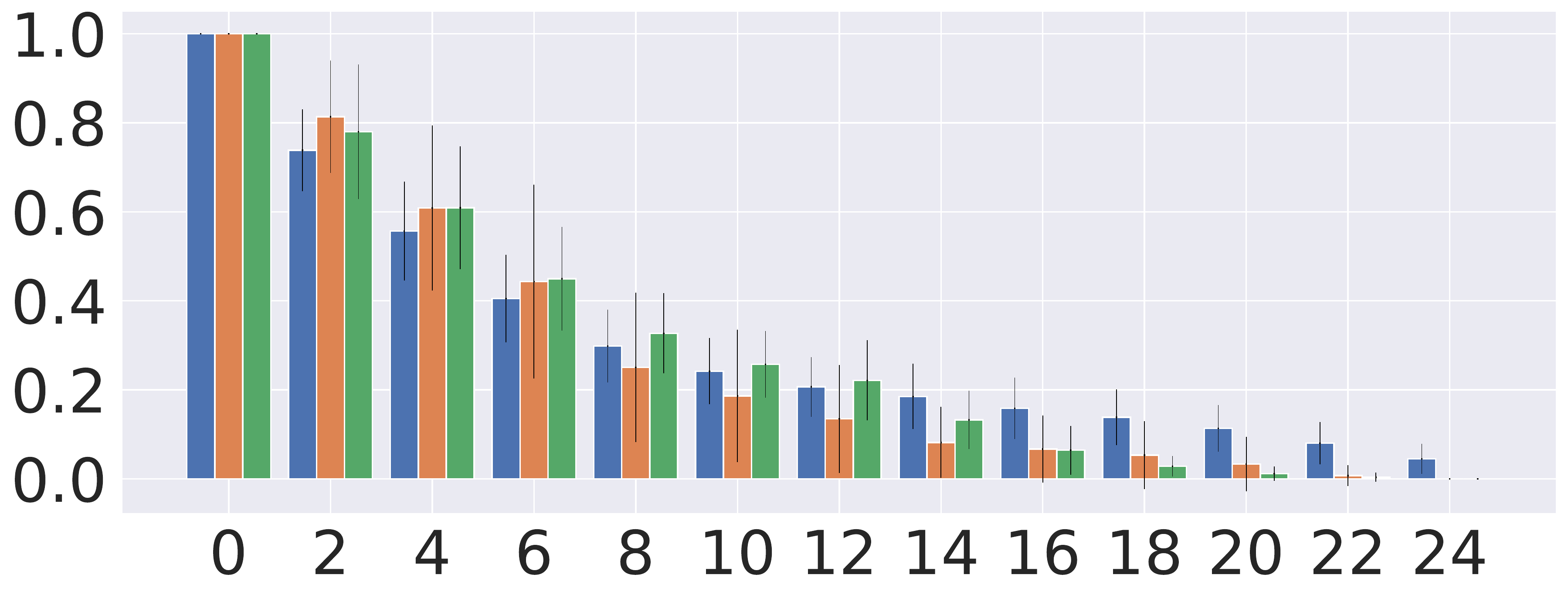}
\\
\rowname{\makecell{\small \textbf{QR-DQN} \\\scriptsize stability ratio}}&
\includegraphics[height=\utilheightactionhalfcum]{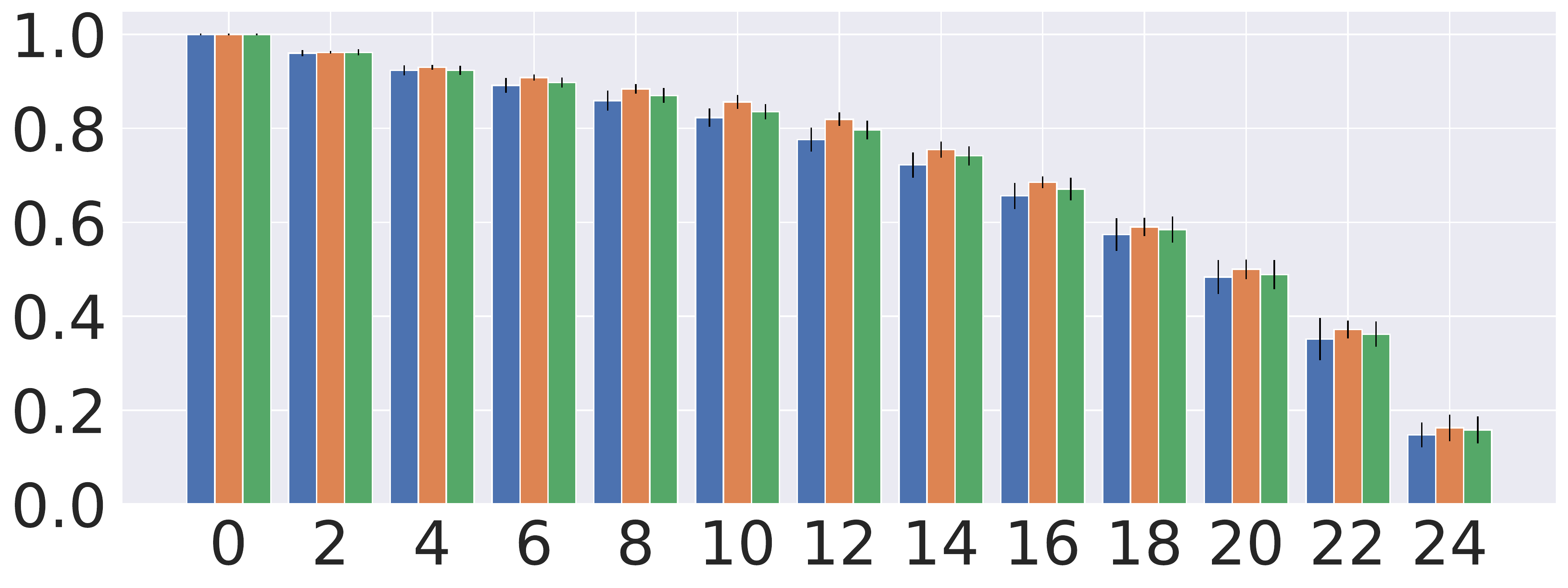}&
\includegraphics[height=\utilheightactionhalfcum]{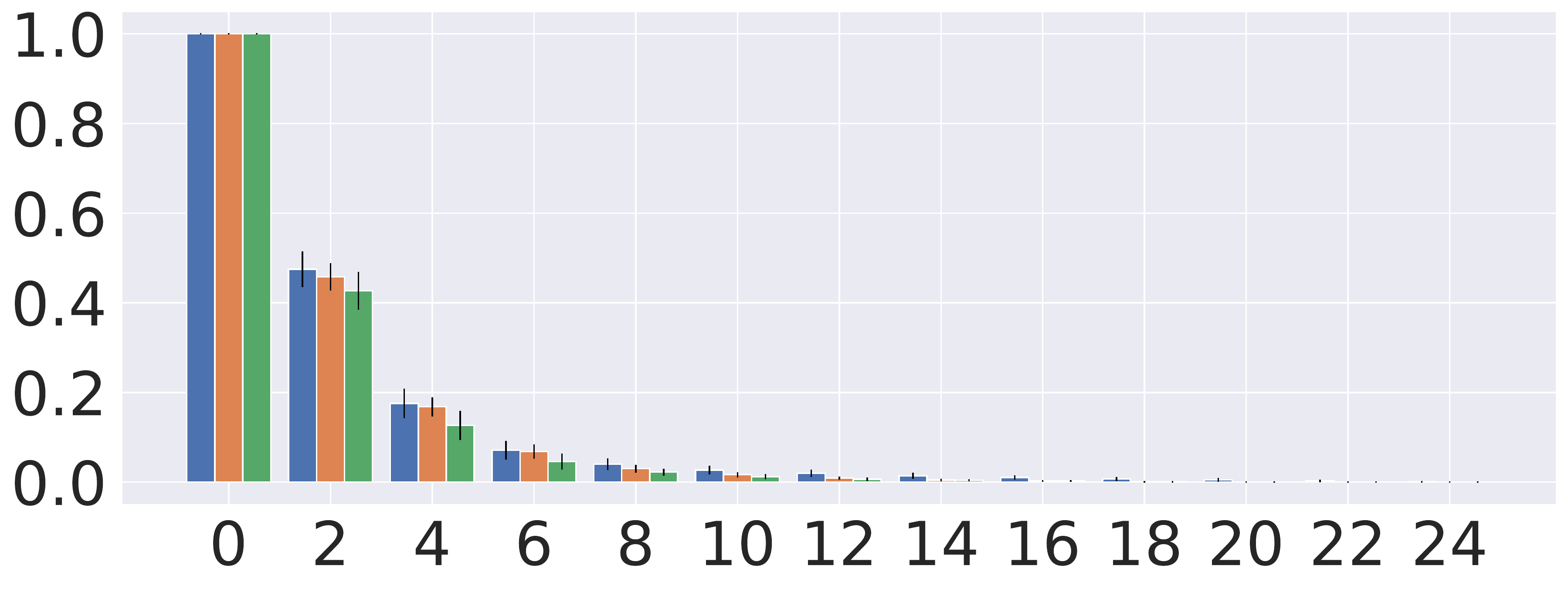}&
\includegraphics[height=\utilheightactionhalfcum]{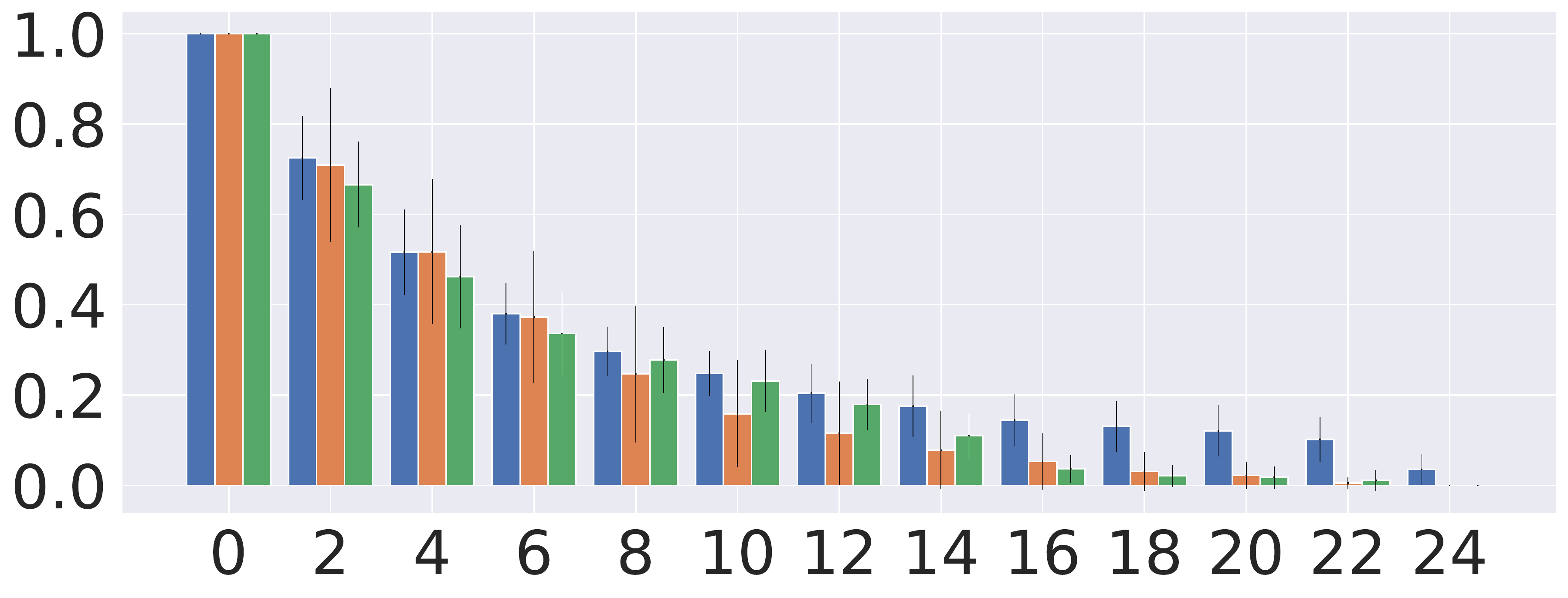}
\\
\rowname{\makecell{\small \textbf{C51} \\\scriptsize stability ratio}}&
\includegraphics[height=\utilheightactionhalfcum]{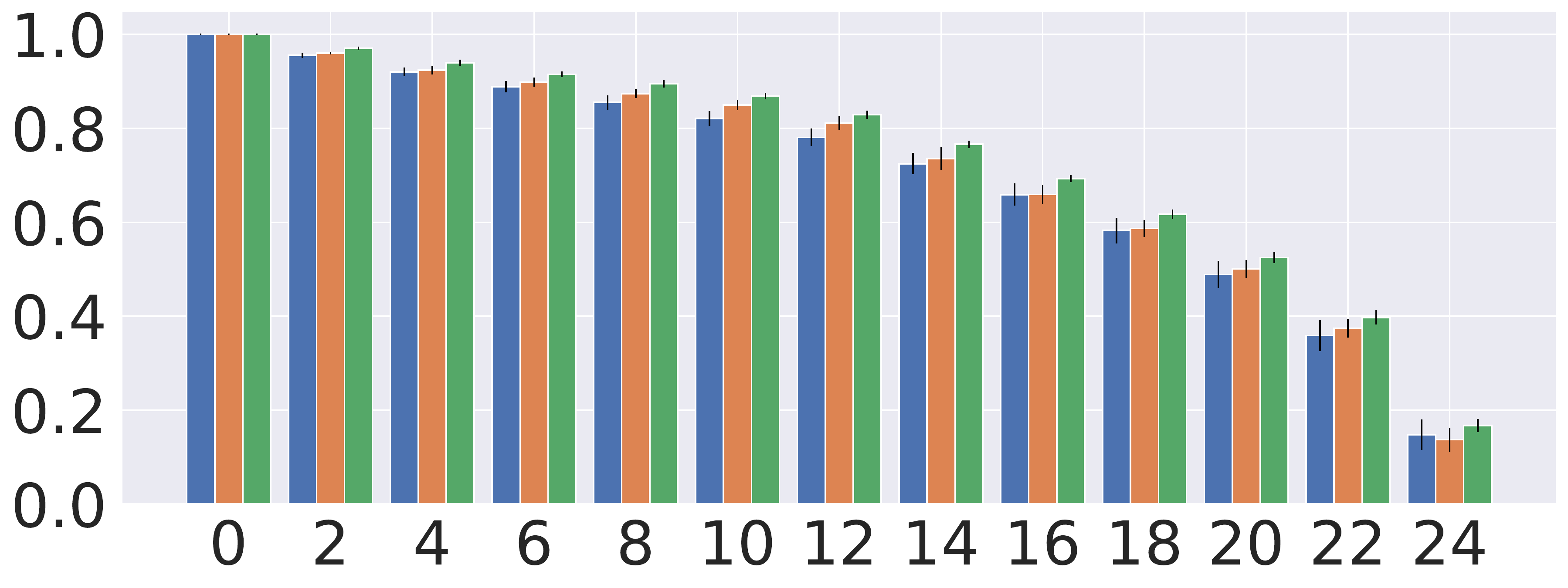}&
\includegraphics[height=\utilheightactionhalfcum]{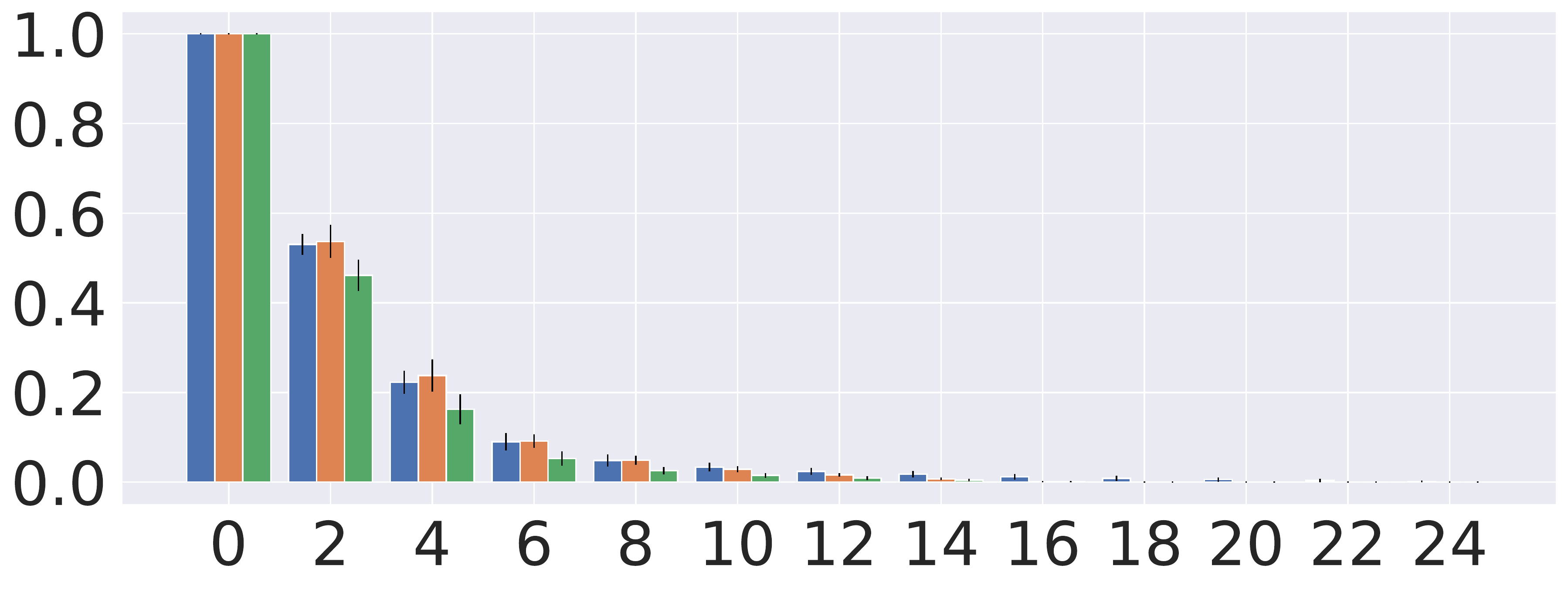}&
\includegraphics[height=\utilheightactionhalfcum]{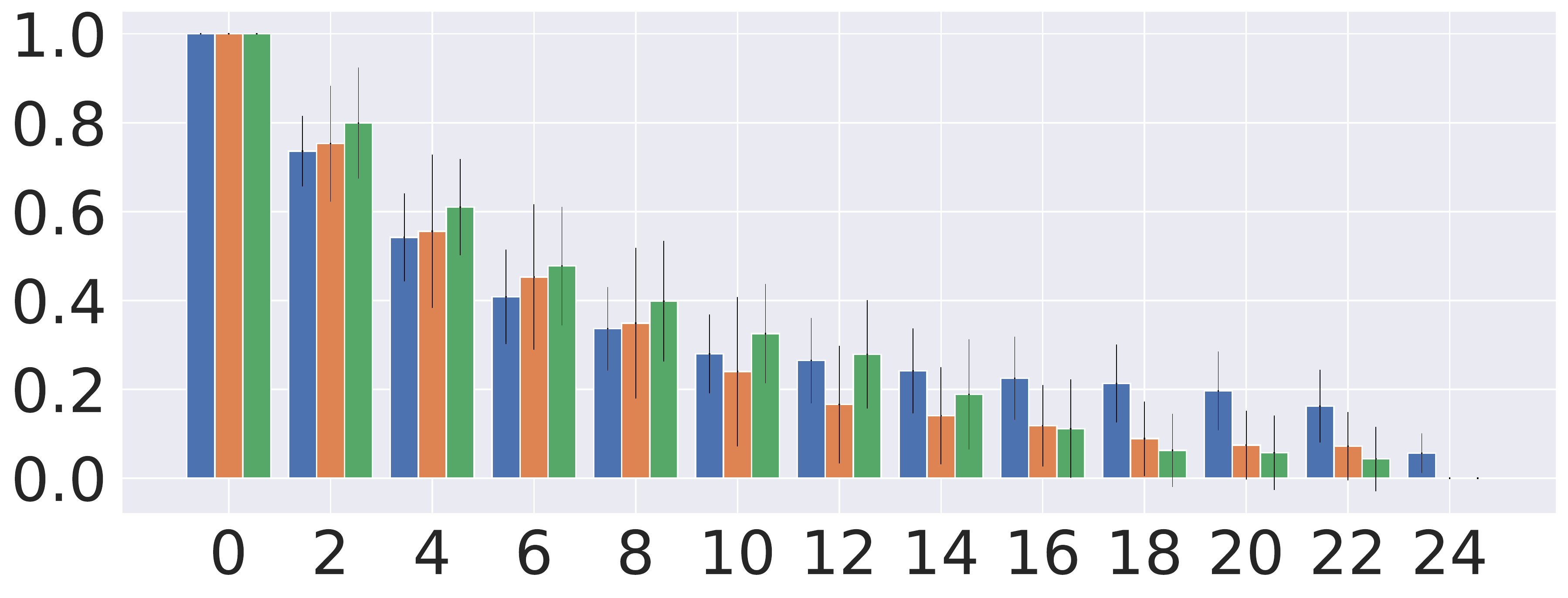}
\\
        & \makecell{\scriptsize{$\geq \wbar K$}}
        & \makecell{\scriptsize{$\geq \wbar K$}}
        & \makecell{\scriptsize{$\geq \wbar K$}}
\end{tabular}
}
\end{subtable}

}
\vspace{-3mm}
\caption{\small \textbf{Robustness certification for per-state action stability.}
We plot the cumulative histogram of the \textit{tolerable poisoning size} $\protect\wbar K$ for all time steps.
We provide the certification for different aggregation protocols (PARL, TPARL, DPARL) on three environments and \#partitions $u=50$.
The results are averaged over $20$ runs with the vertical bar on top denoting the standard deviation.
}%
\label{fig:statewise}
\vspace{-2.2em}
\end{figure}

\looseness=-1
\textbf{Evaluation Results.}\quad
We present the comparison of per-state action certification for different RL methods and certification methods in~\Cref{fig:statewise}.
We plot partial poisoning thresholds on the $x$-axes here, and omit full results in~\Cref{adxsubsec:full-action}, where we also 
report
the average tolerable poisoning thresholds.
We additionally report benign empirical reward and the comparisons with standard training in~\Cref{adxsubsec:emp-reward,adxsubsec:cmp-training}, as well as more analytical statistics in~\Cref{adxsubsec:stat-dparl,adxsubsec:cmp-traj-num}.

The cumulative histograms in~\Cref{fig:statewise} can be compared in different levels.
Basically, we compare the \textit{stability ratio} at each tolerable poisoning thresholds $\wbar K$---\textit{higher ratio at larger poisoning size indicates stronger certified robustness}.
On the \textbf{RL algorithm} level, 
QR-DQN and C51 consistently outperform the baseline DQN, and C51 has a substantial advantage  particularly in Highway.
On the \textbf{aggregation protocol} level, we observe different behaviors in different environments.
On Freeway, methods with temporal aggregation (\rlalgotmp and \rlalgodyntmp) achieve higher robustness, and \rlalgodyntmp achieves the highest certified robustness in most cases;
while on Breakout and Highway, the single-step aggregation \rlalgo is oftentimes better.
This difference is due to the different properties of environments.
Our temporal aggregation is developed based on the assumption of consistent action selection in adjacent time steps. This assumption is true in Freeway while violated in Breakout and Highway.
A more detailed explanation of environment properties is omitted to~\Cref{adxsubsec:cmp-games}.
On the \textbf{partition number} level, a larger partition number generally allows larger tolerable poisoning thresholds as shown in~\Cref{adxsubsec:full-action}.
Finally, on the \textbf{RL environment} level, Freeway achieves much higher certified robustness for per-state action stability than Highway, followed by Breakout, implying that Freeway is an environment that accommodates more stable and robust policies.

\vspace{-2mm}
\subsection{Evaluation of Robustness Certification for Cumulative Reward Bound}
\label{subsec:eval-reward}
\vspace{-2mm}

We provide the robustness certification for cumulative reward bound according  to~\Cref{subsec:cert-reward}.

\looseness=-1
\textbf{Experimental Setup and Metrics.}\quad
We evaluate the aggregated policies $\rlalgopolicy$, $\rlalgotmppolicy$, and $\rlalgodyntmppolicy$ following~\Cref{thm:action-set-ppa},~\Cref{thm:action-set-tpa} and~\Cref{thm:action-set-dtpa}.
We compute the lower bounds of the cumulative reward $\uj_K$ w.r.t. the poisoning size $K$ using the \adasearch algorithm introduced in~\Cref{subsec:cert-reward}.
We provide details of the evaluated trajectory length along with the rationales in~\Cref{adxsubsec:exp-proc}.


\renewcommand{\thesubfigure}{\alph{subfigure}}
\newcommand{\mycaption}[1]
{\refstepcounter{subfigure}\textbf{(\thesubfigure) }{\ignorespaces #1}}

\begin{figure}

\newlength{\utilheightreward}
\settoheight{\utilheightreward}{\includegraphics[width=.3\linewidth]{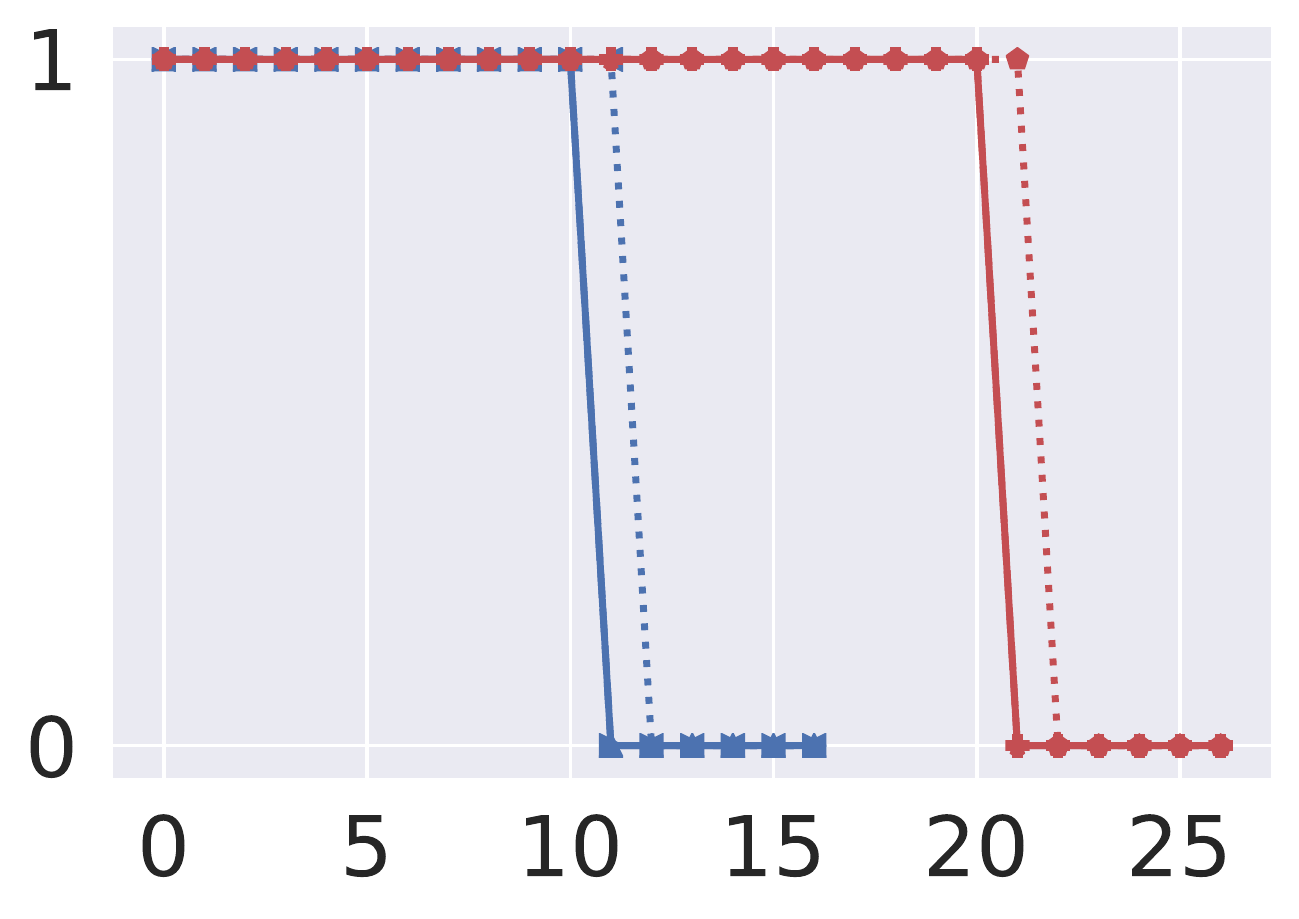}}%

\newlength{\legendheight}
\setlength{\legendheight}{0.27\utilheightreward}%

\newcommand{\rowname}[1]
{\rotatebox{90}{\makebox[\utilheightreward][c]{\tiny #1}}}

\centering

{
\renewcommand{\tabcolsep}{10pt}

\begin{subtable}[]{\linewidth}
\begin{tabular}{l}
\includegraphics[height=\legendheight]{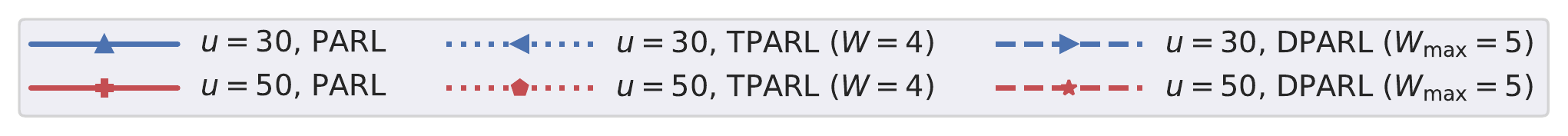}
\end{tabular}
\end{subtable}

\begin{subtable}[]{\linewidth}
\centering
\resizebox{\linewidth}{!}{%
\begin{tabular}{@{}p{7mm}@{}c@{}c@{}c@{}c@{}c@{}}
        & \makecell{\normalsize{\textbf{Freeway, $H=200$}}}
        & \makecell{\normalsize{\textbf{Freeway, $H=400$}}}
        & \makecell{\normalsize{\textbf{Breakout, $H=50$}}}
        & \makecell{\normalsize{\textbf{Breakout, $H=75$}}}
        & \makecell{\normalsize{\textbf{Highway, $H=30$}}}\\
\rowname{\makecell{\normalsize \textbf{DQN} \\\normalsize $\uj_K$ }}&
\includegraphics[height=\utilheightreward]{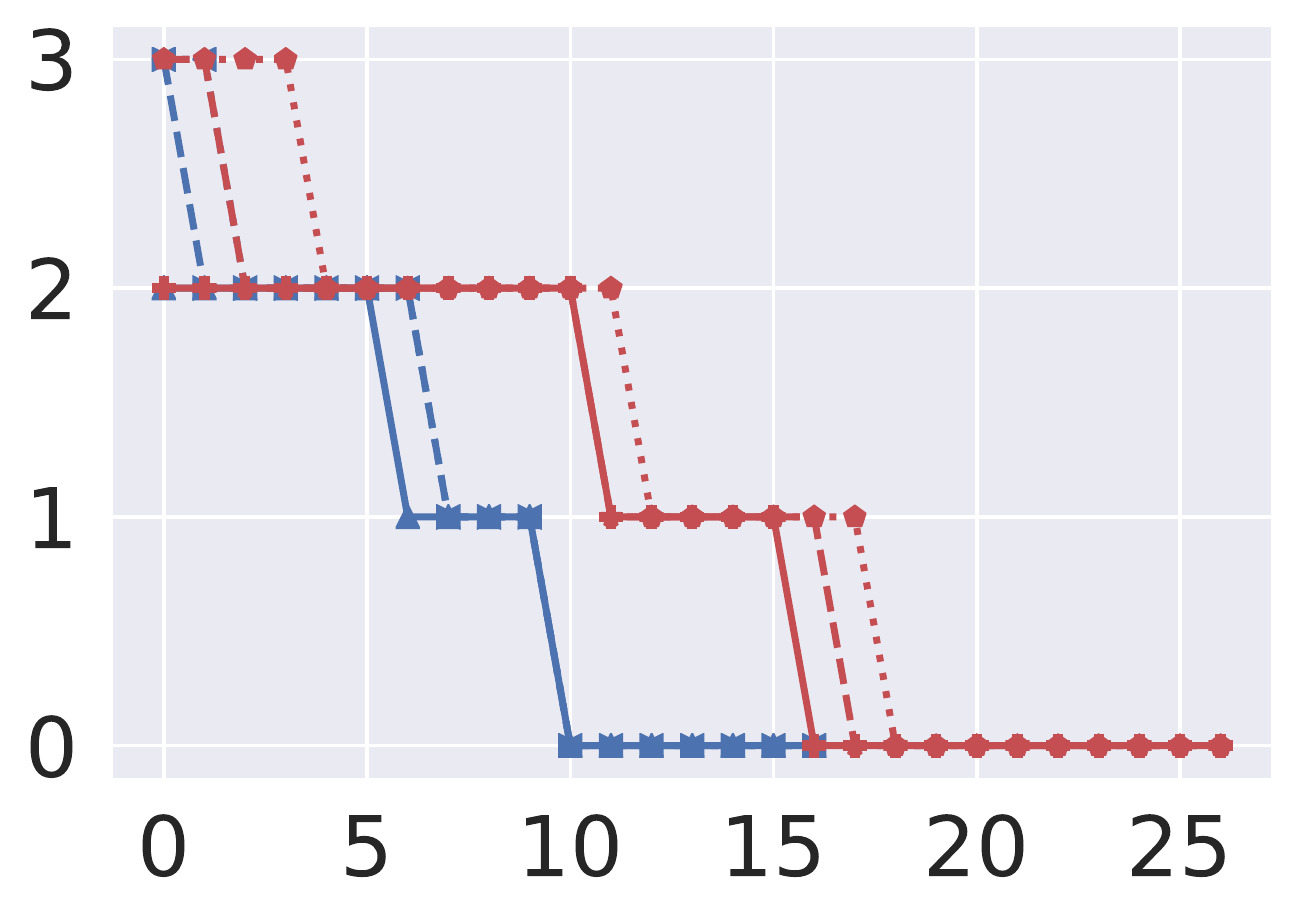}&
\includegraphics[height=\utilheightreward]{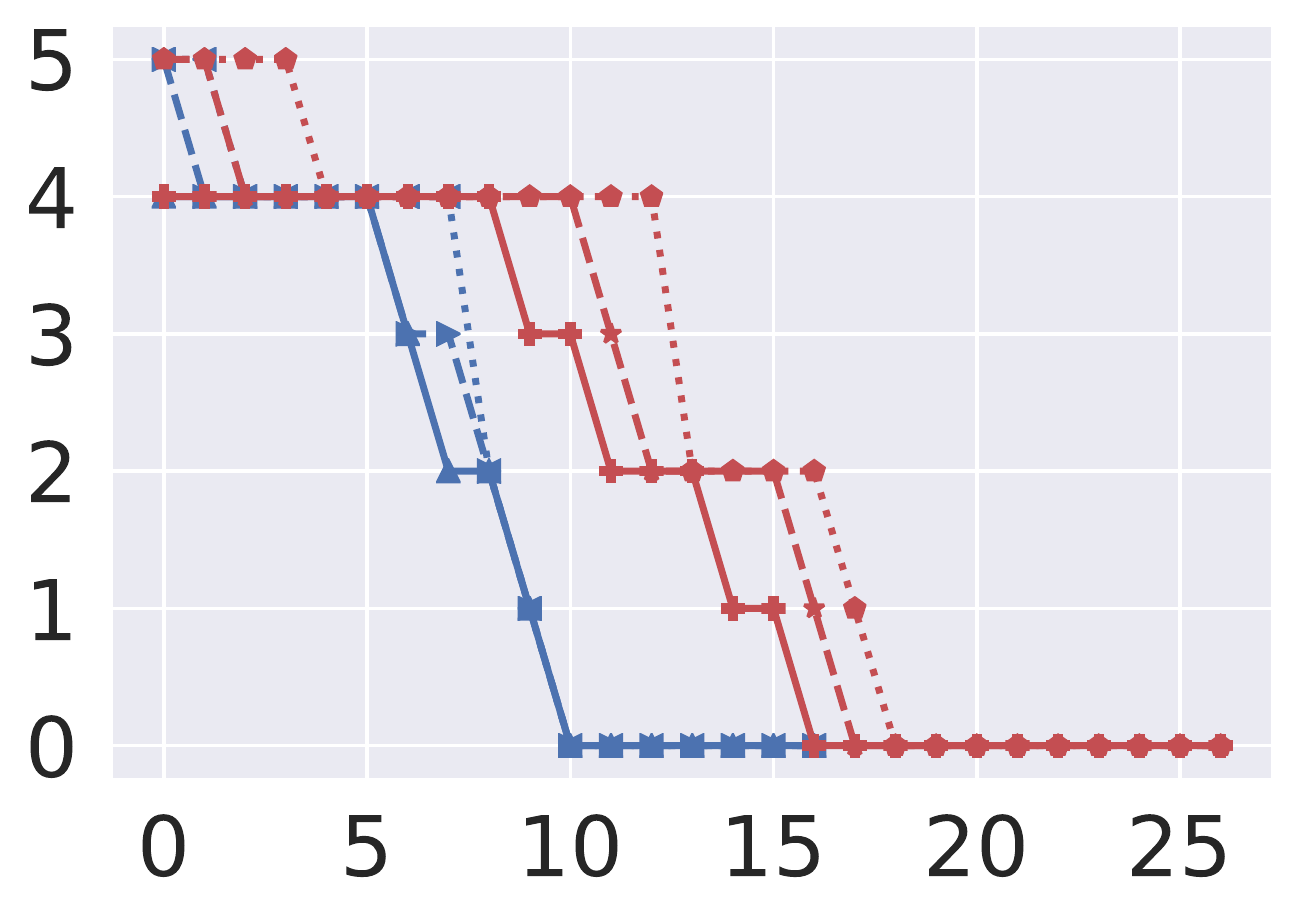}&
\includegraphics[height=\utilheightreward]{figs/breakout_dqn_50.pdf}&
\includegraphics[height=\utilheightreward]{figs/breakout_dqn_75.pdf}&
\includegraphics[height=\utilheightreward]{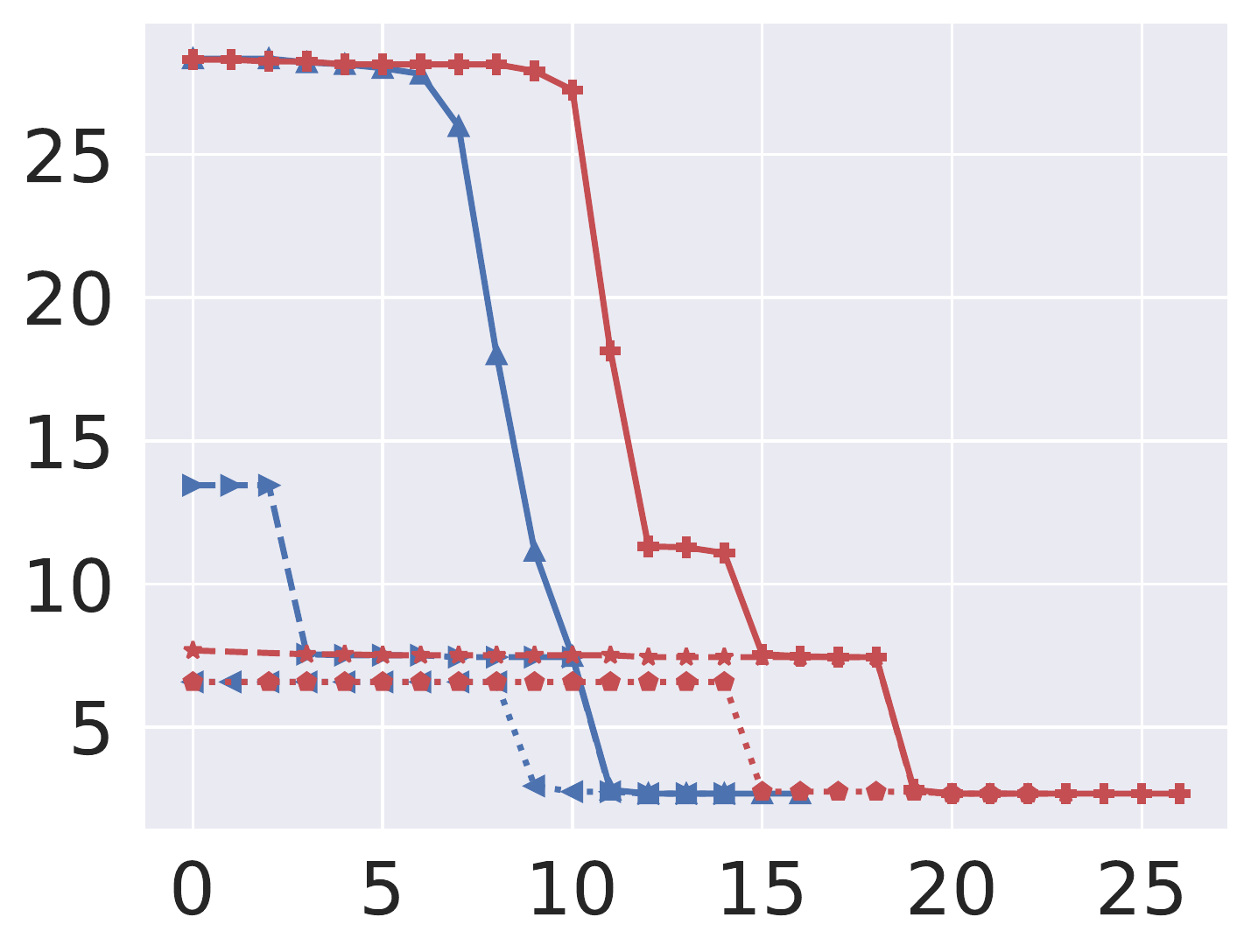}
\\
\rowname{\makecell{\normalsize \textbf{QR-DQN} \\\normalsize $\uj_K$}}&
\includegraphics[height=\utilheightreward]{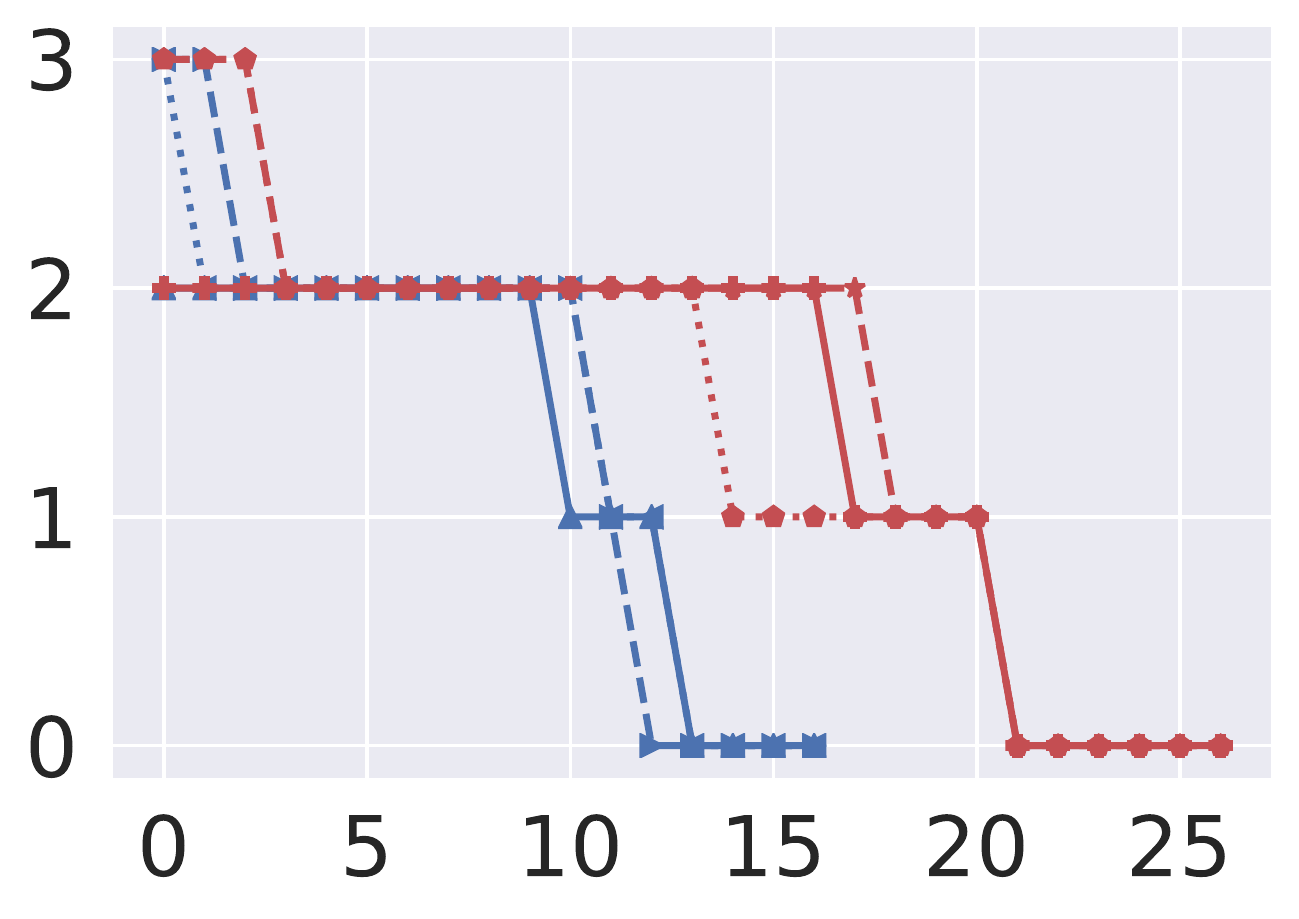}&
\includegraphics[height=\utilheightreward]{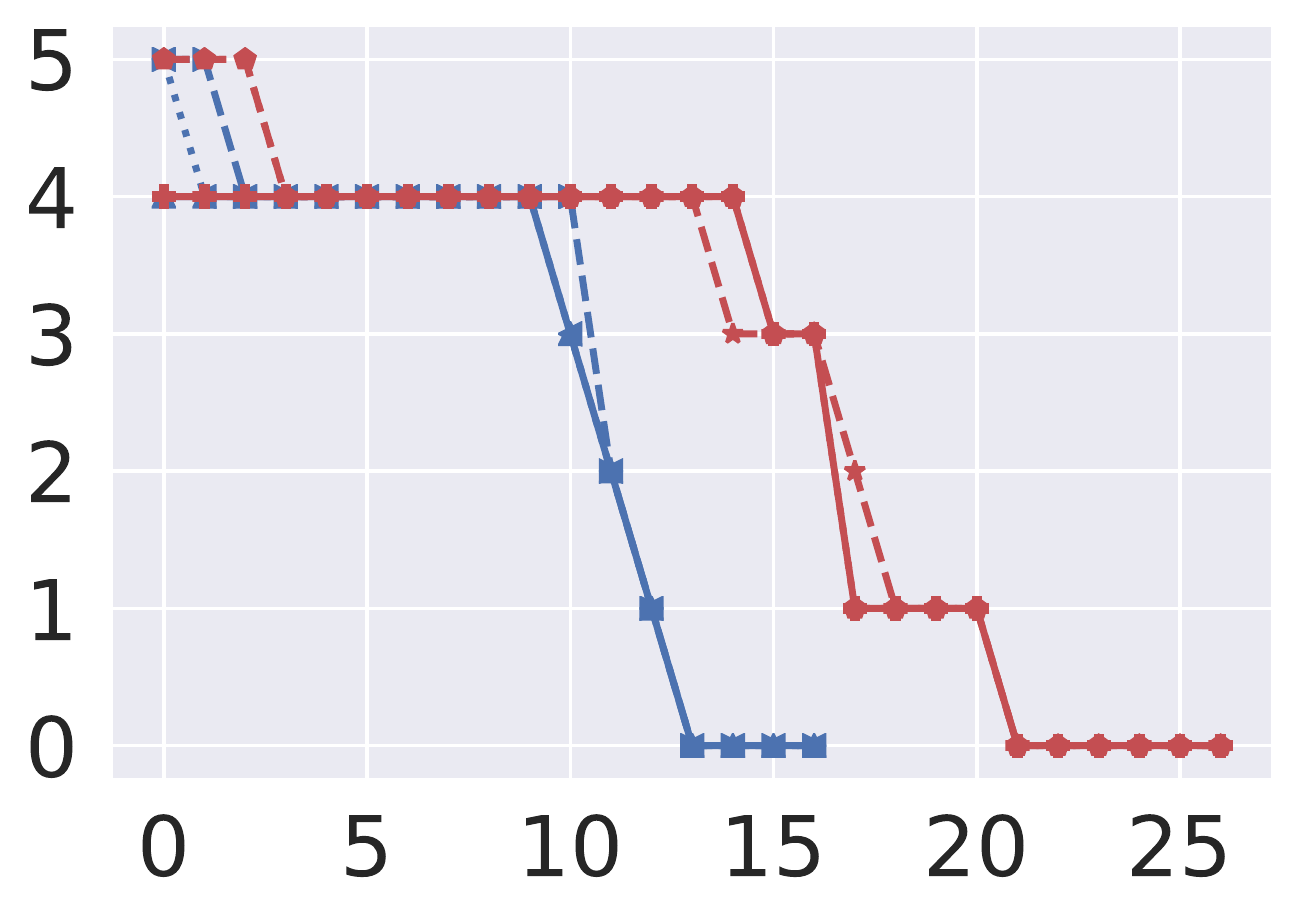}&
\includegraphics[height=\utilheightreward]{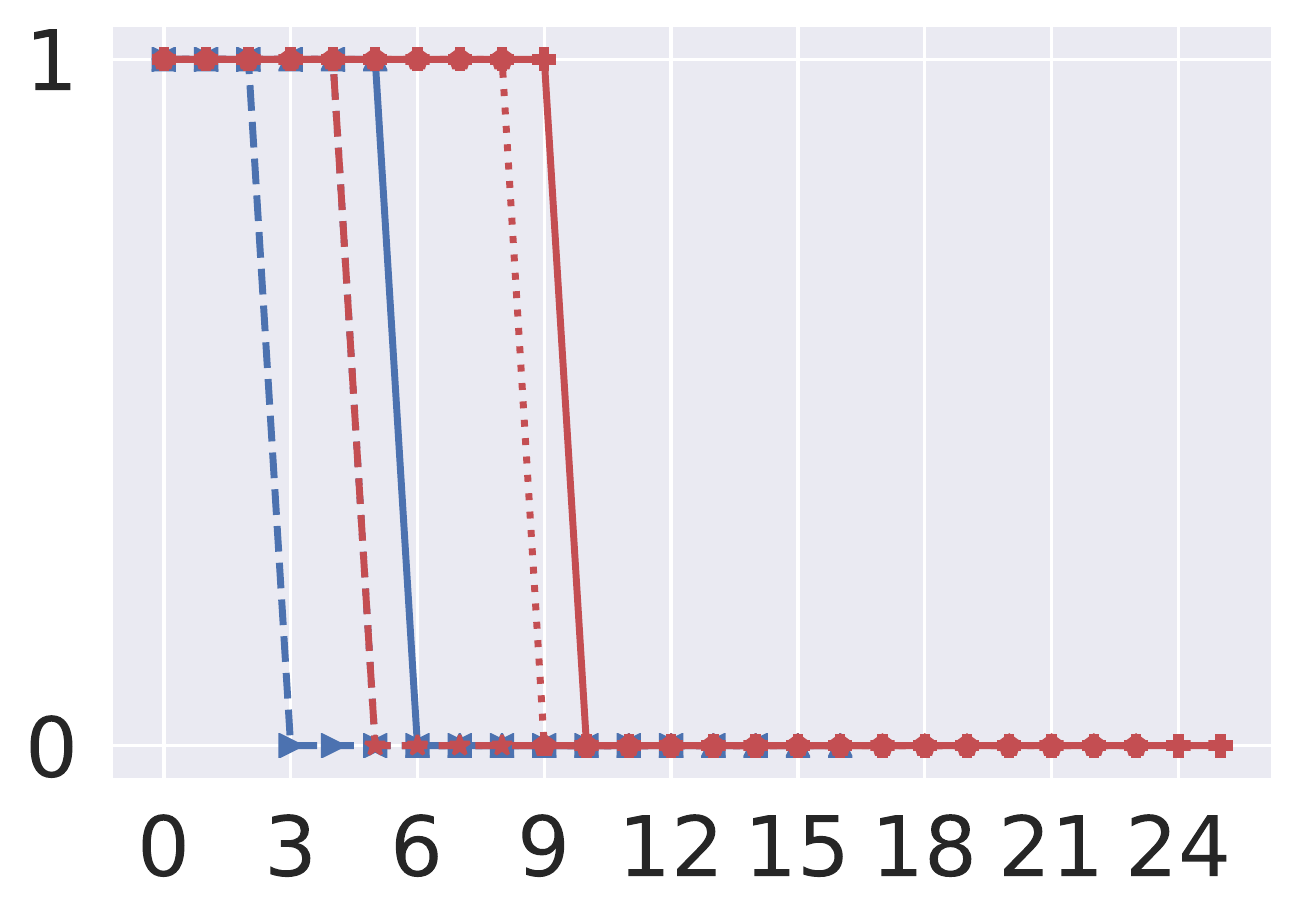}&
\includegraphics[height=\utilheightreward]{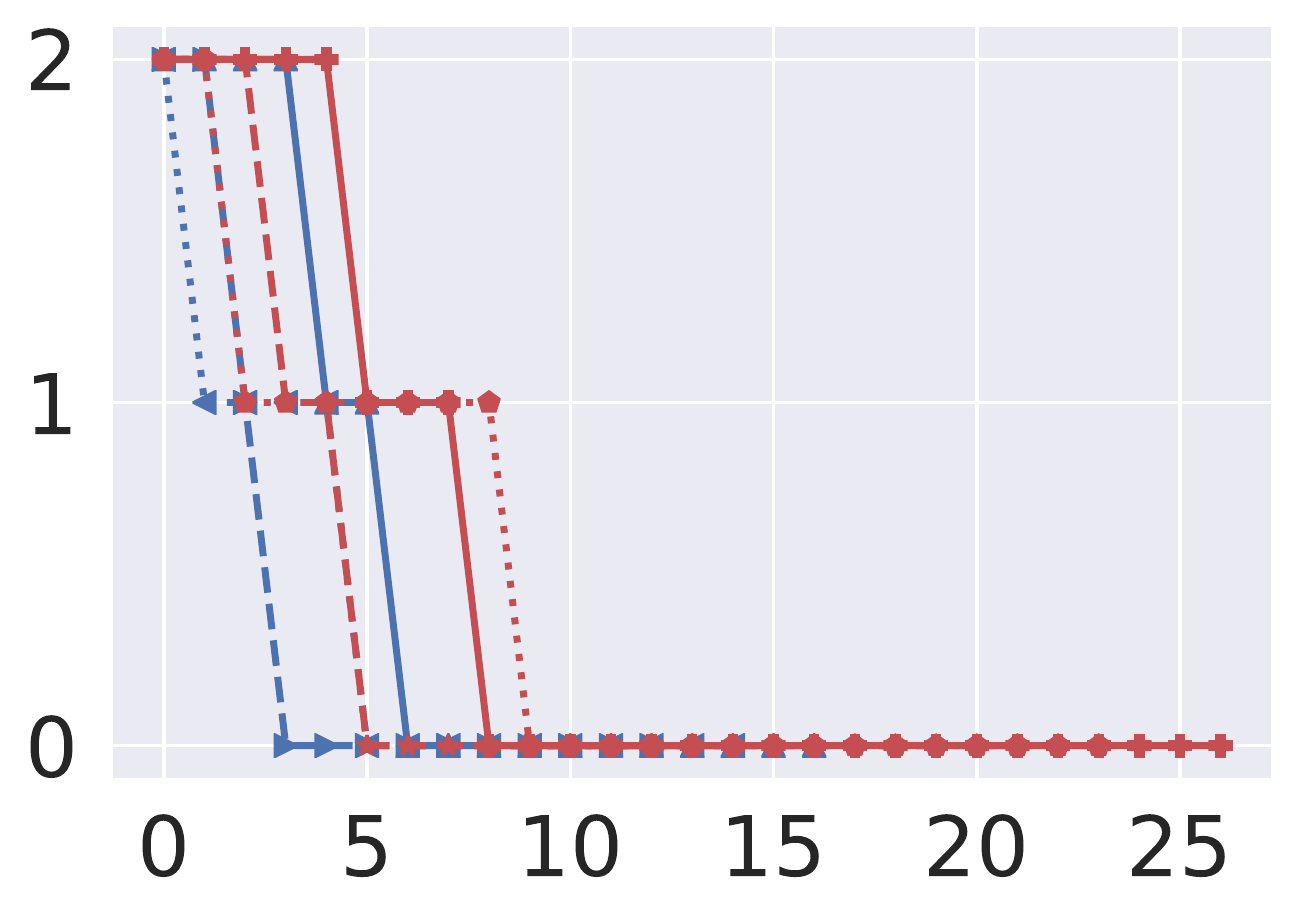}&
\includegraphics[height=\utilheightreward]{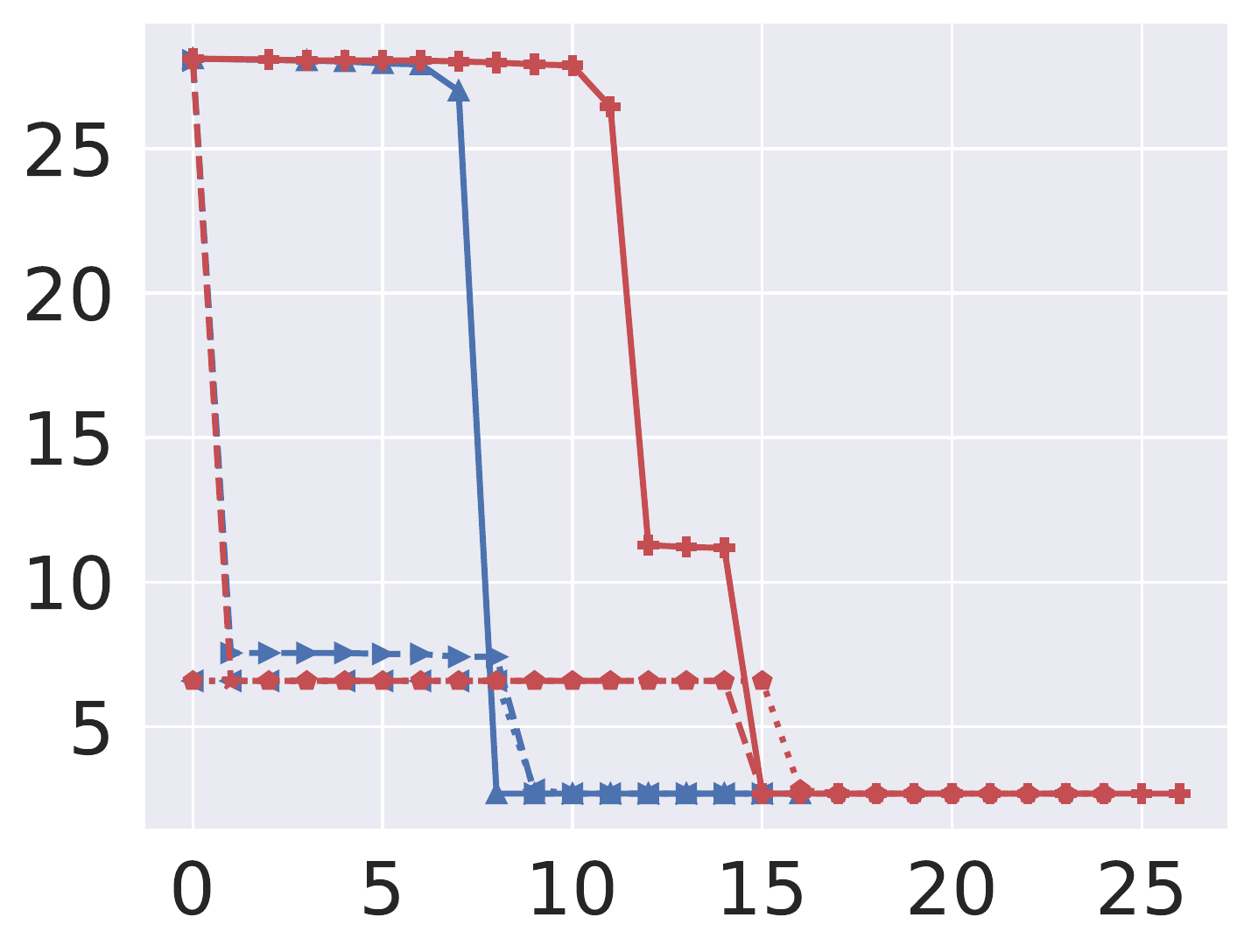}
\\
\rowname{\makecell{\normalsize \textbf{C51} \\\normalsize $\uj_K$}}&
\includegraphics[height=\utilheightreward]{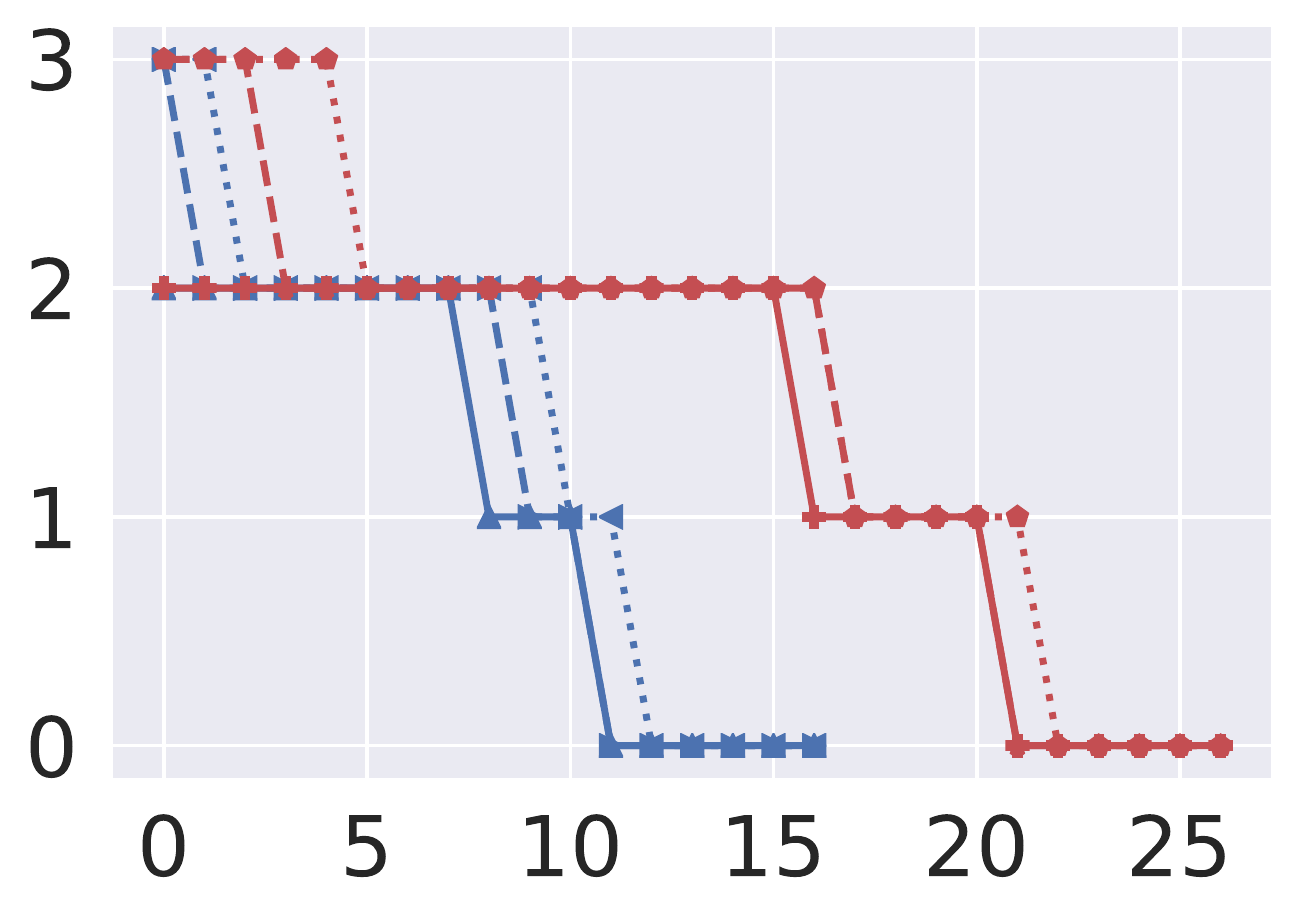}&
\includegraphics[height=\utilheightreward]{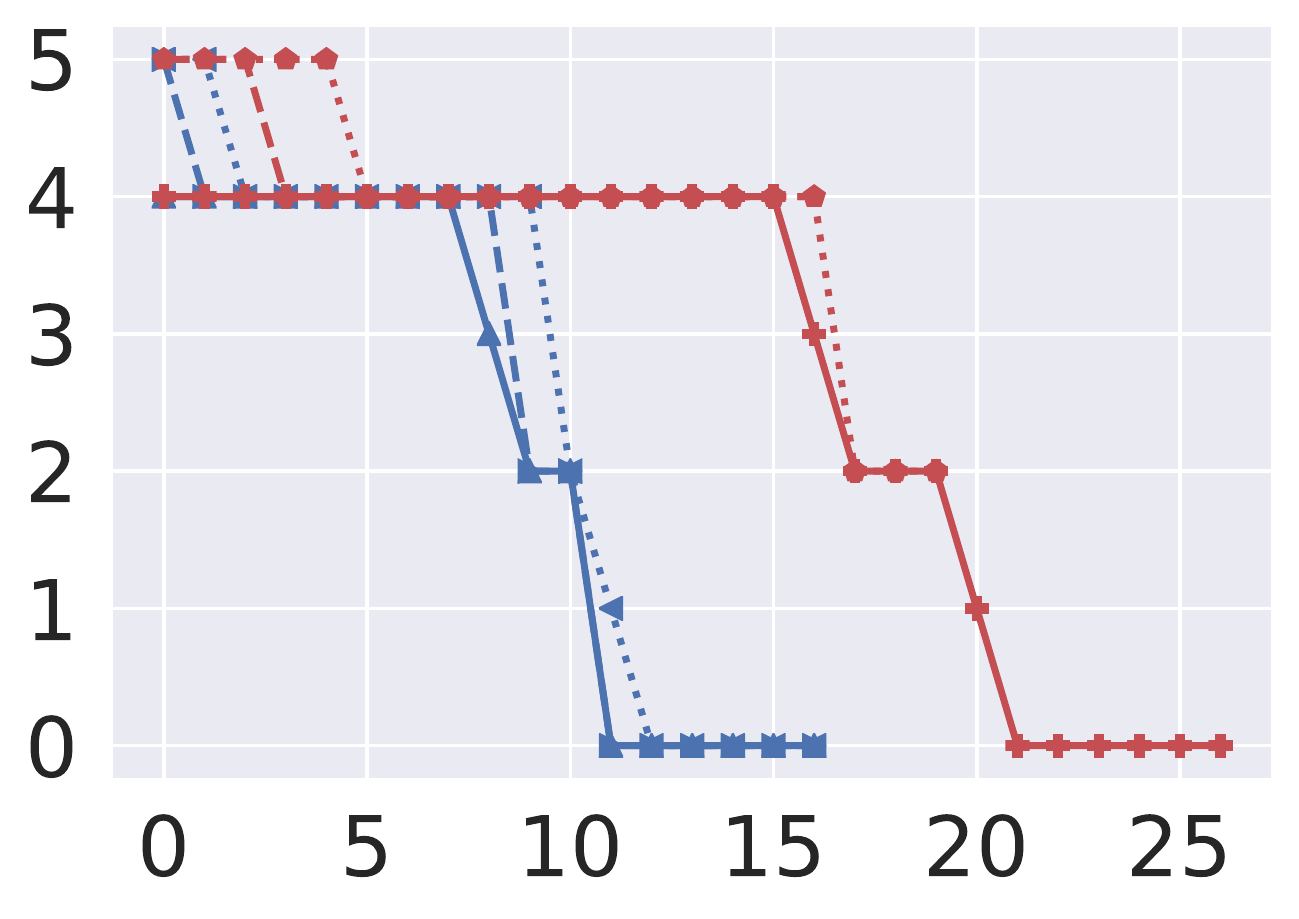}&
\includegraphics[height=\utilheightreward]{figs/breakout_c51_50.pdf}&
\includegraphics[height=\utilheightreward]{figs/breakout_c51_75.pdf}&
\includegraphics[height=\utilheightreward]{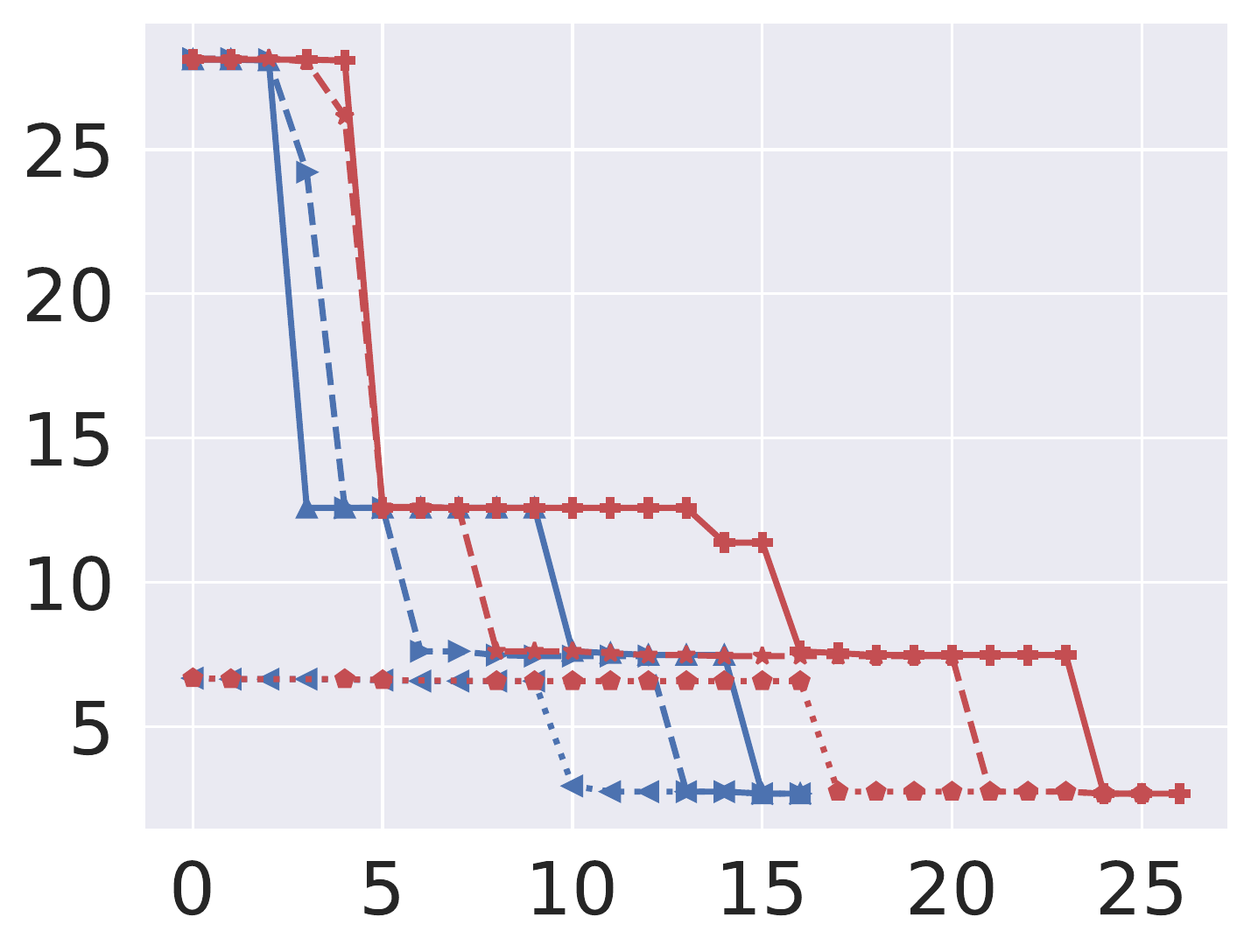}
\\[-0.5ex]
        & \makecell{\normalsize{Poisoning size $K$}}
        & \makecell{\normalsize{Poisoning size $K$}}
        & \makecell{\normalsize{Poisoning size $K$}}
        & \makecell{\normalsize{Poisoning size $K$}}
        & \makecell{\normalsize{Poisoning size $K$}}
\end{tabular}
}
\end{subtable}

}
\vspace{-3mm}
\caption{\small \textbf{Robustness certification for cumulative reward.}
We plot the \textit{lower bound of cumulative reward bound} \textit{$\uj_K$} w.r.t. \textit{poisoning size} $K$ under three aggregation protocols (PARL, TPARL ($W=4$), DPARL ($W_{\max}=5$)) with two \#partitions $u$, evaluated on three environments with different horizon lengths $H$.
}%
\label{fig:reward}
\vspace{-2.2em}
\end{figure}


\textbf{Evaluation Results.}\quad
We present the comparison of reward certification for different RL algorithms and certification methods in~\Cref{fig:reward}.
Essentially, at each poisoning size $K$, we compare the lower bound of cumulative reward achieved by different RL algorithms and certification methods---\textit{higher value of the lower bound implies stronger certified robustness}.
On the \textbf{RL algorithm} level, QR-DQN and C51 almost invariably outperform the baseline DQN algorithm.
On the \textbf{aggregation protocol} level, methods with temporal aggregation consistently surpass the single-step aggregation \rlalgo on Freeway but not the other two, as analyzed in~\Cref{subsec:eval-action}.
In addition, we note that  \rlalgodyntmp is sometimes not as robust as \rlalgotmp.
We hypothesize two reasons: 1) the dynamic mechanism is more susceptible to the attack, \eg, the selected optimal window size is prone to be manipulated; 2) the lower bound is looser for \rlalgodyntmp given the difficulty of computing the possible action set in \rlalgodyntmp (discussed in~\Cref{thm:action-set-dtpa}).
On the \textbf{partition number} level, a larger partition number ($u=50$) demonstrates higher robustness.
On the \textbf{horizon length} level, the robustness ranking of different policies is similar under different horizon lengths with slight differences, corresponding to the property of finite-horizon RL.
On the \textbf{RL environment} level, Freeway can tolerate a larger poisoning size than Breakout and Highway.
More results and discussions are in~\Cref{adxsubsec:full-reward}.





%

\vspace{-2mm}
\section{Conclusions}
\vspace{-2mm}

    In this paper, we proposed \textbf{\sysname}, the first framework for certifying robust policies for offline RL against poisoning attacks.
    \sysname includes three policy aggregation protocols.
    For each aggregation protocol, \sysname provides a sound certification for both per-state action stability and cumulative reward bound.
    Experimental evaluations on different environments and different offline RL training algorithms show the effectiveness of our robustness certification in a wide range of scenarios.

\subsubsection*{Acknowledgments}
    This work is partially supported by the NSF grant No.1910100, NSF CNS 20-46726 CAR, Alfred P. Sloan Fellowship, the U.S. Department of Energy by the Lawrence Livermore National Laboratory under Contract No. DE-AC52-07NA27344 and LLNL LDRD Program Project No. 20-ER-014, and Amazon Research Award.

\textbf{Ethics Statement.}\quad
In this paper, we prove the first framework for certifying robust policies for offline RL against poisoning attacks.
On the one hand, such framework provides rigorous guarantees in practice RL tasks and thus significantly alleviates the security vulnerabilities of offline RL algorithms against training-phase poisoning attacks.
The evaluation of different RL algorithms also provides a better understanding about the different degrees of security vulnerabilities across different RL algorithms. 
On the other hand, the robustness guarantee provided by our framework only holds under specific conditions of the attack.
Specifically, we require the attack to change only a bounded number of training trajectories.
Therefore, users should be aware of such limitations of our framework, and should not blindly trust the robustness guarantee when the attack can change a large number of training instances or modify the training algorithm itself.
As a result, we encourage researchers to understand the potential risks, and evaluate whether our constraints on the attack align with their usage scenarios when applying our \sysname to real-world applications.
We do not expect any ethics issues raised by our work.

\textbf{Reproducibility Statement.}\quad
All theorem statements are substantiated with rigorous proofs in \Cref{adxsec:proofs}.
In \Cref{adxsec:alg}, we list the pseudocode for all key algorithms.
Our experimental evaluation is conducted with publicly available OpenAI Gym toolkit~\citep{brockman2016openai}.
We introduce all experimental details in both \Cref{sec:exp} and \Cref{adxsec:exp}.
Specifically, we build the code upon the open-source code base of \citet{agarwal2020optimistic}, and we upload the source code at \url{https://github.com/AI-secure/COPA} for reproducibility purpose.

\bibliography{iclr2022_conference}
\bibliographystyle{iclr2022_conference}

\appendix
\newpage

\section{Omitted Definitions}
    \label{adxsec:protocol}
    
    \subsection{Cumulative Reward}
    
        \begin{definition}[Cumulative Reward]
            \label{def:cum-reward}
            Let $P: \cal S \times \cal A \rightarrow \cal P(\cal S)$ be the transition function  with $\cal P(\cdot)$ defining the set of probability measures.  
            Let $R,d_0,H$ be the reward function, initial state distribution, and time horizon.
            We denote $J(\pi)$  as the \emph{cumulative reward} of the given policy $\pi$:
            \begin{small}
            \begin{equation}
                \label{eq:cum-reward}
                J(\pi)  := \E \sum_{t=0}^{H-1} R(s_t, a_t),
                \text{ where } a_t=\pi(s_t),s_{t+1}\sim P(s_t,a_t),s_0\sim d_0,
            \end{equation}
            \end{small}
        \end{definition}
    
        It is natural to adapt this definition when there is a discount factor.
        If policy $\pi$ considers recent $n_t$ states instead of only $s_t$ to make action predictions, we can change $a_t = \pi(s_t)$ to $a_t = \pi(s_{t-n_t+1 : t})$ in \Cref{eq:cum-reward}.

    \subsection{\rlalgo~(\rlalgofullname) Protocol}
    
        \begin{definition}[\rlalgofullname]
            Given subpolicies $\{\pi_i\}_{i=0}^{u-1}$,
            the \rlalgofullname~(\rlalgo) protocol defines an aggregated policy $\rlalgopolicy: \gS \to \gA$ such that
            \begin{small}
            \begin{equation}
                \rlalgopolicy(s) := \argmax_{a\in \gA} n_a(s),
                \label{eq:ppa-agg}
            \end{equation}
            \end{small}
            where $n_a(s)$ is defined in \Cref{subsec:notation}.
            \label{def:ppa-agg}
        \end{definition}
        As mentioned in \Cref{sec:main}, when the $\argmax$ in \Cref{eq:ppa-agg} returns multiple elements, we break ties deterministically by returning the ``smaller'' ($<$) action, which can be defined by numerical order, lexicographical order, etc.

    \subsection{\rlalgotmp~(\rlalgotmpfullname) Protocol}
    
        \begin{definition}[\rlalgotmpfullname]
            Given subpolicies $\{\pi_i\}_{i=0}^{u-1}$ and window size $W$,
            at time step $t$,
            the \rlalgotmpfullname~(\rlalgotmp) defines an aggregated policy $\rlalgotmppolicy: \gS^{\min\{t + 1, W\}} \to \gA$ such that
            \begin{small}
            \begin{equation}
                \rlalgotmppolicy(s_{\max\{t-W+1,0\}:t}) 
                = \argmax_{a\in \gA} n_a(s_{\max\{t-W+1,0\}:t}),
                \label{eq:tpa-agg}
            \end{equation}
            \end{small}
            where $s_{l:r}$ and $n_a$ are defined in~\Cref{subsec:notation}. 
            \label{def:tpa-agg}
        \end{definition}
        
        In the above definition, at time step $t$, the input of policy $\rlalgotmppolicy$ contains all states between time step $\max\{t-W+1,0\}$ and $t$ within window size $W$; and the output is the policy prediction with the highest votes across these states.
        Recall that in \Cref{subsec:notation}, we define $n_a(s_{l:r}) = \sum_{j=l}^r n_a(s_j)$.
        We break ties in the same way as in \rlalgo.

\section{Illustration of \sysname Aggregation Protocols}
    \label{adxsec:running-example}
    
    \begin{table}[h]
        \centering
        \caption{\small A concrete example of action predictions, where ``1'' means action $a_1$ and ``2'' means action $a_2$. 
        When there is no poisoning attack, the corresponding time window spans by \rlalgo, \rlalgotmp, and \rlalgodyntmp are shown by \colorbox{green!25}{green}, \colorbox{blue!25}{blue}, and \colorbox{pink!25}{pink} respectively. All aggregated policies choose action $a_1$, but have different tolerable poisoning thresholds as shown in the last column.}
        \begin{tabular}{c|cccccccc|c}
            \toprule
            Action Prediction for & $s_0$ & $s_1$ & $s_2$ & $s_3$ & $s_4$ & $s_5$ & $s_6$ & $s_7$ & $\wbar K$ at $s_7$ \\
            \midrule
            $\pi_0$ & 1 & 1 & 1 & 1 & 1 & 1 & 1 & 1 \\
            $\pi_1$ & 1 & 1 & 1 & 1 & 1 & 1 & 1 & 1 \\
            $\pi_2$ & 1 & 1 & 1 & 1 & 1 & 1 & 1 & 1 \\
            $\pi_3$ & 1 & 1 & 1 & 1 & 1 & 1 & 1 & 2 \\
            $\pi_4$ & 1 & 1 & 1 & 1 & 1 & 1 & 1 & 2 \\
            $\pi_5$ & 1 & 1 & 1 & 2 & 1 & 2 & 2 & 2 \\
            \hline\hline
            \rlalgo~($\rlalgopolicy$, \Cref{def:ppa-agg}) & & & & & & & & \multicolumn{1}{|c|}{\cellcolor{green!25} 1} & $0$ \\
            \cline{3-9}
            \rlalgotmp with $W=7$~($\rlalgotmppolicy$, \Cref{def:tpa-agg}) & & \multicolumn{6}{|c}{\cellcolor{blue!25}} &
            \multicolumn{1}{c|}{\cellcolor{blue!25} 1} & $2$ \\
            \cline{2-9}
            \rlalgodyntmp with $W_{\max}=8$~($\rlalgodyntmppolicy$, \Cref{def:dtpa-agg}) & \multicolumn{7}{c}{\cellcolor{pink!25}} &
            \multicolumn{1}{c|}{\cellcolor{pink!25} 1} &
            $1$ \\
            \bottomrule
        \end{tabular}
        \label{tab:adx-example}
    \end{table}
    
    We present a concrete example to demonstrate how different aggregation protocols induce different tolerable poisoning thresholds, illustrate bottleneck and non-bottleneck states, and provide additional discussions.

    Suppose the action space contains two actions $\gA = \{a_1, a_2\}$, and there are six subpolicies $\{\pi_i\}_{i=0}^{u-1}$ where $u=6$.
    We are at time step $t = 7$ now.
    When there is no poisoning, the subpolicies' action predictions for state $s_0$ to $s_7$ are shown by \Cref{tab:adx-example}.
    After aggregation, all aggregated policies will choose action $a_1$ at $t=7$.

    \paragraph{\rlalgo~(\Cref{def:ppa-agg}).}
    The \rlalgo aggregation protocol only uses the current state $s_7$ to aggregate the votes.
    On $s_7$, three subpolicies choose action $a_1$ and three others choose action $a_2$.
    By breaking the tie with $a_1 < a_2$, the \rlalgo policy $\rlalgopolicy(s_7)=a_1$.
    The tolerable poisoning threshold $\wbar K_t = 0$, because one action flipping from $a_1$ to $a_2$ by poisoning only one subpolicy can change the aggregated policy to $a_2$.
    
    \paragraph{\rlalgotmp~(\Cref{def:tpa-agg}).}
    The \rlalgotmp aggregation protocol uses window size $W=7$.
    This implies that, we aggregate all states $s_{1:7}$ to count the votes and decide the action.
    Since $n_{a_1}(s_{1:7}) = 36$ and $n_{a_2}(s_{1:7}) = 6$,
    after poisoning two subpolicies, we still $\wtilde n_{a_1}(s_{1:7}) \ge \wtilde n_{a_2}(s_{1:7})$.
    Thus, we achieve an tolerable poisoning threshold $\wbar K_t = 2$.
    
    Compared with \rlalgo, the temporal aggregation in \rlalgotmp increases the vote margin between $a_1$ and $a_2$ and thus improve the tolerable poisoning threshold at current state.
    
    \paragraph{\rlalgodyntmp~(\Cref{def:dtpa-agg}).}
    The \rlalgodyntmp aggregation protocol chooses the window size $W'=8$ since this window size gives the largest average vote margin $\Delta_t^{W'}$~(defined in \Cref{eq:dtpa-agg-1}).
    We can easily find out that with poisoning size $K=1$, we will still choose $\wtilde W'=8$ as the window size and the resulting votes for $a_1$ can be kept higher than votes for $a_2$.
    However, when it comes to $K=2$, if we totally flip subpolicies $\pi_0$ and $\pi_1$ to let them always predict action $a_2$, then the \rlalgodyntmp aggregated policy will turn to choose $W'=1$ as the window size and then choose action $a_2$ as the action output.
    Thus, we achieve a tolerable poisoning threshold $\wbar K_t = 1$ this time.

    \paragraph{Illustrations of Bottleneck and Non-Bottleneck States.}
    According to our illustration in \Cref{subsec:agg-protocol}, bottleneck states are those where subpolicies vote for diverse actions but the best action is the same as previous states, and non-bottleneck states are those where subpolicies mostly vote for the same action.
    As we can see, $s_0$ to $s_6$ are non-bottleneck states since the subpolicies almost unanimously vote for one action.
    In contrast, $s_7$ is a bottleneck state.

    From the perspectives of different \textit{aggregation protocols}, we observe that for the bottleneck state $s_7$, both \rlalgotmp and \rlalgodyntmp exploit temporal aggregation to boost the certified robust poisoning threshold~(from $0$ to $2$ and from $0$ to $1$ respectively), thus demonstrating the effectiveness of \rlalgotmp and \rlalgodyntmp on improving certified robustness.
    On the other hand, for non-bottleneck states $s_0$ to $s_6$, we can easily see that different aggregation protocols do not differ much in terms of provided certified robustness levels.

    \paragraph{Explanations for Bottleneck and Non-Bottleneck States.}
    We provide additional discussions regarding \textit{why bottleneck states are vulnerable} and \textit{why our temporal aggregation strategy can effectively alleviate the problem}. We first explain the existence of the bottleneck states. Given the property of the bottleneck states that there is high disagreement among different subpolicies on such states, they may lie close to the decision boundary. This may be a result of poisoning, or simply due to the intrinsic difficulty of the state. On the other hand, such states naturally exist in the rollouts (like natural adversarial examples~\citep{hendrycks2021natural}), and may induce high instability of the rollouts during testing. We next discuss additional intuitions for our temporal aggregation. Essentially, temporal aggregation effectively leverages the adjacent non-bottleneck states to rectify the decisions at the bottleneck states, based on the assumption of temporal continuity which has been revealed and utilized in several previous works~\citep{legenstein2010reinforcement,veerapaneni2020entity,ye2021mastering}.
    
\section{Certification of Per-State Action Stability for \rlalgo}

    \label{adxsec:ppa-per-state}
    
        We certify the robustness of \rlalgo following \Cref{thm:ppa-radius}.
        \begin{theorem}
        	\label{thm:ppa-radius}
        	Let $D$ be the clean  training dataset; let $\pi_i = \gM_0(D_i), 0\le i \le u-1$ be the learned subpolicies according to \Cref{subsec:training-protocol} from which we define $n_a$~(see \Cref{subsec:notation});
        	and let $\rlalgopolicy$ be the \rlalgofullname policy: $\rlalgopolicy = \gM(D)$ where $\gM$ abstracts the whole training-aggregation process.
        	$\wtilde D$ is a poisoned dataset and $\wtilde \rlalgopolicy$ is the poisoned policy: $\wtilde \rlalgopolicy = \gM(\wtilde D)$.
        	
        	For a given state $s_t$ encountered at time step $t$ during test time, let $a := \rlalgopolicy(s_t)$, then
        	\begin{small}
        	\begin{equation}
        	    \wbar K_t = \left\lfloor\frac{n_{a}(s_t)-\max _{a^{\prime} \neq a}\left(n_{a^{\prime}}(s_t)+\1[a^{\prime}<a]\right)}{2}\right\rfloor.
        	    \label{eq:ppa-radius} \small
        	\end{equation}
        	\end{small}
            	
        \end{theorem}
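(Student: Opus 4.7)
The plan is to reduce the problem to a purely combinatorial question about vote counts: how many subpolicies can a budget-$K$ poisoning attack corrupt, and what is the minimum $K$ such that the corrupted votes can overturn the $\argmax$ at state $s_t$?

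First I would establish the \emph{partition isolation} lemma: because the partitioning rule assigns each trajectory to a partition as a deterministic function of the trajectory itself (e.g.\ the hash $h(\tau) \bmod u$), any trajectory that lies in $D \ominus \wtilde D$ contributes to modifying at most one partition. Hence if $|D \ominus \wtilde D| \le K$, at most $K$ of the subpolicies $\pi_i$ can differ from their clean counterparts, while the remaining $u - K$ subpolicies produce exactly the same vote at $s_t$ as they did on the clean dataset. Because the adversary has arbitrary control over the poisoned trajectories, the outputs of the $K$ corrupted subpolicies at $s_t$ can be set arbitrarily.

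Next I would characterize when the aggregated output at $s_t$ can change. Fix a target alternative action $a' \neq a$. Under the $\argmax$-with-small-tiebreak rule, $a'$ overtakes $a$ iff $\wtilde n_{a'}(s_t) + \1[a' < a] > \wtilde n_a(s_t)$. The attacker's optimal allocation of its budget is to reprogram clean $a$-voters into $a'$-voters: each such flip decreases $n_a$ by one and increases $n_{a'}$ by one, shifting the margin by $2$ per unit of budget. Writing $\wtilde n_a = n_a(s_t) - K$ and $\wtilde n_{a'} = n_{a'}(s_t) + K$ (which is well-defined as long as $K \le n_a(s_t)$), the overtaking condition becomes $2K > n_a(s_t) - n_{a'}(s_t) - \1[a' < a]$. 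Thus the minimum budget that enables a successful attack \emph{towards} $a'$ is $K_{a'} = \lfloor (n_a(s_t) - n_{a'}(s_t) - \1[a' < a])/2 \rfloor + 1$, and minimizing over $a' \neq a$ picks out the most vulnerable competitor, giving $K^{\star} = \lfloor (n_a(s_t) - \max_{a' \neq a}(n_{a'}(s_t) + \1[a' < a]))/2 \rfloor + 1$. The tolerable threshold is then $\wbar K_t = K^{\star} - 1$, which is exactly Equation~\eqref{eq:ppa-radius}. Finally, one verifies \emph{soundness} in the other direction: for any $K \le \wbar K_t$ and any adversary, no single $a' \neq a$ can satisfy the overtaking condition, so $a$ remains the $\argmax$ winner under tie-breaking.

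The routine computations are the floor/ceiling algebra and the tie-break case split ($a' < a$ versus $a' > a$); the only subtle points I would flag are (i)~ensuring the budget allocation I analyze is actually optimal for the attacker, which requires a short exchange argument showing that any flip that does not both decrement $n_a$ and increment $n_{a'}$ can be replaced by one that does without increasing the margin, and (ii)~handling the edge case $K > n_a(s_t)$, which one checks is never the binding regime for determining $\wbar K_t$ because attacks succeed with strictly smaller $K$ in any nontrivial instance. The main conceptual obstacle is getting the tie-breaking indicator placed correctly so that the resulting floor expression matches the stated formula exactly rather than being off by one.
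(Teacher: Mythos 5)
Your proposal is correct and follows essentially the same route as the paper's proof: bound the post-poisoning vote counts by $\wtilde n_a(s_t) \in [n_a(s_t)-K,\, n_a(s_t)+K]$, impose the overtaking condition $\wtilde n_{a'}(s_t) + \1[a'<a] > \wtilde n_a(s_t)$, and solve for $K$. The additional attack-construction (tightness) direction you sketch is not part of the paper's proof of this theorem but is handled separately as \Cref{prop:ppa-radius-tightness}, where the paper makes your ``partition isolation'' and optimal-flip arguments explicit.
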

        
        \begin{remark}
            The theorem provides a valid per-state action certification for \rlalgo,
            since according to \Cref{def:per-state}, as long as the poisoning size is smaller or equal to $\wbar K_t$, \ie, $| D \ominus \wtilde D | \le \wbar K_t$, the action of the poisoned policy is the same as the clean policy: $\wtilde \rlalgopolicy(s_t) = \rlalgopolicy(s_t) = a$.
            To compute $\wbar K_t$, according to \Cref{eq:ppa-radius}, we rely on the aggregated action count $n_a$ for state $s_t$ from subpolicies.
            The $a' < a$ is the ``smaller-than'' operator introduced following \Cref{def:ppa-agg}.
        \end{remark}
        
        \begin{proof}[Proof of \Cref{thm:ppa-radius}.]
            Suppose the poisoning size is within $K$, then after poisoning, the aggregated action count $\wtilde n_a(s_t) \in [n_a(s_t) - K, n_a(s_t) + K]$ where $n_a(s_t)$ is such count before poisoning.
            Thus, to ensure the chosen action does not change, a necessary condition is that $\wtilde n_a(s_t) \ge \wtilde n_{a'}(s_t) + \1[a' < a]$ for any other $a' < a$.
            Solving $K$ yields \Cref{eq:ppa-radius}.
        \end{proof}
        
        We use the theorem to compute the per-state action certification for each time step $t$ along the trajectory, and the detailed algorithm  is in \Cref{alg:ppa-radius}~(\Cref{adxsec:alg}).
        Furthermore, we prove that this certification is theoretically tight as the following proposition shows. The proof is in \Cref{adxsubsec:proof-C1}.
        
        \looseness=-1
        \begin{proposition}
            Under the same condition as \Cref{thm:ppa-radius}, for any time step $t$, there exists an RL learning algorithm $\gM_0$, and a poisoned dataset $\wtilde D$, such that $|D \ominus \wtilde D| = \wbar K_t + 1$, and $\wtilde \rlalgopolicy (s_t) \neq \rlalgopolicy(s_t)$.
            \label{prop:ppa-radius-tightness}
        \end{proposition}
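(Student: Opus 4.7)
The plan is to construct an explicit adversary that, with exactly $\wbar K_t + 1$ poisoned trajectories, flips the aggregated action at $s_t$. Let $a := \rlalgopolicy(s_t)$ denote the clean action and let $a^\star$ achieve the maximum in \Cref{eq:ppa-radius}, i.e.\ $a^\star := \argmax_{a'\neq a}\,(n_{a'}(s_t) + \1[a'<a])$. Set $K := \wbar K_t + 1$. The high-level strategy is to flip exactly $K$ subpolicies (each lying in a distinct partition) from voting for $a$ to voting for $a^\star$, and then to argue that after this modification the tie-broken $\argmax$ no longer returns $a$.

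First I would verify that $K$ distinct partitions whose clean subpolicy votes for $a$ are available. Since $\wbar K_t \le \lfloor n_a(s_t)/2 \rfloor$, a short arithmetic check gives $n_a(s_t) \geq K$ in the main regime $n_a(s_t)\geq 2$. The small cases $n_a(s_t)\in\{0,1\}$ are handled separately: for $n_a(s_t)=0$ the chosen action must be the lexicographically smallest, so a single modification producing a subpolicy voting for $a^\star$ already creates a strictly greater count, and for $n_a(s_t)=1$ one has $\wbar K_t = 0$ and flipping the unique $a$-voting subpolicy suffices.

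Second, I would realize the attack in the cleanest way: pick $K$ distinct clean partitions $D_{i_1},\dots,D_{i_K}$ whose subpolicies vote for $a$ at $s_t$, and remove one trajectory from each. Because the partition rule is a deterministic function of each individual trajectory, each removal affects exactly one partition and the total symmetric difference equals $|D \ominus \wtilde D| = K = \wbar K_t + 1$. I would then design the RL learning algorithm $\gM_0$ so that $\gM_0(D_i) = \pi_i$ on every unmodified partition and $\gM_0(\wtilde D_{i_j}) = \wtilde \pi_{i_j}$ on each modified one, where $\wtilde \pi_{i_j}(s_t) = a^\star$. Such a $\gM_0$ exists as a well-defined function from datasets to policies since the modified partitions are distinct datasets from the clean ones, and it is consistent with the observed clean subpolicies.

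Finally, I would compute the poisoned counts $\wtilde n_a(s_t) = n_a(s_t) - K$ and $\wtilde n_{a^\star}(s_t) = n_{a^\star}(s_t) + K$, with all other counts unchanged, and verify $\wtilde n_{a^\star}(s_t) + \1[a^\star < a] > \wtilde n_a(s_t)$. Rearranging reduces this to $2K > n_a(s_t) - n_{a^\star}(s_t) - \1[a^\star < a]$, which follows from $K = \wbar K_t + 1$ together with $\wbar K_t = \lfloor (n_a(s_t) - n_{a^\star}(s_t) - \1[a^\star < a])/2 \rfloor$. Hence the tie-broken $\argmax$ cannot return $a$, so $\wtilde \rlalgopolicy(s_t) \neq \rlalgopolicy(s_t)$. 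The main obstacle is not the core counting argument, which is a one-line check, but rather the careful bookkeeping around the deterministic tie-breaking rule together with the low-vote edge cases above; the construction itself is permissive precisely because the proposition grants the freedom to choose $\gM_0$ and the partition rule is per-trajectory deterministic.
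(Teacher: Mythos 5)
Your proof is correct and follows essentially the same route as the paper's: both select $\wbar K_t+1$ partitions whose subpolicies vote for $a$, flip each to vote for the runner-up $a^\star=\argmax_{a'\neq a}\left(n_{a'}(s_t)+\1[a'<a]\right)$, and verify via the same floor-function arithmetic that the tie-broken $\argmax$ can no longer return $a$. The only cosmetic difference is that you realize the flip by removing one trajectory per chosen partition (which tacitly assumes those partitions are nonempty), whereas the paper inserts one crafted high-reward trajectory into each, with a footnote arranging the hash so that the insertion lands in the intended partition.
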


\section{A Table of Possible Action Set}

        

        \begin{table}[h]
            \centering
            \caption{\small Expressions of possible action set $A(K)$ given poisoning threshold $K$ for different aggregation protocols.
            Full theorem statements are in \Cref{thm:action-set-ppa,thm:action-set-tpa,thm:action-set-dtpa}~(in \Cref{subsec:cert-reward}).}
            \vspace{-1em}
            \resizebox{\textwidth}{!}{
            \begin{tabular}{c|c}
                \hline
                Protocol & $A(K)$ \\
                \hline
                \rlalgo & $\left\{
                    a \in \gA 
                    \,\Big\lvert\,
                    \sum_{a'\in \gA} \max\{n_{a'}(s_t) - n_a(s_t) - K + \1[a'<a], 0\} \le K
                \right\}$ \\
                \rlalgotmp & $\left\{ a \in \gA 
                    \,\Big\lvert\,
                    \sum_{i=1}^K h_{a',a}^{(i)} > \delta_{a',a}, \forall a' \neq a
                \right\}$ \\
                \rlalgodyntmp & $ \displaystyle
                    \{a_t\} \cup 
                    \left\{ 
                    a' \in \gA
                    \,\Big\lvert\,
                        \min_{1\le W^*\le \min\{W_{\max}, t+1\}, W^* \neq W', a''\neq a_t} L_{a',a''}^{W^*,W'} \le K
                    \right\} 
                    \cup
                    \left\{
                    a \in \gA
                    \,\Big\lvert\,
                        \sum_{i=1}^K h_{a',a}^{(i)} > \delta_{a',a}, \forall a' \neq a
                    \right\}
                $ \\
                \bottomrule
            \end{tabular}
            }
            \label{tab:AK-expressions}
        \end{table}
        
        For all three aggregation protocols, we summarize the expressions used for computing the possible action set $A(K)$ given tolerable poisoning threshold $K$ in \Cref{tab:AK-expressions}.

\section{Algorithm Pseudocode and Discussion}

    \label{adxsec:alg}
    
    \subsection{\sysname Training Protocol}
    \label{adxsubsec:algo-train}
    
        \begin{algorithm}[H]
\algsetup{linenosize=\tiny}
  \small
\setlength{\columnsep}{15pt}
\SetKwProg{Pn}{Procedure}{:}{}
\SetKwProg{Fn}{Function}{:}{}
\DontPrintSemicolon

    \KwIn{training dataset $D$, number of partitions $k$, deterministic hash function $h$}
	\KwOut{\sysname subpolicies $\{\pi_i\}_{i=0}^{k-1}$}
    
    \For {$i \in [k]$} {
        $D_i \gets \{\tau \in D \mid h(\tau) \equiv i \ ({\rm mod\ } k)\}$
        \Comment*[r]{\scriptsize Separate the training data $D$ into $k$ partitions}
    }
    \For {each partition $D_i$} {
    	$\pi_i \gets \gM_0(D_i)$ 
    	\Comment*[r]{\scriptsize Subpolicy trained on partition $D_i$ with offline RL algorithm $\gM_0$}
    }
	\KwRet{$\{\pi_i\}_{i=0}^{k-1}$}

\caption{\sysname training protocol.}
\label{alg:training-protocol}

\end{algorithm}

    \subsection{\sysname Per-State Action Certification}
        \label{adxsec:alg-perstate}
        
        \paragraph{\rlalgofullname~(\rlalgo).}\hfill
        
        \begin{algorithm}[H]
\algsetup{linenosize=\tiny}
  \footnotesize
\setlength{\columnsep}{15pt}
\SetKwProg{Pn}{Procedure}{:}{}
\SetKwProg{Fn}{Function}{:}{}
\DontPrintSemicolon

    \KwIn{environment $\gE = (\gS, \gA, R, P, H, d_0)$, subpolicies $\{\pi_i\}_{i=0}^{k-1}$}
	\KwOut{\sysname robust size at each time step $\{\wbar K_t\}_{t=0}^{H-1}$}
    
    $s_0 \sim d_0$ \Comment*[r]{\scriptsize sample initial state}
    
    \For {$t$ from $0$ to $H-1$} {
        \For {each $a \in \gA$} {
            Compute $n_a(s_t)$ from subpolicies' $\{\pi_i\}_{i=0}^{k-1}$ decisions \Comment*[r]{\scriptsize $n_a$ is defined in \Cref{subsec:notation}}
        }
        Determine the chosen action $a_t \gets \rlalgopolicy(s_t)$ according to \rlalgo~(\Cref{def:ppa-agg})\;
        Compute $\wbar K_t$ according to \Cref{eq:ppa-radius} in \Cref{thm:ppa-radius}\;
        $s_{t+1} \sim P(s_t, a_t)$
    }
    
    \KwRet{$\{\wbar K_t\}_{t=0}^{H-1}$}
    
\caption{\sysname per-state certification algorithm for \rlalgofullname~(\rlalgo).}
\label{alg:ppa-radius}

\end{algorithm}

        \paragraph{\rlalgotmpfullname~(\rlalgotmp).}\hfill
        
        \begin{algorithm}[H]
\algsetup{linenosize=\tiny}
  \footnotesize
\setlength{\columnsep}{15pt}
\SetKwProg{Pn}{Procedure}{:}{}
\SetKwProg{Fn}{Function}{:}{}
\DontPrintSemicolon

    \KwIn{environment $\gE = (\gS, \gA, R, P, H, d_0)$, subpolicies $\{\pi_i\}_{i=0}^{k-1}$, window size $W$}
	\KwOut{\sysname robust size at each time step $\{\wbar K_t\}_{t=0}^{H-1}$}
    
    $s_0 \sim d_0$ \Comment*[r]{\scriptsize sample initial state}
    
    \For {$t$ from $0$ to $H-1$} {
        \For {each $a \in \gA$} {
            Compute $n_a(s_t)$ from subpolicies' $\{\pi_i\}_{i=0}^{k-1}$ decisions \Comment*[r]{\scriptsize $n_a$ is defined in \Cref{subsec:notation}}
        }
        Determine the chosen action $a_t \gets \rlalgotmppolicy(s_t)$ according to \rlalgotmp~(\Cref{def:tpa-agg})\;
        $p_{\min} \gets \infty$\;
        \For {each $a' \in \gA, a' \neq a_t$} {
            \For {$i$ from $0$ to $k-1$} {
                Compute $h_{i,a_t, a'}$ according to \Cref{eq:tpa-radius-2}
            }
            $\{h_{a_t,a'}^{(i)}\}_{i=1}^k \gets \mathsf{sorted}(\{h_{i,a_t,a'}\}_{i=0}^{k-1}, \mathtt{reverse=True})$\;
            Compute $\delta_{a_t,a'}$\;
            $\mathtt{sum} \gets 0, p \gets 0$\;
            \For {$j$ from $1$ to $k$} {
                \If {$\mathtt{sum} + h_{a_t,a'}^{(j)} > \delta_{a_t,a'}$} {
                    $p \gets j-1$\;
                    \textbf{break}
                }
                $p \gets j, \mathtt{sum} \gets \mathtt{sum} + h_{a_t,a'}^{(j)}$\;
            }
            $p_{\min} \gets \min\{ p_{\min}, p \}$
        }
        $\wbar K_t \gets p_{\min}$\;
        $s_{t+1} \sim P(s_t, a_t)$
    }
    
    \KwRet{$\{\wbar K_t\}_{t=0}^{H-1}$}

\caption{\sysname per-state certification algorithm for \rlalgotmpfullname~(\rlalgotmp).}
\label{alg:tpa-radius}

\end{algorithm}

        \paragraph{\rlalgodyntmpfullname~(\rlalgodyntmp).}\hfill
        
        \begin{algorithm}[H]
\algsetup{linenosize=\tiny}
  \footnotesize
\setlength{\columnsep}{15pt}
\SetKwProg{Pn}{Procedure}{:}{}
\SetKwProg{Fn}{Function}{:}{}
\DontPrintSemicolon

    \KwIn{environment $\gE = (\gS, \gA, R, P, H, d_0)$, subpolicies $\{\pi_i\}_{i=0}^{k-1}$, maximum window size $W_{\max}$}
	\KwOut{\sysname robust size at each time step $\{\wbar K_t\}_{t=0}^{H-1}$}
    
    $s_0 \sim d_0$ \Comment*[r]{\scriptsize sample initial state}
    
    \For {$t$ from $0$ to $H-1$} {
        \For {each $a \in \gA$} {
            Compute $n_a(s_t)$ from subpolicies' $\{\pi_i\}_{i=0}^{k-1}$ decisions given $s_t$ \Comment*[r]{\scriptsize $n_a$ is defined in \Cref{subsec:notation}}
        }
        Determine the chosen action $a_t \gets \rlalgodyntmppolicy(s_t)$ and chosen window size $W'$ according to \rlalgodyntmp~(\Cref{def:dtpa-agg})\;
        $p_{\min} \gets \infty$\;
        $\wbar K_t \gets$ \Cref{alg:tpa-radius} \Comment*[r]{\scriptsize use \Cref{alg:tpa-radius} with $W$ replaced by $W'$ to compute $\wbar K_t$}
        $\wbar K_t^D \gets \wbar K_t$\;
        \For {each $a' \in \gA, a' \neq a_t$} {
            \For {each $a'' \in \gA, a'' \neq a_t$} {
                \For {$W^*$ from $1$ to $\min\{W_{\max}, t+1\}$} {
                    $a^\# \gets \argmax_{a_0\neq a', a_0\in \gA} n_{a_0}(s_{t-W^*+1:t})$\;
                    \For {$w$ from $1$ to $\max\{W', W^*\}$} {
                        Compute $\max_{a_0\in \gA} \sigma^w(a_0)$ according to \Cref{eq:L-2}
                    }
                    \For {$i$ from $0$ to $k-1$} {
                        \For {$w$ from $1$ to $\max\{W', W^*\}$} {
                            Compute $\sigma^w(\pi_i(s_{t-w}))$ according to \Cref{eq:L-2}
                        }
                        Compute $g_i$ according to \Cref{eq:L-2}
                    }
                    $\{g^{(i)}\}_{i=1}^k \gets \mathsf{sorted}(\{g_i\}_{i=0}^{k-1}, \mathtt{reverse=True})$\;
                    Compute $n_{a'}^{W^*}, n_{a^\#}^{W^*}, n_{a_t}^{W'}, n_{a''}^{W'}$\;
                    $\mathtt{tmp} \gets W'(n_{a'}^{W^*} - n_{a^\#}^{W^*}) - W^*(n_{a_t}^{W'} - n_{a''}^{W'}) - \1[a' > a_t]$ \;
                    $\mathtt{sum} \gets 0, p \gets 0$\;
                    \For {$j$ from $1$ to $k$} {
                        \If {$\mathtt{sum} + \mathtt{tmp} \ge 0$} {
                            $p \gets j-1$\;
                            \textbf{break}
                        }
                        $p \gets j, \mathtt{sum} \gets \mathtt{sum} + g^{(j)}$
                    }
                    $L_{a',a''}^{W^*, W'} \gets p, \wbar K_t^D \gets \min\{\wbar K_t^D, L_{a',a''}^{W^*, W'}\}$
                }
            }
        }
        $s_{t+1} \sim P(s_t, a_t)$
    }
    
    \KwRet{$\{\wbar K_t^D\}_{t=0}^{H-1}$}

\caption{\sysname per-state certification algorithm for \rlalgodyntmpfullname~(\rlalgodyntmp).}
\label{alg:dtpa-radius}

\end{algorithm}

    \subsection{\sysname Cumulative Reward Certification}
        \label{adxsubsec:alg-reward}

        \looseness=-1
        \adasearch alternately executes the procedure of \emph{trajectory exploration and expansion} and \emph{poisoning threshold growth}. 
        In trajectory exploration and expansion, \adasearch organizes all possible trajectories in the form of a search tree and progressively grows it. 
        For each node (representing a state), we leverage \Cref{thm:action-set-ppa,thm:action-set-tpa,thm:action-set-dtpa} to compute the Possible Action Set. We then expand the tree branches corresponding to the actions in the derived set. 
        In poisoning threshold growth, when all trajectories for the current poisoning threshold are explored, we increase $K’$ to seek certification under larger poisoning sizes, via maintaining a priority queue of all poisoning sizes during the expansion of the tree. 
        The iterative procedures end when the priority queue becomes empty.
        
        The \textcolor{blue}{highlighted line} in the following algorithm needs to inject different algorithms based on the aggregation protocol:
        for \rlalgo~($\rlalgopolicy$), use \Cref{thm:action-set-ppa};
        for \rlalgotmp~($\rlalgotmppolicy$), use \Cref{thm:action-set-tpa};
        for \rlalgodyntmp~($\rlalgodyntmppolicy$), use \Cref{thm:action-set-dtpa}.
        
        \begin{algorithm}[H]
\algsetup{linenosize=\tiny}
  \small
\setlength{\columnsep}{15pt}
\SetKwProg{Pn}{Procedure}{:}{}
\SetKwProg{Fn}{Function}{:}{}
\SetKwFunction{funca}{\textsc{GetActions}}
\SetKwFunction{funcb}{\textsc{Expand}}
\DontPrintSemicolon

    \KwIn{environment $\gE = (\gS, \gA, R, P, H, d_0)$, subpolicies $\{\pi_i\}_{i=0}^{k-1}$, aggregated policy $\rlalgopolicy$ or $\rlalgotmppolicy$~(with $W$) or $\rlalgotmppolicy$~(with $W_{\max}$)}
    \KwOut{a map $M$ that maps poisoning size $K$ to corresponding certified lower bound of cumulative reward $\uj_K$}

    \Comment*[l]{\scriptsize Initialize global variables}
    
    $\mathtt{p\_que} \gets \emptyset$
    \Comment*[r]{\scriptsize initialize an empty priority queue containing tuples of $(\text{state history}s_{0:t},\text{action }a,\text{poisoning size }K,\text{reward }J)$ sorted by increasing $K$}
    
    $J_{\mathrm{global}} \gets \infty$
    \Comment*[r]{\scriptsize initialize global minimum reward}
    
    \Fn{\funca{$s_{0:t},K_{\lim}, \jcur$}}{
        {
        \color{blue}
        $A \gets \mathsf{possibleActions}(s_{0:t}, \{\pi_i\}_{i=1}^{k-1}, \pi_*, K_{\lim})$
        \Comment*[r]{\scriptsize compute the possible action set given state history, subpolicies, aggregation policy $\pi_*$, and poisoning size according to theorems in \Cref{subsec:cert-reward}}
        }
        $\mathtt{a\_list} \gets \emptyset$\;
        \For {each action $a \in \gA$} {
            \If {$P(s,a) = \bot$} {
                \textbf{continue}
                \Comment*[r]{\scriptsize game terminate here, no possible larger or lower cumulative reward to search}
            }
            $\mathtt{a\_list} \gets \mathtt{a\_list} \cup \{a\}$ 
            \Comment*[r]{\scriptsize record possible actions to expand}
        }
        \If {$A \neq \gA$} {
            $K' \gets \min_{\mathsf{possibleActions}(s_{0:t}, \{\pi_i\}_{i=0}^{k-1}, \pi_*, K) \neq A} K$\;
            $A_{\mathrm{new}} \gets \mathsf{possibleActions}(s_{0:t}, \{\pi_i\}_{i=1}^{k-1}, \pi_*, K') \setminus A$
            \Comment*[r]{\scriptsize compute the immediate actions that will possibly be chosen if enlarging the poisoning size}
            \For {each action $a \in A_{\mathrm{new}}$} {
                $\mathtt{p\_que}.\mathsf{push}((s_{0:t}, a, K', \jcur))$
            }
        }
        \KwRet{$\mathtt{a\_list}$}
    }
    
    \Pn{\funcb{$s_{0:t}, K_{\lim}, \jcur$}} {
        \If{$\jcur \ge J_{\mathrm{global}} \wedge (\text{step reward is non-negative})$} {
            \KwRet{}
            \Comment*[r]{\scriptsize pruning}
        }
        $\mathtt{a\_list} \gets$ \funca{$s_{0:t}, \{\pi_i\}_{i=0}^{k-1}, \pi_*, K_{\lim}$}
        \Comment*[r]{\scriptsize compute according to theorems in \Cref{subsec:cert-reward}}
        \If {$\mathtt{a\_list} = \emptyset$} {
            $J_{\mathrm{global}} \gets \min\{ J_{\mathrm{global}}, \jcur \}$\;
            \KwRet{}
        }
        \For {$a \in \mathtt{a\_list}$} {
            $s_{t+1} \gets P(s_t,a)$\;
            \If {$s_{t+1} = \bot$} {
                $J_{\mathrm{global}} \gets \min\{ J_{\mathrm{global}}, \jcur \}$
            }
            \Else {
                \funcb$(s_{0:t+1}, K_{\lim}, \jcur + R(s_t,a))$
            }
        }
    }

    $M \gets \emptyset$\;
    
    $s_0 \gets d_0$\;
    \Comment*[r]{\scriptsize initial state is $s_0$}
    
    \funcb($s_0, K_{\lim}=0, \jcur=0$)
    \Comment*[r]{\scriptsize expand initial trajectory}
    
    \While {$\mathtt{True}$} {
        \If {$\mathtt{p\_que} = \emptyset$} {
            \textbf{break}
            \Comment*[r]{\scriptsize no state to expand}
        }
        $(s_{0:t},a,K,J) \gets \mathtt{p\_que}.\mathsf{pop}()$
        \Comment*[r]{\scriptsize pop out the first element}
        
        $(\_, \_, K', \_) \gets \mathtt{p\_que}.\mathsf{top}()$
        \Comment*[r]{\scriptsize examine the next element}
        
        $M(K) \gets J_{\mathrm{global}}$
        \Comment*[r]{\scriptsize obtain one new point of certification result}
        
        $s_{t+1} \gets P(s_t,a)$\;
        \funcb$(s_{0:t+1}, K', J + R(s_t,a))$
        \Comment*[r]{\scriptsize expand the tree from the new node}
    }
    
    \KwRet{M}
    \Comment*[r]{\scriptsize indeed the algorithm can terminate at any time within the while loop}

\caption{\adasearch: adaptive tree search for cumulative reward certification.}
\label{alg:tree-search}

\end{algorithm}

        \paragraph{Time Complexity.}
        The complexity of \adasearch is $O(H |\gS_{\mathrm{explored}}|(\log |\gS_{\mathrm{explored}}| + |\gA| T))$, where $|\gS_{\mathrm{explored}}|$ is the number of explored states throughout the search procedure, which is no larger than cardinality of state set $\gS$, $H$ is the horizon length, $|\gA|$ is the cardinality of action set, and $T$ is the time complexity of per-state action certification. 
        The main bottleneck of the algorithm is the large number of possible states, which is in the worse case exponential to state dimension. However, to provide a deterministic worst-case certification agnostic to environment properties, exploring all possible states may be inevitable.

        \paragraph{Relation with \citet{wu2022crop}.}
        The \adasearch algorithm is inspired from \citet[Algorithm 3]{wu2022crop}, which also leverages tree search to explore all possible states and thus derive the robustness guarantee.
        However, the major distinction is that their algorithm tries to derive a probabilistic guarantee of RL robustness against state perturbations, while \adasearch derives deterministic guarantee of RL robustness against poisoning attacks.
        We think this general tree search methodology can be further extended to provide certification beyond evasion attack~\citep{wu2022crop} and poisoning attack~(our work) and we leave it as future work.
        
        \paragraph{Extension to stochastic MDPs.}
        In the current version of \adasearch, the exhaustive search is enabled by the deterministic MDP assumption. However, we foresee  the potential of conveniently extending \adasearch to stochastic MDPs and will discuss one concrete method below.
        In contrast to interacting with the environment for one time to obtain the \textit{deterministic} next state transition in the deterministic MDP case, in the case of stochastic MDPs, 
        we can leverage \textit{sampling} (\ie, by repeatedly taking the same action at the same state) to obtain the set of high probability next state transitions with high confidence. In this way, our \adasearch will be able to yield a \textit{probabilistic bound}, in comparison with the \textit{deterministic bound} achieved in this paper enabled by the deterministic MDP assumption.

\section{Proofs}
    
    \label{adxsec:proofs}
    
    \subsection{Tightness of Per-State Action Certification in \rlalgo}
        
        \label{adxsubsec:proof-C1}
        \begin{proof}[Proof of \Cref{prop:ppa-radius-tightness}]
            We prove by construction.
            Given the state $s_t$, we first locate the subpolicies whose chosen action is $a$, and denote the set of them as 
            $$
                B = \{ i \in [u]
                \,\mid\,
                \pi_i(s_t) = a = \rlalgopolicy(s_t)
                \}.
            $$
            We also denote $a' = \argmax_{a'\neq a} n_{a'}(s_t) + \1[a' < a]$.
            According to $\wbar K_t$'s definition~(\Cref{eq:ppa-radius}), 
            $$
                |B| = n_a(s_t) > n_a(s_t)/2 \ge \wbar K_t.
            $$
            We now pick an arbitrary subset $B' \subseteq B$ such that $|B'| = \wbar K_t + 1$.
            For each $i \in B'$, we locate its corresponding parititioned dataset $D_i$ for training subpolicy $\pi_i$.
            We insert one point $p_i$ to $D_i$, such that our chosen learning algorithm $\gM_0$ can train a subpolicy $\tpi_i = \gM_0(D_i \cup \{p_i\})$ that satisfies $\tpi_i(s_t) = a'$.
            For example, the point could be $p_i = \{(s_t, a', s', \infty)\}$ and $\gM_0$ learns the action with maximum reward for the memorized nearest state, where $s'$ can be adjusted to make sure the point is hashed to parition $i$.\footnote{Strictly speaking, we need to choose a determinisitc hash function $h$ for dataset partitioning such that adjustment on $s'$ and reward~(currently $\infty$, but can be an arbitrary large enough number) can make the point being partitioned to $i$, \ie, $h(p_i) \equiv i\ ({\rm mod\ } u)$. Since our adjustment space is infinite due to infinite number of large enough reward, such assumption can be easily achieved. Same applies to other attack constructions in the following proofs.}
            Then, we construct the poisoned dataset 
            $$
                \wtilde D = \left(\bigcup_{i \in B'} D_i \cup \{p_i\}\right) \cup \left(\bigcup_{i \in [u] \setminus B'} D_i\right).
            $$
            Therefore, we have $\wtilde D \ominus D = \cup_{i\in B'} p_i = |B'| = \wbar K_t+1$.
            
            On this poisoned dataset, we train subpolicies $\{\tpi_i\}_{i=0}^{u-1}$ and get the aggregated policy $\wtilde \rlalgopolicy$.
            To study $\wtilde \rlalgopolicy (s_t)$, we compute the aggregated action count $\wtilde n_a$ on these poisoned subpolicies.
            We found that $\wtilde n_{a}(s_t) = n_a(s_t) - |B'|$ and $\wtilde n_{a'}(s_t) = n_{a'}(s_t) + |B'|$.
            Therefore,
            $$
            \begin{aligned}
                \wtilde n_a(s_t) - \wtilde n_{a'}(s_t) & = n_a(s_t) - n_{a'}(s_t) - 2(\wbar K_t + 1) \\
                & = n_a(s_t) - n_{a'}(s_t) - 2
                \left(
                \Big\lfloor
                \dfrac{n_a(s_t) - n_{a'}(s_t) - \1[a'<a]}{2}
                \Big\rfloor
                +1
                \right) \\
                & <
                n_a(s_t) - n_{a'}(s_t) - (n_a(s_t) - n_{a'}(s_t) - \1[a'<a])
                = \1[a' < a].
            \end{aligned}
            $$
            Therefore, if $a' < a$, $\wtilde n_a(s_t) \le \wtilde n_{a'}(s_t)$, and $a'$ has higher priority to be chosen than $a$;
            if $a' > a$, $\wtilde n_a(s_t) \le \wtilde n_{a'}(s_t) - 1$, and $a'$ still has higher priority to be chosen than $a$.
            Hence, $\wtilde \rlalgopolicy(s_t) \neq a = \rlalgopolicy(s_t)$.
            To this point, we conclude the proof with a feasible construction.
        \end{proof}
        
    \subsection{Per-State Action Certification in \rlalgotmp}
        
        \label{adxsubsec:proof-C2}
        \begin{proof}[Proof of \Cref{thm:tpa-radius}]
            For ease of notation, we let $w = \min\{W,t+1\}$ so $w$ is the actual window size used at step $t$.
            We let $t_0 = t - w + 1$, \ie, $t_0$ is the actual start time step for \rlalgotmp aggregation at step $t$.
            Now we can write the chosen action at step $t$ without poisoning as $a = \rlalgotmppolicy(s_{t_0:t})$.
            
            We prove the theorem by contradiction:
            We assume that there is a poisoning attack whose poisoning size $K \le \wbar K_t$ where $\wbar K_t$ is defined by \Cref{eq:tpa-radius-1} in the theorem, and after poisoning, the chosen action is $a^T \neq a$.
            We denote $\{\tpi_i\}_{i=0}^{u-1}$ to the poisoned subpolicies and $\wtilde \rlalgotmppolicy$ to the poisoned \rlalgotmp aggregated policy.
            From the definition of $\wbar K_t$, we have
            \begin{equation}
                \sum_{i=1}^{K} h_{a,a^T}^{(i)} \le \sum_{i=1}^{\wbar K_t} h_{a,a^T}^{(i)} \le \delta_{a,a^T},
                \label{adxeq:3-1}
            \end{equation}
            since $K \le \wbar K_t$, and each $h_{a,a^T}^{(i)}$ is an element of $h_{i',a,a^T}$ for some $i'$ where
            $$
                h_{i',a,a^T} = 
                \sum_{j=0}^{w-1} \1_{i',a}(s_{t-j}) + w - \sum_{j=0}^{w-1} \1_{i',a^T}(s_{t-j})
                \ge 0
            $$
            by \Cref{eq:tpa-radius-2} in the theorem.
            Since the poisoning attack within threshold $K$ can at most affect $K$ subpolicies, we let $B$ be the set of affected policies and assume $|B| = K$ without loss of generality. 
            Formally, $B = \{i \in [u] \,\mid\, \exists t' \in [t_0,t], \tpi_i(s_{t'}) \neq \pi_i(s_{t'}) \}$.
            Therefore, according to the monotonicity of $h_{a,a^T}^{(i)}$, from \Cref{adxeq:3-1},
            \begin{equation}
                \sum_{i\in B} h_{i,a,a^T} \le \sum_{i=1}^{K} h_{a,a^T}^{(i)} \le \delta_{a,a^T}.
                \label{adxeq:3-2}
            \end{equation}
            
            According to the assumption of successfully poisoning attack,
            after attack, the sum of aggregated vote for $a^T$ plus $\1[a^T < a]$ should be larger then that of $a$.
            Formally, after the poisoning, 
            $$
                \wtilde n_{a^T}(s_{t_0:t}) + \1[a^T < a] > \wtilde n_{a}(s_{t_0:t})
            $$
            or equivalently
            $$
                 \wtilde n_{a}(s_{t_0:t}) - (\wtilde n_{a^T}(s_{t_0:t}) + \1[a^T < a]) < 0.
            $$
            From the statement of \Cref{thm:tpa-radius},
            $$
                \delta_{a,a^T} = n_a(s_{t_0:t}) - (n_{a^T}(s_{t_0:t}) + \1[a^T < a]).
            $$
            Take the difference of the above two equations, we know that
            the attack satisfies
            \begin{equation}
                n_a(s_{t_0:t}) - \wtilde n_a(s_{t_0:t}) - (n_{a^T}(s_{t_0:t}) - \wtilde n_{a^T}(s_{t_0:t})) > \delta_{a,a^T}.
                \label{adxeq:3-3}
            \end{equation}
            Since the attack only changes the subpolicies in $B$,
            we have
            $$
            \begin{aligned}
                n_a(s_{t_0:t}) - \wtilde n_a(s_{t_0:t}) & = \sum_{i\in B} \left( \sum_{j=0}^{w-1} \1[\pi_i(s_{t-j}) = a] - \sum_{j=0}^{w-1} \1[\tpi_i(s_{t-j}) = a] \right) \\
                & \le \sum_{i\in B} \sum_{j=0}^{w-1} \1_{i,a}(s_{t-j}), \\
                n_{a^T}(s_{t_0:t}) - \wtilde n_{a^T}(s_{t_0:t}) & = \sum_{i\in B} \left(
                \sum_{j=0}^{w-1} \1[\pi_i(s_{t-j}) = a^T] - \sum_{j=0}^{w-1} \1[\tpi_i(s_{t-j}) = a^T] \right) \\
                & \ge \sum_{i\in B} \left(\sum_{j=0}^{w-1} \1_{i,a^T}(s_{t_j}) - w\right). \\
            \end{aligned}
            $$
            Inject them into \Cref{adxeq:3-3} yields
            $$
                \sum_{i\in B} \underbrace{\left( \sum_{j=0}^{w-1} \1_{i,a}(s_{t-j}) + w - \sum_{j=0}^{w-1} \1_{i,a^T}(s_{t-j}) \right)}_{h_{i,a,a^T}} > \delta_{a,a^T}
            $$
            which contradicts \Cref{adxeq:3-2} and thus concludes the proof.
        \end{proof}
    
    \subsection{Tightness of Per-State Action Certification in \rlalgotmp}
    
        \label{adxsubsec:proof-C3}
        \allowdisplaybreaks

        
        \begin{proof}[Proof of \Cref{prop:tpa-radius-tightness}]
            For ease of notation, we let $w = \min\{W,t+1\}$ so $w$ is the actual window size used at step $t$.
            We let $t_0 = t - w + 1$, \ie, $t_0$ is the actual start time step for \rlalgotmp aggregation at step $t$.
            Now we can write the chosen action at step $t$ without poisoning as $a = \rlalgotmppolicy(s_{t_0:t})$.
            
            We prove by construction, \ie, we construct an poisoning attack with poisoning size $\wbar K_t + 1$ that deviates the prediction of the poisoned policy.
            Specifically,
            we aim to craft a poisoned dataset $\wtilde D$ with poisoning size $|D \ominus \wtilde D| = \wbar K_t + 1$,
            such that for certain learning algorithm $\gM_0$, after partitioning and learning on the poisoned dataset, the poisoned subpolicies $\tpi_i = \gM_0(\wtilde D_i)$ can be aggregated to produce different action prediction: $\wtilde \rlalgotmppolicy(s_{t_0:t}) \neq a$.
            
            Before construction, we first show that $\wbar K_t$ given by \Cref{eq:tpa-radius-1} satisfies $\wbar K_t < u$.
            If $\wbar K_t = u$, it means that for arbitrary $a' \neq a$,
            $$
            \begin{aligned}
                & \sum_{i=1}^u h_{a,a'}^{(i)} = \sum_{i=0}^{u-1} h_{i,a,a'}
                = \sum_{i=0}^{u-1} \left(\sum_{j=0}^{w-1} \1_{i,a}(s_{t-j}) + w - \sum_{j=0}^{w-1} \1_{i,a'}(s_{t-j})\right) = n_a(s_{t_0:t}) + uw - n_{a'}(s_{t_0:t}) \\
                & \le \delta_{a,a'} = n_a(s_{t_0:t}) - n_{a'}(s_{t_0:t}) - \1[a'<a],
            \end{aligned}
            $$
            which implies $uw \le 0$ contracting $uw > 0$.
            Now, we know $\wbar K_t < u$, so $\wbar K_t + 1$, our poisoning size, is smaller or equal to $u$.
            
            We start our construction by choosing an action $a^T$:
            $$
                a^T = \argmin_{a'\neq a, a'\in \gA}
                \argmax_{\sum_{i=1}^p h_{a,a'}^{(i)} \le \delta_{a,a'}} p.
            $$
            According to the definition of $\wbar K_t$ in \Cref{eq:tpa-radius-1}, for $a^T$ we have
            \begin{equation}
                \sum_{i=1}^{\wbar K_t + 1} h_{a,a^T}^{(i)} > \delta_{a,a^T}.
                \label{adxeq:4-1}
            \end{equation}
            We locate the subpolicies to poison as 
            \begin{equation}
                B = \{ i \in [u] \,\mid\,
                h_{a,a^T}^i \text{ is } h_{a,a^T}^{(j)} \text{ in the nonincreasing permutation in \Cref{eq:tpa-radius-2}}, j \le \wbar K_t + 1
                \}
                \label{adxeq:4-2}
            \end{equation}
            Therefore, $|B| = \wbar K_t + 1$ and 
            \begin{equation}
                \sum_{i \in B} h_{i,a,a^T} > \delta_{a,a^T}
                \label{adxeq:4-3}
            \end{equation}
            by \Cref{adxeq:4-1}.
            For each of these subpolicies $i \in B$, we locate its corresponding partitioned dataset $D_i$ for training subpolicy $\pi_i$, and insert one trajectory $p_i$ to $D_i$, such that our chosen learning algorithm $\gM_0$ can train a subpolicy $\tpi_i = \gM(D_i \cup \{p_i\})$ satisfying $\tpi_i(s_{t'}) = a^T$ for any $t' \in [t_0,t]$.
            For example, the trajectory could be $p_i = \{(s_{t'}, a^T, s', \infty)\}_{t'=t_0}^t$ and $\gM_0$ learns the action with maximum reward for the memorized nearest state, where $s'$ can be adjusted to make sure the trajectory is hashed to partition $i$.
            Then, we construct the poisoned dataset
            $$
                \wtilde D = \left(\bigcup_{i \in B'} D_i \cup \{p_i\}\right) \cup \left(\bigcup_{i \in [u] \setminus B'} D_i\right).
            $$
            Therefore, we have $\wtilde D \ominus D = \cup_{i\in B'} p_i = |B'| = \wbar K_t+1$.
            Now, we compare the aggregated votes for $a$ and $a^T$ before and after poisoning:
            \begin{align*}
                \wtilde n_a(s_{t_0:t}) - n_a(s_{t_0:t}) & = \sum_{i\in B} \left(\sum_{j=0}^{w-1} \1[\wtilde \pi_i(s_{t-j})=a] - \sum_{j=0}^{w-1} \1[\pi_i(s_{t-j})=a]\right) \\
                & = - \sum_{i\in B} \sum_{j=0}^{w-1} \1[\pi_i(s_{t-j})=a], \\
                \wtilde n_{a^T}(s_{t_0:t}) - n_{a^T}(s_{t_0:t}) & = \sum_{i\in B} \left(\sum_{j=0}^{w-1} \1[\wtilde \pi_i(s_{t-j})=a^T] - \sum_{j=0}^{w-1} \1[\pi_i(s_{t-j})=a^T]\right)  \\
                & = \sum_{i\in B} \left(w - \sum_{j=0}^{w-1} \1[\pi_i(s_{t-j})=a^T]\right).
            \end{align*}
            Now we compare the margin between aggregated votes for $a$ and $a^T$ after poisoning:
            \begin{align*}
                & \wtilde n_{a^T}(s_{t_0:t}) - \wtilde n_a(s_{t_0:t}) + \1[a^T < a] \\
                = &
                n_{a^T}(s_{t_0:t}) - n_a(s_{t_0:t}) + \1[a^T < a] + \wtilde n_{a^T}(s_{t_0:t}) - n_{a^T}(s_{t_0:t}) - \left( \wtilde n_a(s_{t_0:t}) - n_a(s_{t_0:t}) \right) \\
                = & -\delta_{a,a^T} + \sum_{i\in B}
                \left(
                    \sum_{j=0}^{w-1} \1[\pi_i(s_{t-j})=a]
                    + w - \sum_{j=0}^{w-1} \1[\pi_i(s_{t-j})=a^T]
                \right) \\
                = & -\delta_{a,a^T} + \sum_{i\in B} h_{i,a,a^T} > 0.
            \end{align*}
            As a result, after poisoning, $a^T$ has higher priority to be chosen than $a$, \ie, $\wtilde \rlalgotmppolicy(s_{t_0:t}) \neq a = \rlalgotmppolicy(s_{t_0:t})$.
            To this point, we conclude the proof with a feasible attack construction.
        \end{proof}
        
    \subsection{Loose Per-State Action Certification in \rlalgotmp and Comparision with Tight One}
        \label{adxsubsec:loose-tpa-and-compare}
        \label{adxsubsec:proof-C4}
        
        The following corollary of \Cref{thm:ppa-radius} states a loose per-state action certification.
        \begin{corollary}
            \label{cor:tpa-radius-2}
            Under the same condition as \Cref{thm:tpa-radius},
            \begin{equation}
        	    \wbar K_t = \left\lfloor\frac{n_{a}(s_{\max\{t-W+1,0\}:t})-\max _{a^{\prime} \neq a}\left(n_{a^{\prime}}(s_{\max\{t-W+1,0\}:t})+\1[a^{\prime}<a]\right)}{2\min\{W,t+1\}}\right\rfloor
        	    \label{eq:rpa-radius-2}
        	\end{equation}
        	is a \emph{tolerable poisoning threshold} at time step $t$ in \Cref{def:per-state}, where $W$ is the window size.
        \end{corollary}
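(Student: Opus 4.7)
}
The plan is to reduce the corollary to a direct, pessimistic analog of the PPA-style counting argument behind \Cref{thm:ppa-radius}, by observing that in \rlalgotmp every subpolicy contributes to the aggregated window count $n_a(s_{\max\{t-W+1,0\}:t})$ at most $w := \min\{W, t+1\}$ times. First I would set $t_0 = t - w + 1$ and consider an arbitrary poisoned dataset $\wtilde D$ with $|D \ominus \wtilde D| \le K$, producing subpolicies $\{\tpi_i\}_{i=0}^{u-1}$ and aggregated policy $\wtilde \rlalgotmppolicy$. Let $B \subseteq [u]$ be the set of subpolicies whose training partition was altered, so $|B| \le K$, and note that for indices $i \notin B$ we have $\tpi_i \equiv \pi_i$, hence such subpolicies contribute identically to $n_a$ and $\wtilde n_a$ for every action $a$.

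Next I would bound the per-action change in the window vote count. For each $i \in B$ and each action $a$, the quantity $\sum_{j=0}^{w-1} \1[\tpi_i(s_{t-j}) = a]$ lies in $[0, w]$, and therefore the change $\sum_{j=0}^{w-1} (\1[\tpi_i(s_{t-j}) = a] - \1[\pi_i(s_{t-j}) = a])$ has absolute value at most $w$. Summing over $i \in B$ gives
\begin{equation}
    |\wtilde n_a(s_{t_0:t}) - n_a(s_{t_0:t})| \le |B| \cdot w \le Kw
\end{equation}
for every action $a$. In particular, $\wtilde n_a(s_{t_0:t}) \ge n_a(s_{t_0:t}) - Kw$ and $\wtilde n_{a'}(s_{t_0:t}) \le n_{a'}(s_{t_0:t}) + Kw$.

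Now I would combine these bounds with the definition of $\wbar K_t$. Let $a = \rlalgotmppolicy(s_{t_0:t})$. For any $a' \neq a$, the hypothesis $K \le \wbar K_t$ given by \Cref{eq:rpa-radius-2} implies $2Kw \le n_a(s_{t_0:t}) - n_{a'}(s_{t_0:t}) - \1[a' < a]$, and hence
\begin{equation}
    \wtilde n_a(s_{t_0:t}) - \wtilde n_{a'}(s_{t_0:t}) \ge n_a(s_{t_0:t}) - n_{a'}(s_{t_0:t}) - 2Kw \ge \1[a' < a].
\end{equation}
This inequality is exactly the condition (accounting for the deterministic tie-breaking rule inherited from \rlalgo) under which $a$ continues to win the aggregated vote against $a'$. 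Taking $a'$ over all alternatives yields $\wtilde \rlalgotmppolicy(s_{t_0:t}) = a$, which verifies $\wbar K_t$ is a valid tolerable poisoning threshold per \Cref{def:per-state}.

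There is no genuinely hard step; the argument is a straightforward scaling of the PPA analysis. The only subtlety worth flagging is the tie-breaking term $\1[a' < a]$, which must be carried through carefully so that the floor by $2w$ in \Cref{eq:rpa-radius-2} is recovered exactly. Looseness relative to \Cref{thm:tpa-radius} comes precisely from using the pessimistic bound $|B| \cdot w$ for each action independently rather than the tighter, jointly constrained quantity $h_{i,a,a'}$; I would flag this observation as the natural bridge to the tightness discussion carried out in \Cref{prop:tpa-radius-tightness} and to the comparison against \Cref{eq:tpa-radius-1}.
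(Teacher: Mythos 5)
Your proposal is correct and follows essentially the same route as the paper's own proof: both bound the change in each aggregated window count by $Kw$ (since at most $K$ subpolicies are affected and each can shift a per-action window count by at most $w = \min\{W, t+1\}$), then observe that $K \le \wbar K_t$ forces the post-poisoning margin $\wtilde n_a - \wtilde n_{a'} \ge n_a - n_{a'} - 2Kw \ge \1[a'<a]$ for every $a' \neq a$. Your closing remark on where the looseness enters (the independent $\pm Kw$ bound versus the jointly constrained $h_{i,a,a'}$) matches the paper's own comparison in \Cref{adxsubsec:loose-tpa-and-compare}.
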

        
        \begin{proof}[Proof of \Cref{cor:tpa-radius-2}]
            We let $w = \min\{t+1, W\}$ to be the actual aggregation window size at time step $t$.
            After poisoning with size $K$, 
            at most $K$ subpolicies are affected and each affected policy can only make $\pm w$ changes to the aggregated action count.
            This implies that, after poisoning, for any action $a' \in \gA$, the aggregated vote count $\wtilde n_{a'}(s_{t-w+1:t}) \in [n_{a'}(s_{t-w+1:t}) - uw, n_{a'}(s_{t-w+1:t}) + uw]$.
            Thus, when $K \le \wbar K_t$ where $\wbar K_t$ is defined in \Cref{thm:tpa-radius}, for any $a' \neq a$, we have
            $$
            \begin{aligned}
                & \tilde n_{a}(s_{t-w+1:t}) - \tilde n_{a'}(s_{t-w+1:t}) - \1[a' < a] \\
                = & n_{a}(s_{t-w+1:t}) - n_{a'}(s_{t-w+1:t}) - \1[a' < a] - 2Kw \\
                \ge & n_{a}(s_{t-w+1:t}) - n_{a'}(s_{t-w+1:t}) - \1[a' < a] - 2w \cdot \wbar K_t \\
                \ge & n_{a}(s_{t-w+1:t}) - n_{a'}(s_{t-w+1:t}) - \1[a' < a] - \left( n_{a}(s_{t-w+1:t}) - \max_{a''\neq a} (n_{a''}(s_{t-w+1:t}) + \1[a''< a]) \right) \\
                = & \max_{a''\neq a} (n_{a''}(s_{t-w+1:t}) + \1[a''< a]) - \left( n_{a'}(s_{t-w+1:t}) + \1[a' < a] \right)
                \ge 0.
            \end{aligned}
            $$
            From the definition of \rlalgotmp protocol, the poisoned policy still chooses action $a$, which implies $\wbar K_t$ is a tolerable poisoning threshold.
        \end{proof}
        
        In the main text, we mention that the certification from \Cref{cor:tpa-radius-2} is looser than that from \Cref{thm:tpa-radius}.
        This assertion is based on the following two facts:
        \begin{enumerate}[leftmargin=*]
            \item According to \Cref{prop:tpa-radius-tightness}, the certification given by \Cref{thm:tpa-radius} is theoretically tight, which means that any other certification can only be as tight as \Cref{thm:tpa-radius} or looser than it.
            
            \item There exists examples where the computed $\wbar K_t$ from \Cref{thm:tpa-radius} is larger than that from \Cref{cor:tpa-radius-2}. \\
            For instance, suppose $W = 5$, action set $\gA = \{a_1, a_2\}$, and there are three subpolicies. 
            At time step $t = 4$, $\pi_i$ for $s_{0}$ to $s_{4}$ are $[a_1,a_1,a_1,a_1,a_2]$ for all subpolicies~(\ie, $i\in [3]$).
            Thus, the benign policy $\rlalgotmppolicy(s_{0:4}) = a_1$.
            By computation, the $\wbar K_t$ from \Cref{thm:tpa-radius} is $1$; 
            while the $\wbar K_t$ from \Cref{cor:tpa-radius-2} is $0$.
        \end{enumerate}
        
        Indeed, \Cref{cor:tpa-radius-2} can be viewed as using $2w$ to upper bound $h_{i,a,a'} = w + \sum_{j=0}^{w-1} \1_{i,a}(s_{t-j}) - \sum_{j=0}^{w-1} \1_{i,a'}(s_{t-j})$.
        Intuitively, \Cref{cor:tpa-radius-2} assumes every subpolicy can provide $2w$ vote margin shrinkage, and \Cref{thm:tpa-radius} uses $h_{i,a,a'}$ to capture the precise worse-case margin shrinkage and thus provides a tighter certification.
        
    \subsection{Per-State Action Certification in \rlalgodyntmp}
    
        \label{adxsubsec:proof-C5}
        \begin{proof}[Proof of \Cref{thm:dtpa-radius}]
            Without loss of generality, we assume $W_{\max} \le t+1$ and otherwise we let $W_{\max} \gets \min\{W_{\max}, t+1\}$.
            We let $t_0 = \max\{t - W_{\max} + 1,0\}$ be the start time step of the maximum possible window.
            To prove the theorem, our general methodology is to enumerate all possible cases of a successful attack, and derive the tolerable poisoning threshold for each case respectively.
            Taking a minimum over these tolerable poisoning thresholds gives the required result.
            
            Specifically, we denote $\gP$ to the predicate of robustness under poisoning attack: $\gP = [\wtilde \rlalgodyntmppolicy(s_{t_0:t}) = a]$, and denote $K$ to the poisoning attack size.
            Therefore, we can decompose $\gP$ as such:
            \begin{equation}
                \gP = \gP(W') \wedge  \bigwedge_{\substack{1\le W^* \le W_{\max}, W^* \neq W'\\ a'\neq a}} \neg \gQ(W^*,a').
                \label{adxeq:6-1}
            \end{equation}
            Recall that $W'$ is the chosen window size by the protocol \rlalgodyntmp with unattacked subpolicies $\rlalgodyntmppolicy$ (\Cref{eq:dtpa-agg-1}).
            In \Cref{adxeq:6-1}, the predicate $\gP(W')$ means that after poisoning attack, whether the prediction under window size $W'$ is still $a$;
            the predicate $\gQ(W^*, a')$ means that after poisoning attack, whether the chosen action is $a'$ at window size $W^*$ and average vote margin is larger~(or equal if $a' < a$) at window size $W^*$ compared to $W'$.
            Formally, let $\wtilde n_a$ be the aggregated action count after poisoning and $\wtilde \Delta_t^W$ be the average vote margin after poisoning at window $W$~(see \Cref{eq:dtpa-agg-2}),
            \begin{equation}
                \begin{aligned}
                    \gP(W') = & \left(\argmax_{a\in \gA} \wtilde n_a(s_{t-W'+1:t}) = a\right), \\
                    \gQ(W^*,a') = & \left(\argmax_{a\in \gA} \wtilde n_a(s_{t-W^*+1:t}) = a'  \right) \wedge \\
                    & \left( \left(\left( \wtilde \Delta_t^{W^*} \ge \wtilde \Delta_t^{W'} \right) \wedge (a' < a)\right) \vee \left( \left( \wtilde \Delta_t^{W^*} > \wtilde \Delta_t^{W'} \right) \wedge (a' > a) \right) \right).
                \end{aligned}
                \label{adxeq:6-2}
            \end{equation}
            According to \Cref{thm:tpa-radius},
            \begin{equation*}
                K \le \wbar K_t \Longrightarrow \gP(W')
            \end{equation*}
            where $\wbar K_t$ is defined by \Cref{eq:tpa-radius-1} with $W$ replaced by $W'$.
            The following \Cref{adxlem:6-1} shows a sufficient condition for $Q(W^*,a')$.
            We then aggregate these conditions together with minimum to obtain a sufficient condition for $\gP$:
            $$
                K \le \min\left\{ \wbar K_t,\, \min_{1\le W^* \le \min\{W_{\max}, t+1\}, W^* \neq W', a'\neq a, a''\neq a} L_{a',a''}^{W^*, W'} \right\}
            $$
            and thus conclude the proof.
        \end{proof}

        \begin{lemma}
            \label{adxlem:6-1}
            Let $\gQ(W^*, a')$, $K$, $W'$ be the same as defined in proof of \Cref{thm:dtpa-radius}, then
            \begin{equation}
                K \le \min_{a''\neq a} L_{a',a''}^{W^*,W'} \Longrightarrow \neg \gQ(W^*,a'),
            \end{equation}
            where $L_{a',a''}^{W^*,W'}$ is defined in \Cref{def:L}.
        \end{lemma}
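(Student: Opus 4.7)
My plan is to prove the contrapositive: assuming $\gQ(W^*, a')$ holds after an attack of size $K$, I will exhibit some $a'' \ne a$ with $K > L_{a',a''}^{W^*,W'}$, which immediately implies $K > \min_{a''\ne a} L_{a',a''}^{W^*,W'}$. Unpacking the definition of $\gQ(W^*, a')$ yields two post-attack conditions: (i) $\wtilde n_{a'}^{W^*} \ge \wtilde n_{a_0}^{W^*} + \1[a_0 < a']$ for all $a_0 \ne a'$, saying $a'$ wins at window $W^*$; and (ii) $W'(\wtilde n_{a'}^{W^*} - \wtilde n_{r^*}^{W^*}) \ge W^*(\wtilde n_{\bar a}^{W'} - \wtilde n_{r'}^{W'}) + \1[a' > a]$, which absorbs the strict-vs-nonstrict tie-breaking into a single integer offset. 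Here $r^*, r', \bar a$ denote the post-attack runner-ups at windows $W^*$ and $W'$ and the post-attack top at window $W'$, respectively.

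Next, I relax condition (ii) into a form that matches the expression inside $L_{a',a''}^{W^*,W'}$. Since $r^*$ is the argmax vote over actions $\ne a'$, we have $\wtilde n_{r^*}^{W^*} \ge \wtilde n_{a^\#}^{W^*}$, so replacing $r^*$ by $a^\#$ only enlarges the left side, yielding $W'(\wtilde n_{a'}^{W^*} - \wtilde n_{a^\#}^{W^*})$ as an upper bound. For the right side I split on $\bar a$: if $\bar a = a$, choose $a'' = r'$ (automatically $\ne a$), giving the exact equality $W^*(\wtilde n_{\bar a}^{W'} - \wtilde n_{r'}^{W'}) = W^*(\wtilde n_a^{W'} - \wtilde n_{a''}^{W'})$; if $\bar a \ne a$, choose $a'' = \bar a$, so that $W^*(\wtilde n_a^{W'} - \wtilde n_{a''}^{W'}) \le 0 \le W^*(\wtilde n_{\bar a}^{W'} - \wtilde n_{r'}^{W'})$. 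In either case I obtain the key inequality $W'(\wtilde n_{a'}^{W^*} - \wtilde n_{a^\#}^{W^*}) - W^*(\wtilde n_a^{W'} - \wtilde n_{a''}^{W'}) - \1[a' > a] \ge 0$ with an explicit $a'' \ne a$.

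Finally, I convert this post-attack inequality into a condition on pre-attack quantities via the adversary's gain. By direct inspection, $\sigma^w(a_0)$ is exactly the contribution to $W'(n_{a'}^{W^*} - n_{a^\#}^{W^*}) - W^*(n_a^{W'} - n_{a''}^{W'})$ of a single subpolicy voting $a_0$ at step $t-w$. Hence the maximum gain the adversary can extract by rewriting subpolicy $i$ at every relevant step is $g_i$, and the total gain from $K$ poisonings is at most $\sum_{i=1}^K g^{(i)}$. Substituting this bound into the key inequality yields $\sum_{i=1}^K g^{(i)} + W'(n_{a'}^{W^*} - n_{a^\#}^{W^*}) - W^*(n_a^{W'} - n_{a''}^{W'}) - \1[a' > a] \ge 0$, which by the definition of $L_{a',a''}^{W^*,W'}$ forces $K > L_{a',a''}^{W^*,W'}$ and completes the contrapositive. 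The main obstacle I anticipate is the case $\bar a \ne a$: the sign of $W^*(\wtilde n_a^{W'} - \wtilde n_{a''}^{W'})$ flips, so I must check carefully that $a'' = \bar a$ both lies in $\{a'' \ne a\}$ and preserves the integer tie-breaking offset $\1[a' > a]$ through the chain of inequalities.
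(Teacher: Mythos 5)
Your proposal is correct and follows essentially the same route as the paper's proof: contrapositive, relaxation of the post-attack margin at window $W^*$ via the pre-attack runner-up $a^\#$, relaxation of the margin at $W'$ via a witness $a''\neq a$ (your explicit case split on whether the post-attack top at $W'$ equals $a$ is just an unpacking of the paper's observation that any class other than the post-attack top yields a lower bound), and finally the per-subpolicy gain accounting via $\sigma^w$ and $g_i$ leading to $K > L_{a',a''}^{W^*,W'}$ from the definition of $L$. No gaps.
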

        
        \begin{proof}
            We prove the equivalent form:
            \begin{equation}
                \gQ(W^*,a') \Longrightarrow K > \min_{a''\neq a} L_{a',a''}^{W^*,W'}.
                \label{adxeq:6-3}
            \end{equation}
            Suppose a poisoning attack can successfully achieve $Q(W^*, a')$, we now induce the requirement on its poisoning size $K$.
            First, we notice that
            \begin{equation}
                \begin{aligned}
                    & \left(\left( \wtilde \Delta_t^{W^*} \ge \wtilde \Delta_t^{W'} \right) \wedge (a' < a)\right) \vee \left( \left( \wtilde \Delta_t^{W^*} > \wtilde \Delta_t^{W'} \right) \wedge (a' > a) \right) \\
                    \iff & W^*W' \wtilde \Delta_t^{W^*} \ge W^*W' \wtilde \Delta_t^{W'} + \1[a' > a]
                \end{aligned}
                \label{adxeq:6-4}
            \end{equation}
            since $W^*W'\wtilde \Delta_t^{W*/W^*}$ is an integer by definition.
            According to the definition and $\gQ(W^*,a')$'s assumption that $\argmax_{a\in \gA} \wtilde n_a(s_{t-W^*+1:t}) = a'$, 
            $$
                W^* W' \wtilde \Delta_t^{W^*} \le W' \left( \wtilde n_{a'}(s_{t-W^*+1:t}) - \wtilde n_{a^\#}(s_{t-W^*+1:t}) \right),
            $$
            where ``$\le$'' comes from the fact that the margin in $\wtilde \Delta_t^{W^*}$ should be with respect to the runner-up class after poisoning, and computing with respect to any other class provides an upper bound.
            Here we choose $a^\# = \argmax_{a_0\neq a', a_0\in \gA} n_{a_0}(s_{t-W^*+1:t})$~(see \Cref{def:L}), the runner-up class before poisoning, to empirically shrink the gap between the bound and actual margin.
            On the other hand,
            $$
                W^*W'\wtilde \Delta_t^{W'} \ge W^* \left(\wtilde n_a (s_{t-W'+1:t}) - \max_{a''\neq a} \wtilde n_{a''}(s_{t-W'+1:t}) \right),
            $$
            where ``$\ge$'' comes from the fact that the margin in $\wtilde \Delta_t^{W'}$ should use the top class after poisoning, and computing with any other class provides a lower bound.
            Thus, from \Cref{adxeq:6-4} and the above two relaxations, we get
            \begin{equation*}
                \begin{aligned}
                    & Q(W^*,a') \\
                    \Longrightarrow &
                    W' \left( \wtilde n_{a'}(s_{t-W^*+1:t}) - \wtilde n_{a^\#}(s_{t-W^*+1:t}) \right)
                    \ge W^* \left(\wtilde n_a (s_{t-W'+1:t}) - \max_{a''\neq a} \wtilde n_{a''}(s_{t-W'+1:t}) \right) + \1[a' > a]  \\
                    \Longrightarrow &
                    \exists a'' \neq a, \\
                    & W' \left( \wtilde n_{a'}(s_{t-W^*+1:t}) - \wtilde n_{a^\#}(s_{t-W^*+1:t}) \right)
                    \ge W^* \left(\wtilde n_a (s_{t-W'+1:t}) - \wtilde n_{a''}(s_{t-W'+1:t}) \right) + \1[a' > a].
                \end{aligned}
            \end{equation*}
            For each $a'' \neq a$, now we use the last equation as the condition, and show that $K > L_{a',a''}^{W^*,W'}$ is a necessary condition.
            This proposition is equivalent to
            \begin{equation}
                \begin{aligned}
                    & K \le L_{a',a''}^{W^*,W'} \\
                    \Longrightarrow & 
                    W' \left( \wtilde n_{a'}(s_{t-W^*+1:t}) - \wtilde n_{a^\#}(s_{t-W^*+1:t}) \right)
                    < W^* \left(\wtilde n_a (s_{t-W'+1:t}) - \wtilde n_{a''}(s_{t-W'+1:t}) \right) + \1[a' > a].
                \end{aligned}
                \label{adxeq:6-5}
            \end{equation}
            
            Suppose a poisoning attack within poisoning size $K$ changes the subpolicies in set $B \subseteq [u]$.
            Note that $|B| \le K$.
            We inspect the objective in \Cref{adxeq:6-5}:
            \begingroup
            \allowdisplaybreaks
            \begin{align*}
                \small
                    & W' \left(\wtilde n_{a'}(s_{t-W^*+1:t}) - \wtilde n_{a^\#}(s_{t-W^*+1:t}) \right) - W^* \left(\wtilde n_a (s_{t-W'+1:t}) - \wtilde n_{a''}(s_{t-W'+1:t}) \right) - \1[a' > a] \\
                    = &  W' \left( n_{a'}(s_{t-W^*+1:t}) -  n_{a^\#}(s_{t-W^*+1:t}) \right) - W^* \left( n_a (s_{t-W'+1:t}) -  n_{a''}(s_{t-W'+1:t}) \right) - \1[a' > a] \\
                    & + \sum_{i \in B} \left( W' \sum_{w=0}^{W^*-1} \1[\tpi_i(s_{t-w+1})=a'] - W' \sum_{w=0}^{W^*-1} \1[\tpi_i(s_{t-w+1})=a^\#] \right. \\
                    & \hspace{3em} \left. - W^* \sum_{w=0}^{W'-1} \1[\tpi_i(s_{t-w+1})=a] + W^* \sum_{w=0}^{W'-1} \1[\tpi_i(s_{t-w+1})=a''] \right) \\
                    & - \sum_{i \in B} \left( W' \sum_{w=0}^{W^*-1} \1[\pi_i(s_{t-w+1})=a'] - W' \sum_{w=0}^{W^*-1} \1[\pi_i(s_{t-w+1})=a^\#] \right. \\
                    & \hspace{3em} \left. - W^* \sum_{w=0}^{W'-1} \1[\pi_i(s_{t-w+1})=a] + W^* \sum_{w=0}^{W'-1} \1[\pi_i(s_{t-w+1})=a''] \right) \\
                    = & W' \left( n_{a'}(s_{t-W^*+1:t}) -  n_{a^\#}(s_{t-W^*+1:t}) \right) - W^* \left( n_a (s_{t-W'+1:t}) -  n_{a''}(s_{t-W'+1:t}) \right) - \1[a' > a] \\
                    & + \sum_{i\in B} \sum_{w=0}^{\max\{W^*, W'\}}
                    \sigma^w(\tpi_i(s_{t-w})) - \sigma^w(\pi_i(s_{t-w})) \\
                    \le & W' \left( n_{a'}(s_{t-W^*+1:t}) -  n_{a^\#}(s_{t-W^*+1:t}) \right) - W^* \left( n_a (s_{t-W'+1:t}) -  n_{a''}(s_{t-W'+1:t}) \right) - \1[a' > a] \\
                    & + \sum_{i\in B} \underbrace{\sum_{w=0}^{\max\{W^*, W'\}} \max_{a_0\in \gA} \sigma^w(a_0) - \sigma^w(\pi_i(s_{t-w}))}_{g_i} \\
                    \overset{(a)}{\le} & W' \left( n_{a'}(s_{t-W^*+1:t}) -  n_{a^\#}(s_{t-W^*+1:t}) \right) - W^* \left( n_a (s_{t-W'+1:t}) -  n_{a''}(s_{t-W'+1:t}) \right) - \1[a' > a] 
                    + \sum_{i=1}^{K} g^{(i)} \\ 
                    \overset{(b)}{\le} & W' \left( n_{a'}(s_{t-W^*+1:t}) -  n_{a^\#}(s_{t-W^*+1:t}) \right) - W^* \left( n_a (s_{t-W'+1:t}) -  n_{a''}(s_{t-W'+1:t}) \right) - \1[a' > a] 
                    + \sum_{i=1}^{L^{W^*,W'}_{a',a''}} g^{(i)} \\
                    \overset{(c)}{<} & 0. \\ 
            \end{align*}
            \endgroup
            Thus, \Cref{adxeq:6-5} is proved.
            Therefore, $K > L_{a',a''}^{W^*,W'}$ is a necessary condition for $Q(W^*,a')$, \ie, \Cref{adxeq:6-3}.
            
            In the above deriviation, the definitions of $g^i$, $g^{i}$, and $\sigma^w$ are from \Cref{eq:L-2}.
            $(a)$ comes from the facts that $\{g^{(i)}\}_{i=1}^u$ is a nondecreasing permutation of $\{g_i\}_{i=0}^{u-1}$, $g_i \ge 0$, and $|B| \le K$.
            $(b)$ comes from the assumption $K \le L_{a,a''}^{W^*,W'}$ and also $g^{(i)} \ge 0$.
            $(c)$ comes from the definition in \Cref{eq:L-1}.
        \end{proof}
        
    \subsection{Possible Action Set in \rlalgo and Comparison}
    
        \label{adxsubsec:proof-C6}
        \subsubsection{Certification}
        \label{adxsubsec:proof-C61}
        \begin{proof}[Proof of \Cref{thm:action-set-ppa}]
            According to the definition of possible action set, we only need to prove the contrary: for any $a \in \gA \setminus A^T(K)$, within poisoning size $K$, the poisoned policy cannot choose $a$: $\wtilde \rlalgopolicy(s_t) \neq a$.  
    
            According to \Cref{eq:action-set-ppa}, any $a \in \gA \setminus A^T(K)$ satisfies
            \begin{equation}
                \sum_{a'\in \gA} \max\{n_{a'}(s_t) - n_a(s_t) - K + \1[a' < a], 0\} > K.
                \label{adxeq:7-1}
            \end{equation}
            Given poisoning size $K$, since each poisoning size can affect only one subpolicy, we know
            $$
                \wtilde n_a(s_t) \le n_a(s_t) + K
            $$
            where $\wtilde n_a$ denotes to the poisoned aggregated action count.
            We suppose the attack could be successful, then
            for $a' < a$, $\wtilde n_{a'}(s_t) \le \wtilde n_{a}(s_t) - 1$, and thus $n_{a'}(s_t) - \wtilde n_{a'}(s_t) \ge n_{a'}(s_t) - n_a(s_t) - K + 1$.
            Similarly, for $a' > a$, $\wtilde{a'}(s_t) \le \wtilde n_a(s_t)$, and thus $n_{a'}(s_t) - \wtilde n_{a'}(s_t) \ge n_{a'}(s_t) - n_a(s_t) - K$.
            Also, for any $a' \neq a$ after poisoning $n_{a'}(s_t) - \wtilde n_{a'}(s_t) \ge 0$; otherwise deviating the difference subpolicies' decisions' from $a'$ to $a$ is strictly no-worse.
            Given these facts, the amount of votes that need to be reduced is the LHS of \Cref{adxeq:7-1} which is larger than $K$.
            However, we only have $K$ poisoning size, \ie, $K$ votes that can be reduced.
            As a result, our assumption that the attack could be successful is falsified and the poisoned policy cannot choose $a$.
        \end{proof}
        
        \begin{corollary}[Loose \rlalgo Action Set]
            \label{cor:loose-action-set-ppa}
            Under the condition of \Cref{def:possible-action-set}, suppose the aggregation protocol is \rlalgo as defined in \Cref{def:ppa-agg},
            then the \emph{possible action set} at step $t$
            \begin{equation}
                A^L(K) = \left\{
                    a \in \gA
                    \,\Big\lvert\,
                    \max_{a'\in \gA} n_{a'}(s_t) - n_a(s_t) \le 2K - \1[a > \argmax_{a'\in \gA} n_{a'}(s_t)]
                \right\}.
                \label{eq:loose-action-set-ppa}
            \end{equation}
        \end{corollary}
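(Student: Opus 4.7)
The plan is to prove the contrapositive: for any $a \notin A^L(K)$, no poisoning attack of size at most $K$ can cause the aggregated policy to select $a$. Let $a^\star := \argmax_{a' \in \gA} n_{a'}(s_t)$ denote the clean \rlalgo action (ties broken by $<$), so that $a^\star$ is trivially in $A^L(K)$. It then suffices to consider $a \neq a^\star$ with $n_{a^\star}(s_t) - n_a(s_t) > 2K - \1[a > a^\star]$ and show such an $a$ is unreachable by $\wtilde \rlalgopolicy$.

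The core observation, mirroring the proof of \Cref{thm:ppa-radius}, is that under the partition-based training protocol each poisoned trajectory can affect at most one subpolicy, and flipping one subpolicy's decision at $s_t$ shifts the pairwise margin $n_{a^\star}(s_t) - n_a(s_t)$ by at most $2$ (decreasing the vote for $a^\star$ by one while simultaneously increasing the vote for $a$ by one). Therefore, after at most $K$ poisonings, the poisoned counts satisfy
\begin{equation*}
\wtilde n_{a^\star}(s_t) - \wtilde n_a(s_t) \;\ge\; n_{a^\star}(s_t) - n_a(s_t) - 2K.
\end{equation*}

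Next I would carry out a case split according to the tie-breaking rule in \Cref{def:ppa-agg}. For $\wtilde \rlalgopolicy(s_t) = a$ with $a < a^\star$, the rule requires $\wtilde n_a(s_t) \ge \wtilde n_{a^\star}(s_t)$; for $a > a^\star$, the rule requires the strict inequality $\wtilde n_a(s_t) > \wtilde n_{a^\star}(s_t)$, i.e.\ $\wtilde n_{a^\star}(s_t) - \wtilde n_a(s_t) \le -1$. Both cases collapse into the single condition $\wtilde n_{a^\star}(s_t) - \wtilde n_a(s_t) \le -\1[a > a^\star]$. Combining this with the margin bound above yields $n_{a^\star}(s_t) - n_a(s_t) \le 2K - \1[a > a^\star]$, contradicting the hypothesis $a \notin A^L(K)$.

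I expect no genuine obstacle here: the argument is essentially a repackaging of the per-state stability analysis from \Cref{thm:ppa-radius} cast as an inverse, plus a bookkeeping case split on the tie-breaking indicator. The only subtle point worth writing out carefully is that the ``$2$ votes per poisoning'' bound really is achievable pairwise between $a^\star$ and $a$, which justifies why the soundness argument does not need the richer simultaneous constraints over the whole action set that appear in \Cref{eq:action-set-ppa}. This very relaxation, however, is precisely what makes $A^L(K) \supseteq A^T(K)$ and hence \emph{looser} than the tight set of \Cref{thm:action-set-ppa}; establishing that strict looseness is not part of the corollary itself and can be deferred to \Cref{adxsubsec:proof-C62}.
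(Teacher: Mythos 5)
Your proposal is correct and follows essentially the same route as the paper's proof: assume $a \notin A^L(K)$, bound the pairwise margin shift by $2K$ under $K$ poisonings, and use the tie-breaking indicator $\1[a > a^\star]$ to conclude that $a^\star$ retains priority over $a$ after poisoning. The only cosmetic difference is that the paper phrases the final step as showing $a^\star$ still wins rather than deriving an explicit contradiction with the defining inequality of $A^L(K)$.
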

        
        \begin{proof}[Proof of \Cref{cor:loose-action-set-ppa}]
            Again, we prove the contrary, for any $a \in \gA \setminus A^L(K)$, within poisoning size $K$, the poisoned policy cannot choose $a$: $\wtilde \rlalgopolicy(s_t) \neq a$.
            
            According to \Cref{eq:loose-action-set-ppa}, let $a_m = \argmax_{a'\in \gA} n_{a'}(s_t)$, then any $a \in \gA \setminus A^L(K)$ satisfies
            $$
                n_{a_m}(s_t) - n_a(s_t) > 2K - \1[a > a_m].
            $$
            After poisoning, we thus have
            $$
                n_{a_m}(s_t) - n_a(s_t) > -\1[a > a_m] 
                \Longrightarrow
                n_{a_m}(s_t) - n_a(s_t) \ge 1-\1[a > a_m] 
            $$
            From the definition, $a \notin A^L(K)$ so $a \neq a_m$.
            If $a_m < a$, $n_{a_m}(s_t) \ge n_a(s_t)$; if $a_m > a$, $n_{a_m}(s_t) > n_a(s_t)$.
            In both cases, $a_m$ has higher priority to be chosen than $a$, and thus the poisoned policy cannot choose $a$.
        \end{proof}
        
        \subsubsection{Comparison}
        \label{adxsubsec:proof-C62}
        
        \begin{theorem}
            \label{thm:compare-tight-loose-action-set-ppa}
            Under the condition of \Cref{def:possible-action-set}, suppose the aggregation protocol is \rlalgo as defined in \Cref{def:ppa-agg},
            $A^T(K), A^L(K)$ are defined according to \Cref{eq:action-set-ppa,eq:loose-action-set-ppa} accordingly, then
            \begin{enumerate}[leftmargin=*]
                \item $A^T(K) \subseteq A^L(K)$; and there are subpolicies $\{\pi_i\}_{i=0}^{u-1}$ and state $s_t$ such that $A^T(K) \subsetneq A^L(K)$.
                
                \item Given subpolicies $\{\pi_i\}_{i=0}^{u-1}$ and state $s_t$, for any $a \in A^T(K)$, there exists a poisoned training set $\wtilde D$ whose poisoning size $|D \ominus \wtilde D| \le K$ and some RL training mechanism, such that $\wtilde \rlalgopolicy(s_t) = a$ where $\wtilde \rlalgopolicy$ is the poisoned \rlalgo policy trained on $\wtilde D$.
            \end{enumerate}
        \end{theorem}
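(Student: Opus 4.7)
The plan is to separate the two claims. The first is a direct algebraic comparison plus a witness for strictness; the second is a construction paralleling the tightness argument for \Cref{prop:ppa-radius-tightness}.

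For the inclusion $A^T(K) \subseteq A^L(K)$, I would fix $a \in A^T(K)$ and let $a_m := \argmax_{a' \in \gA} n_{a'}(s_t)$. Since every summand in the $A^T(K)$ condition is nonnegative, the single summand indexed by $a' = a_m$ is itself bounded by $K$, giving $n_{a_m}(s_t) - n_a(s_t) - K + \1[a_m < a] \le K$. Noting that $\1[a_m < a] = \1[a > a_m]$ when $a \neq a_m$ (and handling $a = a_m$ trivially) rearranges to exactly the $A^L(K)$ condition. For strict inclusion I would give a small witness: three actions $a_1 < a_2 < a_3$ with votes $n_{a_1} = n_{a_2} = 3$, $n_{a_3} = 2$, and $K=1$. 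Direct verification shows $a_3 \in A^L(1)$ since $n_{a_1} - n_{a_3} = 1 \le 2K - 1$, while the $A^T(1)$ sum for $a = a_3$ evaluates to $1 + 1 + 0 = 2 > K$, so $a_3 \notin A^T(1)$.

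For the realizability claim, the strategy is to construct an explicit poisoning attack of size at most $K$ making $\wtilde \rlalgopolicy(s_t) = a$. Define $k_{a'} := \max\{n_{a'}(s_t) - n_a(s_t) - K + \1[a' < a], 0\}$ for each $a' \neq a$; the $A^T(K)$ condition guarantees $\sum_{a'} k_{a'} \le K$. I would designate $k_{a'}$ of the subpolicies currently voting $a'$ to be flipped to $a$, then distribute the residual budget $K - \sum_{a'} k_{a'}$ as extra flips from any remaining non-$a$ voters (degenerating to flipping every non-$a$ subpolicy when $u - n_a(s_t) < K$, in which case $a$ trivially dominates). For each targeted subpolicy $\pi_i$, I would poison partition $D_i$ by inserting a crafted trajectory at state $s_t$ with action $a$ and very large reward, arranged via the deterministic hash $h$ to land in partition $i$, exactly as in the construction of \Cref{prop:ppa-radius-tightness}; this yields $|D \ominus \wtilde D| \le K$.

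The bookkeeping is then to check that after the full $K$ flips, $\wtilde n_a(s_t) \ge \wtilde n_{a'}(s_t) + \1[a' < a]$ for every $a' \neq a$. With $T = K$ total flips and $k_{a'}^{\mathrm{total}}$ denoting the realized flips from $a'$ to $a$, the inequality reduces to $k_{a'}^{\mathrm{total}} \ge n_{a'}(s_t) - n_a(s_t) - K + \1[a' < a]$, which is satisfied by construction (with equality plus nonnegative slack from the residual distribution in the $k_{a'} > 0$ case, and by the very fact that the max is zero in the $k_{a'} = 0$ case). The tie-breaking rule in \Cref{def:ppa-agg} then seals $\wtilde \rlalgopolicy(s_t) = a$. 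The main obstacle is the careful accounting: ensuring that $k_{a'} \le n_{a'}(s_t)$ so the requested flips exist, and that the residual budget can always be placed without violating bounds; both are resolved by a short case split isolating the degenerate regime $n_a(s_t) \ge u - K$, where the argument collapses to flipping every non-$a$ subpolicy.
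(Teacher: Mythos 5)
Your proposal is correct and takes essentially the same route as the paper's proof: the inclusion is obtained by isolating the summand indexed by $a_m=\argmax_{a'\in\gA}n_{a'}(s_t)$, strictness by a small counting witness (the paper uses votes $(10,9,1)$ with $K=5$; your $(3,3,2)$ with $K=1$ works equally well), and realizability by the same greedy construction that flips $\max\{n_{a'}(s_t)-n_a(s_t)-K+\1[a'<a],0\}$ voters of each $a'$ to $a$ and spends the residual budget on arbitrary remaining non-$a$ voters, capped at $u-n_a(s_t)$. The one small inaccuracy is that the feasibility of the requested flips, i.e.\ $k_{a'}\le n_{a'}(s_t)$, is not settled by your case split on $n_a(s_t)\ge u-K$; the paper instead observes (\Cref{fact:c-1}) that $n_a(s_t)+K\ge \1[a'<a]$ must hold for any $a\in A^T(K)$, which is exactly what makes each $k_{a'}\le n_{a'}(s_t)$ --- a cosmetic fix that does not change the argument.
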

        \begin{proof}[Proof of \Cref{thm:compare-tight-loose-action-set-ppa}]
            We prove the two arguments separately.
            \begin{enumerate}[leftmargin=*]
                \item We first prove $A^T(K) \subseteq A^L(K)$.
                For any $a \in A^T(K)$, let $a_m = \argmax_{a'\in \gA} n_{a'}(s_t)$, then
                $$
                \begin{aligned}
                    & \sum_{a'\in \gA} \max\{ n_{a'}(s_t) - n_a(s_t) - K + \1[a' < a], 0\} \le K \\
                    \Longrightarrow & n_{a_m}(s_t) - n_a(s_t) - K + \1[a_m < a] \le K \\
                    \Longrightarrow & n_{a_m}(s_t) - n_a(s_t) \le 2K - \1[a > a_m]
                \end{aligned}
                $$
                where the last proposition is exactly the set selector of $A^L(K)$ so $a \in A^L(K)$.
                
                We then prove that $A^T(K) \subsetneq A^L(K)$ can happen by construction.
                Suppose that there are three actions in the action space: $\gA = \{a_1, a_2, a_3\}$.
                We construct subpolicies for current state $s_t$ such that the aggregated action counts are 
                $$
                    n_{a_1}(s_t) = 10, n_{a_2}(s_t) = 9, n_{a_3}(s_t) = 1.
                $$
                Given poisoning size $K=5$, we find that
                $$
                \begin{aligned}
                    & n_{a_1}(s_t) - n_{a_3}(s_t) = 9 \le 10 - \1[a_3 \ge a_1] = 9 & \Longrightarrow a_3 \in a^L(K), \\
                    & \left(n_{a_1}(s_t) - n_{a_3}(s_t) - K + \1[a_1<a_3]\right)  + & \\
                    & \left(n_{a_2}(s_t) - n_{a_3}(s_t) - K + \1[a_2<a_3]\right) = 9 > K = 5 & \Longrightarrow a_3 \notin a^T(K).
                \end{aligned}
                $$
                Therefore, $A^T(K) \subsetneq A^L(K)$ for these subpolicies and state $s_t$.
                
                \item 
                We prove by construction.
                For any $a\in A^T(K)$, we construct the set of subpolicies to poison $B^a \subseteq [u]$ such that $|B^a| \le K$, then describe the corresponding poisoned dataset $\wtilde D^a$, and finally prove that the poisoned policy $\wtilde \rlalgopolicy^a(s_t) = a$. 
                
                For $a \in A^T(K)$, by definition~(\Cref{eq:action-set-ppa}), we know
                \begin{equation}
                    \sum_{a'\in \gA} \max\{ \underbrace{n_{a'}(s_t) - n_a(s_t) - K + \1[a' < a]}_{:= t_{a,a'}}, 0\} \le K.
                    \label{adxeq:9-1}
                \end{equation}
                We now define a set of actions $C^a \subseteq \gA$ such that
                $$
                    C^a = \{ a' \in \gA
                    \,\mid\,
                    t_{a,a'} > 0
                    \}.
                $$
                According to this definition, $a \notin C^a$ since $t_{a,a} \le 0$.
                
                \begin{fact}
                    For $a\in A^T(K)$ and any $a'\in \gA$, $n_a(s_t) + K - \1[a'<a] \ge 0$.
                    \label{fact:c-1}
                \end{fact}
                \begin{proof}[Proof of \Cref{fact:c-1}.]
                    Suppose $n_a(s_t) + K - \1[a'<a] \le 0$, then $n_a(s_t) = K = 0$ and $a' < a$.
                    Then 
                    $$
                    \sum_{a'\in \gA} \max\{n_{a'}(s_t) - n_a(s_t) - K + \1[a'<a], 0\}
                    \ge \sum_{a' \in \gA} n_{a'}(s_t) - n_a(s_t) - K
                    = \sum_{a'\in \gA} n_{a'}(s_t)
                    = u > 0
                    $$
                    which contradicts the requirement that the LHS of the above inequality should be $\le K = 0$.
                \end{proof}
                Give \Cref{fact:c-1}, for any $a' \in C^a$, $n_{a'}(s_t) \ge t_{a,a'}$.
                Notice that $n_{a'}(s_t)$ is the number of subpolicies that vote for action $a'$ at state $s_t$.
                Therefore, we can pick an arbitrary subset of those subpolicies whose cardinality is $t_{a,a'}$.
                We denote $B_{a'}^a$ to such subset:
                $$
                    B_{a'}^a \subseteq \{ i\in [u] \,\mid\, \pi_i(s_t) = a' \},
                    \,
                    |B_{a'}^a| = t_{a,a'}.
                $$
                Now define $B^a_{\alpha}$: $B^a_{\alpha} = \bigcup_{a'\in C^a} B^a_{a'}$.
                We construct $B^a_{\beta}$ to be an arbitrary subset of those subpolicies whose prediction is not $a$ and who are not in $B^a_{\alpha}$, and limit the $B^a_{\beta}$'s cardinality:
                $$
                    B^a_{\beta} \subseteq \{i \in [u] \,\mid\, \pi_i(s_t) \neq a \} \setminus B^a_{\alpha},
                    |B^a_{\beta}| = \min \{ K - |B^a_{\alpha}|, u - n_a(s_t) - |B^a_{\alpha}| \}.
                $$
                Such $B^a_{\beta}$ can be selected, because:
                \begin{itemize}
                    \item From definition, $B_{\alpha}^a \subseteq \{i\in [u] \,\mid\, \pi_i(s_t) \neq a\}$, where the cardinality of $\{i\in [u] \,\mid\, \pi_i(s_t) \neq a\}$ is $u - n_a(s_t)$. So $0 \le u - n_a(s_t) - |B_{\alpha}^a|$.
                    
                    \item Since
                    $$
                        |B^a_{\alpha}|\le 
                        \sum_{a'\in C^a} |B_{a'}^a| = \sum_{a'\in C^a} t_{a,a'} = \sum_{a'\in \gA, t_{a,a'} > 0} t_{a,a'} \overset{(*)}{\le} K,
                    $$
                    $K - |B^a_{\alpha}| \ge 0$, and thus $|B^a_{\beta}| \ge 0$.
                    Here $(*)$ is due to \Cref{adxeq:9-1}.
                    
                    \item The superset $\{i\in [u] \,\mid\, \pi_i(s_t) \neq a\} \setminus B_{\alpha}^a$ has cardinality $u - n_a(s_t) - |B_{\alpha}^a|$ and $|B^a_{\beta}| \le u - n_a(s_t) - |B_{\alpha}^a|$.
                \end{itemize}
                
                To this point, we can define the set of subpolicies to poison:
                $$
                    B^a := B^a_{\alpha} \cup B^a_{\beta},
                $$
                and $|B^a| = |B^a_{\alpha}| + |B^a_{\beta}| \le K$.
                
                In a similar fashion as the attack construction in proof of \Cref{prop:ppa-radius-tightness}, for each $i \in B^a$, we locate its corresponding partitioned dataset $D_i$ for training subpolicy $\pi_i^a$.
                We inset one trajectory $p_i^a$ to $D_i$ such that our chosen learning algorithm $\gM_0$ can train a subpolicy $\tpi_i^a = \gM_0(D_i \cup \{p_i^a\})$ such that $\tpi_i^a(s_t) = a$.
                For example, the trajectory could be $p_i^a = \{(s_t,a,s',\infty)\}$ where $s'$ is an arbitrary state that guarantees $p_i^a$ is hashed to partition $i$;
                and $\gM_0$ learns the action with maximum reward for the memorized nearest state.
                Then, the poisoned dataset
                $$
                    \wtilde D^a = \left( \bigcup_{i \in B^a} D_i \cup \{p_i^a\} \right) \cup \left( \bigcup_{i\in [u] \setminus B'} D_i \right).
                $$
                Thus, $\wtilde D^a$ is $|D \ominus \wtilde D^a| = |B^a| \le K$, \ie, the constructed attack's poisoning size is within $K$.
                
                We now analyze the action prediction of the poisoned policy $\wtilde \rlalgopolicy$.
                For action $a$, after poisoning, $\wtilde n_{a}(s_t) = n_{a}(s_t) + |B^a| = n_a(s_t) + \min\{K, u - n_a(s_t)\} = \min\{ n_a(s_t) + K, u\}$.
                If $\wtilde n_a(s_t) = u$, then all subpolicies vote for $a$, apparently $\wtilde \rlalgopolicy^a(s_t) = a$;
                otherwise, $\wtilde n_a(s_t) = n_a(s_t) + K$.
                In this case,
                for any action $a' \in C^a$, since we at least choose subpolicies in $B_{a'}^a$ and change their action prediction to $a$, the aggregated action count after poisoning is $\wtilde n_{a'}(s_t) \le n_{a'}(s_t) - |B_{a'}^a| = n_{a'}(s_t) - t_{a,a'} = n_{a'}(s_t) - n_{a'}(s_t) + n_a(s_t) + K - \1[a' < a] = n_a(s_t) + K - \1[a'< a] = \tilde n_a(s_t) - \1[a' < a]$.
                Thus, $a'$ has lower priority to be chosen than $a$.
                For any action $a' \notin C^a$ and $a' \neq a$, by the definition of $C^a$, $t_{a,a'} = n_{a'}(s_t) - n_a(s_t) - K + \1[a' < a] \le 0$.
                After poisoning, the vote of $a'$ does not increase, \ie, 
                $\wtilde n_{a'}(s_t) \le n_{a'}(s_t) \le n_a(s_t) + K - \1[a' < a] = \wtilde n_{a}(s_t) - \1[a' < a]$.
                Thus, $a'$ also has lower priority to be chosen than $a$.
                In conclusion, we have $\wtilde \rlalgopolicy^a(s_t) = a$.
                
                To this point, for any $a\in A^T(K)$, we successfully construct the corresponding poisoned dataset $\wtilde D^a$ within poisoning size $K$ such that $\wtilde \rlalgopolicy^a(s_t)=a$, thus concludes the proof.
            \end{enumerate}
        \end{proof}
    
    \subsection{Possible Action Set in \rlalgotmp}
        \label{adxsubsec:proof-C7}
        \begin{proof}[Proof of \Cref{thm:action-set-tpa}]
            For ease of notation, we let $w = \min\{W, t+1\}$, so $w$ is the actual window size used at step $t$.
            We let $t_0 = t-w+1$, \ie, $t_0$ is the actual start time step for \rlalgotmp aggregation at our current step $t$.
            Now we can write chosen action at step $t$ without poisoning as $\rlalgotmppolicy(s_{t_0:t})$.
        
            We only need to prove the contrary: for any $a \in \gA \setminus A(K)$, within poisoning size $K$, the poisoned policy cannot choose $a$: $\wtilde \rlalgotmppolicy(s_{t_0:t}) \neq a$.
            
            According to \Cref{eq:action-set-tpa}, for such $a$, there exists $a' \neq a$ such that
            \begin{equation}
                \sum_{i=1}^K h_{a',a}^{(i)} \le \delta_{a',a} = n_{a'}(s_{t_0:t}) - n_a(s_{t_0:t}) - \1[a < a'].
                \label{adxeq:10-1}
            \end{equation}
            Suppose there exists such poisoning attack that lets $\wtilde \rlalgotmppolicy(s_{t_0:t}) = a$.
            This implies that for $a'$, after poisoning, we have
            \begin{equation}
                \wtilde n_{a'}(s_{t_0:t}) - \wtilde n_{a'}(s_{t_0:t}) - \1[a' < a] < 0,
                \label{adxeq:10-2}
            \end{equation}
            where $\wtilde n_a$ is the aggregated action count after poisoning.
            Since the poisoning size is within $K$, it can affect at most $K$ subpolicies.
            We let $B \subseteq [u], |B| \le K$ to represent the affected subpolicy set.
            Therefore,
            \begin{align*}
                & \wtilde n_{a'}(s_{t_0:t}) - \wtilde n_{a}(s_{t_0:t}) - \1[a' < a] \\
                = & \underbrace{n_{a'}(s_{t_0:t}) - n_{a}(s_{t_0:t}) - \1[a' < a]}_{\delta_{a',a}} + (\wtilde n_{a'}(s_{t_0:t}) - n_{a'}(s_{t_0:t})) - (\wtilde n_{a}(s_{t_0:t}) - n_{a}(s_{t_0:t})) \\
                = & \delta_{a',a} + \sum_{i\in B} \left(
                    \sum_{j=0}^{w-1} \1[\tpi_i(s_{t-j})=a', \pi_i(s_{t-j}) \neq a'] - \sum_{j=0}^{w-1} \1[\tpi_i(s_{t-j})\neq a', \pi_i(s_{t-j}) = a']
                \right) \\
                & - \sum_{i\in B} \left(
                    \sum_{j=0}^{w-1} \1[\tpi_i(s_{t-j})=a, \pi_i(s_{t-j}) \neq a] - \sum_{j=0}^{w-1} \1[\tpi_i(s_{t-j})\neq a, \pi_i(s_{t-j}) = a]
                \right) \\
                \ge & \delta_{a',a} - \sum_{i \in B} \left( \sum_{j=0}^{w-1} \1[\tpi_i(s_{t-j})\neq a', \pi_i(s_{t-j})=a'] + \sum_{j=0}^{w-1} \1[\tpi_i(s_{t-j})=a, \pi_i(s_{t-j}) \neq a] \right) \\
                \ge & \delta_{a',a} - \sum_{i \in B} \left( \sum_{j=0}^{w-1} \1[\pi_i(s_{t-j})=a'] + \sum_{j=0}^{w-1} \1[\pi_i(s_{t-j})\neq a] \right) \\
                = & \delta_{a',a} - \sum_{i \in B} \left( \sum_{j=0}^{w-1} \1_{i,a'}(s_{t-j}) + w - \sum_{j=0}^{w-1} \1_{i,a}(s_{t-j}) \right) \\
                = & \delta_{a',a} - \sum_{i \in B} h_{i,a',a} 
                \overset{(a)}{\ge} \delta_{a',a} - \sum_{i=1}^{|B|} h_{a',a}^{(i)} 
                \overset{(b)}{\ge}  \delta_{a',a} - \sum_{i=1}^{K} h_{a',a}^{(i)} 
                \overset{(c)}{\ge}  0.
            \end{align*}
            This contradicts with \Cref{adxeq:10-2}, and thus the assumption is falsified, \ie, there is no such poisoning attack that let $\wtilde \rlalgotmppolicy(s_{t_0:t}) = a$.
            In the above equations, $(a)$ is due to the fact that $h_{a',a}^{(i)}$ is a nonincreasing permutation of $\{h_{i,a',a}\}_{i=0}^{u-1}$.
            $(b)$ is due to the facts that $|B| \le K$ and $h_{a',a}^{(i)} \ge 0$.
            $(c)$ comes from \Cref{adxeq:10-1}.
        \end{proof}
        
    \subsection{Hardness for Computing Tight Possible Action Set in \rlalgotmp}
        \label{adxsubsec:proof-C8}
        \begin{proof}[Proof of \Cref{thm:np-complete-tpa-action-set}]
            By \Cref{def:possible-action-set}, the possible action set with minimum cardinality~(called minimal possible action set hereinafter) is unique.
            Otherwise, suppose $A$ and $B$ are both minimal possible action set, but $A \neq B$, then $A \cap B$ is a smaller and valid possible action set.
            Therefore, the oracle that returns the minimal possible action set, denoted by \textsf{MINSET}, can tell whether any action $a \in \mathsf{MINSET}$ and thus whether any action $a$ can be chosen by some poisoned policy $\wtilde \pi$ whose poisoning size is within $K$.
            In other words, the problem of determining whether an action $a$ can be chosen by some poisoned policy $\tpi$ whose poisoning size is within $K$, denoted by \textsf{ATTKACT}, is polynomially equivalent to \textsf{MINSET}: $\mathsf{MINSET} \equiv_P \mathsf{ATTKACT}$.
            Now, we show a polynomial reduction from the set cover problem to \textsf{ATTKACT}, which implies that our \textsf{MINSET} problem is an $\NP$-complete problem.
            
            The decision version of the set cover problem~\citep{karp1972reducibility}, denoted by \textsf{SETCOVER}, is a well-known $\NP$-complete problem and is defined as follows.
            The inputs are
            \begin{enumerate}[leftmargin=*]
                \item a universal set of $n$ elements: $\gU = \{u_1,u_2,\cdots,u_n\}$;
                \item a set of subsets of $\gU$: $\gV = \{V_1,\cdots,V_m\}, V_i \subseteq \gU, 1\le i\le m, \bigcup_{i=1}^m V_i = \gU$;
                \item a positive number $K \in \sR_+$.
            \end{enumerate}
            The output is a boolean variable $b$, indicating that whether there exists a subset $\gW \subseteq \gV, |\gW| \le K$, such that $\forall u_i\in \gU, \exists V\in \gW, u_i \in V$.
            Given an oracle to \textsf{ATTKACT}, we need to show \textsf{SETCOVER} can be solved in polynomial time, \ie, $\mathsf{SETCOVER} \le_{P} \mathsf{ATTKACT}$.
            
            \begin{itemize}[leftmargin=*]
                \item If $K \ge n$:\\
                We scan all sets $V_i \in \gV$. To being with, we have a record set $S \gets \emptyset$, and an answer set $\gW \gets \emptyset$. Whenever we encounter a set that contains a new element $u_i \notin S$, we put $\gW \gets \gW \cup \{V_j\}$, and record this element $S \gets S \cup \{u_i\}$. After one scan pass, $\gW$ covers all elements of $\gU$~(since $\bigcup_{j=1}^m V_j = \gU$), and $|\gW| \le n \le K$.
                Therefore, $\gW$ is a valid set cover.
                Since we can always find such $\gW$, we can directly answer \texttt{true}.
                
                \item If $K \ge m$:\\
                We can directly return $\gV$ as a valid set cover, since $\bigcup_{j=1}^m V_j = \gU$ and $|\gV| \le m \le K$.
                Thus, we can answer \texttt{true}.
                
                \item If $K < \min\{n, m\}$:\\
                This is the general case which we need to handle.
                Now we construct the \textsf{ATTKACT}$(K)$ problem so that we can trigger its oracle to solve \textsf{SETCOVER}.
                
                \begin{enumerate}
                    \item The poisoning size is $K$.
                    
                    \item The action space $\gA = \gU \cup \{b\} \cup \Gamma$ where $\Gamma := \{\#_1,\dots, \#_{m^2n}\}$. 
                    ($|\gA| = n + 1 + m^2n$.)
                    The sorting of actions is $u_1 < u_2 < \cdots < u_n < b < \#_1 < \cdots < \#_{m^2n}$.
                    
                    \item The subpolicies are $\{\pi^a_j\}_{j=1}^m \cup \{\pi^b_i, \pi^c_{i,j} \,\mid\, 1\le i\le n, 1\le j\le K-1\}$, where $\pi^a_j$ corresponds to $V_j \in \gV$, and $\pi^b_i, \pi^c_{i,j}$ correspond to $u_i \in \gU$.
                    (Number of subpolicies $u = m + Kn \le m + n^2$.)
                    
                    \item The current time step is $t = nm$, and the window size $W = nm$.
                    
                    \item The input action is $b$, \ie, asking whether $b$ can be chosen by some poisoned \rlalgotmp policy, \ie, $\wtilde \rlalgotmppolicy(s_{1:t}) = b$, if the poisoning size is within $K$.
                \end{enumerate}
                Now, we construct the states at each step $t$~($1 \le t \le nm$) so that the subpolicies' action predictions at these steps are as follows.
                \begin{enumerate}
                    \item Count the appearing time of each $u_i$ in $\gV$, and denote it by $c_i$: $c_i = \sum_{j=1}^m \1[u_i \in V_j]$.
                    For each $u_i$, select a $V_{j_0}$ that contains $u_i$, and in the corresponding $\pi_{j_0}^a$, assign $m-c_i+1$ steps to predict $u_i$;
                    for all other $V_j$ that contains $u_i$, in the corresponding $\pi_j^a$, assign one step to predict $u_i$.
                    
                    \item After this process, each $\pi_j^a$ at least has one time step whose action prediction is $u_i$ for each $u_i \in V_j$.
                    Among all $\{\pi_j^a\}_{j=1}^m$ and all time steps $1 \le t \le nm$, $mn$ step-action cells are filled, and the remaining $(m^2n-mn)$ cells are filled by $\#_l \in \Gamma$ sequentially.
                    
                    \item For each $\pi_i^b$, arbitrarily select $W-m$ time steps to assign action prediction as $u_i$; and fill in other $m$ time steps by remaining $\#_l \in \Gamma$ sequentially.
                    
                    \item For each $\pi_{i,j}^c$, for all time steps, let the action prediction be $u_i$.
                \end{enumerate}
                As we can observe, the number of actions, the number of subpolicies, and the window size are all bounded by a polynomial of $n$ and $m$.
                Therefore, such construction can be done in polynomial time.
                
                We then show $\mathsf{SETCOVER}=\texttt{true} \iff \exists K', b \in \mathsf{ATTKACT}(K'), 1\le K' \le K$.
                \begin{itemize}
                    \item $\Longrightarrow$: \\
                    Suppose the covering set is $\gW \subseteq \gV$, 
                    we denote $K'$ to $|\gW|$,
                    and construct the \textsf{ATTKACT} problem with poisoning size $K'$ as described above.
                    
                    We can construct a poisoning strategy to let $\wtilde \rlalgotmppolicy(s_{1:nm}) = b$,
                    The poisoning strategy is to find out $\pi_j^a$ for each $V_j \in \gW$, and to let them predict action $b$ throughout all time steps: $\wtilde \pi_j^a(s_t') = b, 1\le t' \le nm$.
                    
                    After poisoning, the aggregated action count $\wtilde n_{b}(s_{1:nm}) = |\gW| \times nm = K'nm$.
                    Since $\gW$ covers every $u_i \in \gU$, for each $u_i\in \gU$ there exists a set $V_j \ni u_i$, whose corresponding $\wtilde \pi_j^a$ is poisoned to predict $b$.
                    Thus, $\wtilde n_{u_i}(s_{1:nm}) < n_{u_i}(s_{1:nm}) = m + (W-m) + W \times (K'-1) = WK' = K'nm$.
                    For any $\#_l \in \Gamma$, $\wtilde n_{\#_l}(s_{1:nm}) \le n_{\#_l}(s_{1:nm}) = 1$.
                    In summary,
                    $$
                        \wtilde n_{u_i}(s_{1:nm}) < K'nm,
                        \wtilde n_b(s_{1:nm}) = K'nm,
                        \wtilde n_{\#_l}(s_{1:nm}) = 1.
                    $$
                    Thus, after \rlalgotmp aggregation, the poisoned policy $\wtilde \rlalgotmppolicy(s_{1:nm}) = b$, and therefore $b \in \mathsf{ATTKACT}(K')$.
                    
                    \item $\Longleftarrow$: \\
                    Suppose it is $K'$ that let $b \in \mathsf{ATTKACT}(K')$, which implies that there exists such a poisoning attack within size $K'$ that misleads the poisoned policy to $b$: $\wtilde \rlalgotmppolicy(s_{1:nm}) = b$.
                    Since the poisoning size is $K'$, after poisoning the aggregated action count 
                    $$
                    \wtilde n_b(s_{1:nm}) \le K'W = K'nm.
                    $$
                    For each $u_i \in \gU$, since $n_{u_i}(s_{1:nm}) = m + (W-m) + W \times (K'-1) = K'nm$, we always have 
                    \begin{equation}
                        \wtilde n_{u_i}(s_{1:nm}) \overset{(*)}{<} \wtilde n_b(s_{1:nm}) \le n_{u_i}(s_{1:nm}),
                        \label{adxeq:11-1}
                    \end{equation}
                    where $(*)$ is due to the condition of successful attack.
                    We denote the set of poisoned subpolicies by $\Pi$~($|\Pi| \le K'$). Therefore, 
                    \Cref{adxeq:11-1} implies that for each $u_i \in \gU$, there exists at least one subpolicy 
                    \begin{equation}
                        \pi_{u_i}' \in \Pi, \pi_{u_i}' \in \{\pi_j^a \,\mid\, u_i\in V_j\} \cup \{\pi_i^b\} \cup \{\pi_{i,j}^c \,\mid\, 1\le j\le K-1\}
                        \label{adxeq:11-2}
                    \end{equation}
                    that is poisoned by the attack, otherwise the aggregated vote $\wtilde n_{u_i}$ cannot change.
                    
                    We partition $\Gamma$ by $\Gamma^a$ and $\Gamma^{bc}$, where
                    $$
                        \Gamma^a = \Gamma \cap \{\pi_j^a\}_{j=1}^m,
                        \Gamma^b = \Gamma \cap \{\pi_i^b, \pi_{i,j}^c \,\mid\, 1\le i\le n, 1\le j\le K-1\}.
                    $$
                    We construct additional poisoning set $\Gamma^a_+$ following this process:
                    In the beginning, $\Gamma^a_+ \gets \emptyset$.
                    For each $u_i \in \gU$, if $\Gamma^a \cap \{\pi_j^a \,\mid\, u_i \in V_j\}$ is not empty, skip.
                    Otherwise, according to \Cref{adxeq:11-2}, $\Gamma^b \cap \{\pi_i^b, \pi_{i,j}^c \,\mid\, 1\le j\le K-1\}$ is not empty.
                    In this case, we find an arbitrary covering set of $u_i$, namely $V_{j_0} \ni u_i$, and put $\pi_{j_0}^a$ into $\Gamma^a_+$.
                    When the process terminates, we find that for each $u_i\in \gU$,
                    \begin{equation}
                        (\Gamma^a \cup \Gamma^a_+) \cap \{\pi_j^a \,\mid\, u_i \in V_j\} \neq \emptyset.
                        \label{adxeq:11-3}
                    \end{equation}
                    Following the mapping $\{\pi_j^a\}_{j=1}^m \longleftrightarrow \{V_j \,\mid\, 1\le j\le m\} = \gV$, the subset $(\Gamma^a \cup \Gamma^a_+) \subseteq \{\pi_j^a\}_{j=1}^m$ can be mapped to $(\gW^a \cup \gW^a_+) \subseteq \gV$.
                    From \Cref{adxeq:11-3}, $(\gW^a \cup \gW^a_+)$ is a valid set cover for $\gU$.
                    We now study the cardinality of this set cover.
                    From the process, we know that every $V_{j_0} \in \gW^a_+$ corresponds to a different set in $\Gamma^b$.
                    Thus, $|\gW^a_+| \le |\Gamma^b|$, which implies that
                    $$
                        |\gW^a \cup \gW^a_+|
                        \le
                        |\gW^a| + |\gW^a_+|
                        \le |\Gamma^a| + |\Gamma^b|
                        = |\Gamma| \le K'.
                    $$
                    To this point, we successfully construct a set cover within cardinally $K' \le K$ that covers $\gU$, so $\mathsf{SETCOVER}=\mathtt{true}$.
                \end{itemize}
                Therefore, we can check whether $\mathsf{SETCOVER}=\mathtt{true}$ by iterating $K'$ from $1$ to $K$~($< \min\{n,m\}$ iterations), constructing the $\mathsf{ATTKACT}(K')$ problem, and querying the oracle.
                The whole process can be done in polynomial time assumping $O(1)$ computation time of the $\mathsf{ATTKACT}(K')$ oracle.
            \end{itemize}
            To this point, we have shown $\mathsf{SETCOVER} \le_{P} \mathsf{ATTKACT}$.
            On the other hand, an undeterminisitic Turing machine can  try different poisoning strategies by branching on whether to poison current subpolicy and what actions to be assigned to each poisoned subpolicy.
            The decision of whether the poisoning is successful can be done in polynomial time and $b\in \mathsf{ATTKACT}(K)$ corresponds to the existence of successfully attacked branches.
            Thus, $\mathsf{ATTKACT} \in \mathsf{NP}$.
            Given that \textsf{SETCOVER} is an $\NP$-complete problem, so does \textsf{ATTKACT} and \textsf{MINSET}~(since $\mathsf{MINSET} \equiv_P \mathsf{ATTKACT}$).
        \end{proof}
        
    \subsection{Possible Action Set in \rlalgodyntmp}
        \label{adxsubsec:proof-C9}
        \begin{proof}[Proof of \Cref{thm:action-set-dtpa}]
            For ease of notation, we assume that $W_{\max} \le t+1$, and otherwise we let $W_{\max} \gets t+1$.
            We let $t_0 = \max\{t - W_{\max} + 1, 0\}$ be the start time step of the maximum possible window.
            
            We only need to prove the contrary: for any $a \in \gA \setminus A(K)$, within poisoning size $K$, the poisoned policy cannot choose $a$: $\wtilde \rlalgodyntmppolicy(s_{t_0:t}) \neq a$.
            We prove by contradiction: we assume that there exists such a poisoning attack within poisoning size $K$ that lets $\wtilde \rlalgodyntmppolicy(s_{t_0:t}) = a$.
            From the expression of $A(K)$~(\Cref{eq:action-set-dtpa}), $a \neq a_t = \rlalgodyntmppolicy(s_{t_0:t})$.
            Suppose the selected time window before the attack is $W'$~(selected according to \Cref{eq:dtpa-agg-1} based on $\Delta_t^W$ and $n_a$), and the selected time window after the attack is $\wtilde W'$~(selected according to \Cref{eq:dtpa-agg-1} based on $\wtilde \Delta_t^W$ and $\wtilde n_a$).
            \begin{itemize}[leftmargin=*]
                \item If $\wtilde W' = W'$: \\
                Suppose we use the \rlalgotmp aggregation policy with window size $W = W'$ instead of current \rlalgodyntmp aggregation policy,
                then we will have $\rlalgotmppolicy(s_{t-W'+1:t}) = a_t$ and $\wtilde \rlalgotmppolicy(s_{t-W'+1:t}) = a$.
                Thus, according to the definition of possible action set~(\Cref{def:possible-action-set}), $a \in A(K)$ where $A(K)$ is defined by \Cref{eq:action-set-tpa} in \Cref{thm:action-set-tpa}.
                This implies that $a \in A(K)$ where $A(K)$ is defined by \Cref{eq:action-set-dtpa}, which contradicts the assumption that $a \in \gA \setminus A(K)$.
                
                \item If $\wtilde W' \neq W'$: \\
                According to the definition in \Cref{eq:action-set-dtpa}, 
                $$
                    \min_{1\le W^* \le W_{\max}, W^* \neq W', a'' \neq a_t} L_{a,a''}^{W^*,W'} > K.
                $$
                We define $a_t^\# = \argmax_{a_0 \neq a_t} \wtilde n_{a_0}(s_{t-W'+1:t})$.
                Then, the above equation implies that
                $$
                    L_{a,a_t^\#}^{\wtilde W', W'} > K
                $$
                and thus
                \begin{equation*}
                    \sum_{i=1}^{L_{a,a_t^\#}^{\wtilde W', W'}}
                    g^{(i)}
                    + W'(n_a^{\wtilde W'} - n_{a^\#}^{\wtilde W'}) - \wtilde W'(n_{a_t}^{W'} - n_{a_t^\#}^{W'}) - \1[a > a_t] < 0.
                \end{equation*}
                Since $g^{(i)} \ge 0$ by definition~(\Cref{eq:L-2}),
                \begin{equation}
                    \sum_{i=1}^K
                    g^{(i)}
                    + W'(n_a^{\wtilde W'} - n_{a^\#}^{\wtilde W'}) - \wtilde W'(n_{a_t}^{W'} - n_{a_t^\#}^{W'}) - \1[a > a_t] < 0,
                    \label{adxeq:12-1}
                \end{equation}
                where $a^\# = \argmax_{a_0\neq a', a_0\in \gA} n_{a_0}(s_{t-\wtilde W'+1:t})$ and $n_a^w$ is a shorthand of $n_a(s_{t-w+1:t})$.
                
                Following the derivation from \Cref{adxeq:6-5} to $(a)$, we have
                \begin{equation*}
                    \sum_{i=1}^K
                    g^{(i)}
                    + W'(n_a^{\wtilde W'} - n_{a^\#}^{\wtilde W'}) - \wtilde W'(n_{a_t}^{W'} - n_{a_t^\#}^{W'}) - \1[a > a_t]
                    \ge
                    W'(\wtilde n_a^{\wtilde W'} - \wtilde n_{a^\#}^{\wtilde W'}) - \wtilde W'(\wtilde n_{a_t}^{W'} - \wtilde n_{a_t^\#}^{W'}) - \1[a > a_t].
                \end{equation*}
                Combined with \Cref{adxeq:12-1},
                \begin{equation}
                    W'(\wtilde n_a^{\wtilde W'} - \wtilde n_{a^\#}^{\wtilde W'}) - \wtilde W'(\wtilde n_{a_t}^{W'} - \wtilde n_{a_t^\#}^{W'}) - \1[a > a_t] < 0.
                    \label{adxeq:12-2}
                \end{equation}
                
                On the other hand,
                the successful attack assumption, \ie, $\wtilde \rlalgodyntmppolicy(s_{t_0:t}) = a$ and $\rlalgodyntmppolicy(s_{t_0:t}) = a_t$, implies that
                \begin{equation}
                    \wtilde \Delta_t^{\wtilde W'} = \dfrac{\wtilde n_a^{\wtilde W'} - \wtilde n_{a^{(2)}}^{\wtilde W'}}{\wtilde W'}
                    >
                    \dfrac{\wtilde n_{a_{W'}}^{W'} - \wtilde n_{a_{W'}^{(2)}}^{W'}}{W'} = \wtilde \Delta_t^{W'},
                    \label{adxeq:12-3}
                \end{equation}
                where ``$>$'' is ``$\ge$'' if $a < a_t$.
                In the above equation,
                $$
                    a^{(2)} = \argmax_{a_0\neq a,a_0\in \gA} \wtilde n_{a_0}^{\wtilde W'}, 
                    a_{W'} = \argmax_{a_0\in \gA} \wtilde n_{a_0}^{W'}, 
                    a_{W'}^{(2)} = \argmax_{a_0\neq a_{W'},a_0\in \gA} \wtilde n_{a_0}^{W'}.
                $$
                Intuitively, after poisoning, $a^{(2)}$ is the runner-up action at window $\wtilde W'$, $a_{W'}$ is the action at window $W'$, and $a_{W'}^{(2)}$ is the runner-up action at window $W'$.
                We rewrite \Cref{adxeq:12-3} to
                \begin{equation}
                    W'(\wtilde n_a^{\wtilde W'} - \wtilde n_{a^{(2)}}^{\wtilde W'}) \ge \wtilde W' (\wtilde n_{a_{W'}}^{W'} - \wtilde n_{a_{W'}^{(2)}}^{W'}) + \1[a > a_t].
                    \label{adxeq:12-4}
                \end{equation}
                We have the following two observations:
                \begin{enumerate}
                    \item $
                        \wtilde n_a^{\wtilde W'} - \wtilde n_{a^{\#}}^{\wtilde W'}
                        \ge 
                        \wtilde n_a^{\wtilde W'} - \wtilde n_{a^{(2)}}^{\wtilde W'}$, since $a$ is the top action and $a^{(2)}$ is the runner-up action and their margin should be the smallest.
                    
                    \item $\wtilde n_{a_{W'}}^{W'} - \wtilde n_{a_{W'}^{(2)}}^{W'} \ge \wtilde n_{a_t}^{W'} - \wtilde n_{a_t^{\#}}^{W'}$,
                    because 1)~if $a_t = a_{W'}$, LHS equals to RHS; 2)~if $a_t \neq a_{W'}$, LHS $\ge 0$ and RHS $\le 0$.
                \end{enumerate}
                Plugging these two observations to two sides of \Cref{adxeq:12-4}, we get
                \begin{equation}
                     W'(\wtilde n_a^{\wtilde W'} - \wtilde n_{a^{\#}}^{\wtilde W'}) \ge \wtilde W' (\wtilde n_{a_t}^{W'} - \wtilde n_{a_t^{\#}}^{W'}) + \1[a > a_t].
                    \label{adxeq:12-5}
                \end{equation}
                This contradicts with \Cref{adxeq:12-2}.
            \end{itemize}
            Since in both cases, we find contradictions.
            Now we can conclude that for any $a \in \gA \setminus A(K)$, within poisoning size $K$, the poisoned policy cannot choose $a$: $\wtilde \rlalgodyntmppolicy(s_{t_0:t}) \neq a$.
        \end{proof}
        

\section{Additional Experimental Details}
    \label{adxsec:exp}
    
    \subsection{Details of the Offline RL algorithms and Implementations}
        \label{adxsubsec:rl-algos}
        
        We experimented with three offline RL algorithms: DQN~\citep{mnih2013playing}, QR-DQN~\citep{dabney2018distributional}, and C51~\citep{bellemare2017distributional}. 
        The first one is the standard baseline, while the latter two are distributional RL algorithms which show SOTA results in offline RL tasks.
        We first briefly introduce the algorithm ideas, followed by the implementation details.
        
        \textbf{Algorithm Ideas.}\quad
        The core of Q-learning~\citep{watkins1992q} is the Bellman optimality equation~\citep{bellman1966dynamic}
        \begin{align}
            Q^{\star}(s, a)=\mathbb{E} R(s, a)+\gamma \mathbb{E}_{s^{\prime} \sim P} \max _{a^{\prime} \in \mathcal{A}} Q^{\star}\left(s^{\prime}, a^{\prime}\right),
        \end{align}
        where a parameterized $Q^\theta$ is adopted to approximate the optimal $Q^\star$ and iteratively improved.
        In DQN~\citep{mnih2013playing} specifically, the parameterization is achieved by using a convolutional neural network~\citep{lecun1998gradient}.
        In contrast to estimating the mean action value $Q^\pi(s,a)$ in DQN, distributional RL algorithms estimate a density over the values of the state-action pairs. The distributional Bellman optimality can be expressed as follows:
        \begin{align}
            Z^\star(s, a) \stackrel{D}{=} r +\gamma Z^\star\left(s^{\prime}, \operatorname{argmax}_{a^{\prime} \in \mathcal{A}} Q^{\star}\left(s^{\prime}, a^{\prime}\right)\right) 
             \text { where } r \sim R(s, a), s^{\prime} \sim P(\cdot \mid s, a).
        \end{align}
        Concretely, QR-DQN~\citep{dabney2018distributional} approximates the density $D^\star$ with a uniform mixture of $K$ Dirac delta functions, while C51~\citep{bellemare2017distributional} approximates the density using a categorical distribution over a set of anchor points.
        
        \textbf{Implementation Details.}\quad
        For training the subpolicies using offline RL training algorithms, we use the code base of~\citet{agarwal2020optimistic}.
        The configuration files containing the detailed hyperparameters can be found at their public repository \url{https://github.com/google-research/batch_rl/tree/master/batch_rl/fixed_replay/configs}.
        For the three methods DQN, QR-DQN, and C51, the names of the configuration files are \texttt{dqn.gin}, \texttt{quantile.gin}, and \texttt{c51.gin}, respectively.
        On each partition, we train the subpolicy for $50$ epochs.
        
    \subsection{Concrete Experimental Procedures}
        \label{adxsubsec:exp-proc}
        \looseness=-1
        We conduct experiments on two Atari 2600 games, Freeway and Breakout from OpenAI Gym~\citep{brockman2016openai}, and one autonomous driving environment Highway~\citep{highway-env}, following~\Cref{sec:main}.
        
        Concretely, in the \textbf{training} stage, we first partition the training dataset into $u$ partitions ($u=30$, $50$, or $100$) by using the hash function $h(\tau)$ pre-defined in each environment. 
        Let $s^i$ be the $i$-th value in the state representation and $d$ be the dimensionality of the state.
        In \textit{Atari games} where the state is the game frame, $h(\tau)$ is defined as the sum of all pixel values of all frames in the trajectory $\tau$, \ie, $h(\tau) = \sum_{s \in \tau} \sum_{i \in [d]} s^i$. 
        In \textit{Highway} where the state is a floating point number scalar containing the positions and velocities of all vehicles, we define $h(\tau) = \sum_{s\in \tau}\sum_{i \in [d]} f(s^i)$ where $f: \mathbb{R} \rightarrow \mathbb{Z}$ is a deterministic function that maps the given float value to integer space. Concretely, we take $f(x)$ as the the sum of the higher $16$ bits and the lower $16$ bits of its $32$-bit representation under IEEE $754$ standard~\citep{ieee1985ieee}.
        We then train the subpolicies on the partitions with offline RL training algorithm $\gM_0 \in $ \{DQN~\citep{mnih2013playing}, QR-DQN~\citep{dabney2018distributional}, C51~\citep{bellemare2017distributional}\}.
        The detailed description of the algorithms can be found in~\Cref{adxsubsec:rl-algos}.
        
        In the \textbf{aggregation} stage, we apply the three proposed aggregation protocols (\rlalgo~(\Cref{thm:ppa-radius}), \rlalgotmp~(\Cref{thm:tpa-radius}), and \rlalgodyntmp~(\Cref{thm:dtpa-radius})) on the trained $u$ subpolicies and derive the aggregated policies $\rlalgopolicy$, $\rlalgotmppolicy$, and $\rlalgodyntmppolicy$ accordingly. 
        
        Finally, in the \textbf{certification} stage, for each aggregated policy, we provide the per-state action and cumulative reward certification following our theorems. 
        When certifying the \textit{per-state action}, we constrain the maximum trajectory length $H=1000$ for Atari games and $H=30$ for Highway (which is the full length in Highway) and report results averaged over $20$ runs;
        for the \textit{cumulative reward} certification, we adopt the trajectory length $H=400$ for evaluating Freeway, $H=75$ for Breakout, and $H=30$ for Highway.
        
        \textbf{Configuration of Trajectory Length $H$.}\quad
        For \textit{Atari games}, we do not evaluate the full episode length (up to tens of thousands steps), since our goal is to compare the relative certified robustness of different RL algorithms, and the evaluation on relatively short trajectory is sufficient under affordable computation cost.
        Moreover, different episodes in Atari games are oftentimes of different lengths; thus it is necessary that we restrict the episode length to enable a fair comparison.
        For \textit{Highway}, we evaluate on the full episode ($H=30$) where we can efficiently achieve effective comparisons.

\section{Additional Evaluation Results and Discussions}
    \label{adxsec:exp-results}
    
    \subsection{Benign Empirical Performance}
        \label{adxsubsec:emp-reward}

        {
\setlength{\tabcolsep}{3pt} 

\begin{table}[tbp]
    \centering

    \caption{\small \textbf{Benign empirical performance} of three \textit{aggregation protocols} (PARL, TPARL, and DPARL) applied on subpolicies trained using three \textit{offline RL algorithms} (DQN, QR-DQN, and C51),
    with the number of subpolicies (\ie, \#partitions) $u$ equal to $30$ or $50$. 
    We report results averaged over $20$ runs of varying randomness in the environment.
    }
    \label{tab:emp-reward-full}

\resizebox{.98\linewidth}{!}{%
        \begin{tabular}{ccccccccccc
}
    \toprule
    \multirow{2}{*}{\textbf{Freeway}} &  \multicolumn{5}{c}{\small $u=30$}  & \multicolumn{5}{c}{\small $u=50$} \\
    \cmidrule(lr){2-6}\cmidrule(lr){7-11}
    &    \multicolumn{1}{c}{\small PARL}  & \multicolumn{1}{c}{\makecell{\small TPARL \\($W=2$)}} &
    \multicolumn{1}{c}{\makecell{\small TPARL \\($W=3$)}} &
    \multicolumn{1}{c}{\makecell{\small TPARL \\($W=4$)}} &
    \multicolumn{1}{c}{\makecell{\small DPARL \\($W_{\rm max}=5$)}} & 
    \multicolumn{1}{c}{\small PARL} & \multicolumn{1}{c}{\makecell{\small TPARL \\($W=2$)}} &
    \multicolumn{1}{c}{\makecell{\small TPARL \\($W=3$)}} &
    \multicolumn{1}{c}{\makecell{\small TPARL \\($W=4$)}} & \multicolumn{1}{c}{\makecell{\small DPARL \\($W_{\rm max}=5$)}}
    \\\midrule
DQN & 10.90 {\tiny  $\pm$   0.77 } & 11.65 {\tiny  $\pm$   0.57 } & 11.25 {\tiny  $\pm$   0.54 } & 12.00 {\tiny  $\pm$   0.45 } & 11.45 {\tiny  $\pm$   0.74 } & 10.95 {\tiny  $\pm$   0.97 } & 11.60 {\tiny  $\pm$   1.02 } & 12.40 {\tiny  $\pm$   0.58 } & 11.65 {\tiny  $\pm$   0.65 } & 11.90 {\tiny  $\pm$   0.89 } \\
QR-DQN & 11.60 {\tiny  $\pm$   0.66 } & 11.85 {\tiny  $\pm$   1.15 } & 11.25 {\tiny  $\pm$   0.62 } & 11.80 {\tiny  $\pm$   0.87 } & 12.10 {\tiny  $\pm$   0.99 } & 11.50 {\tiny  $\pm$   0.92 } & 11.60 {\tiny  $\pm$   1.02 } & 12.15 {\tiny  $\pm$   0.57 } & 12.80 {\tiny  $\pm$   0.51 } & 11.90 {\tiny  $\pm$   0.77 } \\
C51 & 11.20 {\tiny  $\pm$   0.60 } & 12.45 {\tiny  $\pm$   0.50 } & 12.55 {\tiny  $\pm$   0.50 } & 11.40 {\tiny  $\pm$   0.66 } & 12.40 {\tiny  $\pm$   0.49 } & 11.70 {\tiny  $\pm$   1.27 } & 11.80 {\tiny  $\pm$   0.75 } & 12.70 {\tiny  $\pm$   0.46 } & 11.50 {\tiny  $\pm$   0.87 } & 11.95 {\tiny  $\pm$   0.92 } \\
\bottomrule \\
\end{tabular}
}

\resizebox{.98\linewidth}{!}{%
        \begin{tabular}{ccccccccccc
}
    \toprule
    \multirow{2}{*}{\textbf{Breakout}} &  \multicolumn{5}{c}{\small $u=30$}  & \multicolumn{5}{c}{\small $u=50$} \\
    \cmidrule(lr){2-6}\cmidrule(lr){7-11}
    &    \multicolumn{1}{c}{\small PARL}  & \multicolumn{1}{c}{\makecell{\small TPARL \\($W=2$)}} &
    \multicolumn{1}{c}{\makecell{\small TPARL \\($W=3$)}} &
    \multicolumn{1}{c}{\makecell{\small TPARL \\($W=4$)}} &
    \multicolumn{1}{c}{\makecell{\small DPARL \\($W_{\rm max}=5$)}} & 
    \multicolumn{1}{c}{\small PARL} & \multicolumn{1}{c}{\makecell{\small TPARL \\($W=2$)}} &
    \multicolumn{1}{c}{\makecell{\small TPARL \\($W=3$)}} &
    \multicolumn{1}{c}{\makecell{\small TPARL \\($W=4$)}} & \multicolumn{1}{c}{\makecell{\small DPARL \\($W_{\rm max}=5$)}}
    \\\midrule
DQN & 58.65 {\tiny  $\pm$   40.83 } & 38.05 {\tiny  $\pm$   10.22 } & 26.00 {\tiny  $\pm$   12.79 } & 13.50 {\tiny  $\pm$   11.41 } & 36.00 {\tiny  $\pm$   13.39 } & 60.25 {\tiny  $\pm$   31.99 } & 37.90 {\tiny  $\pm$   11.55 } & 25.90 {\tiny  $\pm$   9.55 } & 14.95 {\tiny  $\pm$   7.24 } & 45.00 {\tiny  $\pm$   13.44 } \\
QR-DQN & 76.50 {\tiny  $\pm$   79.76 } & 45.25 {\tiny  $\pm$   42.70 } & 22.05 {\tiny  $\pm$   9.13 } & 19.05 {\tiny  $\pm$   6.91 } & 42.05 {\tiny  $\pm$   15.60 } & 62.80 {\tiny  $\pm$   28.71 } & 32.10 {\tiny  $\pm$   9.27 } & 40.25 {\tiny  $\pm$   52.97 } & 17.30 {\tiny  $\pm$   7.89 } & 41.00 {\tiny  $\pm$   13.87 } \\
C51 & 51.80 {\tiny  $\pm$   10.74 } & 37.60 {\tiny  $\pm$   11.38 } & 24.75 {\tiny  $\pm$   12.77 } & 13.55 {\tiny  $\pm$   7.37 } & 34.75 {\tiny  $\pm$   10.91 } & 60.55 {\tiny  $\pm$   20.44 } & 34.85 {\tiny  $\pm$   11.92 } & 26.15 {\tiny  $\pm$   9.98 } & 19.65 {\tiny  $\pm$   7.30 } & 39.90 {\tiny  $\pm$   14.69 } \\
\bottomrule \\
\end{tabular}
}

\resizebox{.98\linewidth}{!}{%
        \begin{tabular}{ccccccccccc
}
    \toprule
    \multirow{2}{*}{\textbf{{Highway}}} &  \multicolumn{5}{c}{\small $u=30$}  & \multicolumn{5}{c}{\small $u=50$} \\
    \cmidrule(lr){2-6}\cmidrule(lr){7-11}
    &    \multicolumn{1}{c}{\small PARL}  & \multicolumn{1}{c}{\makecell{\small TPARL \\($W=2$)}} &
    \multicolumn{1}{c}{\makecell{\small TPARL \\($W=3$)}} &
    \multicolumn{1}{c}{\makecell{\small TPARL \\($W=4$)}} &
    \multicolumn{1}{c}{\makecell{\small DPARL \\($W_{\rm max}=5$)}} & 
    \multicolumn{1}{c}{\small PARL} & \multicolumn{1}{c}{\makecell{\small TPARL \\($W=2$)}} &
    \multicolumn{1}{c}{\makecell{\small TPARL \\($W=3$)}} &
    \multicolumn{1}{c}{\makecell{\small TPARL \\($W=4$)}} & \multicolumn{1}{c}{\makecell{\small DPARL \\($W_{\rm max}=5$)}}
    \\\midrule
DQN & 28.07 {\tiny  $\pm$   3.86 } & 19.16 {\tiny  $\pm$   10.26 } & 16.83 {\tiny  $\pm$   8.35 } & 12.69 {\tiny  $\pm$   6.17 } & 23.55 {\tiny  $\pm$   9.01 } & 29.05 {\tiny  $\pm$   0.62 } & 16.63 {\tiny  $\pm$   9.93 } & 15.85 {\tiny  $\pm$   8.52 } & 13.43 {\tiny  $\pm$   8.29 } & 22.51 {\tiny  $\pm$   8.01 } \\
QR-DQN & 28.52 {\tiny  $\pm$   1.22 } & 18.63 {\tiny  $\pm$   8.07 } & 15.53 {\tiny  $\pm$   7.46 } & 12.77 {\tiny  $\pm$   6.39 } & 23.11 {\tiny  $\pm$   7.94 } & 27.64 {\tiny  $\pm$   3.01 } & 20.59 {\tiny  $\pm$   7.70 } & 14.94 {\tiny  $\pm$   7.76 } & 14.13 {\tiny  $\pm$   7.54 } & 23.49 {\tiny  $\pm$   7.95 } \\
C51 & 28.86 {\tiny  $\pm$   1.08 } & 20.20 {\tiny  $\pm$   9.21 } & 13.30 {\tiny  $\pm$   8.65 } & 9.36 {\tiny  $\pm$   5.96 } & 21.51 {\tiny  $\pm$   10.04 } & 27.44 {\tiny  $\pm$   4.32 } & 15.12 {\tiny  $\pm$   9.36 } & 16.13 {\tiny  $\pm$   8.46 } & 10.61 {\tiny  $\pm$   6.52 } & 19.94 {\tiny  $\pm$   11.14 } \\
\bottomrule \\
\end{tabular}
}

\end{table}
}
        
        We present the benign empirical cumulative rewards of the three aggregation protocols (\rlalgo, \rlalgotmp, and \rlalgodyntmp) applied on subpolicies trained using three offline RL algorithms (DQN, QR-DQN, and C51) in~\Cref{tab:emp-reward-full}.
        The cumulative reward is obtained by running a trajectory of maximum length $H$ and accumulating the reward achieved at each time step. 
        
        We discuss the conclusions on multiple levels.
        We choose $H=1000$ for Atari games and $H=30$ for Highway environment and report results averaged over $20$ runs.
        On the \textbf{RL algorithm} level, we see that QR-DQN achieves the highest score in most cases.
        On the \textbf{aggregation protocol} level, temporal aggregation enhances the performance on Freeway while dampens the performance on Breakout and Highway.
        We particularly note that the results obtained by using temporal aggregation (\rlalgotmp and \rlalgodyntmp) gives significantly smaller variance compared to the single-step aggregation \rlalgo, especially in Breakout. This implies that temporal aggregation can help stabilize the policy.
        However, the conclusion does not hold in Highway, which is an interesting phenomenon that is worth investigating.
        On the \textbf{partition number} level, a larger partition number can give slightly better results.
        On the \textbf{environment} level, Freeway is simpler and more stable than Breakout and Highway.

    \subsection{Comparison of \sysname with Standard Training}
        \label{adxsubsec:cmp-training}
        
        \renewcommand{\thesubfigure}{\alph{subfigure}}

\begin{figure}

\newlength{\utilheighttraining}
\settoheight{\utilheighttraining}{\includegraphics[width=.3\linewidth]{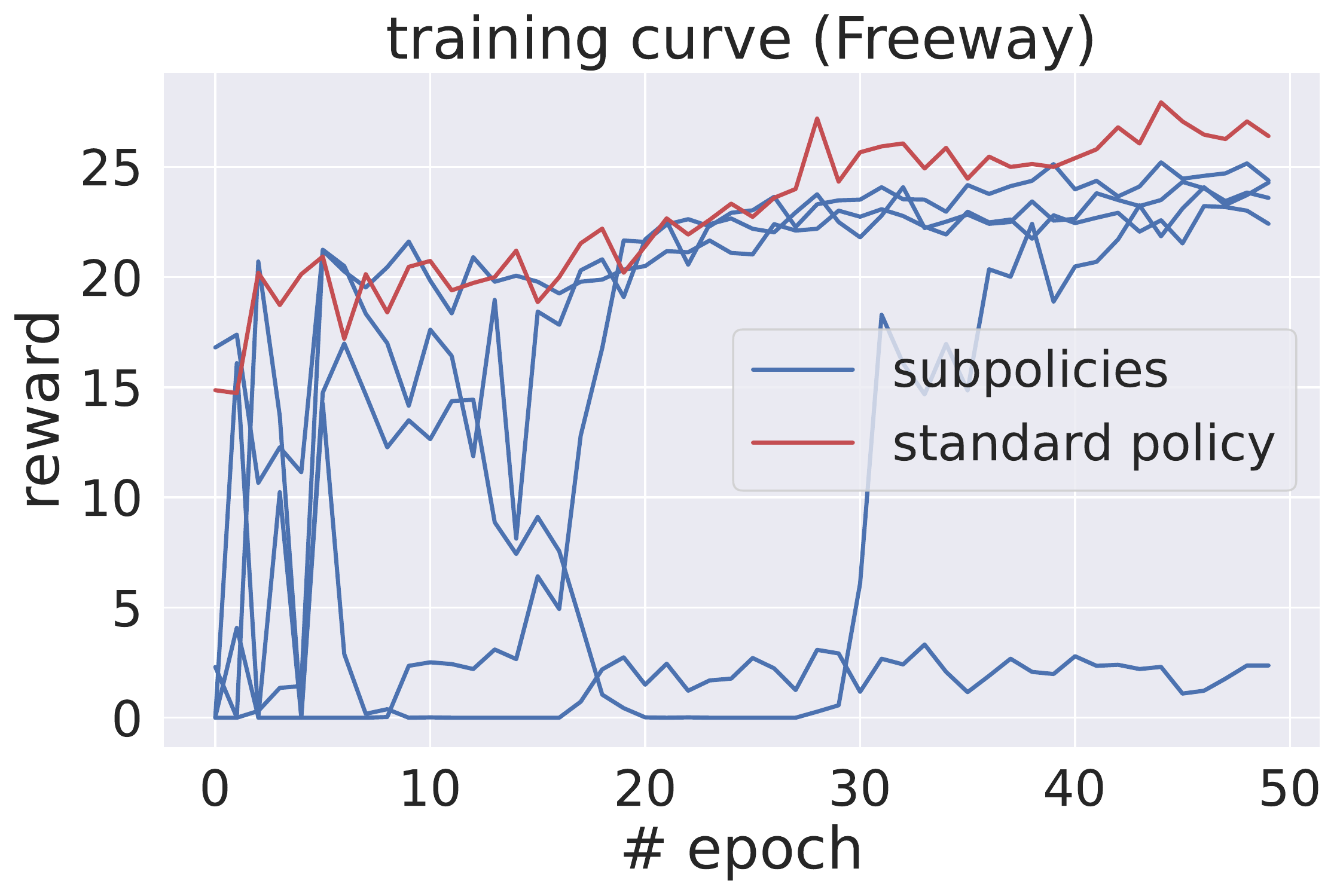}}%

\newcommand{\rowname}[1]
{\rotatebox{90}{\makebox[\utilheighttraining][c]{\tiny #1}}}

\centering

{
\renewcommand{\tabcolsep}{10pt}


\begin{subtable}[]{\linewidth}
\centering
\resizebox{\linewidth}{!}{%
\begin{tabular}{@{}p{6mm}@{}c@{}c@{}c@{}}
&
\includegraphics[height=\utilheighttraining]{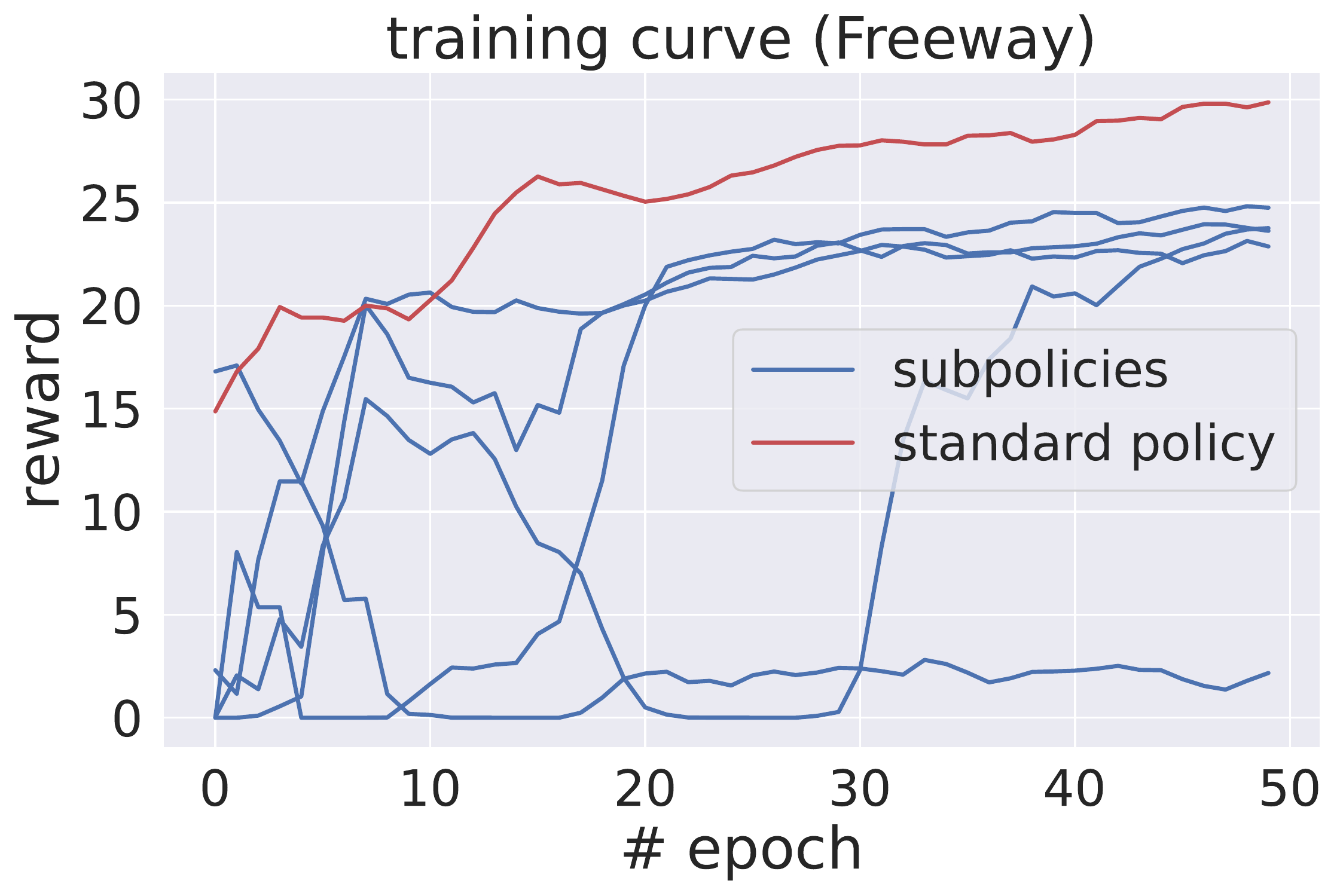}&
\includegraphics[height=\utilheighttraining]{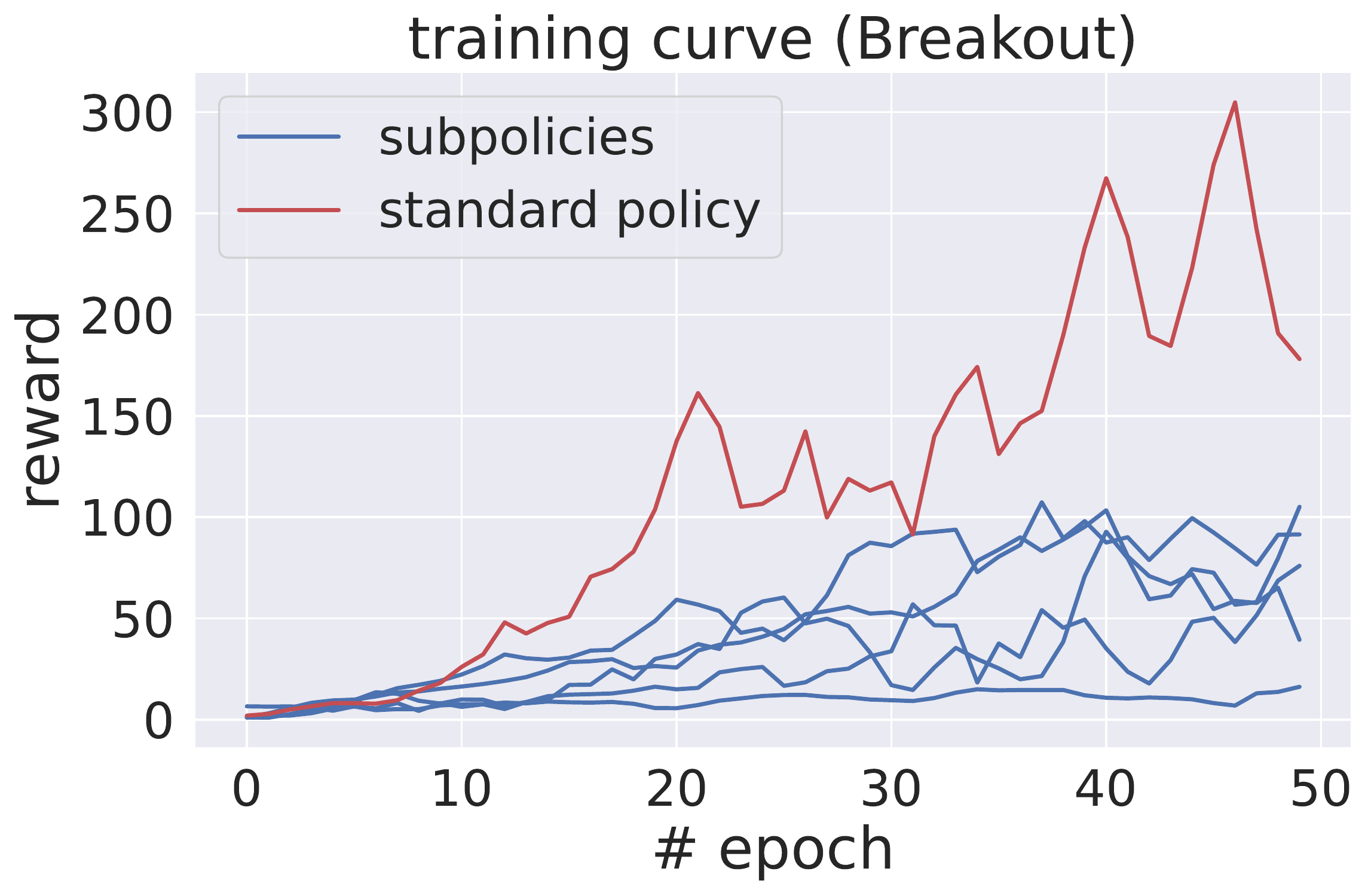}&
\includegraphics[height=\utilheighttraining]{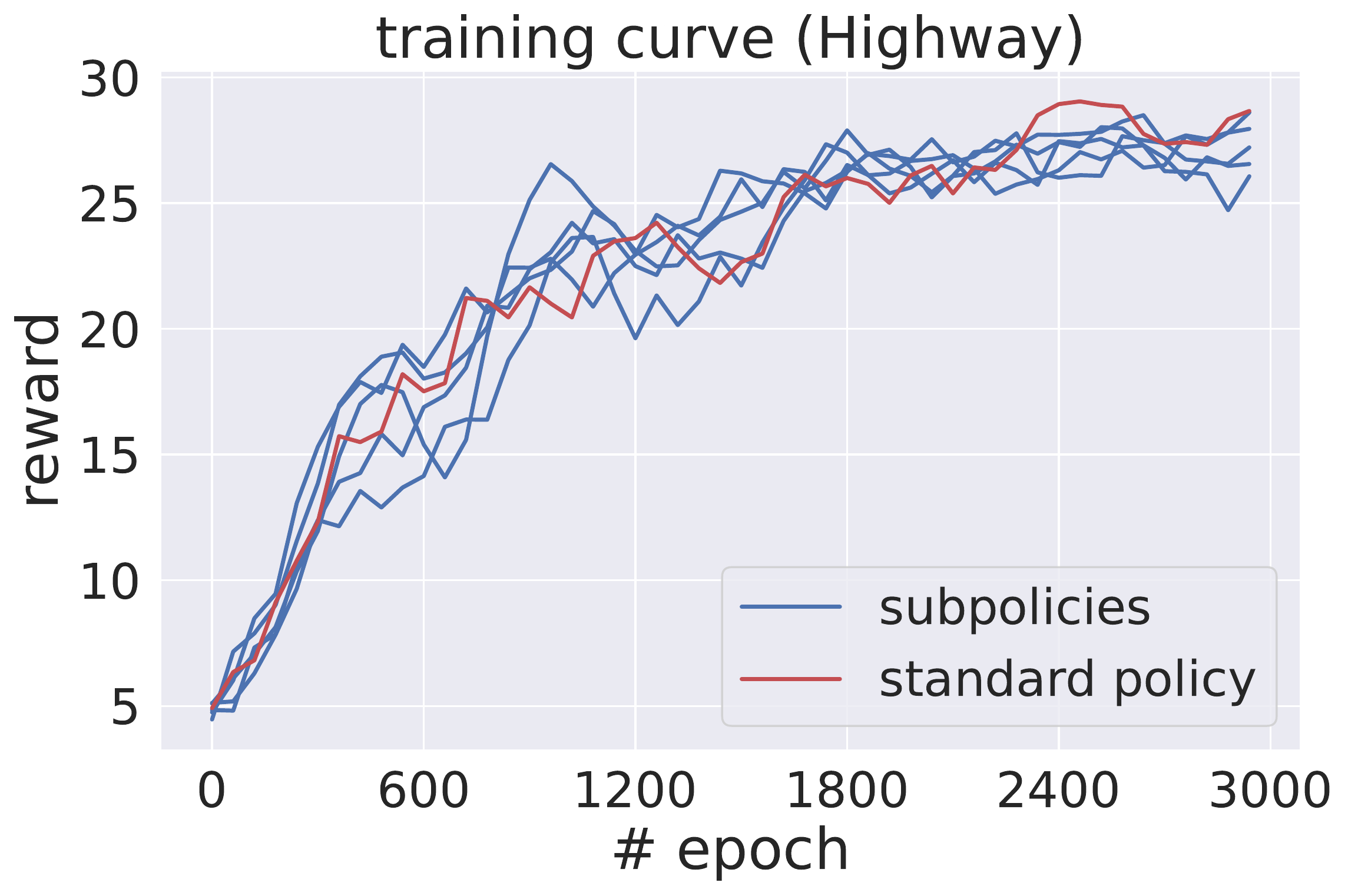}
\\
\end{tabular}
}
\end{subtable}

}
\vspace{-3mm}
\caption{\small {\textbf{The \textit{convergence speed} of the proposed partition-based training compared with the standard training.}
For \textit{our proposed training}, we plot all training curves of the sampled $5$ subpolicies in \textcolor{pltblue}{\textbf{blue}}, where each subpolicy is trained on one partition of the dataset. (We do not plot all $50$ curves for visual clarity).
For \textit{standard training}, we plot the training curve of the standard policy in \textcolor{pltred}{\textbf{red}}, where the single policy is trained on the entire dataset.
In Atari games, we train each policy (or subpolicy) for a fixed number of $50$ epochs, where each epoch consumes $1$M randomly sampled training data.
In Highway environment, we train each policy (or subpolicy) for a fixed number of $3000$ epochs, where each epoch consumes $1$K randomly sampled training data.
The offline RL algorithm used is DQN.
}}%
\label{fig:training-cmp}
\end{figure}


        {
\setlength{\tabcolsep}{3pt} 

\begin{table}[tbp]
    \centering

    \caption{\small {\textbf{The \textit{policy quality} measured by \textit{empirical cumulative reward} of the proposed aggregation protocols (PARL, TPARL, and DPARL) compared with the standard training.}
    In our aggregation, we aggregate over $u$ subpolicies with $u$ equal to $30$ or $50$ .
    We report results averaged over $20$ runs of varying randomness in the environment, where each run is an episode of length at most $1000$ for Atari games and $30$ for Highway environment. The offline RL algorithm used is DQN.}
    }
    \label{tab:testing-cmp}

\resizebox{.9\linewidth}{!}{%
        \begin{tabular}{cccccccc
}
    \toprule
    \multirow{2}{*}{\textbf{\makecell{}}} & \multirow{3}{*}{\makecell{standard \\ trained\\ policy }} & \multicolumn{3}{c}{\small $u=30$}  & \multicolumn{3}{c}{\small $u=50$} \\
    \cmidrule(lr){3-5}\cmidrule(lr){6-8}
    & &    \multicolumn{1}{c}{\small PARL}  & 
    \multicolumn{1}{c}{\makecell{\small TPARL \\($W=4$)}} &
    \multicolumn{1}{c}{\makecell{\small DPARL \\($W_{\rm max}=5$)}} & 
    \multicolumn{1}{c}{\small PARL} & 
    \multicolumn{1}{c}{\makecell{\small TPARL \\($W=4$)}} & \multicolumn{1}{c}{\makecell{\small DPARL \\($W_{\rm max}=5$)}}
    \\\midrule
Freeway & 12.00 {\tiny  $\pm$   0.89 } & 10.90 {\tiny  $\pm$   0.77 } &  12.00 {\tiny  $\pm$   0.45 } & 11.45 {\tiny  $\pm$   0.74 } & 10.95 {\tiny  $\pm$   0.97 } &  11.65 {\tiny  $\pm$   0.65 } & 11.90 {\tiny  $\pm$   0.89 } \\
Breakout & 97.60 {\tiny  $\pm$   117.28 } & 58.65 {\tiny  $\pm$   40.83 } &  13.50 {\tiny  $\pm$   11.41 } & 36.00 {\tiny  $\pm$   13.39 } & 60.25 {\tiny  $\pm$   31.99 } &  14.95 {\tiny  $\pm$   7.24 } & 45.00 {\tiny  $\pm$   13.44 } \\
Highway & 28.90 {\tiny  $\pm$   0.87 } & 28.07 {\tiny  $\pm$   3.86 } & 12.69 {\tiny  $\pm$   6.17 } & 23.55 {\tiny  $\pm$   9.01 } & 29.05 {\tiny  $\pm$   0.62 } & 13.43 {\tiny  $\pm$   8.29 } & 22.51 {\tiny  $\pm$   8.01 } \\
\bottomrule \\
\end{tabular}
}

\end{table}
}
        
        We provide additional experimental results on the comparison between the empirical performance of our \sysname and the standard training in terms of the \textit{convergence speed} (\Cref{fig:training-cmp}) and \textit{policy quality} (\Cref{tab:testing-cmp}).
        
        In~\Cref{fig:training-cmp}, we aim to show the comparison of the convergence speed. For \textit{our proposed training}, we plot all training curves of the sampled $5$ subpolicies in blue, where each subpolicy is trained on one partition of the dataset. (We do not plot all $50$ curves for visual clarity.) 
        For \textit{standard training}, we plot the training curve of the standard policy in red, where the single policy is trained on the entire dataset. We see that on \textbf{Freeway}, most subpolicies converge slower than the standard policy, but will reach similar convergence value as the standard training one; while there also exist a few subpolicies that fail to be trained. On \textbf{Breakout}, within $50$ epochs, we observe substantial fluctuation for the training curves of all the policies, as well as large variance for the achieved reward at the last epoch.
        Thus, on these two \textbf{Atari games}, we cannot draw conclusions w.r.t. the convergence. 
        Given more computational resources to run more epochs, we would be able to draw more informative conclusions.
        In contrast, on \textbf{Highway}, all subpolicies display similarly good convergence properties, showing comparable convergence speed and value with the standard training on the entire dataset.

        In~\Cref{tab:testing-cmp}, we aim to compare the policy quality of the aggregated policy derived in our \sysname framework (\ie, \rlalgo, \rlalgotmp, and \rlalgodyntmp) with the policy obtained from standard training. We see that on \textbf{Freeway}, our three protocols achieve comparable results with the policy obtained by standard training on the entire dataset; on \textbf{Breakout}, although the quality of our obtained policies is lower, our policies are much more stable with significantly lower variance than the standard training; on \textbf{Highway}, only PARL obtains comparable results to the standard training policy, indicating the lack of temporal continuity in this environment.

    \subsection{More Analytical Statistics for \rlalgodyntmp}
        \label{adxsubsec:stat-dparl}
        
        {
\setlength{\tabcolsep}{4pt} 

\begin{table}[tbp]
    \centering

    \caption{\small \textbf{Average window size} (\ie, $\sum_{t=1}^{T} W_t / T$) for the \textit{aggregation protocol} DPARL ($W_{\rm max}=5$) applied on subpolicies trained using three \textit{offline RL algorithms} (DQN, QR-DQN, and C51),
    with the number of subpolicies (\ie, \#partitions) $u$ equal to $30$ or $50$. 
    We report results averaged over all time steps in $20$ runs.
    }
    \label{tab:ave-window-size}

\resizebox{.75\linewidth}{!}{%
        \begin{tabular}{ccccccc
}
    \toprule
    \multirow{2}{*}{\textbf{}} &  \multicolumn{3}{c}{\small $u=30$}  & \multicolumn{3}{c}{\small $u=50$} \\
    \cmidrule(lr){2-4}\cmidrule(lr){5-7}
    &    \multicolumn{1}{c}{\small DQN}  & \multicolumn{1}{c}{\makecell{\small QR-DQN}} & \multicolumn{1}{c}{\makecell{\small C51}} &  \multicolumn{1}{c}{\small DQN} & \multicolumn{1}{c}{\makecell{\small QR-DQN}} & \multicolumn{1}{c}{\makecell{\small C51}}  \\\midrule
Freeway & 2.69 {\tiny  $\pm$   1.73 } & 2.59 {\tiny  $\pm$   1.71 } & 2.64 {\tiny  $\pm$   1.74 } & 2.79 {\tiny  $\pm$   1.73 } & 2.70 {\tiny  $\pm$   1.73 } & 2.72 {\tiny  $\pm$   1.73 } \\
Breakout & 2.28 {\tiny  $\pm$   1.46 } & 2.44 {\tiny  $\pm$   1.55 } & 2.32 {\tiny  $\pm$   1.48 } & 2.33 {\tiny  $\pm$   1.48 } & 2.40 {\tiny  $\pm$   1.52 } & 2.39 {\tiny  $\pm$   1.52 } \\
Highway & 1.97 {\tiny  $\pm$   0.44 } & 2.21 {\tiny  $\pm$   0.27 } & 2.16 {\tiny  $\pm$   0.37 } & 2.23 {\tiny  $\pm$   0.32 } & 2.26 {\tiny  $\pm$   0.24 } & 2.18 {\tiny  $\pm$   0.37 } \\
\bottomrule \\
\end{tabular}
}

\end{table}
}
        \textbf{Selected Window Size.}\quad
        We provide the mean and variance for the selected window sizes in \rlalgodyntmp in~\Cref{tab:ave-window-size}.
        In our experiments, the \textit{maximum window size} $W_{\rm max}=5$, while the \textit{average selected window sizes} are all around half of the maximum value. Thus, the average certification time is expected to be much smaller compared with the worst-case time complexity.
    
        {
\setlength{\tabcolsep}{4pt} 

\begin{table}[tbp]
    \centering

    \caption{\small {\textbf{Runtime (unit: seconds)} of the \textit{aggregation protocol} DPARL ($W_{\rm max}=5$) applied on subpolicies trained using \textit{offline RL algorithm} DQN,
    with the number of subpolicies (\ie, partition number) $u$ equal to $30$ or $50$. 
    We compare with the \textit{standard testing} which tests the runtime of a single trained DQN policy without using our 
    framework.
    We report results averaged over $20$ runs of varying randomness in the environment, where each run is an episode of length at most $1000$.
    }}
    \label{tab:runtime}

\resizebox{.6\linewidth}{!}{%
        \begin{tabular}{cccc
}
    \toprule
    {}
    &    \multicolumn{1}{c}{\small standard testing}   & \multicolumn{1}{c}{\small DPARL ($u=30$)}   & \multicolumn{1}{c}{\small DPARL ($u=50$)} \\\midrule
Freeway & 2.53 {\tiny  $\pm$   0.50 } & 56.37 {\tiny  $\pm$   1.75 } & 79.05 {\tiny  $\pm$   4.27 } \\
Breakout & 2.00 {\tiny  $\pm$   0.56 } & 72.05 {\tiny  $\pm$   13.09 } & 108.05 {\tiny  $\pm$   8.71 } \\
\bottomrule \\
\end{tabular}
}

\end{table}
}
        \textbf{Running Time.}\quad
        We provide the running time in~\Cref{tab:runtime}.
        Specifically, we compare the running time of \rlalgodyntmp ($W_{\rm max}=5$) applied on $u=30$ or $50$ subpolicies trained using \textit{offline RL algorithm} DQN with the \textit{normal testing} which tests the runtime of a single trained DQN policy without using our framework. As we have shown in the remark of~\Cref{thm:dtpa-radius} in~\Cref{subsec:cert-radius}, the time complexity of \rlalgodyntmp is $O\left(W_{\max }^{2}|\mathcal{A}|^{2} u+W_{\max }|\mathcal{A}|^{2} u \log u\right)$. In~\Cref{tab:runtime}, we also see that the runtime of \rlalgodyntmp scales roughly quasilinearly with the number of subpolicies $u$, and quadratically with the action set size $|\cal A|$, which is $3$ for Freeway and $4$ for Breakout.
        We omit the runtime for Highway, since the horizon length, the environment type, and the neural network size are all different for Highway and Atari games.

    \subsection{Maximum Tolerable Poisoning Threshold vs. Total Trajectory Number}
        \label{adxsubsec:cmp-traj-num}
        We provide the total number of trajectories in the offline dataset used for training the environments Freeway, Breakout and Highway below, as well as the corresponding ratio of the maximum tolerable poisoning threshold w.r.t. the total number of trajectories. 
        For \textbf{Freeway}, the total number of trajectories in the entire offline dataset is 121,892, and the ratio ($\max \widebar K_t$ / \# total trajectories) is 0.0002; 
        for \textbf{Breakout}, the number is 209,049, and the ratio is 0.0001.
        We emphasize that, as shown in previous work on probable defense against poisoning in supervised learning~\citep{levine2020deep}, the number of instances they can certify is also not high especially for challenging tasks (\eg, certifying 20 instances on the dataset GTSRB in~\citet{levine2020deep}, attaining the ratio 0.0005), which may imply the intrinsic difficulty of certifying against poisoning attacks. Given such difficulty, we already achieve reasonable certification as the first work on certified robust RL against poisoning, and we hope future works can further improve upon our results.

    \subsection{Comparison of Freeway, Breakout, and Highway}
        \label{adxsubsec:cmp-games}

        Freeway and Breakout are two typical types of games with distinctive game properties.
        Freeway is a simple game where the agent aims to cross the road and avoid the traffic. In most of the time steps, the agent would take the ``forward'' action; only when there is a need to avoid the traffic will the agent stop and wait.
        In comparison, the game Breakout involves more complicated interactions between the paddle and the environment. The paddle position and velocity both play important roles in order to catch and bounce the ball at an appropriate angle, which requires a frequent switch of the paddle moving direction. Similar to Breakout, as an autonomous driving environment, Highway requires the agent to accurately analyze the rapidly changing environment around it and react quickly, leading to frequent changes in vehicle's actions.
        
        Given the properties of the environments, we note that in Freeway, adjacent time steps may share consistent action selections, which is not necessarily true in the Breakout and Highway.
        Thus, it would be expected to be beneficial to consider the past history for making the current decision in Freeway, while the past history may interfere with the action selection in Breakout and Highway.
        
        \noindent\textbf{Comparisons of Bottleneck states in Atari Games.}\quad
        In Breakout, the game goal is to control the paddle so that the ball is bounced to hit the brick. There are clearly very different stages in the Breakout game, \eg, when the ball is flying towards the paddle and when it is bounced back. An example of the \textit{bottleneck state} is when the ball is approaching the paddle. The poisoning behavior may lead to the disastrous effect that the policy learns to control the paddle to slide in the opposite direction of the ball at such bottleneck states, while other states may not be as vulnerable. In comparison, in Freeway, the game goal is to control the agent to cross the road while avoiding the traffic. The agent is faced with similar traffic conditions everywhere on the road, and can make stable choices regardless of the complex situation. Thus there are very few \textit{bottleneck states}.
        
        {
\setlength{\tabcolsep}{4pt} 

\begin{table}[tbp]
    \centering

    \caption{\small {\textbf{Action change ratio (in percentage)} of three \textit{aggregation protocols} (PARL, TPARL, and DPARL) applied on subpolicies trained in Highway environment using three \textit{offline RL algorithms} (DQN, QR-DQN, and C51),
    with the number of subpolicies (\ie, \#partitions) $u$ equal to $30$, $50$, or $100$. 
    We report results averaged over $20$ runs of varying randomness in the environment.
    }}
    \label{tab:action_change}

\resizebox{.55\linewidth}{!}{%
\begin{tabular}{ccccc}
\toprule
\multirow{3}{*}{$u$} &
\multirow{3}{*}{\makecell{RL \\Algorithm}} &
\multicolumn{3}{c}{Aggregation Protocol}
\\
\cmidrule(lr){3-5}
&  & \multirow{2}{*}{PARL} & \multirow{2}{*}{\shortstack{TPARL \\($W = 4$)}} & \multirow{2}{*}{\shortstack{DPARL \\($W_{\rm max}=5$)}} \\
& & & & \\
\midrule
\multirow{3}{*}{$30$}  & DQN & 59.79 {\tiny  $\pm$   8.99 } & 24.89 {\tiny  $\pm$   10.47 } & 33.82 {\tiny  $\pm$   8.75 } \\
& QR-DQN & 61.00 {\tiny  $\pm$   12.74 } & 31.14 {\tiny  $\pm$   6.43 } & 37.80 {\tiny  $\pm$   7.16 } \\
& C51 & 59.00 {\tiny  $\pm$   10.39 } & 18.17 {\tiny  $\pm$   11.52 } & 35.44 {\tiny  $\pm$   8.65 } \\
\midrule
\multirow{3}{*}{$50$}  & DQN & 55.50 {\tiny  $\pm$   8.84 } & 26.05 {\tiny  $\pm$   11.69 } & 40.07 {\tiny  $\pm$   13.74 } \\
& QR-DQN & 60.37 {\tiny  $\pm$   10.31 } & 28.48 {\tiny  $\pm$   6.81 } & 37.78 {\tiny  $\pm$   12.15 } \\
& C51 & 59.39 {\tiny  $\pm$   12.34 } & 28.04 {\tiny  $\pm$   10.71 } & 38.21 {\tiny  $\pm$   11.23 } \\
\midrule
\multirow{3}{*}{$100$}  & DQN & 48.67 {\tiny  $\pm$   5.91 } & 22.17 {\tiny  $\pm$   7.65 } & 33.45 {\tiny  $\pm$   9.70 } \\
& QR-DQN & 61.17 {\tiny  $\pm$   8.38 } & 24.12 {\tiny  $\pm$   9.85 } & 36.76 {\tiny  $\pm$   7.87 } \\
& C51 & 61.83 {\tiny  $\pm$   10.41 } & 22.91 {\tiny  $\pm$   13.34 } & 34.06 {\tiny  $\pm$   9.19 } \\
\bottomrule
\end{tabular}
}

\end{table}
}
        \noindent\textbf{A Study on the Frequency of Action Change in Highway.}\quad
        In \Cref{tab:action_change}, we present the action change ratio (\# action changes / trajectory length) in Highway environment. 
        Comparing among the aggregation protocols, we first note that the single-step aggregation policy $\rlalgopolicy$ frequently alters actions as the reaction to the rapidly changing driving environment, while the temporal aggregation protocol $\rlalgotmppolicy$ changes actions less frequently. This is because in \rlalgotmp, the agent is explicitly forced to take stable actions based on a fixed window size.
        Second, comparing the action change ratio with the benign empirical performance in~\Cref{tab:emp-reward-full}, we observe that $\rlalgopolicy$ and $\rlalgotmppolicy$ achieves the highest and lowest empirical performance respectively, which indicates the correlation between action change ratio with the achieved empirical reward. Specifically, in \rlalgotmp, the action change is limited as a result of the enforced window size, leading to the limited benign performance.

    \subsection{Full Results of Robustness Certification for Per-State Action Stability}
        \label{adxsubsec:full-action}

        \renewcommand{\thesubfigure}{\alph{subfigure}}

\begin{figure}

\newlength{\utilheightactionfullcum}
\settoheight{\utilheightactionfullcum}{\includegraphics[width=.34\linewidth]{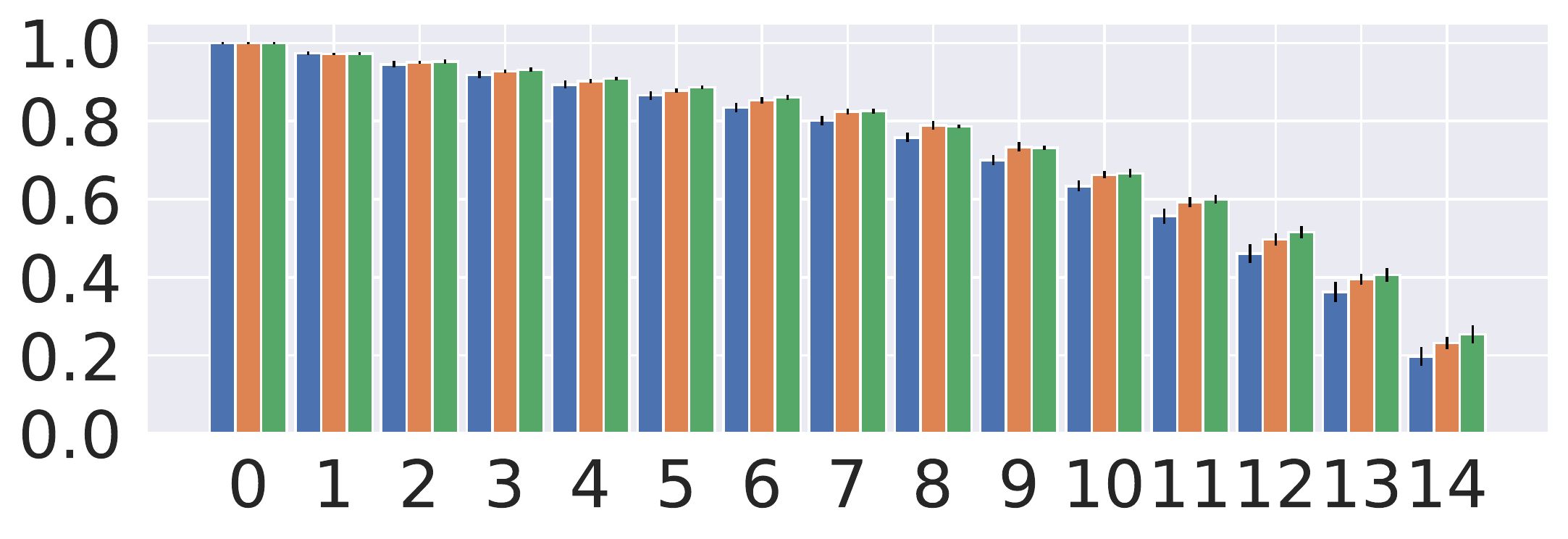}}%

\newlength{\legendheightactionfullcum}
\setlength{\legendheightactionfullcum}{0.4\utilheightactionfullcum}%

\newcommand{\rowname}[1]
{\rotatebox{90}{\makebox[\utilheightactionfullcum][c]{\tiny #1}}}

\centering

{
\renewcommand{\tabcolsep}{10pt}

\begin{subtable}[]{\linewidth}
\begin{tabular}{l}
\includegraphics[height=\legendheightactionfullcum]{figs/legend_bar_all.pdf}
\end{tabular}
\end{subtable}

\begin{subtable}[]{\linewidth}
\centering
\begin{tabular}{@{}p{6mm}@{}c@{}c@{}c@{}c@{}}
        & \makecell{\small{\textbf{Freeway, $u=30$}}}
        & \makecell{\small{\textbf{Freeway, $u=50$}}}
        \\
\rowname{\makecell{\small \textbf{DQN} \\\scriptsize stability ratio}}&
\includegraphics[height=\utilheightactionfullcum]{figs/freeway_action_full_cum_dqn_30.pdf}&
\includegraphics[height=\utilheightactionfullcum]{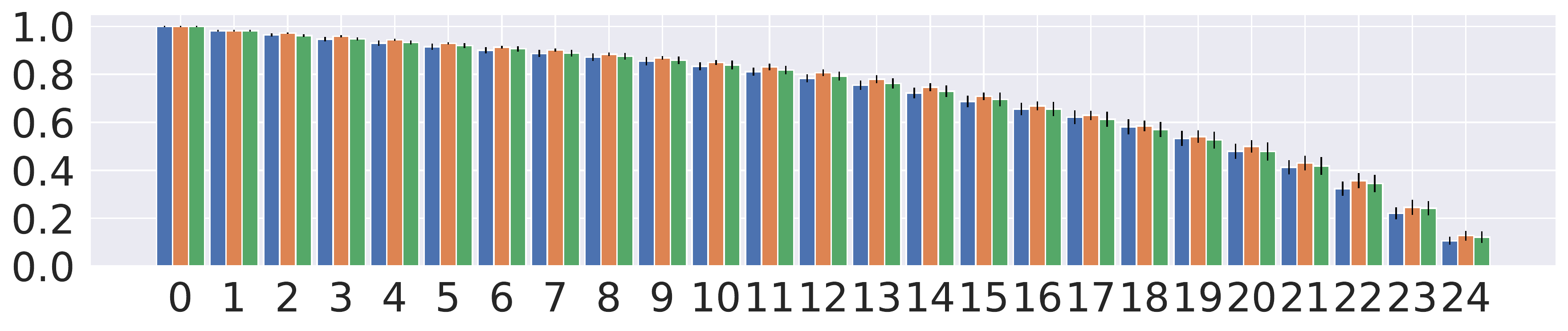}
\\
\rowname{\makecell{\small \textbf{QR-DQN} \\\scriptsize stability ratio}}&
\includegraphics[height=\utilheightactionfullcum]{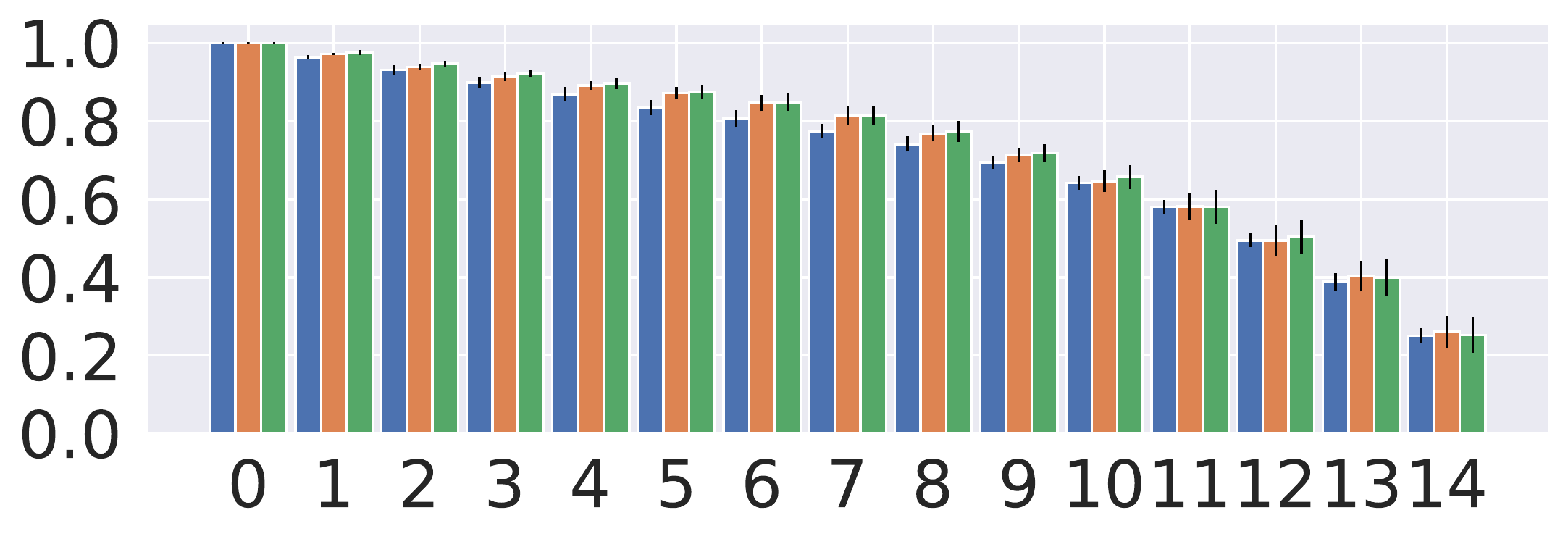}&
\includegraphics[height=\utilheightactionfullcum]{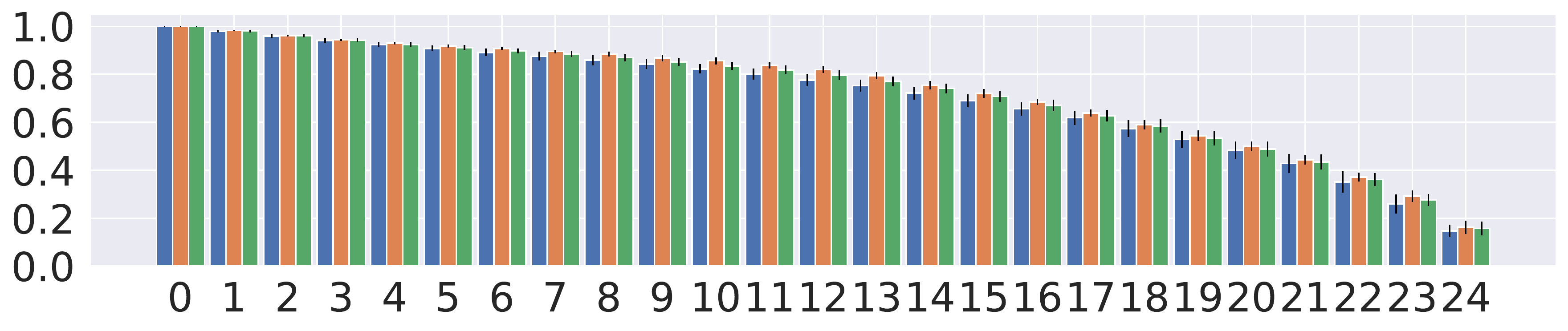}
\\
\rowname{\makecell{\small \textbf{C51} \\\scriptsize stability ratio}}&
\includegraphics[height=\utilheightactionfullcum]{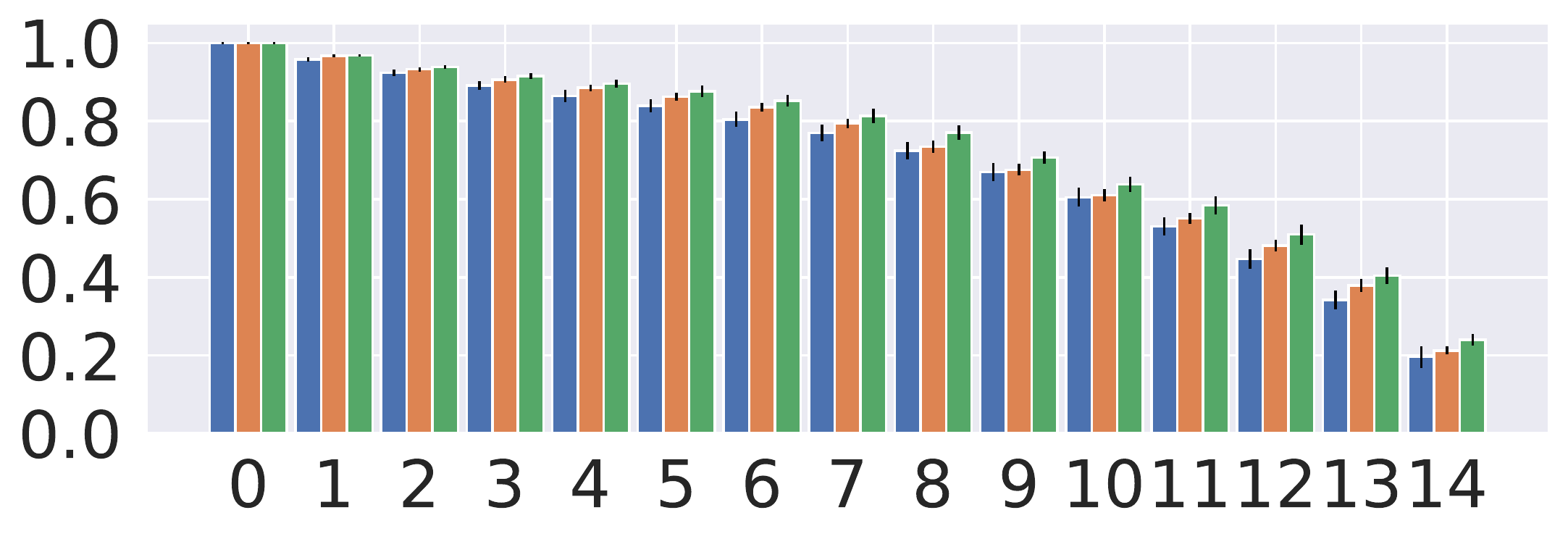}&
\includegraphics[height=\utilheightactionfullcum]{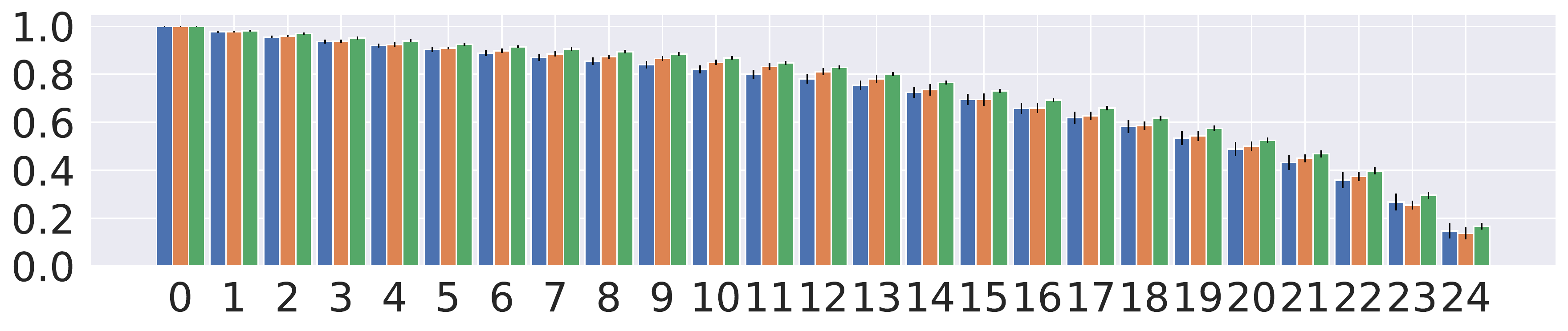}
\\
        & \makecell{\scriptsize{$\geq \wbar K$}}
        & \makecell{\scriptsize{$\geq \wbar K$}}
\end{tabular}
\caption{\small Freeway}\label{tab:cert-rad-freeway}
\end{subtable}

\begin{subtable}[]{\linewidth}
\centering
\begin{tabular}{@{}p{6mm}@{}c@{}c@{}c@{}c@{}}
        & \makecell{\small{\textbf{Breakout, $u=30$}}}
        & \makecell{\small{\textbf{Breakout, $u=50$}}}
        \\
\rowname{\makecell{\small \textbf{DQN} \\\scriptsize stability ratio}}&
\includegraphics[height=\utilheightactionfullcum]{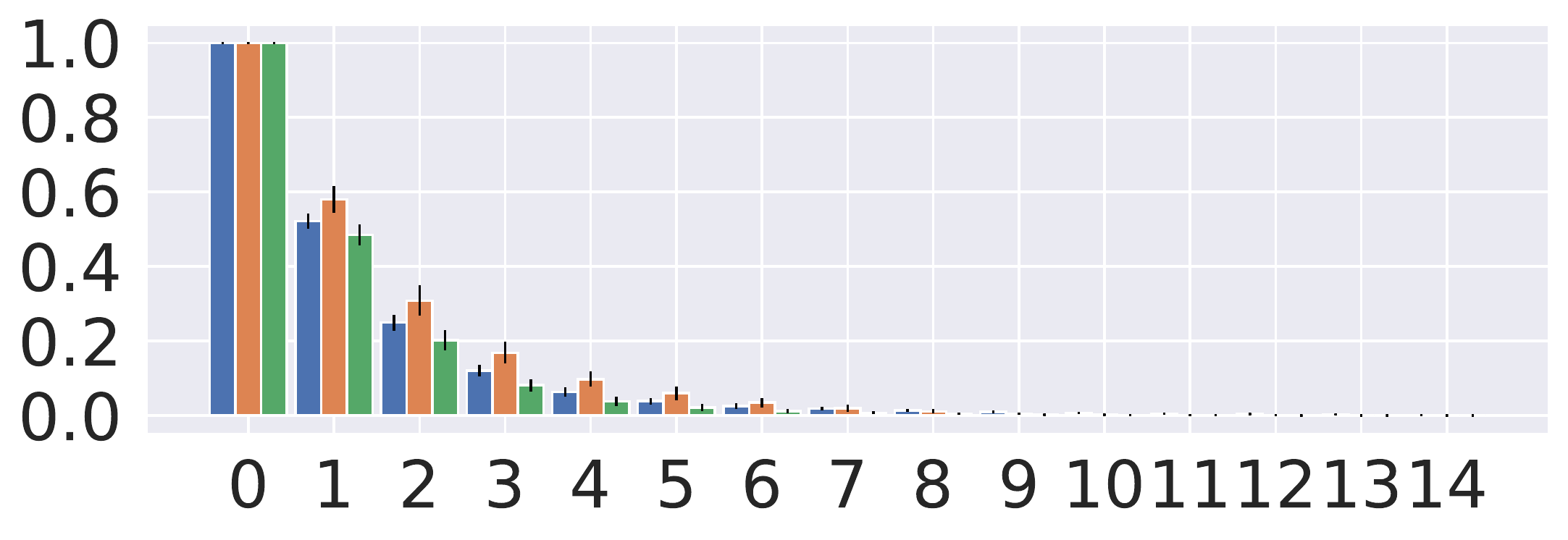}&
\includegraphics[height=\utilheightactionfullcum]{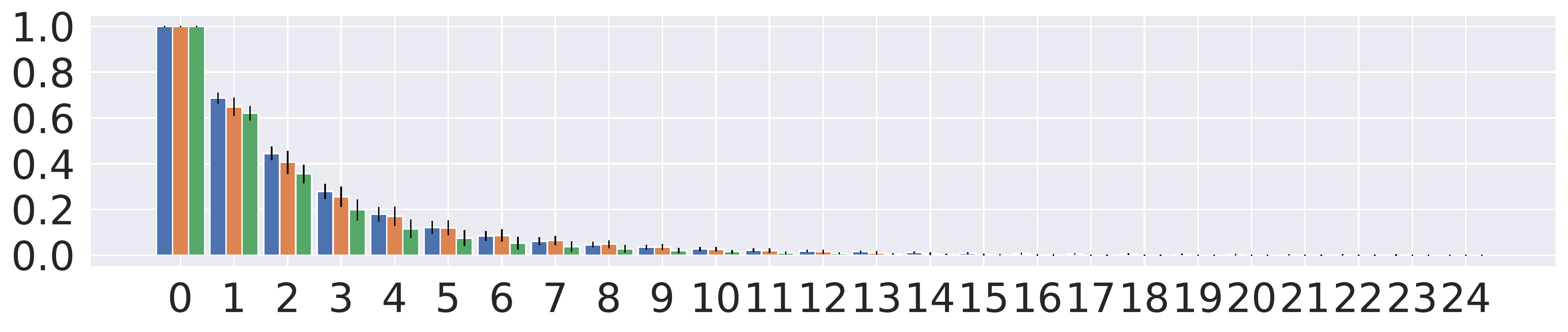}
\\
\rowname{\makecell{\small \textbf{QR-DQN} \\\scriptsize stability ratio}}&
\includegraphics[height=\utilheightactionfullcum]{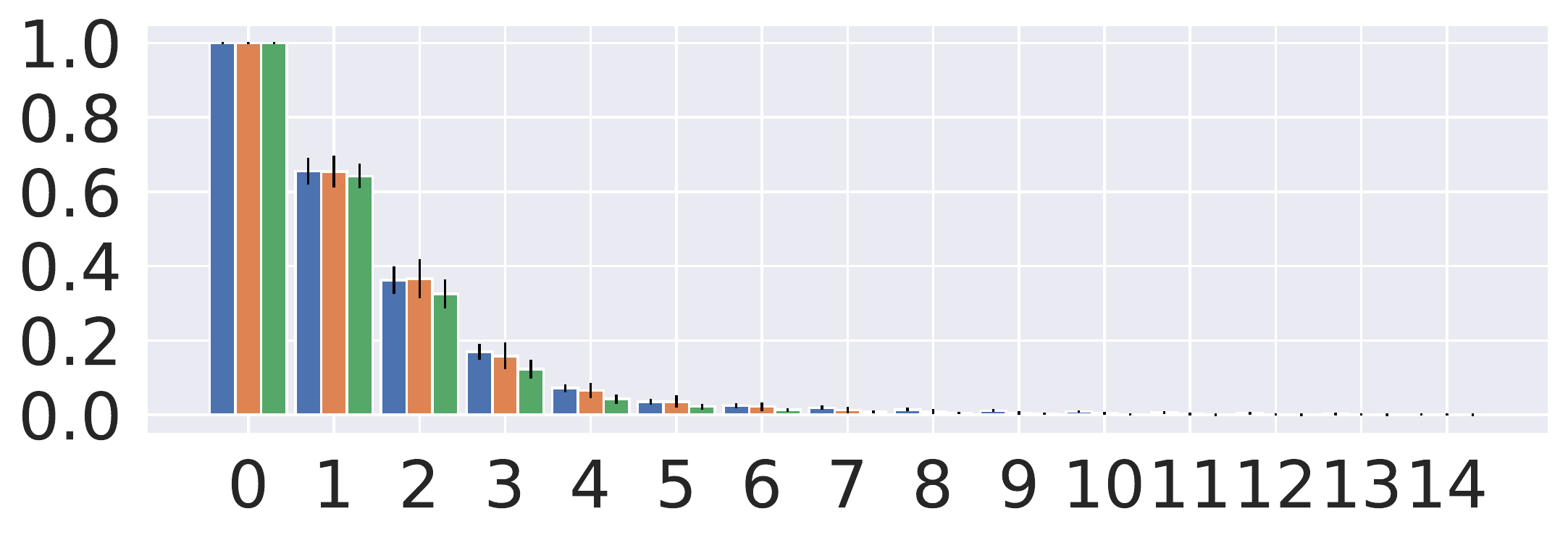}&
\includegraphics[height=\utilheightactionfullcum]{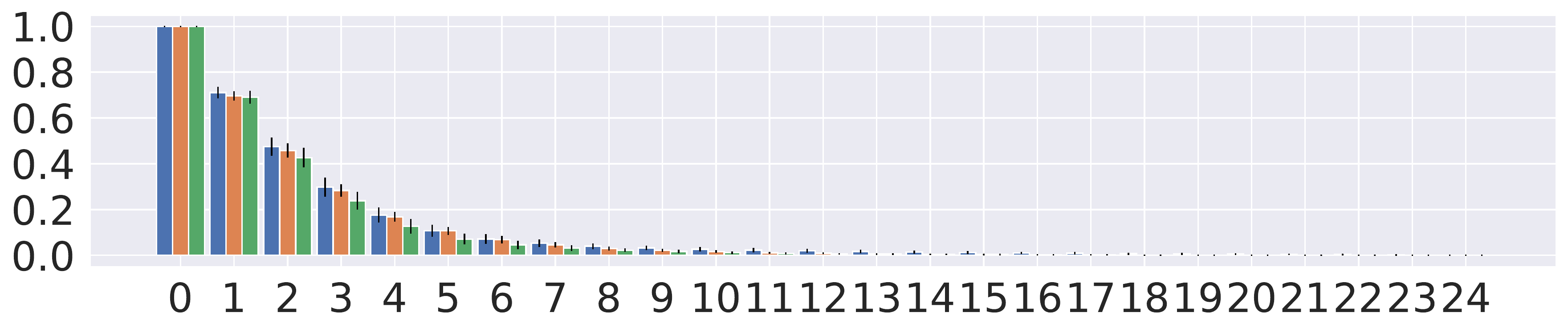}
\\
\rowname{\makecell{\small \textbf{C51} \\\scriptsize stability ratio}}&
\includegraphics[height=\utilheightactionfullcum]{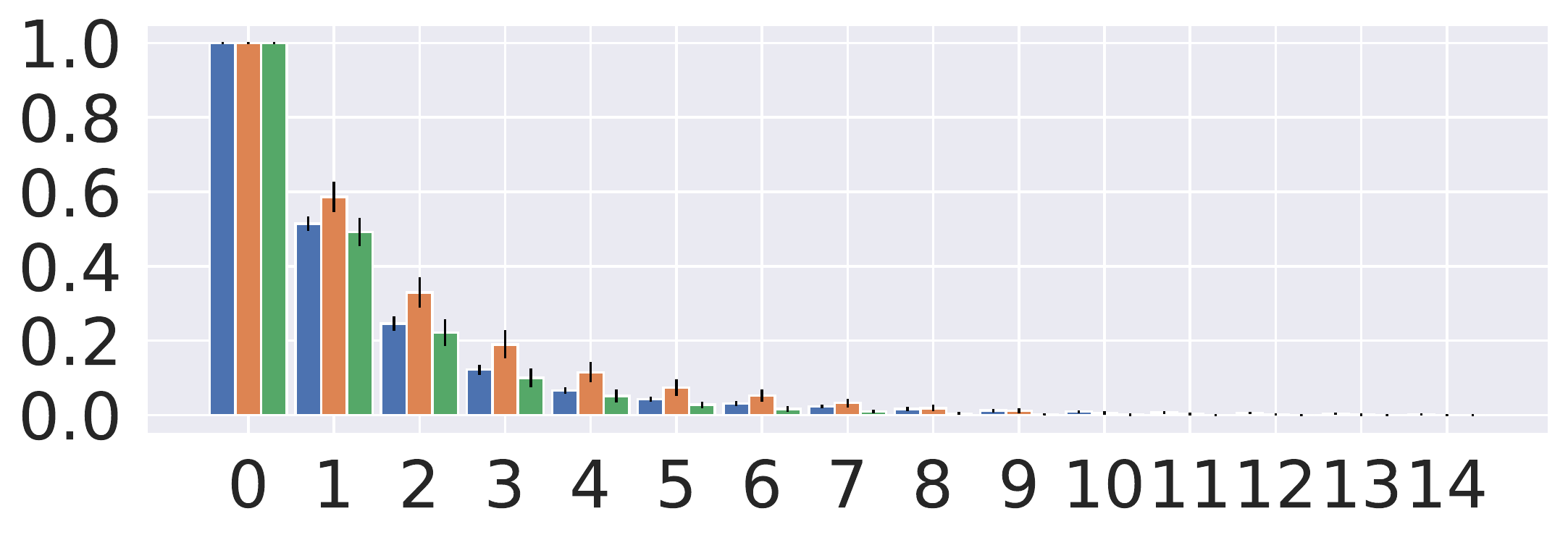}&
\includegraphics[height=\utilheightactionfullcum]{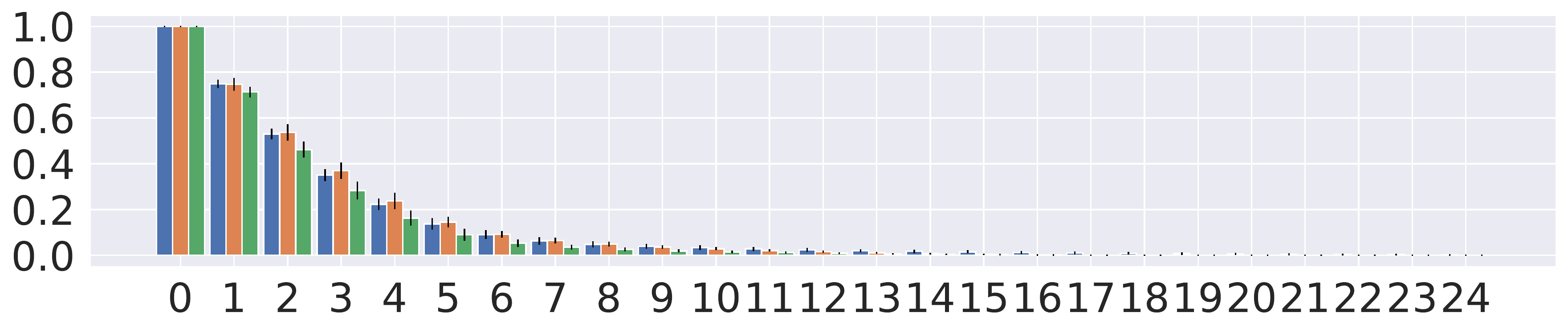}
\\
        & \makecell{\scriptsize{$\geq \wbar K$}}
        & \makecell{\scriptsize{$\geq \wbar K$}}
\end{tabular}
\caption{\small Breakout}\label{tab:cert-rad-breakout}
\end{subtable}

}
\vspace{-3mm}
\caption{\small \textbf{Robustness certification for per-state action stability on Atari games (full results).}
We plot the cumulative histogram of the \textit{tolerable poisoning size} $\protect\wbar K$ for all time steps in one trajectory.
We provide results on two games (Freeway and Breakout), two partition numbers ($u=30$ and $u=50$), and a comparison of three certification methods (PARL, TPARL, and DPARL).
The results are averaged over $20$ runs considering the randomness in the game environment, and the short vertical bar on top of each bar represents the standard deviation.
}%
\label{fig:statewise-full}
\end{figure}

        \renewcommand{\thesubfigure}{\alph{subfigure}}

\begin{figure}

\newlength{\utilheightactionfullcumhighway}
\settoheight{\utilheightactionfullcumhighway}{\includegraphics[width=.34\linewidth]{figs/freeway_action_full_cum_dqn_30.pdf}}%

\newlength{\legendheightactionfullcumhighway}
\setlength{\legendheightactionfullcumhighway}{0.4\utilheightactionfullcumhighway}%

\newcommand{\rowname}[1]
{\rotatebox{90}{\makebox[\utilheightactionfullcumhighway][c]{\tiny #1}}}

\centering

{
\renewcommand{\tabcolsep}{10pt}

\begin{subtable}[]{\linewidth}
\begin{tabular}{l}
\includegraphics[height=\legendheightactionfullcumhighway]{figs/legend_bar_all.pdf}
\end{tabular}
\end{subtable}

\begin{subtable}[]{\linewidth}
\centering
\begin{tabular}{@{}p{6mm}@{}c@{}c@{}c@{}c@{}}
        & \makecell{\small{\textbf{Highway, $u=30$}}}
        & \makecell{\small{\textbf{Highway, $u=50$}}}
        \\
\rowname{\makecell{\small \textbf{DQN} \\\scriptsize stability ratio}}&
\includegraphics[height=\utilheightactionfullcumhighway]{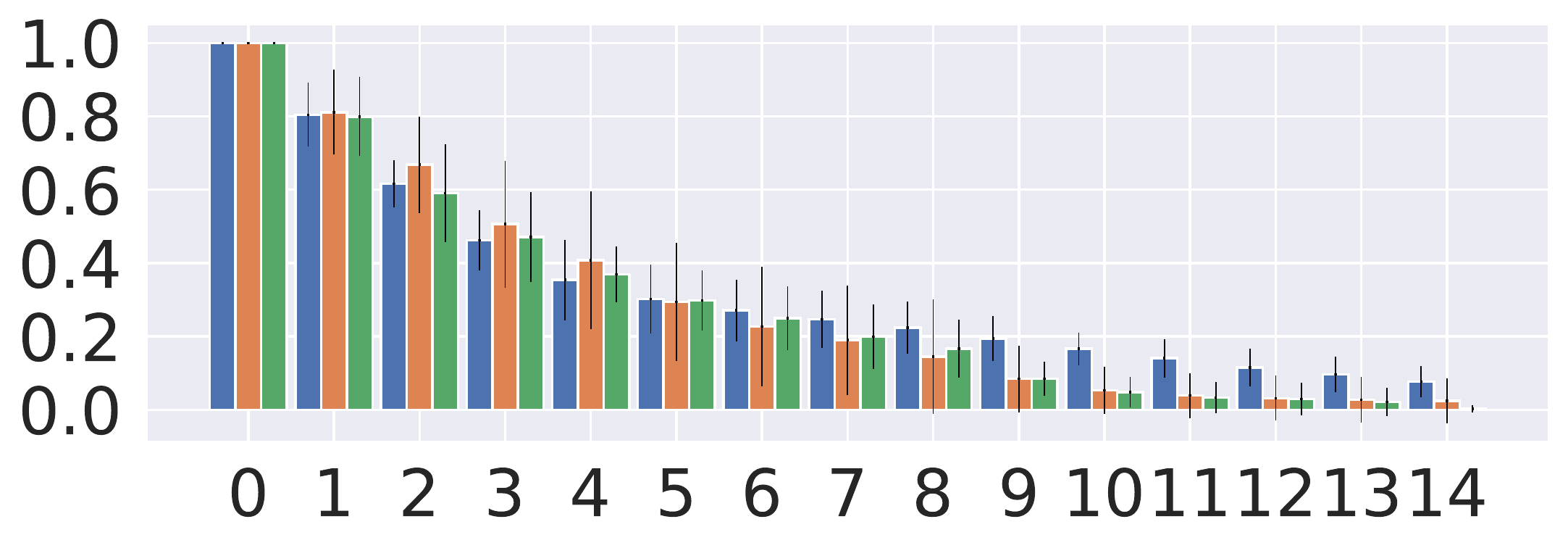}&
\includegraphics[height=\utilheightactionfullcumhighway]{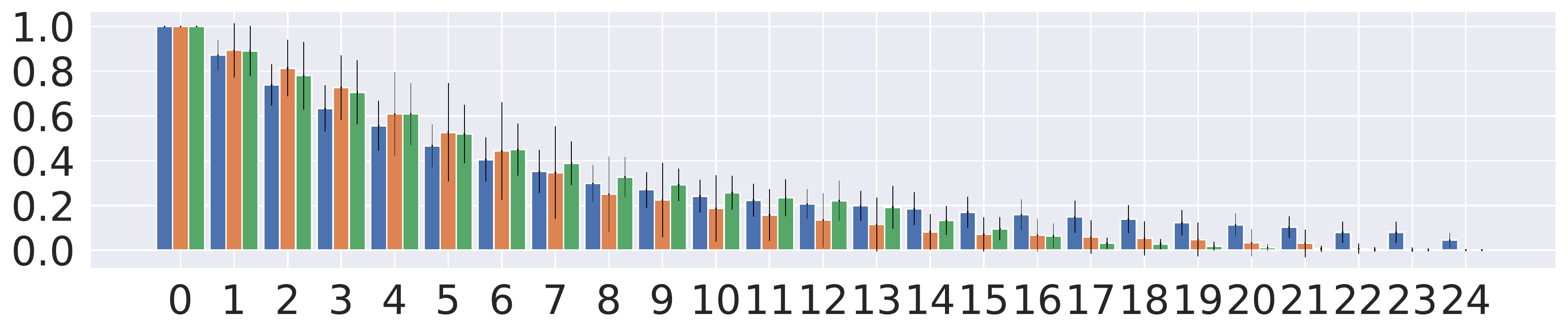}
\\
\rowname{\makecell{\small \textbf{QR-DQN} \\\scriptsize stability ratio}}&
\includegraphics[height=\utilheightactionfullcumhighway]{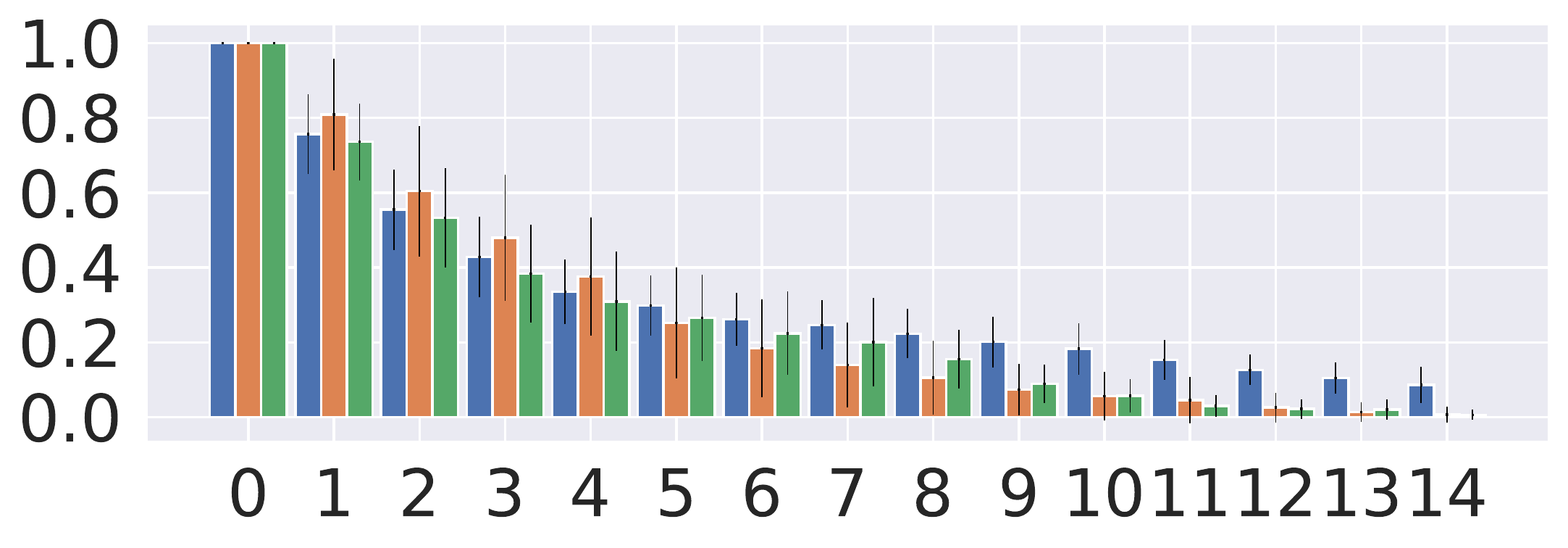}&
\includegraphics[height=\utilheightactionfullcumhighway]{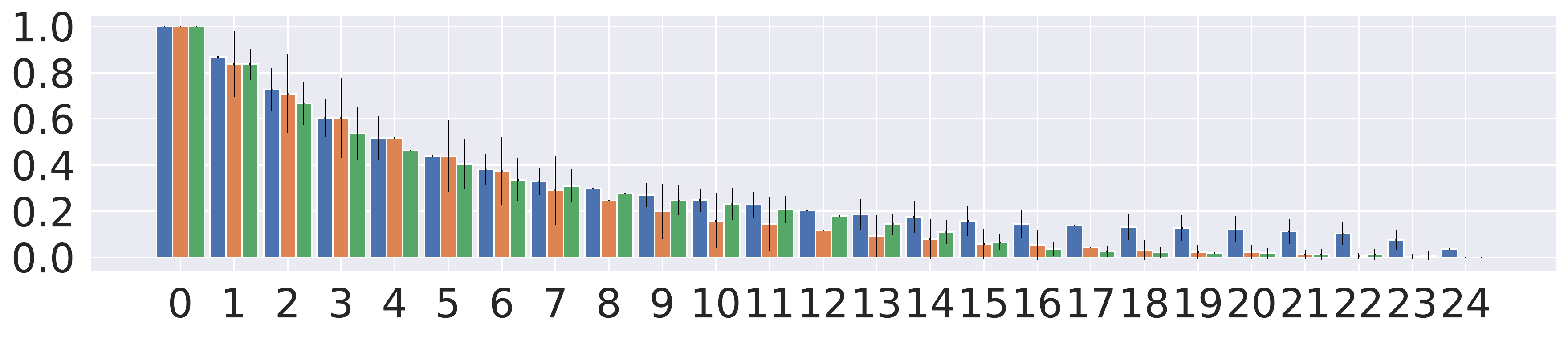}
\\
\rowname{\makecell{\small \textbf{C51} \\\scriptsize stability ratio}}&
\includegraphics[height=\utilheightactionfullcumhighway]{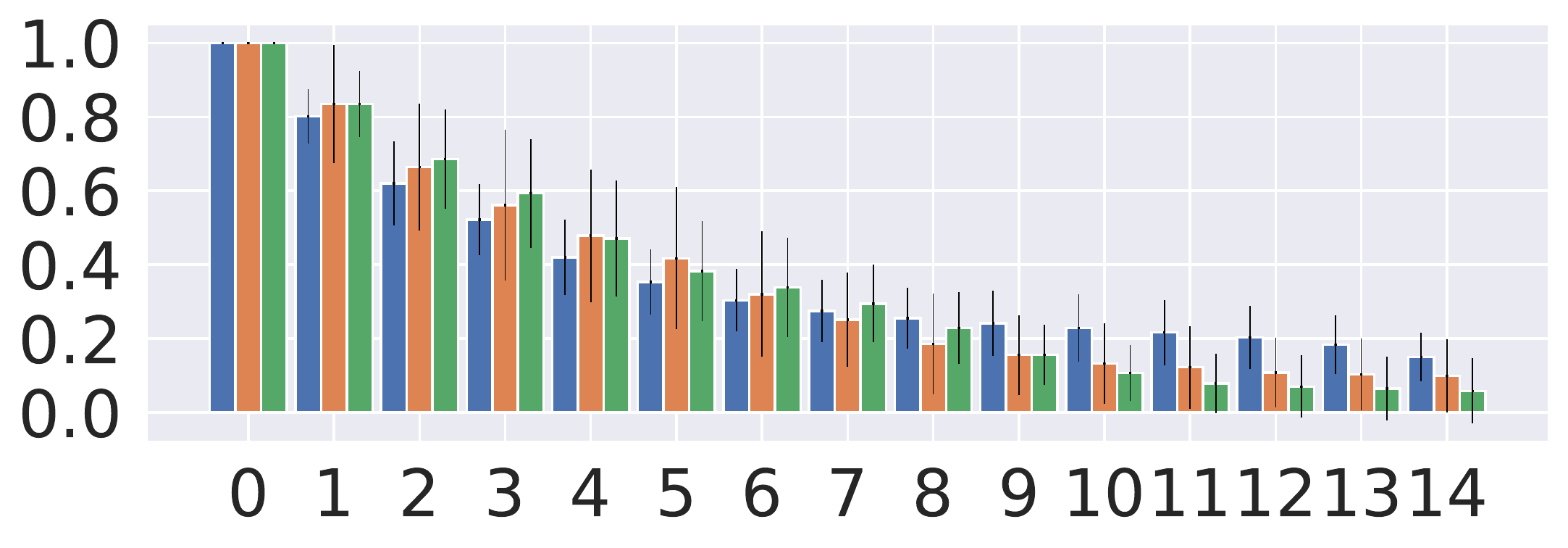}&
\includegraphics[height=\utilheightactionfullcumhighway]{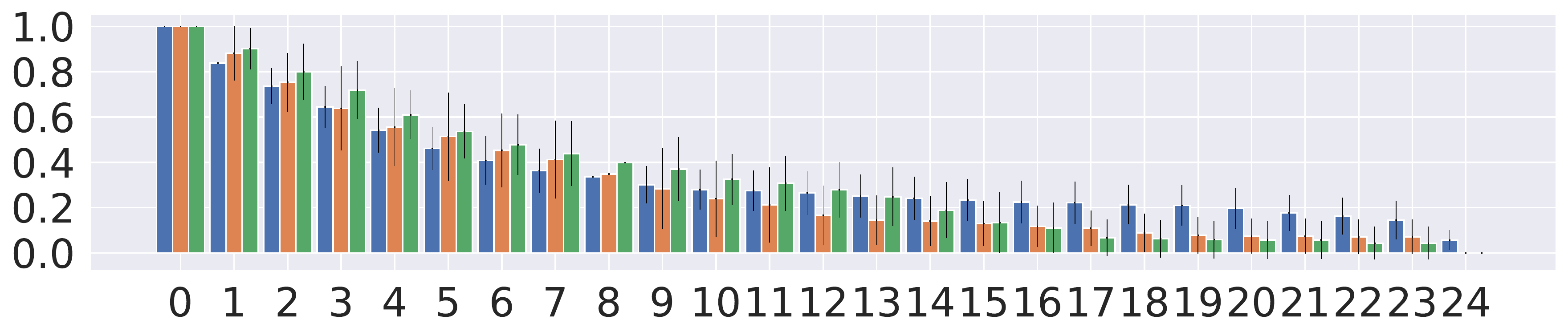}
\\
        & \makecell{\scriptsize{$\geq \wbar K$}}
        & \makecell{\scriptsize{$\geq \wbar K$}}
\end{tabular}
\label{tab:cert-rad-highway}
\end{subtable}

\begin{subtable}[]{\linewidth}
\centering
\resizebox{\linewidth}{!}{%
\begin{tabular}{@{}p{6mm}@{}c@{}c@{}c@{}}
        & \makecell{\small{\textbf{Highway, $u=100$}}}
        \\
\rowname{\makecell{\small \textbf{DQN} \\\scriptsize stability ratio}}&
\includegraphics[height=\utilheightactionfullcumhighway]{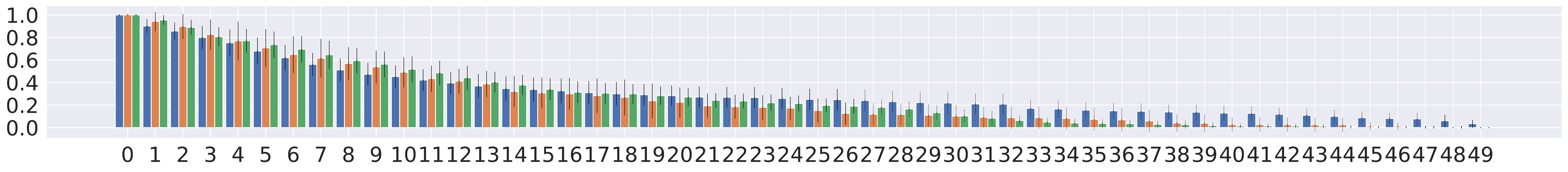}
\\
\rowname{\makecell{\small \textbf{QR-DQN} \\\scriptsize stability ratio}}&
\includegraphics[height=\utilheightactionfullcumhighway]{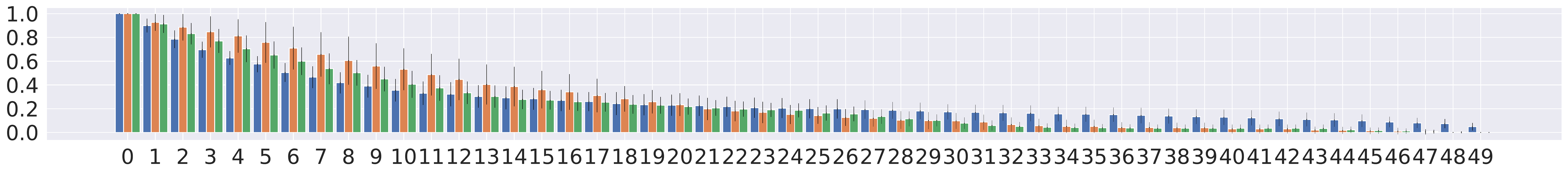}
\\
\rowname{\makecell{\small \textbf{C51} \\\scriptsize stability ratio}}&
\includegraphics[height=\utilheightactionfullcumhighway]{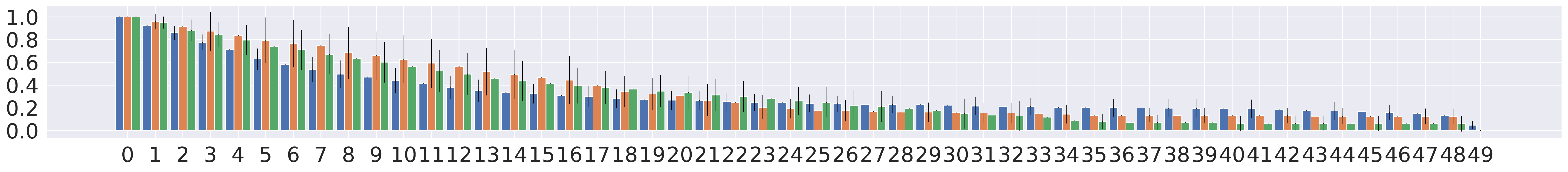}
\\
        & \makecell{\scriptsize{$\geq \wbar K$}}
\end{tabular}
}
\label{tab:cert-rad-highway100}
\end{subtable}

}
\vspace{-3mm}
\caption{\small \textbf{Robustness certification for per-state action stability on Highway environment (full results).}
We plot the cumulative histogram of the \textit{tolerable poisoning size} $\protect\wbar K$ for all time steps in one trajectory.
We provide results on Highway environment, three partition numbers ($u=30$, $u=50$ and $u=100$), and a comparison of three certification methods (PARL, TPARL, and DPARL).
The results are averaged over $20$ runs considering the randomness in the environment, and the short vertical bar on top of each bar represents the standard deviation.
}%
\label{fig:statewise-full-highway}
\end{figure}

        {
\setlength{\tabcolsep}{4pt} 

\begin{table}[tbp]
    \centering

    \caption{\small \textbf{Average tolerable poisoning thresholds} of three \textit{aggregation protocols} (PARL, TPARL, and DPARL) applied on subpolicies trained using three \textit{offline RL algorithms} (DQN, QR-DQN, and C51),
    with the number of subpolicies (\ie, \#partitions) $u$ equal to $30$ or $50$. 
    We report results averaged over $20$ runs of varying randomness in the environment.
    }
    \label{tab:ave-poison-size}

\resizebox{.8\linewidth}{!}{%
        \begin{tabular}{ccccccc
}
    \toprule
    \multirow{2}{*}{\textbf{Freeway}} &  \multicolumn{3}{c}{\small $u=30$}  & \multicolumn{3}{c}{\small $u=50$} \\
    \cmidrule(lr){2-4}\cmidrule(lr){5-7}
    &    \multicolumn{1}{c}{\small PARL}  & \multicolumn{1}{c}{\makecell{\small TPARL \\($W=4$)}} & \multicolumn{1}{c}{\makecell{\small DPARL \\($W_{\rm max}=5$)}} &  \multicolumn{1}{c}{\small PARL} & \multicolumn{1}{c}{\makecell{\small TPARL \\($W=4$)}} & \multicolumn{1}{c}{\makecell{\small DPARL \\($W_{\rm max}=5$)}}  \\\midrule
DQN & 9.90 {\tiny  $\pm$   0.14 } & 10.21 {\tiny  $\pm$   0.09 } & 10.30 {\tiny  $\pm$   0.09 } & 16.78 {\tiny  $\pm$   0.42 } & 17.16 {\tiny  $\pm$   0.29 } & 16.88 {\tiny  $\pm$   0.45 } \\
QR-DQN & 9.87 {\tiny  $\pm$   0.20 } & 10.11 {\tiny  $\pm$   0.30 } & 10.16 {\tiny  $\pm$   0.34 } & 16.79 {\tiny  $\pm$   0.52 } & 17.31 {\tiny  $\pm$   0.29 } & 17.03 {\tiny  $\pm$   0.40 } \\
C51 & 9.57 {\tiny  $\pm$   0.24 } & 9.83 {\tiny  $\pm$   0.15 } & 10.11 {\tiny  $\pm$   0.19 } & 16.82 {\tiny  $\pm$   0.40 } & 17.08 {\tiny  $\pm$   0.34 } & 17.61 {\tiny  $\pm$   0.15 } \\
\bottomrule \\
\end{tabular}
}

\resizebox{.8\linewidth}{!}{%
        \begin{tabular}{ccccccc
}
    \toprule
    \multirow{2}{*}{\textbf{Breakout}} &  \multicolumn{3}{c}{\small $u=30$}  & \multicolumn{3}{c}{\small $u=50$} \\
    \cmidrule(lr){2-4}\cmidrule(lr){5-7}
    &    \multicolumn{1}{c}{\small PARL}  & \multicolumn{1}{c}{\makecell{\small TPARL \\($W=4$)}} & \multicolumn{1}{c}{\makecell{\small DPARL \\($W_{\rm max}=5$)}} &  \multicolumn{1}{c}{\small PARL} & \multicolumn{1}{c}{\makecell{\small TPARL \\($W=4$)}} & \multicolumn{1}{c}{\makecell{\small DPARL \\($W_{\rm max}=5$)}} \\\midrule
DQN & 1.08 {\tiny  $\pm$   0.09 } & 1.28 {\tiny  $\pm$   0.15 } & 0.85 {\tiny  $\pm$   0.09 } & 2.07 {\tiny  $\pm$   0.22 } & 1.92 {\tiny  $\pm$   0.30 } & 1.55 {\tiny  $\pm$   0.28 } \\
QR-DQN & 1.38 {\tiny  $\pm$   0.09 } & 1.33 {\tiny  $\pm$   0.18 } & 1.18 {\tiny  $\pm$   0.10 } & 2.12 {\tiny  $\pm$   0.27 } & 1.93 {\tiny  $\pm$   0.14 } & 1.71 {\tiny  $\pm$   0.19 } \\
C51 & 1.10 {\tiny  $\pm$   0.08 } & 1.42 {\tiny  $\pm$   0.19 } & 0.93 {\tiny  $\pm$   0.13 } & 2.43 {\tiny  $\pm$   0.21 } & 2.37 {\tiny  $\pm$   0.19 } & 1.90 {\tiny  $\pm$   0.19 } \\
\bottomrule \\
\end{tabular}
}

\resizebox{.8\linewidth}{!}{%
        \begin{tabular}{ccccccc
}
    \toprule
    \multirow{2}{*}{\textbf{{Highway}}} &  \multicolumn{3}{c}{\small $u=30$}  & \multicolumn{3}{c}{\small $u=50$} \\
    \cmidrule(lr){2-4}\cmidrule(lr){5-7}
    &    \multicolumn{1}{c}{\small PARL}  & \multicolumn{1}{c}{\makecell{\small TPARL \\($W=4$)}} & \multicolumn{1}{c}{\makecell{\small DPARL \\($W_{\rm max}=5$)}} &  \multicolumn{1}{c}{\small PARL} & \multicolumn{1}{c}{\makecell{\small TPARL \\($W=4$)}} & \multicolumn{1}{c}{\makecell{\small DPARL \\($W_{\rm max}=5$)}} \\\midrule
DQN & 4.38 {\tiny  $\pm$   0.78 } & 2.99 {\tiny  $\pm$   1.08 } & 3.75 {\tiny  $\pm$   1.85 } & 7.16 {\tiny  $\pm$   1.58 } & 6.27 {\tiny  $\pm$   1.61 } & 6.32 {\tiny  $\pm$   0.98 } \\
QR-DQN & 4.18 {\tiny  $\pm$   0.91 } & 3.35 {\tiny  $\pm$   1.44 } & 2.84 {\tiny  $\pm$   0.55 } & 6.52 {\tiny  $\pm$   1.17 } & 5.02 {\tiny  $\pm$   1.60 } & 4.61 {\tiny  $\pm$   1.20 } \\
C51 & 4.97 {\tiny  $\pm$   1.09 } & 4.31 {\tiny  $\pm$   1.45 } & 4.12 {\tiny  $\pm$   0.95 } & 7.95 {\tiny  $\pm$   1.46 } & 6.55 {\tiny  $\pm$   2.20 } & 6.69 {\tiny  $\pm$   1.47 } \\
\bottomrule
\end{tabular}
}

\end{table}
}
        As a complete set of results for per-state action certification apart from~\Cref{fig:statewise} in~\Cref{subsec:eval-action}, we present the entire cumulative histogram of \textit{tolerable poisoning thresholds} in~\Cref{fig:statewise-full} and~\Cref{fig:statewise-full-highway}, together with the \textit{average tolerable poisoning thresholds} in~\Cref{tab:ave-poison-size}.
        The definitions of the two metrcis can be found in~\Cref{subsec:eval-action}.

        Basically, the conclusions in terms of the comparisons on the RL algorithm level, the aggregation protocol level, the partition number level, and the game level are all similar to that derived in~\Cref{subsec:eval-action}.
        
    \subsection{Full Results of Robustness Certification for Cumulative Reward Bound}
        \label{adxsubsec:full-reward}
        
        \renewcommand{\thesubfigure}{\alph{subfigure}}

\begin{figure}

\newlength{\utilheightrewardall}
\settoheight{\utilheightrewardall}{\includegraphics[width=.16\linewidth]{figs/freeway_c51_100.pdf}}%

\newlength{\legendheightrewardall}
\setlength{\legendheightrewardall}{0.5\utilheightrewardall}%

\newcommand{\rowname}[1]
{\rotatebox{90}{\makebox[\utilheightrewardall][c]{\tiny #1}}}

\centering

{
\renewcommand{\tabcolsep}{10pt}

\begin{subtable}[]{\linewidth}
\begin{tabular}{l}
\includegraphics[height=\legendheightrewardall]{figs/legend_all.pdf}
\end{tabular}
\end{subtable}

\begin{subtable}[]{\linewidth}
\centering
\begin{tabular}{@{}p{6mm}@{}c@{}c@{}c@{}c@{}c@{}c@{}c@{}}
        & \makecell{\scriptsize{\textbf{Freeway, }$H=100$}}
        & \makecell{\scriptsize{\textbf{Freeway, }$H=200$}}
        & \makecell{\scriptsize{\textbf{Freeway, }$H=400$}}
        & \makecell{\scriptsize{\textbf{Breakout, }$H=50$}}
        & \makecell{\scriptsize{\textbf{Breakout, }$H=75$}}
        & \makecell{\scriptsize{\textbf{Highway, }$H=30$}}\\
\rowname{\makecell{\scriptsize \textbf{DQN} \\\footnotesize $\uj_K$ }}&
\includegraphics[height=\utilheightrewardall]{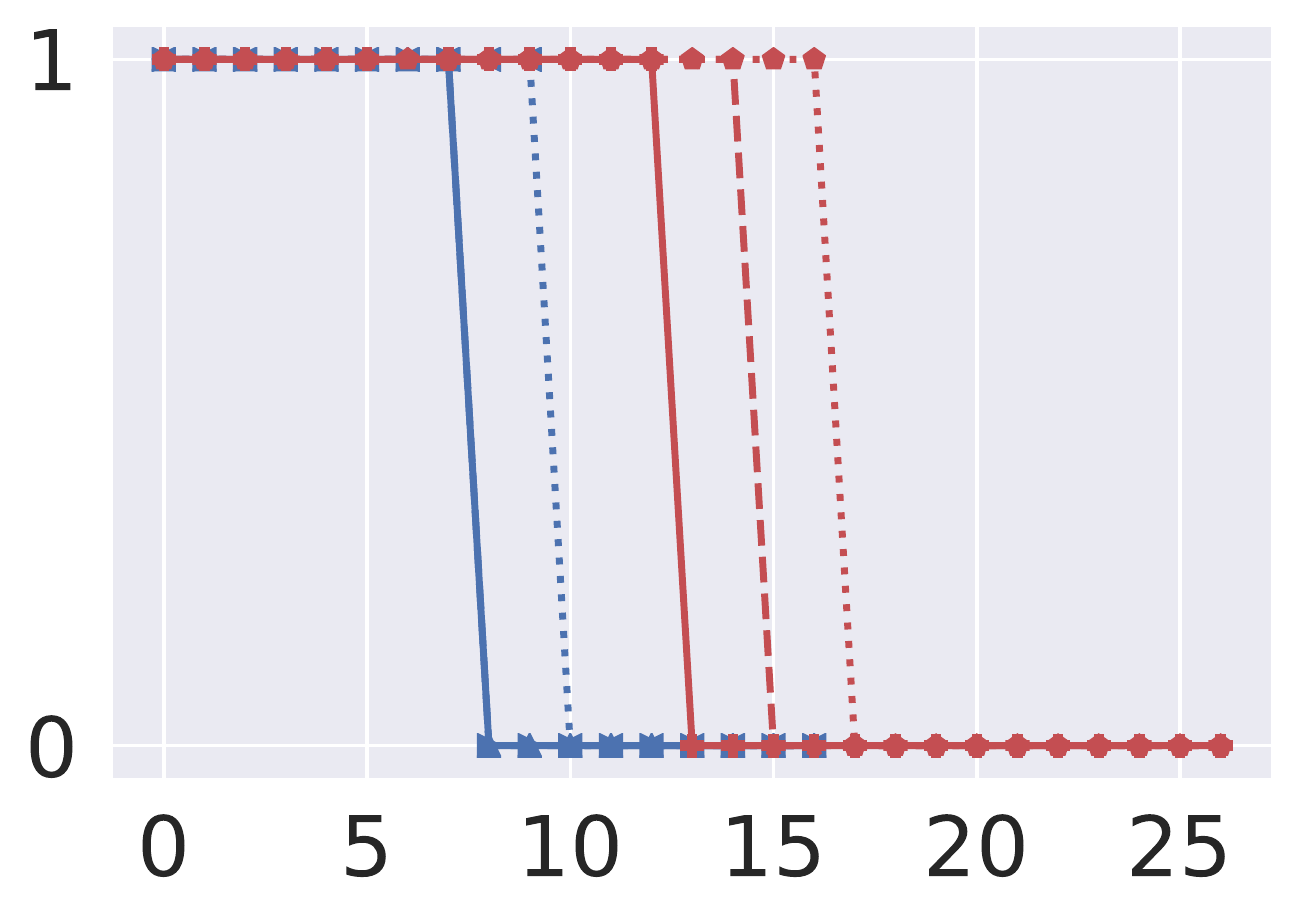}&
\includegraphics[height=\utilheightrewardall]{figs/freeway_dqn_200.pdf}&
\includegraphics[height=\utilheightrewardall]{figs/freeway_dqn_400.pdf}&
\includegraphics[height=\utilheightrewardall]{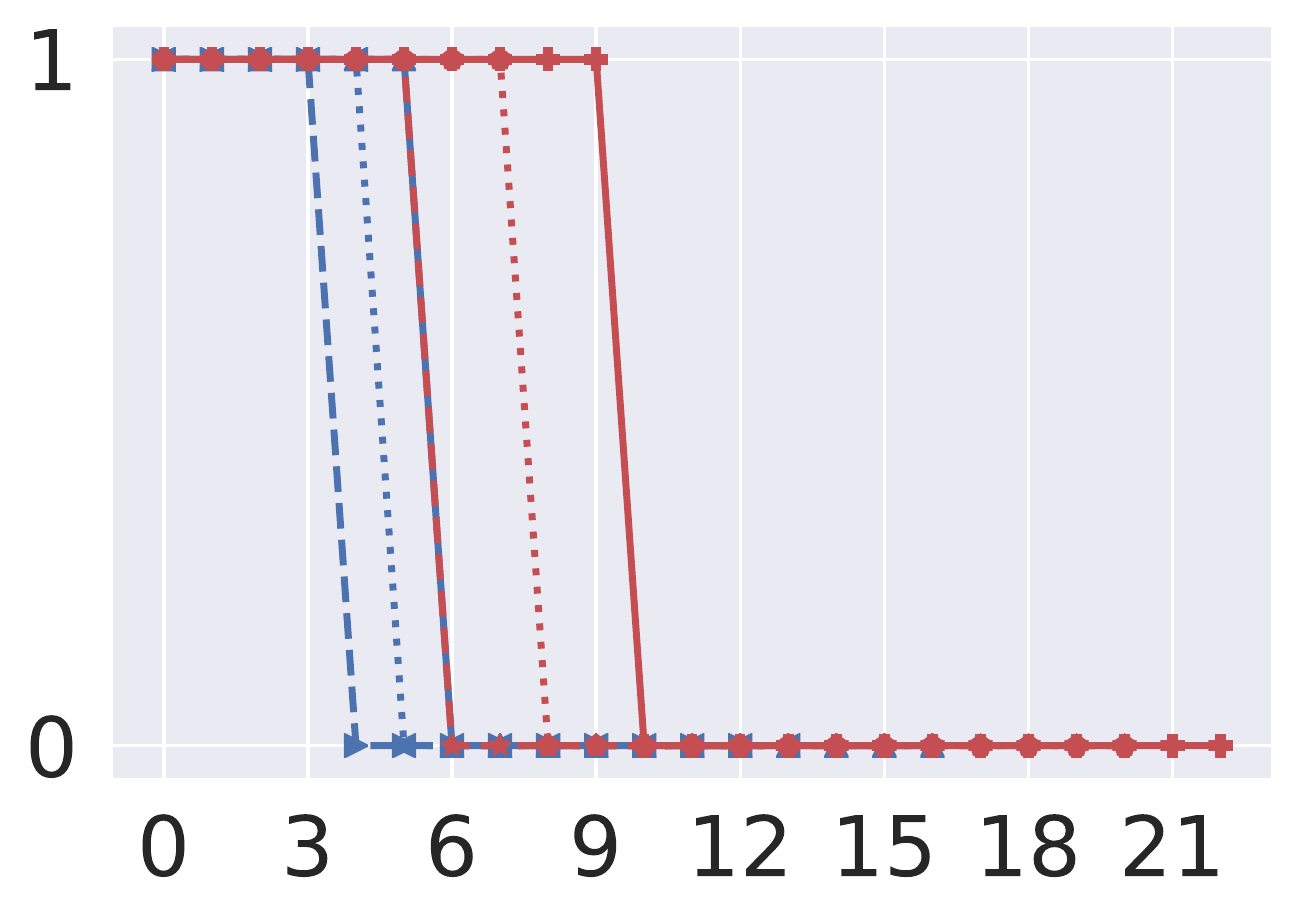}&
\includegraphics[height=\utilheightrewardall]{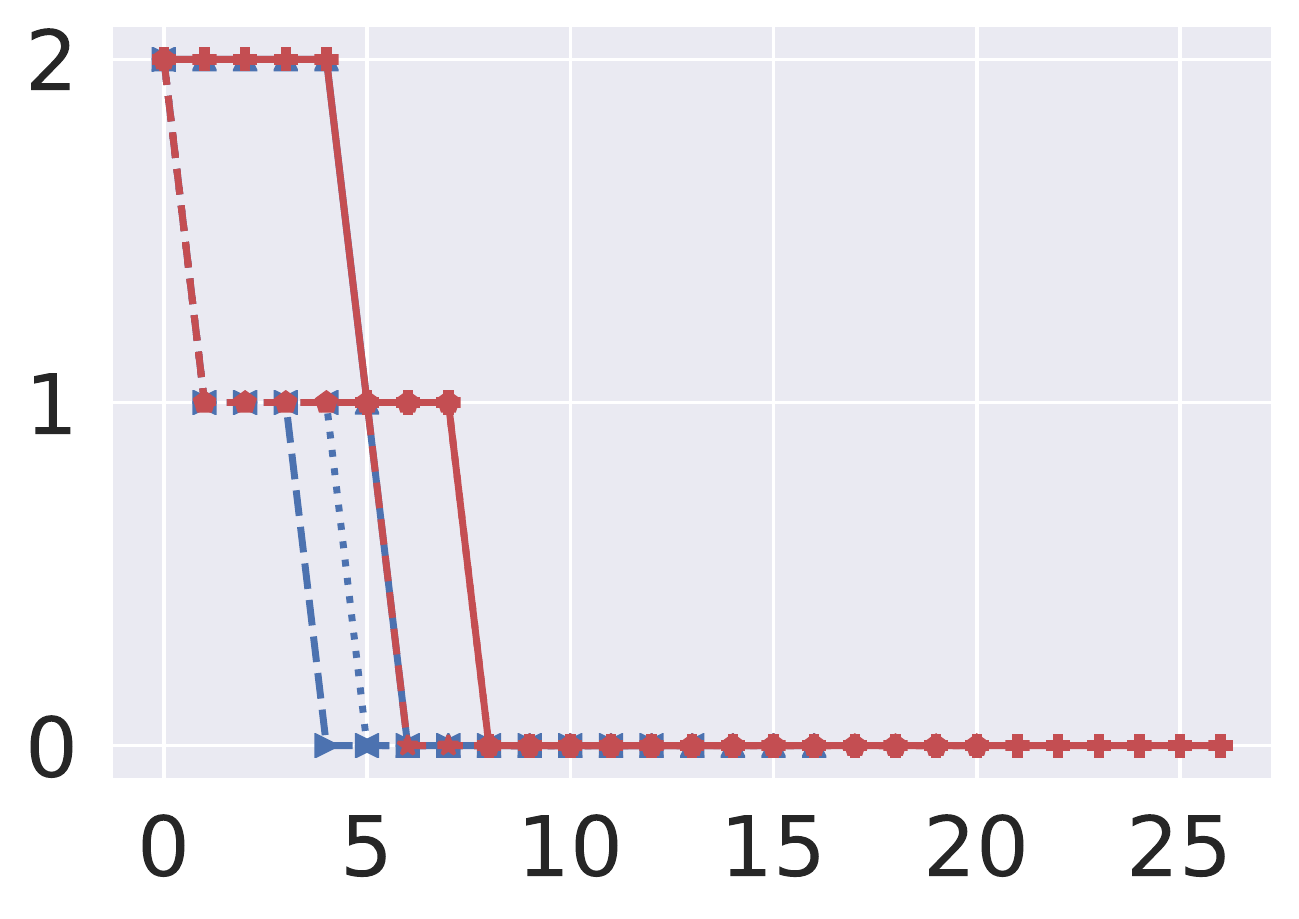}&
\includegraphics[height=\utilheightrewardall]{figs/highway_dqn_30.pdf}
\\
\rowname{\makecell{\scriptsize \textbf{QR-DQN} \\\footnotesize $\uj_K$}}&
\includegraphics[height=\utilheightrewardall]{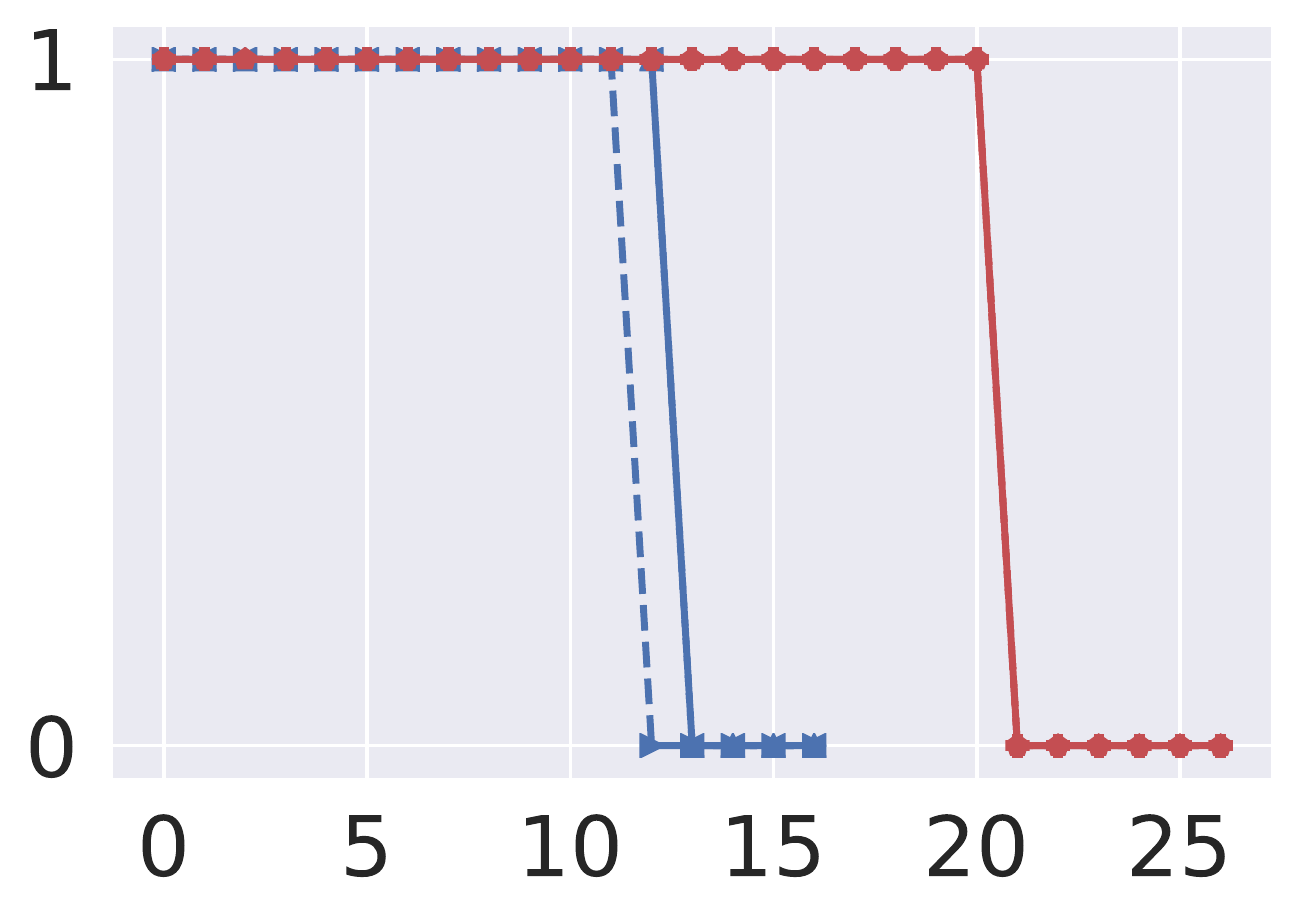}&
\includegraphics[height=\utilheightrewardall]{figs/freeway_qrdqn_200.pdf}&
\includegraphics[height=\utilheightrewardall]{figs/freeway_qrdqn_400.pdf}&
\includegraphics[height=\utilheightrewardall]{figs/Breakout_qrdqn_50.pdf}&
\includegraphics[height=\utilheightrewardall]{figs/Breakout_qrdqn_75.pdf}&
\includegraphics[height=\utilheightrewardall]{figs/highway_qrdqn_30.pdf}
\\
\rowname{\makecell{\scriptsize \textbf{C51} \\\footnotesize $\uj_K$}}&
\includegraphics[height=\utilheightrewardall]{figs/freeway_c51_100.pdf}&
\includegraphics[height=\utilheightrewardall]{figs/freeway_c51_200.pdf}&
\includegraphics[height=\utilheightrewardall]{figs/freeway_c51_400.pdf}&
\includegraphics[height=\utilheightrewardall]{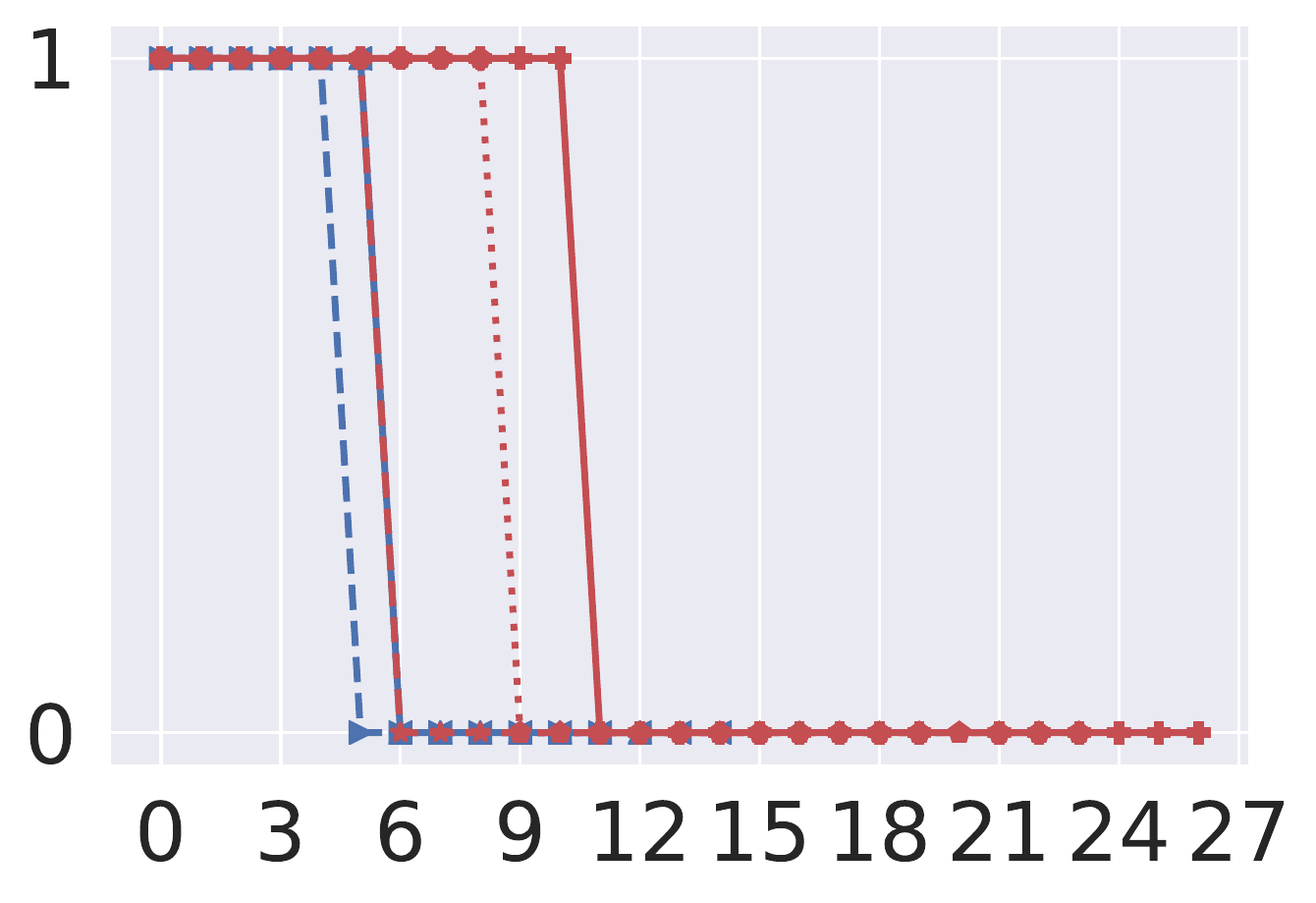}&
\includegraphics[height=\utilheightrewardall]{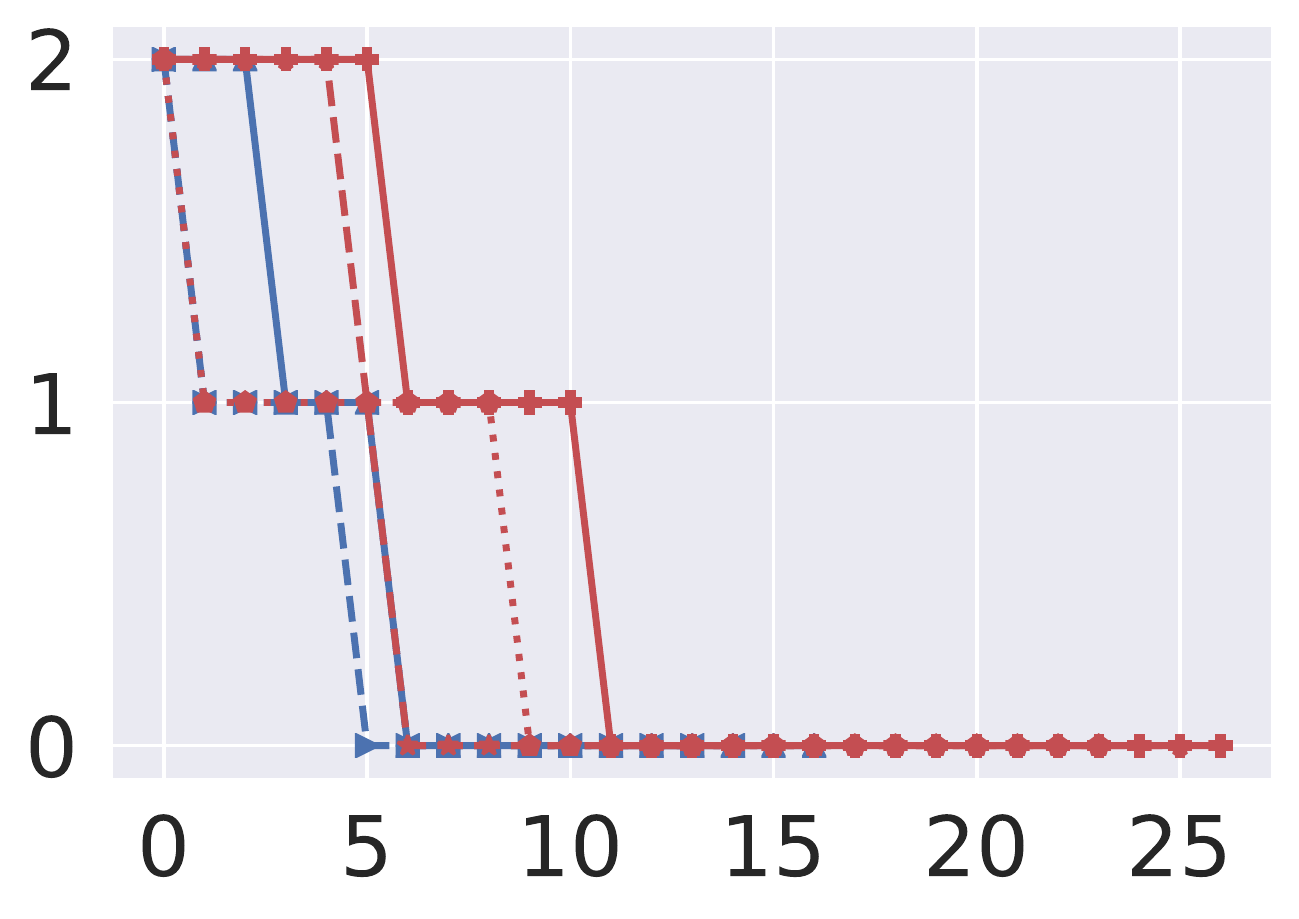}&
\includegraphics[height=\utilheightrewardall]{figs/highway_c51_30.pdf}
\\[-0.5ex]
        & \makecell{\scriptsize{Poisoning size $K$}}
        & \makecell{\scriptsize{Poisoning size $K$}}
        & \makecell{\scriptsize{Poisoning size $K$}}
        & \makecell{\scriptsize{Poisoning size $K$}}
        & \makecell{\scriptsize{Poisoning size $K$}}
        & \makecell{\scriptsize{Poisoning size $K$}}
\end{tabular}
\end{subtable}

}
\vspace{-3mm}
\caption{\small \textbf{Robustness certification for cumulative reward (full results).}
We plot the \textit{lower bound of cumulative reward} \textit{$\uj_K$} w.r.t. \textit{poisoning size} $K$ under three different certification methods (PARL, TPARL ($W=4$), DPARL ($W_{\max}=5$)) with two partition numbers ($u\in \{30, 50\}$).
Each row corresponds to one RL algorithm, and each column corresponds to one setting of trajectory length $H$.}%
\label{fig:reward-full}
\vspace{-1em}
\end{figure}


        In addition to the evaluation results provided in~\Cref{fig:reward} in~\cref{subsec:eval-reward}, we provide a more comprehensive set of evaluation results with more settings of trajectory length $H$ here in~\Cref{fig:reward-full}.
        
        We draw similar conclusions as discussed in~\Cref{subsec:eval-reward}.
        Specifically, in Highway, C51 achieves higher certified lower bounds than other two when the poisoning size is large, which can be explained by the larger portion of states that can tolerate large poisoning sizes as shown in~\Cref{fig:statewise-full-highway}.

        We additionally point out that the value $\uj_K$ achieved at poisoning size $K=0$ (\eg, $\uj_K=4$ for Freeway, $2$ for Breakout, and $28.31$ for Highway under $\rlalgopolicy$ over $u=50$ DQN subpolicies) corresponds to the case where there is no poisoning at all on the training set. Our successive bounds under larger $K$ are non-vacuous compared to this value.

\section{A Broader Discussion on Related Work}
\label{adxsec:related-works}

\subsection{Poisoning Attacks in RL}


Below, we provide a brief discussion on the related works on policy poisoning and reward poisoning~\citep{ma2019policy,sun2021vulnerabilityaware,huang2019deceptive}.

\citet{ma2019policy} and~\citet{huang2019deceptive} study reward poisoning where the attacker can modify the rewards in an offline dataset or the reward signals during the online interaction. The attacker’s goal is to force learning a particular target policy, or minimize the agent’s reward in the original task. Under this framework,~\citet{ma2019policy} considers two victim learners with specific assumptions on the learner structure and develops attacks for them, while~\citet{huang2019deceptive} provides a theoretical analysis on the conditions for successful attacks against a Q-learning agent. In comparison to only reward poisoning in~\citet{ma2019policy} and~\citet{huang2017adversarial},~\citet{sun2021vulnerabilityaware} focuses on general policy poisoning attacks for policy gradient learners in the online RL setting by using the vulnerability-awareness metric to decide when to poison, and the adversarial critic to guide the poisoning.~\citet{sun2021vulnerabilityaware} achieve successful poisoning against policy-based agents in various complex environments in online RL, but it remains an interesting open problem as to how to poison the offline RL dataset which is agnostic to the learning algorithm.

\subsection{Empirically Robust RL}

\noindent\textbf{Robust RL against Evasion Attacks.}\quad
We briefly review several categories of RL methods that demonstrate empirical  robustness against evasion attacks.

\textit{Randomization methods}~\citep{tobin2017domain,akkaya2019solving} were first proposed to encourage exploration. 
This type of method was later systematically studied for its potential to improve model robustness.
NoisyNet~\citep{fortunato2017noisy} adds parametric noise to the network's weight during training, providing better resilience to both training-time and test-time attacks~\citep{behzadan2017whatever,behzadan2018mitigation},
also reducing the transferability of adversarial examples,
and enabling quicker recovery with fewer number of transitions during phase transition.

Under the \textit{adversarial training} framework,
\citet{kos2017delving} and \citet{behzadan2017whatever} show that re-training with random noise and FGSM perturbations increases the resilience against adversarial examples.
\citet{pattanaik2018robust} leverage attacks using an engineered loss function specifically designed for RL
to significant increase the robustness to parameter variations.
RS-DQN~\citep{fischer2019online} is an \textit{imitation learning} based approach that trains a robust student-DQN in parallel with a standard DQN in order to incorporate the constrains such as SOTA adversarial defenses~\citep{madry2017towards,mirman2018differentiable}. 

SA-DQN~\citep{zhang2021robust} is a \textit{regularization} based method that 
adds regularizers to the training loss function to 
encourage the top-$1$ action to stay unchanged under perturbation.

Built on top of the \textit{neural network verification} algorithms~\citep{gowal2018effectiveness,weng2018towards}, Radial-RL~\citep{oikarinen2020robust} proposes to minimize an adversarial loss function that incorporates the upper bound of the perturbed loss, computed using certified bounds from verification algorithms.
CARRL~\citep{everett2021certifiable} aims to compute the  lower bounds of action-values under potential perturbation and select actions according to the worst-case bound, but it relies on linear bounds~\citep{weng2018towards} and is only suitable for low-dimensional environments.

These robust RL methods only provide empirical robustness against perturbed state inputs during test time, but cannot provide theoretical guarantees for the performance of the trained models under any bounded perturbations.

\noindent\textbf{Robust RL against Poisoning Attacks.}\quad
\citet{banihashem2021defense} considers the \textit{threat model} of reward poisoning attacks proposed by~\citet{ma2019policy,rakhsha2020policy,zhang2020adaptive}, where the attacker aims to force learning a target policy while minimizing the cost of reward manipulation. As shown in~\citet{ma2019policy,rakhsha2020policy,zhang2020adaptive}, this optimization problem is feasible and always has a unique optimal solution. Leveraging this property,~\citet{banihashem2021defense} formulates the defense task as another optimization problem which aims to optimize the agent’s worst case performance among the set of plausible candidates of the true reward function. They specifically consider two settings regarding the agent’s knowledge of the attack parameter, and provide lower bounds on the performance of the defense policy. In contrast to~\citet{banihashem2021defense} which focuses on a specific type of attack, our paper considers the \textit{threat model} of general poisoning attacks, where the attacker has the power to manipulate the training trajectories arbitrarily. Given limited knowledge regarding the attack, our proposed \sysname framework is general and applicable to any potential attack. Our robustness is derived from the aggregation over both sub-policy level and temporal level. One similarity between~\citet{banihashem2021defense} and our work is that we both aim to provide lower bounds of the cumulative reward for our proposed method as the provable guarantee / certification criteria.

\subsection{Robustness Certification for RL}

\citet{wu2022crop} provide the first robustness certification for RL against \textit{test-time evasion attacks} following the line of work on randomized smoothing~\citep{cohen2019certified,salman2019provably}. Concretely, they apply per-state smoothing to achieve the certification for per-state action stability, as well as trajectory smoothing to obtain the certification for cumulative reward.
Notably, \citet{wu2022crop} propose an adaptive tree search algorithm to explore all possible trajectories and thus derive the robustness guarantee for cumulative reward. The relationship between our \adasearch and their Algorithm 3 is discussed in detail in~\Cref{adxsubsec:alg-reward}.
In comparison, robustness in their work is derived from \textit{smoothing}, while our robustness comes from \textit{aggregation}. We propose aggregation protocols and certification methods that leverage the temporal information by dynamically aggregating over the past time steps, which is not covered in~\citet{wu2022crop}.

\end{document}